%% file: main.tex
\documentclass[11pt]{article}
\usepackage[OT1]{fontenc}
\usepackage{smile}
\usepackage[
  colorlinks,
  linkcolor=red,
  anchorcolor=blue,
  citecolor=blue
]{hyperref}
\usepackage{mathrsfs,fullpage}
\usepackage[protrusion=true, expansion=true]{microtype}
\usepackage{mathtools}
\usepackage{authblk}
\usepackage{algorithmicx,algpseudocode}
\usepackage{graphicx}
\usepackage[normalem]{ulem}
\usepackage{soul}
\soulregister\cite7
\soulregister\citep7
\soulregister\citet7
\soulregister\ref7
\soulregister\eqref7

\ifx\counterwithout\undefined\usepackage{chngcntr}\fi
\counterwithout{equation}{section}
\usepackage{math}
\allowdisplaybreaks[1]
\mathtoolsset{showonlyrefs=true}

\begin{document}

\title{Online Covariance Estimation for Averaged Stochastic Gradient Descent on Riemannian Manifolds}

\author[1]{Jingrong Guan}
\author[2]{Liwei Jiang}
\author[3]{Sen Na}
\affil[1]{Department of Industrial Engineering and Management Sciences, Northwestern University}
\affil[2]{Edwardson School of Industrial Engineering, Purdue University}
\affil[3]{School of Industrial and Systems Engineering, Georgia Institute of Technology}
\date{}
\maketitle

\begin{abstract}

We study online inference for averaged Riemannian stochastic gradient descent (RSGD) when the unknown parameter lies on a manifold. Although the averaged RSGD iterate is asymptotically normal, practical inference requires estimating the unknown covariance operator in its limiting distribution. Existing online batch-means methods cannot be applied directly because a general Riemannian manifold lacks an addition operation, and the tangent space containing the Riemannian gradient varies across iterations. Through vector transport, we construct overlapping and non-overlapping covariance estimators that adaptively combine information from successive batches while accounting for the changing tangent spaces. The proposed estimators are online, require neither Riemannian Hessian computations nor a prespecified terminal sample size, and attain the same convergence rates as their Euclidean counterparts~\citep{Zhu2023Online}. Together with asymptotic normality, these estimators enable online statistical inference for RSGD on manifolds. We illustrate the proposed framework through a detailed study of streaming principal component analysis on~Grassmann manifolds.

\end{abstract}

\noindent\textbf{Keywords:} covariance estimation; online statistical inference; Riemannian optimization; streaming principal component analysis.

\section{Introduction}\label{sec:1}

We study online statistical inference for a local minimizer of stochastic optimization problem on Riemannian manifold. Let \(\xi\sim\mathcal D\) be a random data sample, and let \(f(x,\xi)\) be the corresponding stochastic loss. We consider
\begin{equation}\label{eq:stochastic-optimization-problem}
\min_{x\in\mathcal M}F(x):=\mathbb E_{\xi\sim\mathcal D}[f(x,\xi)],
\end{equation}
where the parameter of interest \(x_\star\) is a local minimizer of the population objective \(F\) on the manifold $\mM$. Many problems in statistics, machine learning, and signal processing can be modeled on manifolds, such as principal component analysis with orthogonality constraints \citep{Edelman1998Geometry}, dictionary learning on the sphere \citep{Sun2017Complete}, low-rank matrix completion \citep{Boumal2011RTRMC} and low multilinear rank tensor approximation \citep{Ishteva2011Best, Wen2026Online}.

The classical theory for stochastic approximation is often restricted to Euclidean space. In this setting ($\mathcal{M}=\mathbb{R}^d$), stochastic gradient descent (SGD) is one of the most standard methods for solving \eqref{eq:stochastic-optimization-problem}, which replaces the population gradient $\nabla F(x)$ with an unbiased estimate $\nabla f(x,\xi)$. In particular,~given independent observations $\xi_n\sim\mathcal D$, an initial point $x_0$, and step sizes $\gamma_n>0$, SGD performs the update
\begin{equation}\label{eq:intro-sgd}
x_n=x_{n-1}-\gamma_n\nabla f(x_{n-1},\xi_n).
\end{equation}
SGD is widely used for large-scale parameter estimation and online learning because of its low computational and memory cost \citep{Bottou2018Optimization}. Let $\bar x_n=n^{-1}\sum_{i=1}^n x_i$ denote the averaged SGD iterate. \citet{Ruppert1988Efficient} and \citet{Polyak1992Acceleration} showed that this averaged iterate is asymptotically normal and attains the \textit{optimal} covariance in the local asymptotic minimax sense of H\'ajek and Le Cam \citep{Hajek1972Local,LeCam1972Limits}. Specifically, consider step sizes $\gamma_n=\gamma n^{-\alpha}$ with $\gamma>0$~and $\alpha\in(1/2,1)$. Under suitable conditions,
\begin{equation}\label{eq:euclidean-clt}
\sqrt n(\bar x_n-x_\star)\xrightarrow{\mathrm d}\mathcal N(0,\Sigma),\qquad \Sigma=A^{-1}SA^{-1},
\end{equation}
where
\begin{equation*}
A:=\nabla^2F(x_\star) \qquad \text{and}\qquad S:=\mE [\nabla f(x_\star,\xi)\nabla f(x_\star,\xi)^\top]
\end{equation*}
denote the Hessian of the population objective at $x_\star$ and the covariance of the stochastic gradient noise, respectively. This result characterizes the estimation error at the $n^{-1/2}$ scale and provides the foundation for statistical inference.

To construct valid confidence intervals based on \eqref{eq:euclidean-clt}, one needs to estimate the asymptotic covariance $\Sigma$. \citet{Chen2020Statistical} proposed two consistent estimators: a plug-in estimator that requires Hessian information and a batch-means estimator that uses only SGD iterates. \citet{Zhu2023Online} further developed overlapping (OL) and non-overlapping (NOL) variants of the batch-means estimator that can be updated without prior knowledge of the terminal sample size, thereby enabling fully online covariance estimation. The key idea is to divide the iterates into blocks of increasing size, with each block providing an approximately independent estimate of the covariance matrix. Specifically, let $B_j$ denote a batch ending at iteration $j$, with length $\ell_j=|B_j|$ and average $\bar x_j^B=\ell_j^{-1}\sum_{x\in B_j}x$, and let $E_n$ index the batches used at iteration $n$. Both the OL and NOL estimators take the form
\begin{equation}\label{eq:intro-online-bm}
\wh\Sigma_{n}=\frac{\sum_{j\in E_n}\ell_j^2(\bar x_j^B-\bar x_n)(\bar x_j^B-\bar x_n)^\top}{\sum_{j\in E_n}\ell_j}.
\end{equation}
With a suitable choice of batch size sequence (see Section~\ref{sec:3.3}), \citet{Zhu2023Online} proved that, under suitable conditions, both estimators satisfy
\begin{equation*}
\mE \|\wh\Sigma_n-\Sigma \|\lesssim n^{-(1-\alpha)/4}.
\end{equation*}
with the convergence rate approaching $n^{-1/8}$ as $\alpha\rightarrow 1/2$.

Despite these advances in online statistical inference in Euclidean space, stochastic approximation on a general Riemannian manifold is more complicated because the Euclidean SGD update in \eqref{eq:intro-sgd} does not, in general, preserve the manifold constraint. At a point $x\in\mM$, the Riemannian gradient $\grad f(x,\xi)$ lies in the tangent space $\T_x\mM$. Riemannian stochastic gradient descent (RSGD) therefore uses a retraction $R_x$ to map a tangent vector, such as a step along the negative Riemannian gradient, back onto the manifold. Precisely, the RSGD update takes the form
\begin{equation}\label{eq:intro-rsgd}
x_n=R_{x_{n-1}}\bigl(-\gamma_n\grad f(x_{n-1},\xi_n)\bigr).
\end{equation}
\citet{Bonnabel2013Stochastic} established the convergence of RSGD, while, more recently, \citet{Tripuraneni2018Averaging} extended iterate averaging from SGD to RSGD for statistical inference. The authors used the retraction and its local inverse to recursively update the average as each new iterate becomes available, yielding
\begin{equation}\label{eq:intro-rsgd-average}
\wt x_n=R_{\wt x_{n-1}}\bigl(n^{-1}R_{\wt x_{n-1}}^{-1}(x_n)\bigr).
\end{equation}
The average thus moves toward the new iterate with weight $1/n$. For example, when $\mM=\mathbb R^d$, we can take $R_x(v)=x+v$ and $R_x^{-1}(y) = y-x$. Then, \eqref{eq:intro-rsgd} reduces to the standard SGD update $x_n=x_{n-1}-\gamma_n\nabla f(x_{n-1},\xi_n)$, and \eqref{eq:intro-rsgd-average} reduces to the ordinary recursive average $\wt x_n=\wt x_{n-1}+n^{-1}(x_n-\wt x_{n-1})$. Section~\ref{sec:2} provides the geometric definitions and assumptions used throughout this paper.

Under suitable local smoothness and stability assumptions, \citet{Tripuraneni2018Averaging} proved asymptotic normality of the average $\wt x_n$ in \eqref{eq:intro-rsgd-average}. Mapping the averaged point back to the tangent space $\T_{x_\star}\mM$ via $R_{x_\star}^{-1}$ and expressing the resulting tangent vector in an orthonormal basis of the tangent space~gives
\begin{equation*}
\sqrt n\,R_{x_\star}^{-1}(\wt x_n)\xrightarrow{\mathrm d}\mathcal N(0,\Sigma),\qquad \Sigma=A^{-1}SA^{-1}.
\end{equation*}
Here $A=\Hess F(x_\star)$ is the Riemannian Hessian and $S$ is the covariance of $\grad f(x_\star,\xi)$; both act on $\T_{x_\star}\mM$. This 
asymptotic behavior of RSGD provides a basis for statistical inference on Riemannian manifolds. Motivated by the design of batch-means covariance estimator in \citet{Zhu2023Online}, we~raise the following question:
\par\medskip
\textit{Can the batch-means covariance estimator be adapted to the averaged RSGD, and can the resulting estimator retain the same convergence rate as in the Euclidean setting?}
\par\medskip

In this paper, we provide an affirmative answer to the above question. In particular, we use retraction maps and vector transports to extend the online OL and NOL batch-means constructions of \citet{Zhu2023Online} to averaged RSGD. To this end, we compute for each batch $B_j$ its iterate average $c_j$ using the retraction-based averaging rule in \eqref{eq:intro-rsgd-average}, and construct the covariance estimator from the vectors $b_{j,n}=R_{\wt x_n}^{-1}(c_j)$, which represent the batch averages in the tangent space at the overall average $\wt x_n$. This extension presents two challenges. (i) As the overall average is updated, the tangent space containing the vectors $\{b_{j,n}\}_j$ changes from $\T_{\wt x_n}\mM$ to $\T_{\wt x_{n+1}}\mM$. Thus, computing each $b_{j,n+1}=R_{\wt x_{n+1}}^{-1}(c_j)$ directly would require revisiting all previous batch averages and contradict the online spirit. 
To obtain an online update, we approximate $b_{j,n}$ by $\tb_{j,n}$ (see \eqref{eq:batch-vector-update}) and update these approximations recursively by accounting for the change in the overall average and applying vector transport to move the resulting vectors to the new tangent space. This avoids repeatedly recomputing the inverse retractions of previous batch averages. (ii) These approximate updates introduce errors that may accumulate over time. To control them, we separate the Euclidean batch-means error from the additional errors induced by the manifold geometry. Under standard conditions (cf. Theorem~\ref{thm:covariance-rate}), we show that these geometric errors are of higher order than the leading Euclidean error and therefore do not affect the overall convergence rate. Altogether, we establish that the convergence rate of covariance estimation on manifolds matches that in Euclidean space.

\vskip4pt
\noindent\textbf{Main contributions.} 
The main contributions of the paper are threefold. First, we construct an online batch-means covariance estimator $\wt\Sigma_n$ for averaged RSGD, with OL and NOL versions. To our knowledge, this is the first estimator of the limiting covariance of averaged RSGD. Our estimator can be updated online and requires neither Riemannian Hessian computations nor a prespecified terminal sample size. Second, we establish the convergence rate for our estimator. Under the conditions of Theorem~\ref{thm:covariance-rate}, choosing the batch exponent $\beta=2/(1-\alpha)$ yields the bound
\begin{equation*}
\mE\big\|(\Gamma_{x_\star}^{\wt x_n})^{-1}\circ\wt\Sigma_n\circ\Gamma_{x_\star}^{\wt x_n}-\Sigma\big\|\lesssim n^{-(1-\alpha)/4}.
\end{equation*}
Here, $\Gamma_{x_\star}^{\wt x_n}$ denotes the parallel transport from $\T_{x_\star}\mM$ to $\T_{\wt x_n}\mM$. This rate matches the corresponding rate of \citet{Zhu2023Online} in the Euclidean setting and approaches $n^{-1/8}$ as $\alpha \to 1/2$. Finally, we combine covariance consistency with asymptotic normality of averaged RSGD to develop online statistical inference on manifolds. In particular, we construct asymptotically valid confidence intervals for smooth scalar functions, as well as joint confidence regions.

\subsection{Related work}\label{sec:1.1}

\noindent\textbf{Inference for stochastic approximation.} The asymptotic theory of stochastic approximation in the Euclidean setting has led to extensive research on online statistical inference. For SGD, alternatives to the batch-means covariance estimators discussed above include online bootstrap and random scaling \citep{Fang2018Online,Lee2022Fast}. Subsequent work has reduced the computational cost of online bootstrap by using fewer perturbed SGD runs \citep{Lam2026Cheap} and extended random scaling to weighted averages of the iterates \citep{Wei2026General}.

Statistical inference under constraints has also received growing attention. Asymptotic analyses have examined how constraint geometry affects the statistical efficiency of stochastic optimization methods \citep{Duchi2021Asymptotic}. Stochastic sequential quadratic programming (SSQP) methods have been developed for problems with nonlinear equality and inequality constraints \citep{Na2023Adaptive, Na2023Inequality, Fang2024Fully}. Building on these designs, recent work has established asymptotic normality of SSQP iterates and developed covariance estimation and random-scaling procedures for online inference \citep{Na2025Statistical, Du2025Online, Gao2025Online, Na2026Derivative}. Related developments for nonsmooth stochastic approximation include asymptotic normality and online batch-means covariance estimation \citep{Davis2024Asymptotic,Jiang2025Online}. Although manifold optimization can often be formulated as constrained optimization, our analysis focuses on the averaged iterate generated intrinsically by RSGD and on a covariance estimator evolving across the changing tangent spaces at this average.

\vskip4pt
\noindent\textbf{Riemannian optimization.} Riemannian optimization has received significant attention in the machine learning community. Comprehensive treatments can be found in \citet{Udriste1994Convex, Absil2008Optimization, Boumal2023Introduction}. In the deterministic setting, convergence guarantees have been established for gradient, trust-region, and quasi-Newton methods \citep{Zhang2016First,Absil2007Trust,Ring2012Optimization}. For stochastic objectives, \citet{Bonnabel2013Stochastic} established convergence guarantees for RSGD under suitable conditions. Stochastic variance reduction methods have also been developed for finite-sum objectives \citep{Zhang2016Riemannian,Sato2019Riemannian}. As discussed above, \citet{Tripuraneni2018Averaging} extended iterate averaging to RSGD and established asymptotic normality of the averaged iterate. This result provides the theoretical foundation for our study of online covariance estimation for averaged RSGD.

\subsection{Paper organization}\label{sec:1.2}

Section~\ref{sec:2} reviews preliminaries on Riemannian manifolds, introduces the assumptions used throughout the paper, and recalls the asymptotic normality result for averaged RSGD. Section~\ref{sec:3} constructs the OL and NOL online batch-means covariance estimators, establishes their convergence rates, and outlines the main proof sketch. Section~\ref{sec:4} develops confidence intervals and joint confidence regions on manifolds based on these results, using streaming principal component analysis on Grassmann manifolds as an illustrative example. Section~\ref{sec:5} evaluates covariance estimation and statistical inference through numerical experiments. Section~\ref{sec:6} concludes the paper and discusses directions for future~work.

\subsection{Notation}\label{sec:1.3}

We write $\mathbb N=\{1,2,\ldots\}$ for the positive integers, $|B|$ for the cardinality of a finite set $B$, and $\lfloor t\rfloor$~for the floor of $t\in\mR$. We use $I$ to denote the identity map or matrix, with a space or dimension subscript when needed, and $e_i$ to denote the $i$th canonical basis vector. In a fixed real inner product space, $\langle u,v\rangle$ denotes the inner product and $\|v\|=\langle v,v\rangle^{1/2}$ its induced norm. The $d$-dimensional unit sphere is $\mS^d=\{x\in\mR^{d+1}:\|x\|=1\}$, and $v^\perp=\{w:\langle v,w\rangle=0\}$ denotes the orthogonal complement of the span of $v$. For a linear map $L:V\to W$ between finite-dimensional real inner product spaces, $\|L\|=\sup_{\|v\|=1}\|Lv\|$ is its operator norm. Its adjoint $L^\ast:W\to V$ is defined by $\langle Lv,w\rangle_W=\langle v,L^\ast w\rangle_V$ for all $v\in V$ and $w\in W$. For invertible $L$, $L^\ast=L^{-1}$ if and only if $L$ preserves inner products. We define the outer product by $(u\otimes v)w=\langle v,w\rangle u$, with $u^{\otimes2}=u\otimes u$. Thus, $(Lu)^{\otimes2}=L\circ u^{\otimes2}\circ L^\ast$.~In~orthonormal bases, $u\otimes v$ is represented by $uv^\top$, and $L^\ast$ by the transpose of the matrix representing~$L$.

For $a_n, b_n\geq0$, $a_n\lesssim b_n$ or $a_n=O(b_n)$ means $a_n\leq Cb_n$ for some constant $C>0$ and all sufficiently large $n$. We write $a_n=o(b_n)$ if $a_n/b_n\to0$, and $a_n\asymp b_n$ if both $a_n\lesssim b_n$ and $b_n\lesssim a_n$. For notational simplicity, we use $C$ to denote generic constants that may take different values in different expressions. We write $\mE_n(\cdot)=\mE(\cdot\mid\mF_n)$ for the conditional expectation, where $\mF_n=\sigma(\xi_1,\ldots,\xi_n)$ is the $\sigma$-algebra generated by $\xi_1,\ldots,\xi_n$. The symbol $z_{1-\delta/2}$ denotes the $(1-\delta/2)$-quantile of the standard normal distribution. Similarly, $\chi^2_{d,1-\delta}$ denotes the $(1-\delta)$-quantile of the chi-squared distribution with $d$ degrees of freedom.

\section{Problem formulation and preliminaries}\label{sec:2}

We first introduce the geometric operations required for averaged RSGD and covariance estimation. We then state the assumptions used throughout the paper and recall the asymptotic normality result for averaged RSGD.

\subsection{Riemannian geometry}\label{sec:2.1}

We briefly review the geometric notions related to Riemannian manifold, while we point the reader to \citet{Absil2008Optimization} and \citet{Boumal2023Introduction} for a more detailed review.

\vskip4pt
\noindent\textbf{Riemannian manifolds and smooth maps.} 
Throughout the paper, $(\mM,\g)$ denotes a connected, $d$-dimensional Riemannian manifold, that is, a smooth manifold $\mM$ equipped with a Riemannian metric $\g$. For each $x\in\mM$, we use $\T_x\mM$ to denote the tangent space at $x$. The metric $\g$ smoothly assigns an inner product $\langle u,v\rangle_x:=\g_x(u,v)$ to each tangent space $\T_x\mM$, with induced norm $\|v\|_x:=\langle v,v\rangle_x^{1/2}$. We write $\T\mM:=\bigsqcup_{x\in\mM}\T_x\mM$ for the tangent bundle and $0_x$ for the zero vector in $\T_x\mM$. A map $\Phi:\mM\to\mM'$ between smooth manifolds is smooth if its representation in local coordinates is smooth. A smooth curve on $\mM$ is a smooth map $c:[0,1]\to\mM$. Its velocity at $t\in[0,1]$ is denoted by $\dot c(t):=\frac{\mathrm d}{\mathrm dt}c(t)\in\T_{c(t)}\mM$, where $\mathrm d/\mathrm dt$ denotes ordinary differentiation in local coordinates. The length of $c$ is defined by $L(c):=\int_0^1\|\dot c(t)\|_{c(t)}\,\mathrm dt$. The Riemannian distance $d(x,y)$ is the infimum~of~$L(c)$~over~all~piecewise smooth curves $c$ joining $x$ and $y$.

\vskip4pt
\noindent\textbf{Differential calculus and connections.} 
For a smooth map $\Phi:\mM\to\mM'$, the differential $D\Phi(x):\T_x\mM\to\T_{\Phi(x)}\mM'$ is the linear map characterized by $D\Phi(x)[v]=\left.\frac{\mathrm d}{\mathrm dt}\Phi(c(t))\right|_{t=0}$, for any $v\in\T_x\mM$ and any smooth curve $c:[0,1]\to\mM$ satisfying $c(0)=x$ and $\dot c(0)=v$. A connection specifies how vector fields are differentiated on a manifold. We use $\nabla$ to denote the Levi--Civita connection, the unique torsion-free connection compatible with $\g$. In particular, for a smooth vector field $Y:\mM\to\T\mM$ with $Y(x)\in\T_x\mM$, $\nabla_vY\in\T_x\mM$ with $v\in\T_x\mM$ denotes its covariant derivative at $x$ in the direction $v$. Along a smooth curve $c:[0,1]\to\mM$, the induced covariant derivative of a smooth field $V:[0,1]\to\T\mM$ with $V(t)\in\T_{c(t)}\mM$ is denoted by $\frac{D}{\mathrm dt}V(t):=\nabla_{\dot c(t)}V\in\T_{c(t)}\mM$. 
For a smooth function $h:\mM\to\mR$, its Riemannian gradient $\grad h(x)\in \T_x\mM$ is the unique vector satisfying the first relation below,~while its Riemannian Hessian $\Hess h(x):\T_x\mM\to\T_x\mM$ is the linear map defined by the second:
\begin{equation*}
\langle\grad h(x),v\rangle_x=Dh(x)[v], \qquad \Hess h(x)[v]:=\nabla_v\grad h, \qquad \text{for all } v\in\T_x\mM.
\end{equation*}
The Riemannian Hessian $\Hess h(x)$ is self-adjoint with respect to the Riemannian metric $\g$; that is,
\begin{equation*}
\langle\Hess h(x)[u],v\rangle_x = \langle u,\Hess h(x)[v]\rangle_x, \qquad \text{for all }u,v\in\T_x\mM.
\end{equation*}

\vskip4pt
\noindent\textbf{Exponential maps and retractions}. 
To turn a tangent vector into a step on the manifold, we can follow a geodesic. A smooth curve $\gamma:[0,1]\to\mM$ is a geodesic if $\frac{D}{\mathrm dt}\dot\gamma(t)=0_{\gamma(t)}$ for all $t\in[0,1]$. Given $x\in\mM$ and $v\in\T_x\mM$, let $\gamma_{x,v}[0,1]\to\mM$ denote the unique geodesic satisfying $\gamma_{x,v}(0)=x$ and $\dot\gamma_{x,v}(0)=v$, whenever it exists on the entire interval. If $\gamma_{x,v}$ exists at time one, the exponential map is defined by $\Exp_x(v):=\gamma_{x,v}(1)$. Restricted to a sufficiently small ball around $0_x$ in $\T_x\mM$, $\Exp_x$ is a diffeomorphism onto a neighborhood of $x$ and satisfies $d(x,\Exp_x(v))=\|v\|_x$. A retraction provides a computationally cheaper alternative to the exponential map for mapping tangent vectors~to~the~manifold. 
For an integer $k\geq1$, a $C^k$ map $R:\T\mM\to\mM$ is a retraction if, for every $x\in\mM$, its restriction $R_x:\T_x\mM\to\mM$ satisfies
\begin{equation*}
R_x(0_x)=x, \qquad DR_x(0_x)=I_{\T_x\mM}.
\end{equation*}
These conditions ensure that, for every $v\in\T_x\mM$, the curve $t\mapsto R_x(tv)$ starts at $x$ with initial velocity $v$. When $k\geq2$, $R$ is said to be a second-order retraction at $x$ if, in addition, $\left.\frac{D}{\mathrm dt}\dot c_\eta(t)\right|_{t=0}=0_x$ for every $\eta\in\T_x\mM$, where $c_\eta:[0,1]\to\mM$ is defined as $c_\eta(t):=R_x(t\eta)$. By the inverse function theorem, $R_x$ admits a local inverse, denoted by $R_x^{-1}$, from a neighborhood of $x$ in $\mM$ to a neighborhood of $0_x$ in $\T_x\mM$. Thus, $R_x^{-1}(y)$ represents a nearby point $y$ as a tangent vector in $\T_x\mM$. As $y\to x$, it~satisfies $\|R_x^{-1}(y)\|_x=d(x,y)+o(d(x,y))$.

\vskip4pt
\noindent\textbf{Parallel transport and vector transport}. 
Comparing tangent vectors at different points requires maps between the corresponding tangent spaces. For a smooth curve $c:[0,1]\to\mM$ and $s,t\in[0,1]$, the parallel transport $\mathcal P_{s\to t}^c:\T_{c(s)}\mM\to\T_{c(t)}\mM$ is characterized, for each $v\in\T_{c(s)}\mM$, by $\frac{D}{\mathrm dt}\mathcal P_{s\to t}^c(v)=0$ and $\mathcal P_{s\to s}^c(v)=v$. Parallel transport is a linear isometry and therefore preserves inner products and norms. Whenever $R_x^{-1}(y)$ is well defined, we consider the retraction curve $c_{x,y}^R:[0,1]\to\mM$, defined by $c_{x,y}^R(t):=R_x\{tR_x^{-1}(y)\}$ for $t\in[0,1]$, and define two maps from $\T_x\mM$ to $\T_y\mM$:
\begin{equation}\label{eq:transport-maps}
\Gamma_x^y:=\mathcal P_{0\to1}^{c_{x,y}^R}, \qquad\qquad \Lambda_x^y:=DR_x\{R_x^{-1}(y)\}.
\end{equation}
Here, $\Gamma_x^y$ is the parallel transport along $c_{x,y}^R$, while $\Lambda_x^y$ is the differential of $R_x$ with respect to its tangent vector argument at $R_x^{-1}(y)$. We denote by $\Lambda$ the family of maps $\{\Lambda_x^y\}$, defined for pairs $(x,y)$ for which $R_x^{-1}(y)$ exists. Both maps are linear, but only $\Gamma_x^y$ is guaranteed to be an isometry. The inverses of these maps should be distinguished from the corresponding maps obtained by interchanging $x$ and $y$. 
Specifically, $\Gamma_x^y$ is invertible, with $(\Gamma_x^y)^{-1}=\mathcal P_{1\to0}^{c_{x,y}^R}$, given by parallel transport back along the same curve. In general, this inverse need not coincide with $\Gamma_y^x$, since $c_{y,x}^R$ need not be the reverse of $c_{x,y}^R$. Similarly, if $R_x$ is a local diffeomorphism at $R_x^{-1}(y)$, then $\Lambda_x^y$ is invertible, with $(\Lambda_x^y)^{-1}=D(R_x^{-1})(y)$. In general, this inverse need not coincide with $\Lambda_y^x$, since the latter is obtained by differentiating $R_y$ rather than $R_x^{-1}$.

\vskip4pt
\noindent\textbf{Illustration on the sphere.}
We next illustrate these definitions on the sphere, where the maps have explicit formulas. Consider the unit sphere $\mS^d\subseteq\mR^{d+1}$ with the metric induced by the Euclidean inner product. For $x\in\mS^d$, tangent vectors are orthogonal to $x$, so $\T_x\mS^d=\{v\in\mR^{d+1}:x^\top v=0\}$. Thus, $P_x:=I-xx^\top$ is the orthogonal projection from $\mR^{d+1}$ onto $\T_x\mS^d$. As $x$ varies over the sphere, these tangent spaces form the tangent bundle $\T\mS^d=\{(x,v):\|x\|=1,\ x^\top v=0\}$. For the Euclidean-induced metric, the inner product and corresponding distance are
\begin{equation}\label{eq:sphere-metric-distance}
\langle u,v\rangle_x=u^\top v, \qquad d(x,y)=\arccos(x^\top y).
\end{equation}
For a smooth function $h:\mR^{d+1}\to\mR$, its restriction on $\mS^d$, $h|_{\mS^d}:\mS^d\to\mR$ satisfies
\begin{equation*}
\grad (h|_{\mS^d})(x)=P_x\nabla h(x), \qquad \Hess (h|_{\mS^d})(x)[v]=P_x\nabla^2 h(x)v-\{x^\top\nabla h(x)\}v.
\end{equation*}
Here $\nabla h$ and $\nabla^2h$ denote the Euclidean gradient and Hessian, respectively. On the sphere, geodesics lie on great circles. In this example, we use the second-order retraction obtained by normalizing $x+v$. For $v\in\T_x\mS^d$ with $\|v\|>0$, the exponential map and the retraction map we consider are
\begin{equation}\label{eq:sphere-maps}
\Exp_x(v)=\cos(\|v\|)x+\frac{\sin(\|v\|)}{\|v\|}v, \qquad R_x(v)=\frac{x+v}{(1+\|v\|^2)^{1/2}}.
\end{equation}
The exponential map moves from $x$ in the direction $v/\|v\|$ along a great circle, traveling an arc of length $\|v\|$. For the retraction, $x^\top v=0$ gives $\|x+v\|=(1+\|v\|^2)^{1/2}$, which explains the denominator in $R_x(v)$. In particular, $R_x(v)=\Exp_x\{\arctan(\|v\|)v/ \|v\|\}$, so the retraction follows the same great circle through an angle $\arctan(\|v\|)$ instead of $\|v\|$. Conversely, for $y\in\mS^d$, the local inverses $\Exp_x^{-1}(y)$ and $R_x^{-1}(y)$ are both well defined when $x^\top y>0$ and $y\ne x$. Formally,
\begin{equation*}
\Exp_x^{-1}(y)=\frac{d(x,y)}{\sin d(x,y)}P_xy, \qquad R_x^{-1}(y)=\frac{y}{x^\top y}-x.
\end{equation*}
Let $\theta:=d(x,y)=\arccos(x^\top y)$ denote the angle between $x$ and $y$ in radians. We can see both vectors point in the direction $P_xy$, but have lengths $\theta$ and $\tan\theta$, respectively. Finally, for $u\in\T_x\mS^d$, the parallell transition map and vector transport map from $\T_x\mS^d$ to $\T_y\mS^d$ are
\begin{equation*}
\Gamma_x^yu=u-\frac{u^\top y}{1+x^\top y}(x+y), \qquad \Lambda_x^yu=(x^\top y)(I-yy^\top)u.
\end{equation*}

\subsection{Averaged RSGD and asymptotic normality}\label{sec:2.2}

In this section, we introduce the averaged RSGD method for Problem~\eqref{eq:stochastic-optimization-problem} and state the assumptions under which the iterate exhibits asymptotic normality. Let $\{\xi_n\}_{n\geq1}$ be independent copies of $\xi\sim\mathcal D$ and $x_0\in\mM$ be the initial point. We set $\wt x_0=x_0$. For $n\geq1$, we define
\begin{equation}\label{equ:ave_RSGD}
x_n=R_{x_{n-1}}\bigl(-\gamma_n\grad f(x_{n-1},\xi_n)\bigr), \qquad \wt x_n=R_{\wt x_{n-1}}\left(\frac{1}{n}R_{\wt x_{n-1}}^{-1}(x_n)\right).
\end{equation}
Here, $\grad f$ is the Riemannian gradient with respect to $x$, and $\gamma_n$ is the step size. The second recursion moves the current average $\wt x_{n-1}$ toward $x_n$ along the retraction curve, with weight $1/n$. In the Euclidean case, $R_x(v)=x+v$, so the two recursions reduce to the standard SGD update and the recursive average $\wt x_n=\wt x_{n-1}+n^{-1}(x_n-\wt x_{n-1})$, respectively. Throughout the paper, we take step sizes $\gamma_n=\gamma n^{-\alpha}$ for some $\gamma>0$ and $\alpha\in(1/2,1)$, which satisfy the standard conditions for stochastic approximation step sizes, $\sum_{n\geq1}\gamma_n=\infty$ and $\sum_{n\geq1}\gamma_n^2<\infty$ \citep[see, e.g.,][]{Benveniste1990Adaptive}.

In what follows, we review assumptions for asymptotic normality established in \citet{Tripuraneni2018Averaging}. We begin with geometric setting in their Assumption~2, stating the conditions on a compact~set $\mX$ and retraction $R$ explicitly. Note that the update of the average requires $R_{\wt x_{n-1}}^{-1}(x_n)$ to exist at each iteration. Since this inverse need not exist globally, we assume that the iterates remain in a common set $\mX$ where the algorithm``behaves well''.

\begin{assumption}[Manifold and retraction]\label{ass:geometry}
There exists a compact, connected subset $\mX\subseteq\mM$~containing $x_\star$ such that, almost surely, $x_n\in\mX$ for every $n\geq0$. Moreover:
\begin{enumerate}[label=(\alph*)]
\item The set $\mX$ is uniformly totally retractive with respect to $R$: there exists $\rho_{\mathrm{ret}}>0$ such that for any $x\in\mX$, the restriction $R_x|_{B_x(0_x,\rho_{\mathrm{ret}})}$ is a diffeomorphism onto its image and $\mX\subseteq R_x\bigl(B_x(0_x,\rho_{\mathrm{ret}})\bigr)$, where $B_x(0_x,\rho_{\mathrm{ret}}):=\{\eta\in\T_x\mM:\|\eta\|_x<\rho_{\mathrm{ret}}\}$.
\item The set $\mX$ is retraction convex with respect to $R$: $\forall x,y\in\mX$ and $t\in[0,1]$, $R_x\bigl(tR_x^{-1}(y)\bigr)\in\mX$.
\item For every $y\in\mX$, the function $z\mapsto\frac12\|R_y^{-1}(z)\|_y^2$ is retraction strongly convex on $\mX$ with respect to $R$: for any $x\in\mX$ and unit vector $\eta\in\T_x\mM$, the function $t\mapsto\frac12\|R_y^{-1}\{R_x(t\eta)\}\|_y^2$ is strongly convex on every interval on which $R_x(t\eta)\in\mX$.
\item The retraction $R$ is second order at every $x\in\mX$.
\item The map $(x,\eta)\mapsto R_x(\eta)$ is $C^3$ on an open neighborhood in $\T\mM$ of $\bigl\{(x,R_x^{-1}(y)):x,y\in\mX\bigr\}$.
\end{enumerate}
\end{assumption}

The requirement that $\{x_n\}_n\subseteq \mX$ is common in stochastic approximation on manifolds \citep{Bonnabel2013Stochastic,Sato2019Riemannian,Tripuraneni2018Averaging}. Under this stability condition, item~(a) makes $R_{\wt x_{n-1}}^{-1}(x_n)$ well defined whenever $\wt x_{n-1}\in\mX$, while item~(b) then keeps $\wt x_n$ in $\mX$. Since $\wt x_0=x_0\in\mX$, induction shows that every average $\wt x_n$ is well defined and belongs to $\mX$. The following sphere example illustrates items~(a) and~(b).

Consider $\mM=\mathbb S^2$ with the normalization retraction in \eqref{eq:sphere-maps}. In the left panel of Figure~\ref{fig:totally-retractive-region}, let $\mX$ be the shaded compact, connected region with maximum pairwise distance $\delta:=\max_{x,y\in\mX}d(x,y)<\pi/2$. For any fixed $x\in\mX$, the inverse retraction formula in Section~\ref{sec:2.1} gives $\|R_x^{-1}(y)\|_x=\tan d(x,y)\leq\tan\delta$ for all $y\in\mX$. We can therefore choose a common radius $\rho_{\mathrm{ret}}>\tan\delta$ such that $R_x^{-1}(\mX)\subseteq B_x(0_x,\rho_{\mathrm{ret}})$ for every $x\in\mX$, which verifies item~(a). In the right panel, let $\mX$ be the shaded region, which appears elliptical in the figure. For any $x,y\in\mX$, the vectors $tR_x^{-1}(y)$, $t\in[0,1]$, form the line segment from $0_x$ to $R_x^{-1}(y)$ in $\T_x\mM$. The retraction $R_x$ maps this segment to the short great-circle arc from $x$ to $y$, which stays in the displayed region. Hence, $R_x(tR_x^{-1}(y))\in\mX$ for all $t\in[0,1]$, verifying the retraction convexity in item~(b).

\begin{figure}[thb]
\centering
\begin{minipage}[t]{0.49\textwidth}
\centering
\includegraphics[width=\linewidth,trim=45bp 70bp 30bp 0,clip]{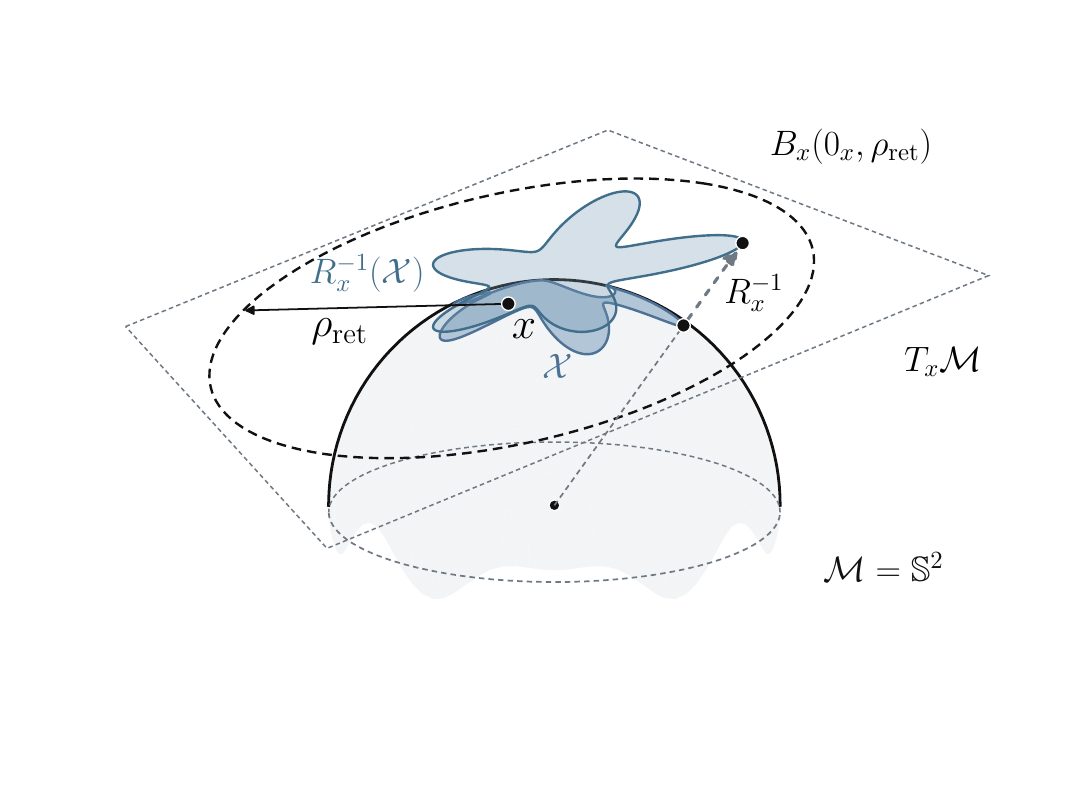}
\end{minipage}\hfill
\begin{minipage}[t]{0.49\textwidth}
\centering
\includegraphics[width=\linewidth,trim=45bp 70bp 30bp 0,clip]{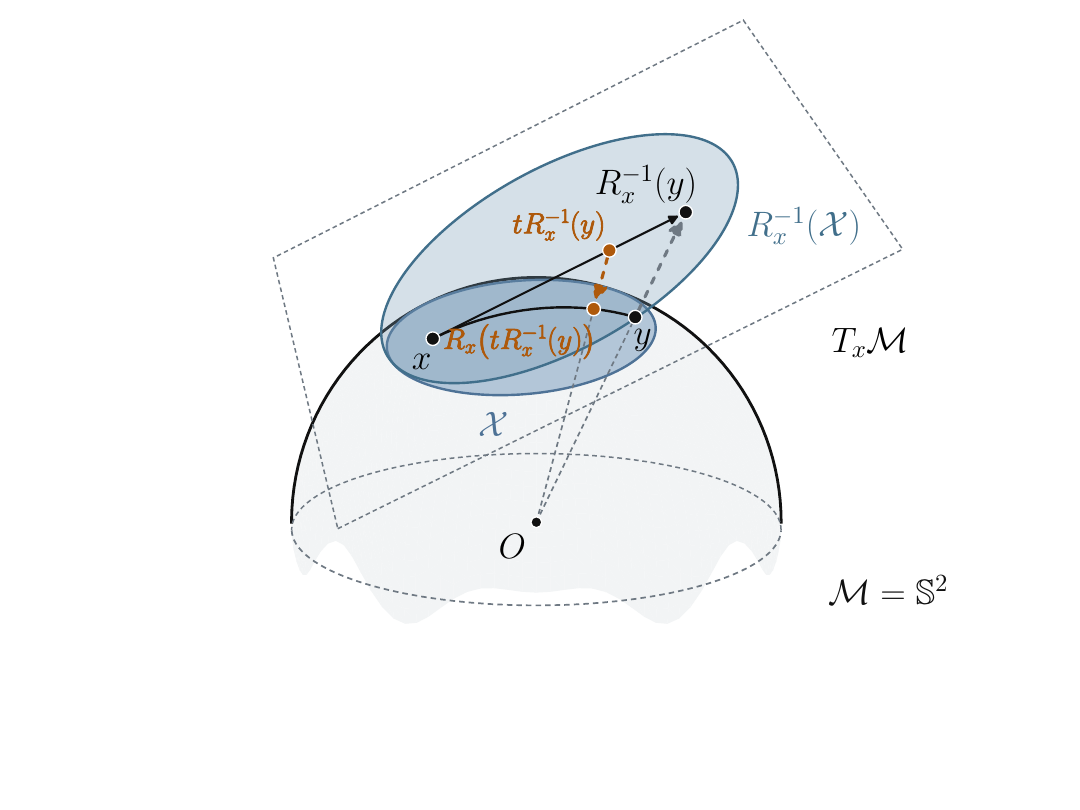}
\end{minipage}
\caption{Left: Assumption~\ref{ass:geometry}(a). The inverse image $R_x^{-1}(\mX)$ lies in the tangent ball $B_x(0_x,\rho_{\mathrm{ret}})$ for every $x\in\mX$, with a common radius $\rho_{\mathrm{ret}}>\tan (\max_{x,y\in\mX}d(x,y))$. Right: Assumption~\ref{ass:geometry}(b). For any $x,y\in\mX$, the normalization retraction maps the line segment $\{tR_x^{-1}(y):t\in[0,1]\}$ in $\T_x\mM$ onto the short great circle arc from $x$ to $y$, which remains in $\mX$, illustrating its retraction convexity.}
\label{fig:totally-retractive-region}
\end{figure}

Item~(c) is restated from \cite[Assumption~2]{Tripuraneni2018Averaging} and adopts the notion of strong convexity along retraction curves introduced by \citet{Huang2015Broyden}. Item~(d) requires the retraction to be second order, which holds for the normalization retraction on the sphere and the Grassmann polar retraction used in Section~\ref{sec:4.2} \citep{Absil2012Projection}. Finally, item~(e) provides the regularity conditions needed for the Taylor expansions used to control the geometric errors.

For the objective function $F$, we restate the smoothness and local strong convexity conditions in \citet[Assumptions~3--4]{Tripuraneni2018Averaging}.

\begin{assumption}[Objective function]\label{ass:objective}
The objective function $F$ satisfies the following conditions:
\begin{enumerate}[label=(\alph*)]
\item $F$ is twice continuously differentiable on an open neighborhood of $\mX$ and $\grad F(x_\star)=0$.
\item Let $A:=\Hess F(x_\star):\T_{x_\star}\mM\to\T_{x_\star}\mM$. There exists $\mu>0$ such that for any $\eta\in\T_{x_\star}\mM$, $\langle\eta,A\eta\rangle_{x_\star}\geq\mu\|\eta\|_{x_\star}^2$.
\item There exists $L_H<\infty$ such that for any $x\in\mX$, $\|(\Gamma_{x_\star}^{x})^{-1}\circ\Hess F(x)\circ\Gamma_{x_\star}^{x}-A\|\leq L_H\|R_{x_\star}^{-1}(x)\|_{x_\star}$.
\end{enumerate}
\end{assumption}

Items~(a) and~(b) state that $x_\star$ is a stationary point with a positive definite Riemannian Hessian. Consequently, $x_\star$ is a strict local minimizer and $A$ is invertible. Item~(c) generalizes the Lipschitz Hessian assumption to Riemannian manifolds, with one point fixed at $x_\star$.

The corresponding noise conditions in \citet[Assumptions~5--6]{Tripuraneni2018Averaging} take the following form for the stochastic gradients $\{\grad f(\cdot,\xi_n)\}_{n\geq1}$.

\begin{assumption}[Gradient noise]\label{ass:noise}
We assume the stochastic gradients satisfy the following conditions:
\begin{enumerate}[label=(\alph*)]
\item The random vector fields $\{\grad f(\cdot,\xi_n)\}_{n\ge1}$ are $\mF_n$-measurable. For every $x\in\mX$ and $n\ge1$, the random vector $\grad f(x,\xi_n)$ is unbiased: $\mE_{n-1}[\grad f(x,\xi_n)]=\grad F(x)$.
\item There exists $\tau<\infty$ such that for any $x\in\mX$ and $n\geq1$, $\mE_{n-1}[\|\grad f(x,\xi_n)\|_x^4]\leq\tau^4$.
\item There exists $L_g<\infty$ such that for any $x\in\mX$ and $n\geq1$,
\begin{equation*}
\mE_{n-1}\bigl[\|(\Gamma_{x_\star}^x)^{-1}\grad f(x,\xi_n)-\grad f(x_\star,\xi_n)\|_{x_\star}^2\bigr]\leq L_g^2\|R_{x_\star}^{-1}(x)\|_{x_\star}^2.
\end{equation*}
\item There exists a self-adjoint positive definite operator $S:\T_{x_\star}\mM\to\T_{x_\star}\mM$ such that $\forall n\geq1$,
\begin{equation*}
\mE_{n-1}[\grad f(x_\star,\xi_n)^{\otimes2}]=S.
\end{equation*}
\end{enumerate}
\end{assumption}

These conditions extend standard noise assumptions for Euclidean stochastic approximation \citep{Fabian1968Asymptotic} to the manifold setting \citep{Tripuraneni2018Averaging}. In particular, the operator $S$ in~item (d) is the covariance operator of the gradient noise at $x_\star$, since $\grad F(x_\star)=0$.

Finally, we state the moment bound conditions used in \citet[Assumption~1]{Tripuraneni2018Averaging}.

\begin{assumption}[Moment bounds for the RSGD iterates]\label{ass:raw-moments}
The RSGD iterates satisfy the following moment conditions. Let $\Delta_n:=R_{x_\star}^{-1}(x_n)\in\T_{x_\star}\mM$.
\begin{enumerate}
\item[(a)] The iterates satisfy $\mE\|\Delta_n\|_{x_\star}^2=O(\gamma_n)$.
\item[(b)] The iterates satisfy $\mE\|\Delta_n\|_{x_\star}^4=O(\gamma_n^2)$.
\end{enumerate}
\end{assumption}

Since $x_\star$ is unknown, the tangent vector $\Delta_n\in\T_{x_\star}\mM$ cannot be computed by the algorithm and is introduced only for the analysis. The second-moment bound in item~(a) is required to establish asymptotic normality, while the fourth-moment bound in item~(b) is required to estimate the covariance (a requirement needed even in the Euclidean setting). Consistent with these roles, \citet[Theorem~1]{Tripuraneni2018Averaging} uses the fourth-moment bound to establish a nonasymptotic error rate for the averaged iterates. In the Euclidean setting, the second- and fourth-moment bounds follow from strong convexity, Lipschitz continuity of the gradient, and suitable moment bounds on the stochastic noise; see \citet[Lemma~3.2]{Chen2020Statistical}, \citet[Lemma~3.1]{Zhu2023Online}, and \cite[Lemma 4.3]{Kuang2026Online}. The same moment bounds in Assumption \ref{ass:raw-moments} have been verified for a variety of manifold optimization~problems; see \citet{Arnaudon2012Stochastic,Zhou2019Fast}.

Under Assumptions~\ref{ass:geometry}--\ref{ass:noise} and the second-moment condition in Assumption~\ref{ass:raw-moments}(a), the asymptotic normality result of \citet[Theorem~1]{Tripuraneni2018Averaging} takes the following form.

\begin{theorem}[Asymptotic normality of averaged RSGD]\label{thm:rasgd-asymptotic-normality}

Consider the RSGD iterates $\{x_n\}_n$ and~recursive averages $\{\wt x_n\}_n$ in \eqref{equ:ave_RSGD}, with step sizes $\gamma_n=\gamma n^{-\alpha}$ for $\gamma>0$ and $\alpha\in(1/2,1)$. Under Assumptions~\ref{ass:geometry}--\ref{ass:noise}, \ref{ass:raw-moments}(a), define $\tDelta_n:=R_{x_\star}^{-1}(\wt x_n) \in \T_{x_\star}\mM$.
Then, with respect to any orthonormal basis of $\T_{x_\star}\mM$, we have
\begin{equation*}
\sqrt{n}\,\tDelta_n\xrightarrow{\mathrm{d}}\mathcal{N}(0,\Sigma) \qquad  \text{with}\qquad \Sigma:=A^{-1}SA^{-1}. 
\end{equation*}

\end{theorem}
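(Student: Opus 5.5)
The plan is to reduce the manifold recursion to a Euclidean-type linear recursion in the fixed tangent space $\T_{x_\star}\mM$, and then apply the Polyak--Juditsky argument there. The proof has three stages.

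\emph{Stage 1: linearize the RSGD iterates.} Write the RSGD step in the coordinates $\Delta_n=R_{x_\star}^{-1}(x_n)$. Using the second-order property and $C^3$ regularity of $R$ (Assumption~\ref{ass:geometry}(d),(e)), a Taylor expansion of $\eta\mapsto R_{x_\star}^{-1}(R_{x_{n-1}}(\eta))$ around $\eta=0$ gives
\begin{equation*}
\Delta_n=\Delta_{n-1}-\gamma_n(\Gamma_{x_\star}^{x_{n-1}})^{-1}\grad f(x_{n-1},\xi_n)+\gamma_n O(\|\Delta_{n-1}\|^2)+O(\gamma_n^2\|\grad f\|^2).
\end{equation*}
The $O(\|\Delta_{n-1}\|^2)$ term reflects that the differential of $R_{x_\star}^{-1}\circ R_{x_{n-1}}$ at $0$ differs from the parallel transport by second-order terms; the second-order retraction is exactly what removes the first-order discrepancy. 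Next split
\begin{equation*}
(\Gamma_{x_\star}^{x_{n-1}})^{-1}\grad f(x_{n-1},\xi_n)=A\Delta_{n-1}+\epsilon_n+r_n,
\end{equation*}
with the following pieces:
\begin{itemize}
\item the martingale difference $\epsilon_n=\grad f(x_\star,\xi_n)$;
\item a remainder $r_n$, controlled by integrating $\Hess F$ along the curve and invoking Assumption~\ref{ass:objective}(c), which gives $\|r_n\|\lesssim\|\Delta_{n-1}\|^2$ in expectation;
\item an additional martingale difference of conditional second moment at most $L_g^2\|\Delta_{n-1}\|^2$, by Assumption~\ref{ass:noise}(c).
\end{itemize}
Together these yield the linear recursion $\Delta_n=(I-\gamma_nA)\Delta_{n-1}-\gamma_n\epsilon_n+\gamma_n\zeta_n$. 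Here $\zeta_n$ gathers the nonlinear and noise-perturbation errors, and Assumption~\ref{ass:raw-moments}(a) gives $\mE\|\zeta_n\|\lesssim\gamma_n$ for its bias part.

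\emph{Stage 2: Polyak--Juditsky in $\T_{x_\star}\mM$.} Apply the standard argument to $\bar\Delta_n=n^{-1}\sum_{i\le n}\Delta_i$. Summation by parts gives
\begin{equation*}
\sqrt n\bar\Delta_n=-n^{-1/2}A^{-1}\sum_{i\le n}\epsilon_i+o_P(1).
\end{equation*}
The error terms are handled as follows:
\begin{itemize}
\item the boundary terms are $O(n^{-1/2}\gamma_n^{-1}\|\Delta_n\|)$;
\item the step-size variation terms are $\sum_i|\gamma_{i+1}^{-1}-\gamma_i^{-1}|\,\|\Delta_i\|$;
\item the bias sum is $n^{-1/2}\sum_i\mE\|\Delta_i\|^2\lesssim n^{-1/2}\sum_i i^{-\alpha}=O(n^{1/2-\alpha})\to0$, which is where $\alpha>1/2$ is used;
\item the perturbed-noise martingale has variance $n^{-1}\sum_i\mE\|\Delta_i\|^2\to0$.
\end{itemize}
The martingale CLT, using Assumption~\ref{ass:noise}(d) for the conditional covariance $S$ and (b) for the Lindeberg condition, then gives $\sqrt n\bar\Delta_n\to\mathcal N(0,A^{-1}SA^{-1})$.

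\emph{Stage 3: compare the geometric average with the arithmetic one.} This is the step I expect to be the main obstacle, namely showing $\sqrt n\|\tDelta_n-\bar\Delta_n\|\to0$ in probability. I would first expand the averaging recursion in $\T_{x_\star}\mM$ coordinates. A Taylor expansion, again relying on the second-order retraction and Assumption~\ref{ass:geometry}(e), should give
\begin{equation*}
\tDelta_n=\tDelta_{n-1}+\tfrac1n(\Delta_n-\tDelta_{n-1})+\tfrac1n O\bigl(\|\Delta_n-\tDelta_{n-1}\|(\|\Delta_n\|+\|\tDelta_{n-1}\|)\bigr).
\end{equation*}
Let $e_n=n(\tDelta_n-\bar\Delta_n)$. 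Then $e_n=e_{n-1}+O(\|\Delta_n\|^2+\|\tDelta_{n-1}\|^2)$, so $\mE\|e_n\|\lesssim\sum_{i\le n}(i^{-\alpha}+\mE\|\tDelta_{i}\|^2)$.

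The difficulty is that this bound needs a preliminary rate for $\tDelta_n$. I would obtain it by a bootstrap argument. The strong retraction convexity in Assumption~\ref{ass:geometry}(c) and compactness of $\mX$ give a crude comparison $\|\tDelta_n\|\lesssim\max$ of a weighted average of $\|\Delta_i\|$, which yields $\mE\|\tDelta_n\|^2\lesssim n^{-\alpha}$. Feeding this back gives $\mE\|e_n\|\lesssim n^{1-\alpha}$. Hence $\sqrt n\,\mE\|\tDelta_n-\bar\Delta_n\|\lesssim n^{1/2-\alpha}\to0$, and Slutsky's lemma completes the proof. The basis-independence claim holds because an orthonormal change of basis preserves $\mathcal N(0,\Sigma)$ in the operator sense.
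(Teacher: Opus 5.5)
Your proposal is correct and follows essentially the same route as the proof of \citet[Theorem~1]{Tripuraneni2018Averaging}, which the paper cites for this result rather than proving it; the paper does re-derive most of the ingredients in Appendix~\ref{app:C}, where your Stage~1 corresponds to Lemmas~\ref{lem:C-retraction-coordinate-taylor}--\ref{lem:C-stochastic-gradient-expansion} and Theorem~\ref{thm:C-perturbed-tangent-recursion}, and your Stage~3 to Lemmas~\ref{lem:C-rough-average-moments}--\ref{lem:C-retraction-arithmetic-discrepancy}. The step you flagged as the main obstacle needs no bootstrap: Assumption~\ref{ass:geometry}(c) with $y=x_\star$, applied along each averaging curve, gives $\|\tDelta_n\|^2\le(1-\tfrac1n)\|\tDelta_{n-1}\|^2+\tfrac1n\|\Delta_n\|^2$, hence $\|\tDelta_n\|^2\le n^{-1}\sum_{i\le n}\|\Delta_i\|^2$ (an average of squares, not a maximum), which yields $\mE\|\tDelta_n\|^2\lesssim\gamma_n$ directly from Assumption~\ref{ass:raw-moments}(a).
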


Theorem~\ref{thm:rasgd-asymptotic-normality} shows that the error of the averaged RSGD iterate exhibits asymptotic normality in $\T_{x_\star}\mM$. Its limiting covariance, however, depends on the unknown operators $A$ and $S$. To enable practical inference, Section~\ref{sec:3} develops consistent online batch-means estimators of this covariance $\Sigma$ using only the RSGD iterates, without evaluating the Riemannian Hessian. Combining these estimators~with Theorem~\ref{thm:rasgd-asymptotic-normality} yields the confidence intervals and confidence regions constructed in Section~\ref{sec:4}.

\section{Online covariance estimation on Riemannian manifolds}\label{sec:3}

In this section, we develop online batch-means estimators of the asymptotic covariance $\Sigma$ in Theorem~\ref{thm:rasgd-asymptotic-normality}. The target covariance $\Sigma$ acts on $\T_{x_\star}\mM$, but this space is unknown because $x_\star$ is unknown. A natural choice is therefore to construct the estimator on $\T_{\wt x_n}\mM$, using the averaged iterate $\wt x_n$ as our current estimate of $x_\star$. Unlike in Euclidean space, both the operator and the space on which it acts change as the average $\wt x_n$ is updated, creating a central challenge for online estimation. We first present a general framework for constructing online batch-means estimators on manifolds, then specialize it to two estimators: overlapping (OL) and non-overlapping (NOL). Section~\ref{sec:3.2} presents a common online implementation. Section~\ref{sec:3.3} establishes convergence rates for both estimators, and Section~\ref{sec:3.4} outlines the proof strategy, with full details deferred to Appendix~\ref {app:D}.

\subsection{Online batch-means covariance estimators}\label{sec:3.1}

We first introduce the notation and components used to construct our batch-means estimator. Consider the RSGD iterates $\{x_j\}_{j\geq1}$ generated by \eqref{equ:ave_RSGD}. Let $\{a_m\}_{m\geq1}$ be a user-selected, strictly increasing sequence of positive integers with $a_1=1$. For each $j\geq1$, define $m_j:=\max\{m:a_m\leq j\}$ and let $t_j:=a_{m_j}$ be the largest value of $a_m$ not exceeding $j$. The construction consists of three components:
\begin{enumerate}
\item \textbf{Batches.} We adopt the same batch construction as \citet{Zhu2023Online} in the Euclidean setting. For each $j\geq1$, define the batch
\begin{equation}\label{eq:batch-indices}
B_j:=\{x_{t_j},x_{t_j+1},\ldots,x_j\}.
\end{equation}
Its length is $\ell_j:=|B_j|=j-t_j+1$. For example, with  $a_m=m^2$ we have $t_j=\lfloor\sqrt{j}\rfloor^2$ and
\begin{equation}\label{eq:batch-example}
\begin{aligned}
& B_1=\{x_1\},\quad B_2=\{x_1,x_2\},\quad B_3=\{x_1,x_2,x_3\}, \\
& B_4=\{x_4\},\quad B_5=\{x_4,x_5\},\quad \ldots ,\quad B_8=\{x_4,x_5,x_6,x_7,x_8\}, \\
& B_9=\{x_9\},\quad B_{10}=\{x_9,x_{10}\},\quad B_{11}=\{x_9,x_{10},x_{11}\},\quad B_{12}=\{x_9,x_{10},x_{11},x_{12}\},\quad\ldots.
\end{aligned}
\end{equation}
In general, for $a_m\leq j<a_{m+1}$, the batch $B_j$ contains the iterates $x_{a_m},\ldots,x_j$. Thus, successive batches within this interval share a starting index and increase in length by one. At $j=a_{m+1}$, the starting index changes and $B_{a_{m+1}}=\{x_{a_{m+1}}\}$. 
\item \textbf{Batch means.} Since arithmetic averaging is not intrinsically defined on a general Riemannian manifold, we use the same retraction-based averaging rule as for the overall average $\wt x_n$ in \eqref{equ:ave_RSGD}. We define the mean $c_j$ of batch $B_j$ recursively by
\begin{equation}\label{eq:batch-average}
c_j:=\begin{cases}
x_j, & j=t_j,\\
R_{c_{j-1}}\left(\ell_j^{-1}R_{c_{j-1}}^{-1}(x_j)\right), & j>t_j
\end{cases}.
\end{equation}
Assumption~\mbox{\ref{ass:geometry}} ensures that this recursion is well defined and that $c_j\in\mX$. In Euclidean space, with the standard retraction $R_x(v)=x+v$, it reduces to the usual arithmetic batch mean. Equation~\eqref{eq:batch-average} also provides the online update for $c_j$ used in our algorithm.
\item \textbf{Batch deviations.} We next measure the deviation of each batch mean $c_j$ from the overall average $\wt x_n$. Since subtraction is not intrinsically defined on a general Riemannian manifold, we express this deviation as a tangent vector using the inverse retraction:
\begin{equation}\label{eq:batch-vector}
b_{j,n}:=R_{\wt x_n}^{-1}(c_j)\in\T_{\wt x_n}\mM,\qquad n\geq j.
\end{equation}
In Euclidean space, with $R_x(v)=x+v$, we have $b_{j,n}=c_j -\bar x_n = \ell_j^{-1}\sum_{i=t_j}^j x_i-\bar x_n$, where $\bar x_n=n^{-1}\sum_{i=1}^n x_i$ is the arithmetic average of all the iterates. For a fixed $j$, the batch mean $c_j$ remains unchanged after time $j$. However, as $\wt x_n$ is updated, both the value of $b_{j,n}$ and the tangent space to which it belongs change. Evaluating \eqref{eq:batch-vector} directly for all selected batches at each time $n$ would require storing and revisiting their means, with memory and per-iteration costs growing with the number of selected batches. To enable an online implementation, we instead approximate $b_{j,n}$ by the recursively updated vector
\begin{equation}\label{eq:batch-vector-update}
\tb_{j,k}:=\begin{cases}
b_{j,j}, & k=j,\\
\Lambda_{\wt x_{k-1}}^{\wt x_k}\left(\tb_{j,k-1}-u_k\right), & k=j+1,\ldots,n,
\end{cases}
\end{equation}
where $u_k:=R_{\wt x_{k-1}}^{-1}(\wt x_k)=k^{-1}R_{\wt x_{k-1}}^{-1}(x_k)\in\T_{\wt x_{k-1}}\mM$ and $\Lambda_{\wt x_{k-1}}^{\wt x_k}$ is the vector transport defined in \eqref{eq:transport-maps} with $x=\wt x_{k-1}$ and $y=\wt x_k$. For $k>j$, the update first subtracts $u_k$ from $\tb_{j,k-1}$ to account for the movement of the overall average from $\wt x_{k-1}$ to $\wt x_k$. It then applies the linear map $\Lambda_{\wt x_{k-1}}^{\wt x_k}$ to transport the adjusted vector from $\T_{\wt x_{k-1}}\mM$ to $\T_{\wt x_k}\mM$. In Euclidean space, $\Lambda_{\bar x_{k-1}}^{\bar x_k}$ is the identity map and $u_k=\bar x_k-\bar x_{k-1}$. Thus, \eqref{eq:batch-vector-update} reproduces the exact identity $b_{j,k}=b_{j,k-1}-(\bar x_k-\bar x_{k-1})$, yielding $\tb_{j,n}=b_{j,n}$. On a general manifold, $\tb_{j,n}$ only approximates $b_{j,n}$, but our analysis shows that the resulting error does not affect the convergence rate of the covariance estimator.
\end{enumerate}

The covariance estimator need not use every batch. Let $E_n\subseteq\{1,\ldots,n\}$ index the batches selected at time $n$. We assume that the prescribed sequence $\{E_n\}_{n\geq1}$ satisfies $E_1=\{1\}$ and that $E_{n+1}$ is either $E_n\cup\{n+1\}$ or $(E_n\setminus\{n\})\cup\{n+1\}$ for every $n\geq1$. Using the recursively updated deviations from~\eqref{eq:batch-vector-update}, we define our general batch-means estimator as
\begin{equation}\label{eq:online-estimator}
\wt\Sigma_n:=\frac{1}{v_n}\sum_{j\in E_n}\ell_j^2\tb_{j,n}^{\otimes2},\qquad v_n:=\sum_{j\in E_n}\ell_j.
\end{equation}

We now specialize the choice of $E_n$ and introduce two main estimators we consider in this work. 

\paragraph{Overlapping estimator.} We take $E_n=E_n^{\mathrm{OL}}:=\{1,\ldots,n\}$ to obtain the OL estimator.  In this case, every batch $B_j$ is selected. The general covariance estimator in \eqref{eq:online-estimator} then becomes
\begin{equation}\label{eq:ol-estimator}
\wt\Sigma_{n,\mathrm{OL}}:=\frac{1}{v_n}\sum_{j\in E_n^{\mathrm{OL}}}\ell_j^2\tb_{j,n}^{\otimes2}.
\end{equation}
Here $v_n=\sum_{j\in E_n^{\mathrm{OL}}}\ell_j$ is the sum of the batch lengths, including overlaps. In Euclidean space, with $R_x(v)=x+v$, the estimator recovers the overlapping estimator introduced in~\citet{Zhu2023Online}.

\paragraph{Non-overlapping estimator.} 
For the non-overlapping version, we take
\begin{equation}\label{eq:nol-endpoints}
E_n^{\mathrm{NOL}}:=\{a_{m+1}-1:m\geq1,\ a_{m+1}\leq n\}\cup\{n\}.
\end{equation}
In contrast, the NOL estimator selects only the past batches that end immediately before a new starting index, together with the current batch $B_n$. The batches we select are pairwise disjoint and partition $\{x_1,\ldots,x_n\}$, so their total length is $v_n=n$. With this choice, the covariance estimator in \eqref{eq:online-estimator} becomes
\begin{equation}\label{eq:nol-estimator}
\wt\Sigma_{n,\mathrm{NOL}}:=\frac{1}{n}\sum_{j\in E_n^{\mathrm{NOL}}}\ell_j^2\tb_{j,n}^{\otimes2}.
\end{equation}
In Euclidean space, with $R_x(v)=x+v$, the estimator recovers the non-overlapping estimator introduced in~\citet{Zhu2023Online}.

For comparison, consider the example with $a_m=m^2$ in \eqref{eq:batch-example}. At time $9$, the OL estimator selects $B_1,\ldots,B_9$, while the NOL estimator selects only $B_3,B_8,B_9$. At time $10$, OL keeps $B_9$ and adds $B_{10}$, whereas NOL replaces $B_9$ with $B_{10}$. In the next section, we present a concrete algorithm to calculate $\tilde \Sigma_n$ in an online fashion.

\subsection{Online implementation}\label{sec:3.2}

To illustrate the method,  we separate the current batch $B_n$ from the other selected batches. At time $n$, let $H_n:=E_n\setminus\{n\}$ denote the indices of these other  batches. We call the batches $B_j$ with $j\in H_n$ retained batches. Under the assumed selection rule that $E_{n+1}$ is either $E_n\cup\{n+1\}$ or $(E_n\setminus\{n\})\cup\{n+1\}$, only the current index may be removed at each update. Hence, $j\in H_n$ implies $j\in E_k$ for every $k\geq n$, so a retained batch remains selected at all subsequent times. The algorithm handles the retained batches  and  the current batch separately.

Alongside the iterate $x_n$ and overall average $\wt x_n$, the algorithm maintains $m_n$, which identifies the starting index $a_{m_n}$, and the current batch length $\ell_n$, mean $c_n$, and deviation approximation $\tb_{n,n}$. For the retained batches, it maintains their total length $s_n:=\sum_{j\in H_n}\ell_j=v_n-\ell_n$ and the weighted~sums
\begin{equation*}
q_n:=\sum_{j\in H_n}\ell_j^2,\qquad P_n:=\sum_{j\in H_n}\ell_j^2\tb_{j,n},\qquad V_n:=\sum_{j\in H_n}\ell_j^2\tb_{j,n}^{\otimes2}.
\end{equation*}
Here, $V_n$ collects the contributions of the retained batches to the numerator of \eqref{eq:online-estimator}, while $s_n$ accounts for their contribution to the normalizing factor. The auxiliary sums $P_n$ and $q_n$ can be maintained recursively and allow $V_n$ to be updated without revisiting individual retained batches. Algorithm~\ref{alg:online-covariance} combines these updates with those of the current batch to compute the estimator in an online fashion.

\begin{algorithm}[H]
\small
\caption{Online batch-means covariance estimation with averaged RSGD}
\label{alg:online-covariance}
\textbf{Input:} stochastic gradient oracle $\grad f(\cdot,\xi)$; retraction $R$ and its associated vector transport $\Lambda$;\\
\hphantom{\textbf{Input:} }initial point $x_0\in\mM$; step sizes $\{\gamma_n\}_{n\geq1}$; batch starts $\{a_m\}_{m\geq1}$; batch selection rule $E_n$.\\
\textbf{Initialize:} $\wt x_0=x_0$; $m_0=\ell_0=s_0=q_0=0$; $P_0=0_{x_0}$; $V_0=0:\T_{x_0}\mM\to\T_{x_0}\mM$.\\
\textbf{For} $n=0,1,2,\ldots$, receive $\xi_{n+1}$ and perform the following updates.\\
\hspace*{2ex}1. \textbf{Update the RSGD iterate and overall average:}\\
\hspace*{5ex}$x_{n+1}=R_{x_n}\{-\gamma_{n+1}\grad f(x_n,\xi_{n+1})\}$.\\
\hspace*{5ex}$u_{n+1}=(n+1)^{-1}R_{\wt x_n}^{-1}(x_{n+1})$; $\wt x_{n+1}=R_{\wt x_n}(u_{n+1})$; $\Lambda_{n+1}=\Lambda_{\wt x_n}^{\wt x_{n+1}}$.\\
\hspace*{2ex}2. \textbf{Retain the previous batch contribution if required:}\\
\hspace*{5ex}\textbf{If} $n\geq1$ and $n\in E_{n+1}$, set\\
\hspace*{8ex}$s_n\gets s_n+\ell_n$; $q_n\gets q_n+\ell_n^2$;\\
\hspace*{8ex}$P_n\gets P_n+\ell_n^2\tb_{n,n}$; $V_n\gets V_n+\ell_n^2\tb_{n,n}^{\otimes2}$.\\
\hspace*{2ex}3. \textbf{Update the current batch average:}\\
\hspace*{5ex}\textbf{If} $n+1=a_{m_n+1}$, set $m_{n+1}=m_n+1$, $\ell_{n+1}=1$, and $c_{n+1}=x_{n+1}$;\\
\hspace*{5ex}\textbf{else}, set $m_{n+1}=m_n$, $\ell_{n+1}=\ell_n+1$, and $c_{n+1}=R_{c_n}\{\ell_{n+1}^{-1}R_{c_n}^{-1}(x_{n+1})\}$.\\
\hspace*{5ex}Compute and retain $\tb_{n+1,n+1}=R_{\wt x_{n+1}}^{-1}(c_{n+1})$.\\
\hspace*{2ex}4. \textbf{Recenter and transport the retained sums:}\\
\hspace*{5ex}$s_{n+1}=s_n$; $q_{n+1}=q_n$.\\
\hspace*{5ex}$P_{n+1}=\Lambda_{n+1}(P_n-q_nu_{n+1})$.\\
\hspace*{5ex}$V_{n+1}=\Lambda_{n+1}\circ\bigl\{V_n-P_n\otimes u_{n+1}-u_{n+1}\otimes P_n+q_nu_{n+1}^{\otimes2}\bigr\}\circ\Lambda_{n+1}^\ast$.\\
\hspace*{2ex}\textbf{Output:} averaged RSGD iterate $\wt x_{n+1}$; covariance estimate $\wt\Sigma_{n+1}=(V_{n+1}+\ell_{n+1}^2\tb_{n+1,n+1}^{\otimes2})/(s_{n+1}+\ell_{n+1})$.
\end{algorithm}

For $n\geq1$, step~2 determines whether $B_n$ becomes a retained batch. If $n\in E_{n+1}$, then $H_{n+1}=H_n\cup\{n\}$, and its contribution is added to the retained-batch sums. Otherwise, $H_{n+1}=H_n$, and these sums remain unchanged.
Step~3 then computes the mean $c_{n+1}$ of $B_{n+1}$. If $n+1=a_{m_n+1}$, this batch contains only the iterate $x_{n+1}$, so $c_{n+1}=x_{n+1}$. Otherwise, $B_{n+1}$ extends $B_n$ by one iterate, and $c_{n+1}$ is computed from $c_n$ using \eqref{eq:batch-average}. The algorithm then evaluates the deviation $\tb_{n+1,n+1}=R_{\wt x_{n+1}}^{-1}(c_{n+1})$. 

The deviation of $B_{n+1}$ already lies in $\T_{\wt x_{n+1}}\mM$, but the retained-batch sums $P_n$ and $V_n$ assembled in step~2 still use the deviations $\tb_{j,n}\in\T_{\wt x_n}\mM$. Step~4 adjusts these sums for the movement of the overall average through the common shift and transport in \eqref{eq:batch-vector-update}. For the weighted sum $P_n$, linearity implies
\begin{equation*}
\begin{aligned}
P_{n+1}
&=\sum_{j\in H_{n+1}}\ell_j^2\tb_{j,n+1}\\
&=\Lambda_{n+1}\Bigl\{\sum_{j\in H_{n+1}}\ell_j^2\tb_{j,n}
-\Bigl(\sum_{j\in H_{n+1}}\ell_j^2\Bigr)u_{n+1}\Bigr\} =\Lambda_{n+1}\bigl(P_n-q_nu_{n+1}\bigr).
\end{aligned}
\end{equation*}
For the second-order sum, expanding the weighted outer products yields
\begin{equation*}
\begin{aligned}
\sum_{j\in H_{n+1}}\ell_j^2(\tb_{j,n}-u_{n+1})^{\otimes2}
&=\sum_{j\in H_{n+1}}\ell_j^2\tb_{j,n}^{\otimes2}
-\Bigl(\sum_{j\in H_{n+1}}\ell_j^2\tb_{j,n}\Bigr)\otimes u_{n+1}\\
&\quad-u_{n+1}\otimes\Bigl(\sum_{j\in H_{n+1}}\ell_j^2\tb_{j,n}\Bigr)
+\Bigl(\sum_{j\in H_{n+1}}\ell_j^2\Bigr)u_{n+1}^{\otimes2}\\
&=V_n-P_n\otimes u_{n+1}-u_{n+1}\otimes P_n+q_nu_{n+1}^{\otimes2}.
\end{aligned}
\end{equation*}
For $z\in\T_{\wt x_n}\mM$, the transport identity $
(\Lambda_{n+1}z)^{\otimes2}
=\Lambda_{n+1}\circ z^{\otimes2}\circ\Lambda_{n+1}^{\ast}$
implies
\begin{equation*}
\begin{aligned}
V_{n+1}
&=\sum_{j\in H_{n+1}}\ell_j^2\tb_{j,n+1}^{\otimes2}\\
&=\Lambda_{n+1}\circ\Bigl\{\sum_{j\in H_{n+1}}\ell_j^2(\tb_{j,n}-u_{n+1})^{\otimes2}\Bigr\}\circ\Lambda_{n+1}^{\ast}\\
&=\Lambda_{n+1}\circ\bigl\{V_n-P_n\otimes u_{n+1}-u_{n+1}\otimes P_n+q_nu_{n+1}^{\otimes2}\bigr\}\circ\Lambda_{n+1}^{\ast}.
\end{aligned}
\end{equation*}
The updates of $P_n$ and $V_n$ require only the aggregate sums, allowing the estimator to be updated online without revisiting individual retained batches.

Finally, at each time $n\geq1$, the output combines the retained-batch sum $V_n$ with the contribution of $B_n$. The decomposition $E_n=H_n\cup\{n\}$ and the identity $v_n=s_n+\ell_n$ imply
\begin{equation*}
\frac{V_n+\ell_n^2\tb_{n,n}^{\otimes2}}{s_n+\ell_n}
=\frac{1}{v_n}\sum_{j\in E_n}\ell_j^2\tb_{j,n}^{\otimes2}
=\wt\Sigma_n.
\end{equation*}
Thus the output is exactly the estimator in \eqref{eq:online-estimator}; updating the weighted sums introduces no further approximation beyond the batch vectors $\tb_{j,n}$ already used in its definition. We evaluate the numerical performance of the estimators in Section~\ref{sec:5}.

\subsection{Convergence rates}\label{sec:3.3}

In this section, we establish convergence guarantees for the OL and NOL estimators defined in~\eqref{eq:ol-estimator} and \eqref{eq:nol-estimator}, respectively. Recall that the estimators act on $\T_{\wt x_n}\mM$, whereas $\Sigma$ acts on $\T_{x_\star}\mM$. We therefore use parallel transport to express the estimators on $\T_{x_\star}\mM$ for comparison with $\Sigma$. This transport is used only in the analysis; computing the estimators requires neither parallel transport nor knowledge of $x_\star$. We now state our main convergence guarantee.

\begin{theorem}[Convergence rates of the OL and NOL estimators]\label{thm:covariance-rate}
Consider the RSGD iterates $\{x_n\}_n$ and~recursive averages $\{\wt x_n\}_n$ in \eqref{equ:ave_RSGD}, with step sizes $\gamma_n=\gamma n^{-\alpha}$ for $\gamma>0$ and $\alpha\in(1/2,1)$. Under Assumptions~\ref{ass:geometry}--\ref{ass:raw-moments}, let $\wt\Sigma_{n,\mathrm{OL}}$ and $\wt\Sigma_{n,\mathrm{NOL}}$ be the estimators defined in~\eqref{eq:ol-estimator} and \eqref{eq:nol-estimator}, respectively, with batch starts $a_1=1$ and $a_m=\lfloor cm^\beta\rfloor$ for $m\geq2$, where $c\geq1$ and $\beta>1/(1-\alpha)$.
Then, for $\diamond\in\{\mathrm{OL},\mathrm{NOL}\}$, we have
\begin{equation*}
\mE\left\|\left[(\Gamma_{x_\star}^{\wt x_n})^{-1}\circ\wt\Sigma_{n,\diamond}\circ\Gamma^{\wt x_n}_{x_\star}\right]-\Sigma\right\|\lesssim n^{-1/(2\beta)}+n^{(\alpha-1)/2+1/(2\beta)}.
\end{equation*}
\end{theorem}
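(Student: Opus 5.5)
We reduce the problem to the Euclidean batch-means analysis of \citet{Zhu2023Online}, carried out in the fixed tangent space $\T_{x_\star}\mM$, and show that every geometric discrepancy is of higher order. Write $\Gamma_n:=\Gamma_{x_\star}^{\wt x_n}$ and $\Delta_i=R_{x_\star}^{-1}(x_i)$. Define the \emph{Euclidean surrogate}
\begin{equation*}
\Sigma_n^{E}:=\frac{1}{v_n}\sum_{j\in E_n}\ell_j^2\Bigl(\bar\Delta^B_j-\bar\Delta_n\Bigr)^{\otimes2},\qquad \bar\Delta^B_j:=\ell_j^{-1}\sum_{i=t_j}^j\Delta_i,\quad \bar\Delta_n:=n^{-1}\sum_{i=1}^n\Delta_i,
\end{equation*}
which is exactly the OL/NOL estimator \eqref{eq:intro-online-bm} applied to the vectors $\Delta_i$. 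The error then splits as
\begin{equation*}
\mE\bigl\|\Gamma_n^{-1}\circ\wt\Sigma_n\circ\Gamma_n-\Sigma\bigr\|\le\mE\|\Sigma_n^E-\Sigma\|+\mE\bigl\|\Gamma_n^{-1}\circ\wt\Sigma_n\circ\Gamma_n-\Sigma_n^E\bigr\|.
\end{equation*}

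\textbf{Step 1: Euclidean term.} We pull the RSGD recursion back to $\T_{x_\star}\mM$. A second-order Taylor expansion of $R_{x_\star}^{-1}\circ R_{x_{n-1}}$, valid by Assumption~\ref{ass:geometry}(d),(e), combined with Assumptions~\ref{ass:objective}(c) and \ref{ass:noise}(c), gives
\begin{equation*}
\Delta_n=\Delta_{n-1}-\gamma_n A\Delta_{n-1}-\gamma_n\varepsilon_n+r_n,
\end{equation*}
where $\varepsilon_n=\grad f(x_\star,\xi_n)$ is a martingale difference with covariance $S$, and $\mE\|r_n\|^2\lesssim\gamma_n^2\mE\|\Delta_{n-1}\|^2+\gamma_n^2\mE\|\Delta_{n-1}\|^4\lesssim\gamma_n^3$ by Assumption~\ref{ass:raw-moments}. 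This is the linear-plus-remainder form used in \citet{Zhu2023Online}, together with the same second- and fourth-moment bounds, so their argument applies in the finite-dimensional inner product space $\T_{x_\star}\mM$. It gives $\mE\|\Sigma_n^E-\Sigma\|\lesssim n^{-1/(2\beta)}+n^{(\alpha-1)/2+1/(2\beta)}$. Here the first term is the variance/bias from the number of batches, and the second is the correlation bias from $\ell_j\gamma_j$ not being large. We must check that the remainder $r_n$ enters only at a higher order, but this is routine once the moment bounds are in hand.

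\textbf{Step 2: geometric terms.} We split the second error further, comparing $\Gamma_n^{-1}\tb_{j,n}$ with $\bar\Delta^B_j-\bar\Delta_n$ through three intermediate vectors.
\begin{enumerate}
\item[(i)] Batch means: $\|R_{x_\star}^{-1}(c_j)-\bar\Delta^B_j\|\lesssim\max_{i\in B_j}\|\Delta_i\|^2$. This follows by induction on the recursion \eqref{eq:batch-average}, using that the retraction is second order, so that $R_{x_\star}^{-1}\circ R_{y}(\ell^{-1}R_y^{-1}(x))$ agrees with the Euclidean convex combination up to quadratic terms. The same holds for $\wt x_n$ versus $\bar\Delta_n$.
\item[(ii)] Exact deviation: $\|\Gamma_n^{-1}b_{j,n}-(R_{x_\star}^{-1}(c_j)-\tDelta_n)\|\lesssim(\|R_{x_\star}^{-1}(c_j)\|+\|\tDelta_n\|)^2$, again by a second-order expansion.
\item[(iii)] Recursion error: $\|\tb_{j,n}-b_{j,n}\|$ accumulated through \eqref{eq:batch-vector-update}. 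Each step of the recursion commits an error of order $\|u_k\|(\|\tb_{j,k-1}\|+\|u_k\|)\cdot(\text{distance scale})$, since $\Lambda$ differs from the derivative of the exact map $b\mapsto R_{\wt x_k}^{-1}R_{\wt x_{k-1}}(b)$ only at second order. Summing over $k$ with $\|u_k\|\lesssim k^{-1}\|R^{-1}_{\wt x_{k-1}}(x_k)\|$ should give $\|\tb_{j,n}-b_{j,n}\|\lesssim\sum_{k>j}k^{-1}\,\mathrm{poly}(\|\Delta\|)^2\lesssim \log(n/j)\cdot O_p(\gamma_j)$, with a Gronwall-type factor to control the growth from $\|\Lambda\|\le1+O(\|u_k\|)$.
\end{enumerate}
The outer-product difference is then bounded by $\|a^{\otimes2}-b^{\otimes2}\|\le\|a-b\|(\|a\|+\|b\|)$. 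After weighting by $\ell_j^2/v_n$, we use Cauchy--Schwarz with $\mE\|\bar\Delta^B_j-\bar\Delta_n\|^2\lesssim\ell_j^{-1}+\gamma_j^2$-type bounds. The target is a geometric contribution of order $\ell_j\sqrt{\gamma_{t_j}}$ per unit weight, which is $o$ of the Euclidean rate because $\ell_j\lesssim j^{1-1/\beta}$ and $\beta>1/(1-\alpha)$.

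\textbf{Main obstacle.} The difficulty is step (iii): controlling the accumulation of the vector-transport errors over $n-j$ steps uniformly in $j$. These errors are multiplied by $\ell_j^2$, which is large, so a crude bound $\ell_j\cdot\|\tb-b\|\cdot\ell_j\|b\|$ may fail. We would first try to show $\|\tb_{j,n}-b_{j,n}\|\lesssim\|b_{j,n}\|\cdot\sum_{k>j}\|u_k\|\,(\|\tDelta_k\|+\|u_k\|)$, which is relative to $\|b_{j,n}\|$. This would make the perturbation multiplicative and of order $O_p(n^{-\alpha/2}\log n)$, hence harmless. The fourth-moment bound in Assumption~\ref{ass:raw-moments}(b) handles the expectations of the resulting products. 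Finally, replacing $\Gamma_n$ by the comparison isometry costs only $O(\|\tDelta_n\|)=O_p(n^{-1/2})$.
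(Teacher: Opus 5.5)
Your decomposition is the paper's: $\Sigma_n^E$ is $\hSigma^{(3)}_{n,\diamond}$, and your (i)--(iii) are its three geometric comparisons. Several steps fail as written, however.

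\textbf{Step 1.} You fold the state-dependent part of the gradient noise, of size $\gamma_n\|\Delta_{n-1}\|$, into $r_n$ and keep only $\mE\|r_n\|^2\lesssim\gamma_n^3$. That bound alone does not make $r_n$ higher order. Compare with $U_n=(I-\gamma_nA)U_{n-1}-\gamma_n\varepsilon_n$: Minkowski gives only $(\mE\|\Delta_n-U_n\|^2)^{1/2}\lesssim n^{-\alpha/2}$, the order of $\Delta_n$ itself. The resulting weighted batch error bound, $v_n^{-1}\sum_j\ell_j^2t_j^{-\alpha}\asymp n^{1-1/\beta-\alpha}$, diverges because $\beta>1/(1-\alpha)$. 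You must keep $r_n=\gamma_n\omega_n+\gamma_ne_n$ split, with $\omega_n$ a martingale difference satisfying $\mE_{n-1}\|\omega_n\|^2\lesssim\|\Delta_{n-1}\|^2$ and $\mE\|e_n\|^2\lesssim\gamma_n^2$. Orthogonality of the martingale part then gives $(\mE\|\Delta_n-U_n\|^2)^{1/2}\lesssim n^{-\alpha}$ (Theorem~\ref{thm:C-perturbed-tangent-recursion}, Lemma~\ref{lem:C-linear-coupling}).

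\textbf{Step (i).} The bound by $\max_i\|\Delta_i\|^2$ is too weak. For $\wt x_n$ versus $\barDelta_n$ the maximum contains $O(1)$ early errors. Within a batch, the assumed moments bound $\mE\max_{i\in B_j}\|\Delta_i\|^4$ only by $\ell_j\gamma_{t_j}^2$, which again makes the weighted sum diverge. Keep the averaged form that the induction actually produces, $\|\tDelta_{s:j}-\barDelta_{s:j}\|\le C(j-s+1)^{-1}\sum_{k=s}^{j-1}\|\Delta_{k+1}-\tDelta_{s:k}\|^2$, and bound it in $L^2$ using fourth moments (Lemma~\ref{lem:C-retraction-arithmetic-discrepancy}). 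Throughout you need moment bounds rather than $O_p$ statements, since the target is an $L^1$ rate.

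\textbf{Step (iii): the missing idea.} The composition $\Lambda_{\wt x_{n-1}}^{\wt x_n}\circ\cdots\circ\Lambda_{\wt x_k}^{\wt x_{k+1}}$ that propagates each one-step error cannot be controlled from $\|\Lambda\|\le1+O(\|u_k\|)$. Deterministically you know only $\|u_k\|<\rho_{\mathrm{ret}}/k$, which yields a factor $(n/k)^{C\rho_{\mathrm{ret}}}$. A random Gronwall factor $\exp(C\sum_k\|u_k\|)$ would need exponential moments that Assumption~\ref{ass:raw-moments} does not supply.

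What works is the second-order property in Assumption~\ref{ass:geometry}(d). It kills the linear term of $s\mapsto\|DR_x(su)[v]\|_{R_x(su)}^2$, so $\|\Lambda_x^y\|\le1+C\|R_x^{-1}(y)\|_x^2$ (Lemma~\ref{lem:lambda-metric-distortion}). The product is then bounded uniformly and deterministically by $\prod_r(1+C\rho_{\mathrm{ret}}^2r^{-2})<\infty$. With that, your one-step error $\|u_k\|(\|b_{j,k-1}\|+\|u_k\|)$ sums to $\mE\|\tb_{j,n}-b_{j,n}\|^2\lesssim t_j^{-2\alpha}$; even a $\log^2 n$ loss here would be harmless.

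\textbf{The obstacle you single out.} It is not actually an obstacle, and your target quantity is wrong: $\ell_j\sqrt{\gamma_{t_j}}$ is of order $j^{1-1/\beta-\alpha/2}$ at block ends, and this grows. Apply Lemma~\ref{lem:quadratic-comparison} at the aggregate level with weights $\ell_j^2/v_n$. Given $\mE\|e_j\|^2\lesssim t_j^{-2\alpha}$ and $\sup_n\mE\|Q_0\|<\infty$, you get $\delta\lesssim n^{1-1/\beta-2\alpha}$. The error is then $\lesssim\delta^{1/2}=n^{-1/(2\beta)}n^{1/2-\alpha}$, which is negligible because $\alpha>1/2$, not because $\beta>1/(1-\alpha)$. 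The large weights $\ell_j^2$ are absorbed by Cauchy--Schwarz against the bounded $\mE\|Q_0\|$, so no relative bound on $\tb_{j,n}-b_{j,n}$ is needed.
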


We defer the proof for OL and NOL to Appendices~\ref{app:D}. Theorem~\ref{thm:covariance-rate} shows that, as $n\to\infty$, the OL and NOL estimators converge to the asymptotic covariance $\Sigma$ of averaged RSGD. The convergence rate is associated with the step-size parameter $\alpha$ and the batch parameter $\beta$. For each fixed $\alpha$, optimizing the choice of $\beta$ leads to the following corollary.

\begin{corollary}[Choice of batch exponent]\label{cor:recommended-beta}
Consider the RSGD iterates $\{x_n\}_n$ and~recursive averages $\{\wt x_n\}_n$ in \eqref{equ:ave_RSGD}, with step sizes $\gamma_n=\gamma n^{-\alpha}$ for $\gamma>0$ and $\alpha\in(1/2,1)$. Under Assumptions~\ref{ass:geometry}--\ref{ass:raw-moments}, let $\wt\Sigma_{n,\mathrm{OL}}$ and $\wt\Sigma_{n,\mathrm{NOL}}$ be the estimators defined in~\eqref{eq:ol-estimator} and \eqref{eq:nol-estimator}, respectively, with batch starts $a_1=1$ and $a_m=\lfloor cm^\beta\rfloor$ for $m\geq2$, where $c\geq1$ and $\beta=2/(1-\alpha)$.
Then, for $\diamond\in\{\mathrm{OL},\mathrm{NOL}\}$, we have
\begin{equation*}
\mE\left\|\left[(\Gamma_{x_\star}^{\wt x_n})^{-1}\circ\wt\Sigma_{n,\diamond}\circ\Gamma^{\wt x_n}_{x_\star}\right]-\Sigma\right\|\lesssim n^{-(1-\alpha)/4}.
\end{equation*}
\end{corollary}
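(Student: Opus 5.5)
This corollary follows directly from Theorem~\ref{thm:covariance-rate}; no new estimates are needed. First I check that the choice $\beta=2/(1-\alpha)$ satisfies the theorem's hypothesis $\beta>1/(1-\alpha)$. It does, because $2/(1-\alpha)>1/(1-\alpha)$ for every $\alpha\in(1/2,1)$. All other hypotheses are identical: the same assumptions, the same step sizes $\gamma_n=\gamma n^{-\alpha}$, the batch starts $a_1=1$ and $a_m=\lfloor cm^\beta\rfloor$ with $c\ge 1$, and both estimators $\diamond\in\{\mathrm{OL},\mathrm{NOL}\}$. Hence the bound
\begin{equation*}
\mE\bigl\|(\Gamma_{x_\star}^{\wt x_n})^{-1}\circ\wt\Sigma_{n,\diamond}\circ\Gamma_{x_\star}^{\wt x_n}-\Sigma\bigr\|\lesssim n^{-1/(2\beta)}+n^{(\alpha-1)/2+1/(2\beta)}
\end{equation*}
holds with this $\beta$.

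Next I substitute $1/(2\beta)=(1-\alpha)/4$ into both exponents. The first term becomes $n^{-(1-\alpha)/4}$. The second exponent becomes $(\alpha-1)/2+(1-\alpha)/4=-(1-\alpha)/4$, so the second term is also $n^{-(1-\alpha)/4}$. The two terms coincide, and their sum is at most $2n^{-(1-\alpha)/4}$. The factor $2$ is absorbed into the implicit constant of $\lesssim$, which gives the stated rate.

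For completeness, I would add a remark on why this choice is natural. As a function of $\beta$, the first exponent $-1/(2\beta)$ increases in $\beta$, while the second exponent $(\alpha-1)/2+1/(2\beta)$ decreases in $\beta$. The maximum of the two exponents is therefore minimized where they are equal. Setting $-1/(2\beta)=(\alpha-1)/2+1/(2\beta)$ gives $1/\beta=(1-\alpha)/2$, that is, $\beta=2/(1-\alpha)$. So this choice gives the best rate obtainable from Theorem~\ref{thm:covariance-rate}.

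There is no real obstacle here. The only point needing care is the admissibility check $\beta>1/(1-\alpha)$, and it holds trivially.
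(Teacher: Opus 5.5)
Your proposal is correct and follows the paper's own proof: check that $\beta=2/(1-\alpha)$ satisfies $\beta>1/(1-\alpha)$, then substitute into the bound of Theorem~\ref{thm:covariance-rate}. Both exponents then become $-(1-\alpha)/4$. Your remark that this $\beta$ equates the two exponents and is therefore the best choice obtainable from the theorem is exactly how the paper motivates it.
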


The convergence rates in Theorem~\ref{thm:covariance-rate} and Corollary~\ref{cor:recommended-beta} match those established by \citet{Zhu2023Online} for the corresponding Euclidean batch-means estimators. Thus, under our assumptions, the manifold geometry does not slow the convergence of the covariance estimators . In particular, the optimized rate approaches $n^{-1/8}$ as $\alpha$ approaches $1/2$ from above.

\subsection{Proof strategy}\label{sec:3.4}

In this section, we provide a brief overview of the proof of Theorem~\ref{thm:covariance-rate} for the OL estimator; the NOL argument follows the same strategy. The key is to separate the additional errors introduced by the manifold geometry from the batch-means estimation error already present in Euclidean space. To this end, we introduce the following notation and intermediate quantities.

Recall the errors of the RSGD iterates, $\Delta_k=R_{x_\star}^{-1}(x_k)$, and the error of the averaged iterate, $\tDelta_n=R_{x_\star}^{-1}(\wt x_n)$. For each batch $B_j$ in \eqref{eq:batch-indices} and its recursive mean $c_j$ in \eqref{eq:batch-average}, define
\begin{equation}\label{eq:tangent-averages}
\barDelta_n:=\frac1n\sum_{k=1}^n\Delta_k, \qquad \barDelta_j^B:=\frac1{\ell_j}\sum_{k=t_j}^j\Delta_k, \qquad \tDelta_j^B:=R_{x_\star}^{-1}(c_j),
\end{equation}
where $\barDelta_n$ and $\barDelta_j^B$ are arithmetic averages of the tangent vectors $\Delta_k$, while $\tDelta_n$ and $\tDelta_j^B$ are obtained by averaging on the manifold before mapping the averages to $\T_{x_\star}\mM$. Using the batch indices $E_n^{\mathrm{OL}}$~and normalizing factor $v_n$ from \eqref{eq:ol-estimator}, we define four operators on this fixed tangent space:
\begin{equation}\label{eq:ol-comparison-operators}
\begin{aligned}
\hSigma_{n,\mathrm{OL}}^{(0)}&:=\frac{1}{v_n}\sum_{j\in E_n^{\mathrm{OL}}}\ell_j^2\bigl((\Gamma_{x_\star}^{\wt x_n})^{-1}\tb_{j,n}\bigr)^{\otimes 2},\\
\hSigma_{n,\mathrm{OL}}^{(1)}&:=\frac{1}{v_n}\sum_{j\in E_n^{\mathrm{OL}}}\ell_j^2\bigl((\Gamma_{x_\star}^{\wt x_n})^{-1}b_{j,n}\bigr)^{\otimes 2},\\
\hSigma_{n,\mathrm{OL}}^{(2)}&:=\frac{1}{v_n}\sum_{j\in E_n^{\mathrm{OL}}}\ell_j^2\bigl(\tDelta_j^B-\tDelta_n\bigr)^{\otimes 2},\\
\hSigma_{n,\mathrm{OL}}^{(3)}&:=\frac{1}{v_n}\sum_{j\in E_n^{\mathrm{OL}}}\ell_j^2\bigl(\barDelta_j^B-\barDelta_n\bigr)^{\otimes 2}.
\end{aligned}
\end{equation}
In \eqref{eq:ol-comparison-operators}, $\hSigma_{n,\mathrm{OL}}^{(0)}=(\Gamma_{x_\star}^{\wt x_n})^{-1}\circ\wt\Sigma_{n,\mathrm{OL}}\circ\Gamma_{x_\star}^{\wt x_n}$ is the OL estimator transported to $\T_{x_\star}\mM$. The last operator, $\hSigma_{n,\mathrm{OL}}^{(3)}$, has the Euclidean OL batch-means form applied to the vectors $\Delta_k$. The triangle inequality separates the total error into the three differences between successive operators and the remaining batch-means error:
\begin{align*}
\mE\|\hSigma_{n,\mathrm{OL}}^{(0)}-\Sigma\|
&\leq \mE\|\hSigma_{n,\mathrm{OL}}^{(0)}-\hSigma_{n,\mathrm{OL}}^{(1)}\|
+\mE\|\hSigma_{n,\mathrm{OL}}^{(1)}-\hSigma_{n,\mathrm{OL}}^{(2)}\|\\
&\quad+\mE\|\hSigma_{n,\mathrm{OL}}^{(2)}-\hSigma_{n,\mathrm{OL}}^{(3)}\|
+\mE\|\hSigma_{n,\mathrm{OL}}^{(3)}-\Sigma\|.
\end{align*}
The first three terms measure the additional errors introduced by the manifold and vanish in the~standard Euclidean setting. The last term is the batch-means estimation error that must still be controlled even in Euclidean space. We begin with the first three terms.
\begin{itemize}
\item From $\hSigma_{n,\mathrm{OL}}^{(0)}$ to $\hSigma_{n,\mathrm{OL}}^{(1)}$, we replace the recursively updated vector $\tb_{j,n}$ with the exact deviation $b_{j,n}=R_{\wt x_n}^{-1}(c_j)$. This isolates the approximation introduced by the online update in \eqref{eq:batch-vector-update}. Proposition~\ref{prop:ol-online-vector-update} bounds the resulting covariance error.

\item From $\hSigma_{n,\mathrm{OL}}^{(1)}$ to $\hSigma_{n,\mathrm{OL}}^{(2)}$, we compare two ways of expressing the deviation of $c_j$ from $\wt x_n$ in the fixed tangent space $\T_{x_\star}\mM$:
\begin{equation*}
(\Gamma_{x_\star}^{\wt x_n})^{-1}b_{j,n}=(\Gamma_{x_\star}^{\wt x_n})^{-1}R_{\wt x_n}^{-1}(c_j),
\qquad
\tDelta_j^B-\tDelta_n=R_{x_\star}^{-1}(c_j)-R_{x_\star}^{-1}(\wt x_n).
\end{equation*}
The first forms the deviation using the inverse retraction at $\wt x_n$ and then transports it to $\T_{x_\star}\mM$. The second maps both averages to $\T_{x_\star}\mM$ using $R_{x_\star}^{-1}$ before subtracting the resulting vectors. Proposition~\ref{prop:ol-coordinate-error} bounds the resulting covariance error.

\item From $\hSigma_{n,\mathrm{OL}}^{(2)}$ to $\hSigma_{n,\mathrm{OL}}^{(3)}$, we replace the mapped manifold averages $(\tDelta_j^B,\tDelta_n)$ with the arithmetic averages $(\barDelta_j^B,\barDelta_n)$ in \eqref{eq:tangent-averages}. This isolates the difference between averaging on the manifold and averaging the tangent vectors $\Delta_k$. Proposition~\ref{prop:ol-manifold-tangent-error} bounds the resulting covariance error.
\end{itemize}

The propositions above bound each of these three errors by a constant multiple of $n^{1/2-1/(2\beta)-\alpha}$. It remains to control the batch-means error $\mE\|\hSigma_{n,\mathrm{OL}}^{(3)}-\Sigma\|$. We modify the Euclidean batch-means analysis of \citet{Zhu2023Online}  and show in Proposition~\ref{prop:ol-euclidean-bm} that 
\begin{equation*}
\mE\|\hSigma_{n,\mathrm{OL}}^{(3)}-\Sigma\|\lesssim n^{-1/(2\beta)}+n^{(\alpha-1)/2+1/(2\beta)}.
\end{equation*}
Since $\alpha>1/2$, we have $n^{1/2-1/(2\beta)-\alpha}=o(n^{-1/(2\beta)})$, so the errors introduced by the manifold do not affect the convergence rate. Combining the four bounds establishes the OL result in Theorem~\ref{thm:covariance-rate}.

\section{Online statistical inference on manifolds}\label{sec:4}

In this section, we combine the asymptotic normality result in Theorem~\ref{thm:rasgd-asymptotic-normality} with our  covariance estimator to perform online statistical inference on manifolds. We first present a general strategy for constructing confidence intervals and regions when operating on a Riemannian manifold, then specialize it to streaming principal component analysis on a Grassmann manifold.

\subsection{Confidence intervals and regions}\label{sec:4.1}

We first construct a confidence interval for a scalar quantity $h(x_\star)$, where $h:\mM\to\mR$ is a smooth function. For example, in Euclidean setting where $\cM = \RR^d$, the function $h(x)=e_i^\top x$ selects the $i$-th coordinate of $x_\star$. We estimate $h(x_\star)$ by $h(\wt x_n)$. For a prescribed level $1-\delta$, with $\delta\in(0,1)$, we seek an interval whose probability of containing $h(x_\star)$ approaches $1-\delta$ as $n$ increases.

For either covariance estimator $\wt\Sigma_{n,\diamond}$, with $\diamond\in\{\mathrm{OL},\mathrm{NOL}\}$, we estimate the asymptotic variance~by
\begin{equation}\label{eq:scalar-variance}
\widehat\sigma_{n,h,\diamond}^2:=\langle\grad h(\wt x_n),\wt\Sigma_{n,\diamond}\grad h(\wt x_n)\rangle_{\wt x_n}.
\end{equation}
The gradient accounts for the first-order change in $h$ along tangent directions. Let $\widehat\sigma_{n,h,\diamond}\geq0$ denote the square root of this variance estimate. Then $\widehat\sigma_{n,h,\diamond}/\sqrt n$ is the estimated standard error of $h(\wt x_n)$, which determines the width of the confidence interval
\begin{equation}\label{eq:scalar-confidence-interval}
\mathcal I_{n,h,\diamond}(1-\delta) := \left[ h(\wt x_n)-z_{1-\delta/2}\frac{\widehat\sigma_{n,h,\diamond}}{\sqrt n}, \ h(\wt x_n)+z_{1-\delta/2}\frac{\widehat\sigma_{n,h,\diamond}}{\sqrt n} \right],
\end{equation}
where $z_{1-\delta/2}$ is the $(1-\delta/2)$-quantile of the standard normal distribution.

For joint inference, we seek a confidence region for $x_\star$ itself, consisting of points on $\mM$. Recall that $\wt\Sigma_{n,\diamond}$ acts on $\T_{\wt x_n}\mM$. We therefore first construct an ellipsoid of tangent vectors around $0_{\wt x_n}$, with its shape determined by the covariance estimate. To obtain a region on the manifold, we apply the retraction $R_{\wt x_n}$ to this ellipsoid. When $\wt\Sigma_{n,\diamond}$ is positive definite, the two sets are
\begin{equation}\label{eq:confidence-sets}
\begin{aligned}
\mathcal E_{n,\diamond}(1-\delta)&:=\left\{v\in\T_{\wt x_n}\mM:n\langle v,\wt\Sigma_{n,\diamond}^{-1}v\rangle_{\wt x_n}\leq\chi^2_{d,1-\delta}\right\},\\
\mathcal C_{n,\diamond}(1-\delta)&:=R_{\wt x_n}\{\mathcal E_{n,\diamond}(1-\delta)\}.
\end{aligned}
\end{equation}
Here, $d=\dim\mM$ and $\chi^2_{d,1-\delta}$ is the $(1-\delta)$-quantile of the chi-squared distribution with $d$ degrees of freedom. Figure~\ref{fig:retraction-confidence-region} illustrates this construction.

\begin{figure}[H]
\centering
\includegraphics[width=\textwidth]{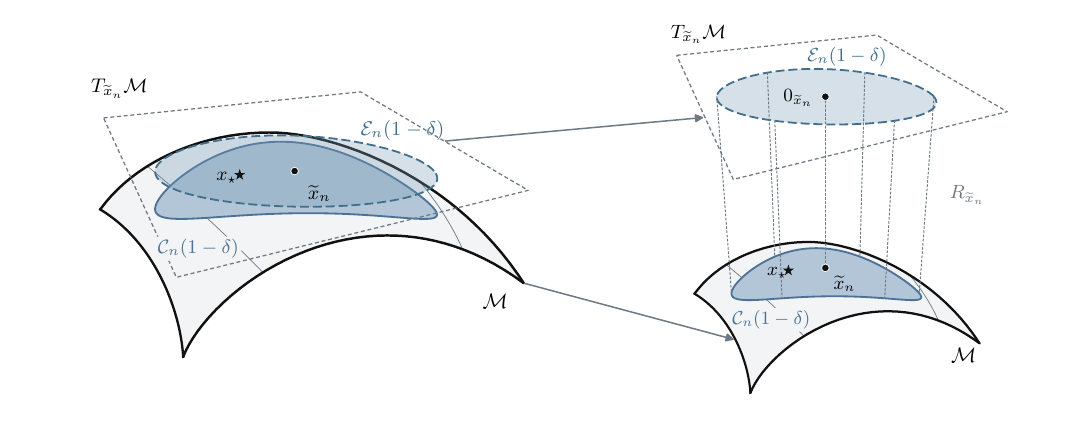}
\caption{Construction of a confidence region on the manifold. The retraction $R_{\wt x_n}$ maps the tangent ellipsoid $\mathcal E_{n,\diamond}(1-\delta)$ to the shaded region $\mathcal C_{n,\diamond}(1-\delta)$, and its center $0_{\wt x_n}$ to $\wt x_n$. The right panel separates the tangent space from the manifold to show corresponding points, connected by the light gray dashed lines. The estimator subscript $\diamond$ is omitted in the figure.}
\label{fig:retraction-confidence-region}
\end{figure}

In Euclidean space, with $R_x(v)=x+v$ and $h(x)=e_i^\top x$, the interval in \eqref{eq:scalar-confidence-interval} and the region in \eqref{eq:confidence-sets} reduce to the coordinatewise interval and ellipsoidal region in \citet[Corollary~3.2]{Zhu2023Online}.

The following proposition combines the asymptotic normality in Theorem~\ref{thm:rasgd-asymptotic-normality} with the covariance consistency in Theorem~\ref{thm:covariance-rate} to establish asymptotically correct coverage for both constructions.The proof is deferred to Appendix~\ref{app:E}.

\begin{proposition}\label{prop:confidence-procedures}
Let $\mM$ be a Riemannian manifold with dimension $d$. Under the conditions of Theorem~\ref{thm:covariance-rate}, fix $\delta\in(0,1)$ and let $h:\mM\to\mR$ be continuously differentiable on an open neighborhood of $\mX$, with $\grad h(x_\star)\ne0_{x_\star}$. For $\diamond\in\{\mathrm{OL},\mathrm{NOL}\}$, consider the interval in \eqref{eq:scalar-confidence-interval} and the region in \eqref{eq:confidence-sets}, setting $\mathcal C_{n,\diamond}(1-\delta)=\{\wt x_n\}$ if $\wt\Sigma_{n,\diamond}$ is singular. Then, as $n\to\infty$,
\begin{equation*}
\begin{aligned}
\mP\{h(x_\star)\in\mathcal I_{n,h,\diamond}(1-\delta)\}&\longrightarrow1-\delta,\\
\mP\{x_\star\in\mathcal C_{n,\diamond}(1-\delta)\}&\longrightarrow1-\delta.
\end{aligned}
\end{equation*}
\end{proposition}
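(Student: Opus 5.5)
The plan is to transfer everything to the fixed tangent space $\T_{x_\star}\mM$, where Theorem~\ref{thm:rasgd-asymptotic-normality} and Theorem~\ref{thm:covariance-rate} live, and then apply Slutsky's lemma together with the continuous mapping theorem. Write $\Gamma_n:=\Gamma_{x_\star}^{\wt x_n}$ and $\hSigma_n:=\Gamma_n^{-1}\circ\wt\Sigma_{n,\diamond}\circ\Gamma_n$. Theorem~\ref{thm:covariance-rate} gives $\mE\|\hSigma_n-\Sigma\|\to0$ (since $\beta>1/(1-\alpha)$), and Markov's inequality then yields $\hSigma_n\to\Sigma$ in probability. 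Because $\Gamma_n$ is an isometry, $\Gamma_n^{-1}=\Gamma_n^\ast$, so $\hSigma_n$ is simply $\wt\Sigma_{n,\diamond}$ expressed in transported orthonormal coordinates. Theorem~\ref{thm:rasgd-asymptotic-normality} gives $\sqrt n\,\tDelta_n\to\mathcal N(0,\Sigma)$, and hence $\tDelta_n=O_P(n^{-1/2})$ and $\wt x_n\to x_\star$ in probability.

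\emph{Interval.} Use a first-order expansion along the retraction curve: $h(\wt x_n)-h(x_\star)=\langle\grad h(x_\star),\tDelta_n\rangle_{x_\star}+o_P(\|\tDelta_n\|)$. This follows from $h\circ R_{x_\star}$ being $C^1$ near $0_{x_\star}$ with derivative $\grad h(x_\star)$ at $0$, using $DR_{x_\star}(0)=I$. Multiplying by $\sqrt n$, the difference converges in distribution to $\mathcal N(0,\sigma_h^2)$ with $\sigma_h^2=\langle g,\Sigma g\rangle_{x_\star}$, where $g=\grad h(x_\star)\neq0$. Moreover $\sigma_h^2>0$, since $\Sigma=A^{-1}SA^{-1}$ is positive definite. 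For the variance estimate, write $\widehat\sigma_{n,h}^2=\langle g_n,\hSigma_n g_n\rangle_{x_\star}$ with $g_n:=\Gamma_n^{-1}\grad h(\wt x_n)$. The key claim is $g_n\to g$ in probability. To prove it, apply the continuity of $\grad h$ and of parallel transport along retraction curves: the map $(x,y)\mapsto(\Gamma_x^y)^{-1}\grad h(y)\in\T_x\mM$ is continuous on $\mX\times\mX$ by Assumption~\ref{ass:geometry}(e) and smooth dependence of ODE solutions on parameters, and it equals $\grad h(x)$ when $y=x$. Evaluating at $x=x_\star$, $y=\wt x_n\to x_\star$ gives the claim. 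It follows that $\widehat\sigma_{n,h}^2\to\sigma_h^2$ in probability, and Slutsky's lemma gives $\sqrt n\{h(\wt x_n)-h(x_\star)\}/\widehat\sigma_{n,h}\to\mathcal N(0,1)$, which yields the coverage statement.

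\emph{Region.} Here $x_\star\in\mathcal C_n$ iff $v_n:=R_{\wt x_n}^{-1}(x_\star)\in\mathcal E_n$, using injectivity of $R_{\wt x_n}$ on $B(0,\rho_{\mathrm{ret}})$ from Assumption~\ref{ass:geometry}(a). I need to show that the point also lies in the ball; this holds because $\mathcal E_n$ shrinks at rate $n^{-1/2}$ with probability tending to one. This step, relating $v_n$ to $-\tDelta_n$, is the main obstacle. I would show $\Gamma_n^{-1}v_n=-\tDelta_n+O_P(\|\tDelta_n\|^2)$. This is exactly the type of comparison used in Proposition~\ref{prop:ol-coordinate-error} (comparing $\Gamma^{-1}R_{\wt x_n}^{-1}(c)$ with $R_{x_\star}^{-1}(c)-R_{x_\star}^{-1}(\wt x_n)$), applied with $c=x_\star$. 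Failing that, a direct Taylor expansion of $(x,y)\mapsto(\Gamma_{x_\star}^x)^{-1}R_x^{-1}(y)$ around $(x_\star,x_\star)$ will do, using the $C^3$ regularity in Assumption~\ref{ass:geometry}(e); the first derivative in each argument is $\mp I$ because $DR_x(0)=I$. Consequently $\sqrt n\,\Gamma_n^{-1}v_n\to\mathcal N(0,\Sigma)$.

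The event $\{\wt\Sigma_n\text{ singular}\}$ has probability tending to zero since $\hSigma_n\to\Sigma\succ0$, and on its complement $n\langle v_n,\wt\Sigma_n^{-1}v_n\rangle_{\wt x_n}=n\langle\Gamma_n^{-1}v_n,\hSigma_n^{-1}\Gamma_n^{-1}v_n\rangle_{x_\star}$, again by isometry. By Slutsky's lemma and continuity of inversion at $\Sigma$, this statistic converges in distribution to $\chi^2_d$. Because the $\chi^2_d$ distribution is continuous, the probability at the boundary quantile converges, which gives coverage $1-\delta$.
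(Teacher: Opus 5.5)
Your proposal is correct and follows essentially the same route as the paper: consistency of $\hSigma_{n,\diamond}$ via Theorem~\ref{thm:covariance-rate} and Markov; a first-order expansion of $h\circ R_{x_\star}$ plus convergence of the transported gradient and Slutsky for the interval; and, for the region, the deterministic bound $\|(\Gamma_{x_\star}^{\wt x_n})^{-1}R_{\wt x_n}^{-1}(x_\star)+\tDelta_n\|_{x_\star}\lesssim\|\tDelta_n\|_{x_\star}^2$ followed by the $\chi^2_d$ limit. That bound is exactly your change-of-base-point comparison specialized to $c=x_\star$, i.e.\ \eqref{eq:B-coordinate-change} at $(x,u,w)=(x_\star,\tDelta_n,-\tDelta_n)$ together with \eqref{eq:B-inverse-transport-comparison}. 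The one step worth stating explicitly is why the ellipsoid fits inside $B_{\wt x_n}(0_{\wt x_n},\rho_{\mathrm{ret}})$: $\sup_{v\in\mathcal E_{n,\diamond}(1-\delta)}\|v\|_{\wt x_n}^2\le\chi^2_{d,1-\delta}\lambda_{\max}(\wt\Sigma_{n,\diamond})/n$, and $\lambda_{\max}(\wt\Sigma_{n,\diamond})=O_{\mP}(1)$ by the same consistency, which is how the paper closes that step.
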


We next specialize these inference procedures to streaming principal component analysis.

\subsection{Streaming principal component analysis (PCA)}\label{sec:4.2}

Streaming $k$-PCA estimates a leading $k$-dimensional principal subspace from sequentially observed data \citep{Jain2016Streaming,AllenZhu2017First}. Given independent and identically distributed symmetric positive semidefinite matrices $\mathsf H_n\in\mR^{p\times p}$, the goal is to estimate the subspace spanned by the eigenvectors corresponding to the $k$ largest eigenvalues of $H=\mE\mathsf H_n$, where $1\leq k<p$. We focus on the standard outer-product case $\mathsf H_n=\xi_n\xi_n^\top$, where $\{\xi_n\}_{n\geq1}$ are independent samples from a common distribution on $\mR^p$ with finite second moment, so $H=\mE(\xi_n\xi_n^\top)$. Write $H=Q\diag(\lambda_1,\ldots,\lambda_p)Q^\top$, where $Q$ is orthogonal and $\lambda_1\geq\cdots\geq\lambda_p\geq0$, and let $Q_1\in\mR^{p\times k}$ contain the first $k$ columns of $Q$. The columns of $Q_1$ span the subspace we seek to estimate. Different orthonormal bases can represent the same subspace, and the particular choice does not matter. So, this problem can be formulated on the Grassmann manifold $G_{p,k}$, whose points are the $k$-dimensional subspaces of $\mR^p$; see \citet[Section~4.1]{Bonnabel2013Stochastic} and \citet[Section~6.2]{Tripuraneni2018Averaging}.

The dimension of $G_{p,k}$ is $k(p-k)$. We write a point in  $G_{p,k}$ as $[X]$, where $X\in\mR^{p\times k}$ has orthonormal columns and spans $[X]$. The orthogonal projection of an observation $\xi_n$ onto $[X]$ is $XX^\top\xi_n$, with squared length $\|X^\top\xi_n\|^2$. PCA seeks the subspace that minimizes the expected squared norm of the residual $\xi_n-XX^\top\xi_n$. Equivalently, it maximizes $\mE\|X^\top\xi_n\|^2=\tr(X^\top HX)$. The corresponding population optimization problem is
\begin{equation}\label{eq:pca-objective}
\min_{[X]\in G_{p,k}}F([X]):=-\frac12\tr(X^\top HX) =\mE\left[-\frac12\|X^\top\xi_n\|^2\right].
\end{equation}
Under the eigengap condition $\lambda_k>\lambda_{k+1}$, the leading principal subspace is unique, and the minimizer is $x_\star=[Q_1]$.

We now specify the geometric maps and Riemannian gradient needed to implement RSGD and Algorithm~\ref{alg:online-covariance} on $G_{p,k}$.
\begin{enumerate}
\item \textbf{Tangent space and Riemannian metric.} We equip $G_{p,k}$ with its standard Riemannian metric; see \citet[Section~9.16]{Boumal2023Introduction}. For a fixed orthonormal representative $X$, the tangent space, inner product, and orthogonal projection $\Pi_X:\mR^{p\times k}\to\T_{[X]}G_{p,k}$ have the following matrix representations:
\begin{equation}\label{eq:grassmann-metric}
\begin{aligned}
\T_{[X]}G_{p,k}&=\{V\in\mR^{p\times k}:X^\top V=0\},\\
\langle V,W\rangle_{[X]}&=\tr(V^\top W), \qquad \Pi_XZ=(I_p-XX^\top)Z.
\end{aligned}
\end{equation}
Here $V,W\in\T_{[X]}G_{p,k}$ and $Z\in\mR^{p\times k}$. Although the matrix representations depend on the choice of $X$, the inner product of the corresponding tangent vectors does not.
\item \label{item:grassmann-retraction}\textbf{Retraction and inverse retraction.} For a tangent vector $V\in\T_{[X]}G_{p,k}$, the columns of $X+V$ need not be orthonormal. We orthonormalize them using the polar retraction $R_{[X]}:\T_{[X]}G_{p,k}\to G_{p,k}$:
\begin{equation}\label{eq:grassmann-retraction}
R_{[X]}(V)=\left[(X+V)(I_k+V^\top V)^{-1/2}\right].
\end{equation}
The factor $(I_k+V^\top V)^{-1/2}$ changes the basis but not the span of $X+V$. As established in \citet[Example~9.59]{Boumal2023Introduction}, this retraction is second order.

The averaging and covariance updates also require the local inverse retraction. For a nearby subspace $[Y]$ with orthonormal representative $Y_0$, write $X^\top Y_0=KO$ in polar form, where $K\succ0$ and $O$ is orthogonal. The factor $K$ is unique because it is the unique positive definite square root of $X^\top Y_0Y_0^\top X$. With the aligned representative $Y=Y_0O^\top$, the inverse retraction takes the form
\begin{equation*}
R_{[X]}^{-1}([Y])=YK^{-1}-X.
\end{equation*}
\item \textbf{Vector transport and its adjoint.} The covariance updates require transporting tangent vectors between the tangent spaces of successive averages. With the aligned representatives $X,Y$ and the matrix $K$ specified in item~\ref{item:grassmann-retraction}, the vector transport in \eqref{eq:transport-maps} and its adjoint are represented~by
\begin{equation*}
\Lambda_{[X]}^{[Y]}(W)=\Pi_YWK, \qquad (\Lambda_{[X]}^{[Y]})^\ast(Z)=\Pi_XZK.
\end{equation*}
Here $W\in\T_{[X]}G_{p,k}$ and $Z\in\T_{[Y]}G_{p,k}$. The transport maps tangent vectors at $[X]$ to tangent vectors at $[Y]$, while its adjoint acts in the opposite direction.
\item \textbf{Riemannian gradient.} For the sample loss $f([X],\xi):=-\|X^\top\xi\|^2/2$ in \eqref{eq:pca-objective}, the Euclidean gradient with respect to $X$ is $-\xi\xi^\top X$. Under the metric above, the Riemannian gradient is its projection onto the tangent space: $\grad f([X],\xi)=-\Pi_X\xi\xi^\top X$.
\end{enumerate}

We write the RSGD iterate and its recursive average as $x_n=[X_n]$ and $\wt x_n=[\wt X_n]$, respectively. With the components above, the RSGD and averaging updates in \eqref{equ:ave_RSGD} become
\begin{equation}\label{eq:pca-updates}
[X_n]=R_{[X_{n-1}]}\left\{\gamma_n\Pi_{X_{n-1}}\xi_n\xi_n^\top X_{n-1}\right\},\qquad [\wt X_n]=R_{[\wt X_{n-1}]}\left\{\frac1nR_{[\wt X_{n-1}]}^{-1}([X_n])\right\}.
\end{equation}
Algorithm~\ref{alg:online-covariance} uses these iterates and the geometric maps above to compute the OL and NOL covariance estimators on $\T_{[\wt X_n]}G_{p,k}$. We next use these estimates to construct confidence regions for the principal subspace and confidence intervals for its scalar features.

\paragraph{Confidence regions for the principal subspace.} 
To construct a confidence region for $x_\star=[Q_1]$, we form the tangent ellipsoid $\mathcal E_{n,\diamond}(1-\delta)$ in \eqref{eq:confidence-sets} at $\wt x_n=[\wt X_n]$, with $d=k(p-k)$. The polar retraction maps this ellipsoid to a set of subspaces. Since the orthonormalization in \eqref{eq:grassmann-retraction} does not change the column span, the resulting region is
\begin{equation*}
\mathcal C_{n,\diamond}(1-\delta) = \left\{ \operatorname{span}(\wt X_n+V): V\in\mathcal E_{n,\diamond}(1-\delta) \right\}.
\end{equation*}
Here we represent each tangent vector $V$ by a matrix in $\mR^{p\times k}$ satisfying $\wt X_n^\top V=0$, and $\operatorname{span}(\wt X_n+V)$ denotes the subspace spanned by the columns of $\wt X_n+V$. As $V$ ranges over the tangent ellipsoid, the subspaces $\operatorname{span}(\wt X_n+V)$ form the confidence region. This region is independent of the choice of orthonormal representative $\wt X_n$.

Figure~\ref{fig:grassmann-g31-confidence-region} illustrates this construction on $G_{3,1}$, whose points are lines through the origin in $\mR^3$. As shown in the left panel, each line intersects the unit sphere $\mS^2$ at two opposite points $z$ and $-z$, either of which represents the same subspace. For lines near the estimate $\wt x_n=[\wt X_n]$, we choose the representative satisfying $\wt X_n^\top z>0$. This represents each such line by a single point in the hemisphere centered at $\wt X_n$. The right panel shows the confidence-region construction using these representatives. The tangent ellipsoid $\mathcal E_{n,\diamond}(1-\delta)$ is an ellipse in the tangent plane $\T_{\wt X_n}\mS^2$. For each vector $v$ in this ellipse, the polar retraction in \eqref{eq:grassmann-retraction} divides $\wt X_n+v$ by its length to obtain the unit vector $z=(\wt X_n+v)/\sqrt{1+\|v\|^2}$ and returns the line $[z]=[\wt X_n+v]$. As $v$ ranges over the ellipse, these representatives form the shaded spherical cap. The corresponding lines form the confidence region $\mathcal C_{n,\diamond}(1-\delta)$.

\begin{figure}[H]
\centering
\includegraphics[width=\textwidth]{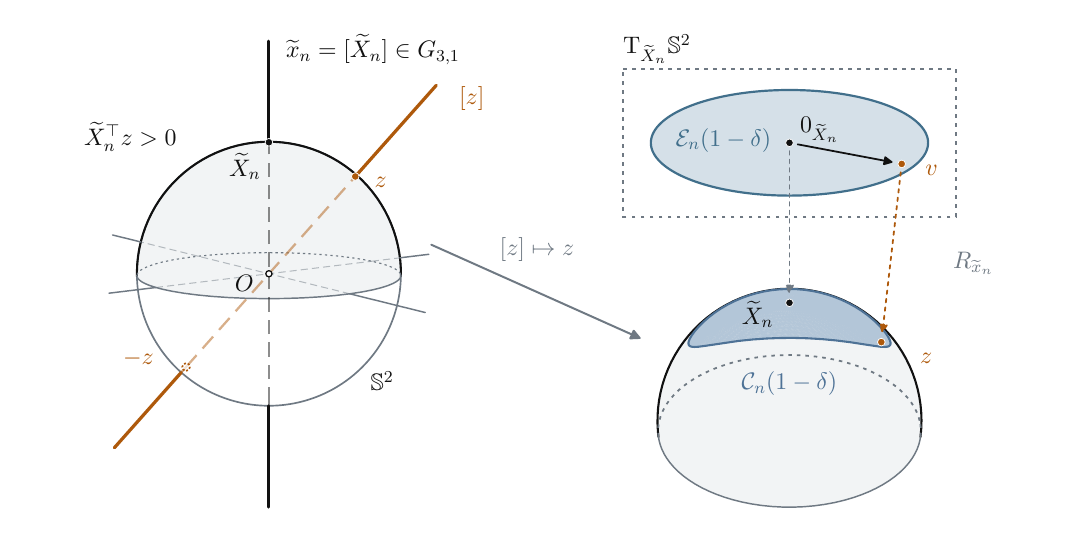}
\caption{Spherical representation of a confidence region on $G_{3,1}$. Left: the black and orange lines represent the estimate $[\wt X_n]$ and a subspace $[z]$ in the confidence region, respectively. Right: the polar retraction maps each tangent vector $v$ to the line represented by $z=(\wt X_n+v)/\sqrt{1+\|v\|^2}$. These unit representatives form the shaded spherical cap depicting the confidence region. The subscript $\diamond$ is omitted.}
\label{fig:grassmann-g31-confidence-region}
\end{figure}

\paragraph{Confidence intervals for a scalar feature.} 
To illustrate the scalar confidence-interval construction in Section~\ref{sec:4.1}, we consider how closely a fixed unit vector $q\in\mR^p$ aligns with the unknown principal subspace $x_\star=[Q_1]$. Our target is $\|Q_1^\top q\|^2$, the squared length of the projection of $q$ onto this subspace. We express it as a smooth function on $G_{p,k}$ by defining
\begin{equation}\label{eq:pca-target}
h_q([X]):=q^\top XX^\top q=\|X^\top q\|^2.
\end{equation}
Because $XX^\top$ is the orthogonal projector onto $[X]$, the function $h_q$ does not depend on the choice of representative $X$. Its value lies in $[0,1]$, with larger values indicating closer alignment. The target  $\|Q_1^\top q\|^2$ can be written as $h_q(x_\star)$, and we estimate it by $h_q([\wt X_n])$. To quantify the uncertainty of this estimate, we use the Riemannian gradient of $h_q$, represented by
\begin{equation}\label{eq:pca-target-gradient}
\grad h_q([X])=2(I_p-XX^\top)qq^\top X.
\end{equation}
Evaluating \eqref{eq:pca-target-gradient} at $[\wt X_n]$ and combining it with either the OL or NOL covariance estimator in \eqref{eq:scalar-variance} yields the estimated asymptotic variance. This determines the width of the confidence interval in \eqref{eq:scalar-confidence-interval}, centered at $h_q([\wt X_n])$. In Section~\ref{sec:5}, we will evaluate the empirical coverage and length of these intervals through numerical experiments on Gaussian streaming PCA.

\section{Numerical experiments}\label{sec:5}

This section examines the finite-sample performance of the OL and NOL covariance estimators in \eqref{eq:ol-estimator} and \eqref{eq:nol-estimator} using the streaming $k$-PCA model from Section~\ref{sec:4.2} with Gaussian observations. 
In this model, the limiting covariance is available in closed form, allowing us to quantify covariance estimation errors and compare the resulting confidence intervals with oracle intervals constructed using the true asymptotic variance. We first describe the experimental setup in Section~\ref{sec:5.1}, then compare covariance estimation errors in Section~\ref{sec:5.2}, and finally examine confidence interval coverage and length in Section~\ref{sec:5.3}. 
We defer additional covariance estimation and statistical inference results for more settings to Appendix~\ref{app:F}.

\subsection{Experimental setup}\label{sec:5.1}

Recall from Section~\ref{sec:4.2} that streaming $k$-PCA uses observations $\xi_n\in\mR^p$ that arrive sequentially to estimate the subspace spanned by the eigenvectors associated with the $k$ largest eigenvalues of $H=\mE(\xi_n\xi_n^\top)$. The parameter space is the Grassmann manifold $G_{p,k}$, whose dimension is $d=k(p-k)$. In our experiments, we generate i.i.d. observations $\xi_1,\xi_2,\ldots\sim\mathcal N(0,H)$, where $H=Q\diag(\lambda_1,\ldots,\lambda_p)Q^\top$ with a fixed orthogonal matrix $Q$, and set $k=\lfloor\sqrt p\rfloor$ in the streaming $k$-PCA problem. Following the choice of eigenvalues in \citet[Section~7]{Tripuraneni2018Averaging}, we choose $\Delta_\lambda>0$ and set
\begin{equation*}
\lambda_j=j^{-1}+\Delta_\lambda,\quad 1\leq j\leq k, \qquad \lambda_j=(j-1)^{-1},\quad k<j\leq p.
\end{equation*}
Thus, the eigengap is $\lambda_k-\lambda_{k+1}=\Delta_\lambda$. A smaller eigengap makes the principal subspace more sensitive to sampling noise and therefore harder to estimate accurately.

At iteration $n$, we use the new observation $\xi_n$ to update the RSGD iterate and its recursive average using \eqref{eq:pca-updates}, with the polar retraction in \eqref{eq:grassmann-retraction}. Following \citet[Section~4]{Zhu2023Online}, we set $\alpha=0.505$ and use step size $\gamma_n=0.5n^{-\alpha}$. The batch starting points are $a_1=1$ and $a_m=\lfloor cm^\beta\rfloor$ for $m\geq2$, with $\beta=2/(1-\alpha)$ as recommended in Corollary~\ref{cor:recommended-beta}. The constant $c$ controls the spacing between successive batch starts. We vary $c\in\{1,2,5\}$ for covariance estimation to examine the effect of this~choice. Each run contains $N=3\times 10^5$ observations. All results are based on $200$ independent runs for each setting. For each method and setting, the initialization is fixed across runs.

We first give an explicit representation of the limiting covariance $\Sigma=A^{-1}SA^{-1}$ from Theorem~\ref{thm:rasgd-asymptotic-normality}. Recall from Section~\ref{sec:4.2} that $x_\star=[Q_1]$ is the population principal subspace, where $Q_1$ contains the first $k$ columns of $Q$ and $[Q_1]$ denotes their span. At $x_\star=[Q_1]$, $\Sigma$ acts on the tangent space $\T_{x_\star}G_{p,k}$, which has dimension $d=k(p-k)$. We therefore represent $\Sigma$ by a $d\times d$ matrix in an orthonormal basis. We choose the orthonormal basis matrices $q_ie_j^\top$, indexed by $1\leq j\leq k<i\leq p$, where $q_i$ is column $i$ of $Q$ and $e_j$ is the $j$th coordinate vector in $\mR^k$. Since the columns of $Q$ and the vectors $e_j$ are orthonormal, these $d$ matrices lie in the tangent~space and form an orthonormal basis under the inner product in \eqref{eq:grassmann-metric}.

In this basis, we compute the Hessian $A=\Hess F(x_\star)$ from the objective in \eqref{eq:pca-objective}. Differentiating its Riemannian gradient at the matrix representative $X=Q_1$ of $x_\star$ and projecting the result onto $\T_{x_\star}G_{p,k}$ gives $A[V]=V(Q_1^\top HQ_1)-HV$ for $V\in\T_{x_\star}G_{p,k}$. Since $Hq_i=\lambda_iq_i$, we have $A[q_ie_j^\top]=(\lambda_j-\lambda_i)q_ie_j^\top$. This identity shows that each basis vector is an eigenvector of $A$, so the matrix of $A$ is diagonal in this basis, with diagonal entries~$\lambda_j-\lambda_i$.

We next compute the gradient-noise covariance $S$ in the same basis. The gradient noise $\grad f(x_\star,\xi_n)$ has coordinate $-(q_i^\top\xi_n)(q_j^\top\xi_n)$ along $q_ie_j^\top$. Since $Q^\top\xi_n$ has independent centered Gaussian entries with variances $\lambda_1,\ldots,\lambda_p$, this coordinate has mean zero and variance $\mE[(q_i^\top\xi_n)^2(q_j^\top\xi_n)^2]=\lambda_i\lambda_j$. For two distinct noise coordinates, at least one centered Gaussian factor appears only once in their product; thus, independence then implies zero covariance between them. Finally, the gradient-noise covariance $S$ is diagonal with entries $\lambda_i\lambda_j$. Substituting the diagonal entries of $A$ and $S$ into $\Sigma=A^{-1}SA^{-1}$,~we~know
\begin{equation}\label{eq:experimental-oracle-covariance}
\Sigma=\diag(\sigma_{ij}^2),\qquad \sigma_{ij}^2 =\frac{\lambda_i\lambda_j}{(\lambda_j-\lambda_i)^2}, \qquad 1\leq j\leq k<i\leq p.
\end{equation}
Note that this covariance is used only to evaluate estimation errors and construct oracle intervals. It is not used in the OL or NOL updates.

\subsection{Covariance estimation}\label{sec:5.2}

We first compare the OL and NOL covariance estimators on $G_{3,1}$, $G_{7,2}$, and $G_{10,3}$, using eigengaps $\Delta_\lambda\in\{1,0.25\}$ and batch constants $c\in\{1,2,5\}$. To compare each estimate with the limiting covariance $\Sigma$ in \eqref{eq:experimental-oracle-covariance}, we parallel transport the estimate from $\T_{\wt x_n}G_{p,k}$ to $\T_{x_\star}G_{p,k}$, as in Theorem~\ref{thm:covariance-rate}, and compute the operator-norm error in the orthonormal basis described above. Figure~\ref{fig:final-covariance-convergence} plots the logarithm of the average error over $200$ runs against $\log n$. The black dashed lines have slope $-1/8$, which is close to the exponent $-(1-\alpha)/4=-0.12375$ in Corollary~\ref{cor:recommended-beta}, and are included as reference lines.

\begin{figure}[thp]
\centering
\includegraphics[width=\textwidth]{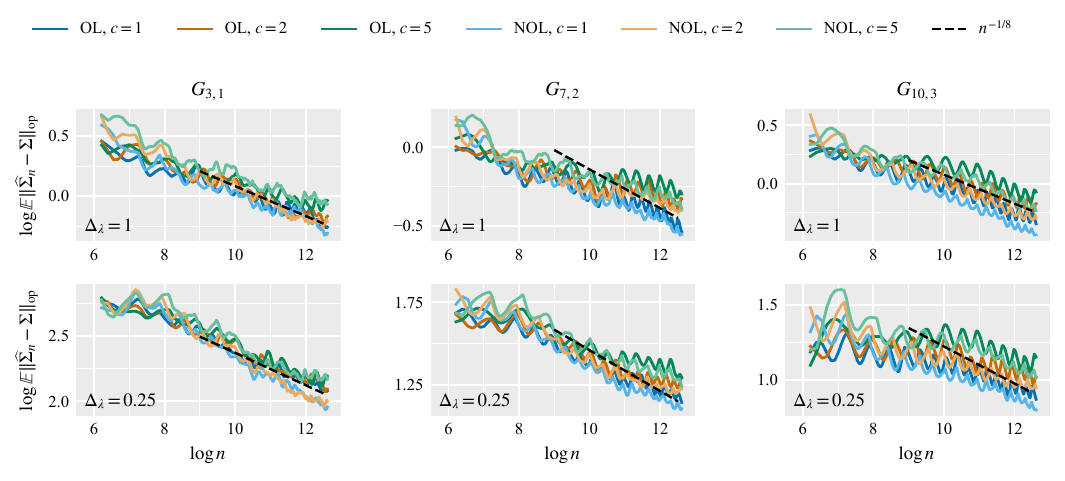}
\caption{Gaussian streaming PCA: log mean covariance error (operator norm) against log sample size. Columns correspond to $G_{3,1}$, $G_{7,2}$, and $G_{10,3}$, and the top and bottom rows use eigengaps $1$ and $0.25$, respectively. The colored curves compare OL and NOL with $c\in\{1,2,5\}$. Each mean is based on $200$ independent runs. The black dashed lines indicate the reference slope $-1/8$.}
\label{fig:final-covariance-convergence}
\end{figure}

Figure~\ref{fig:final-covariance-convergence} shows that the errors of both estimators generally decrease as more observations are received, despite some fluctuations along the trajectories. The overall slopes of the log-log curves are close to the reference slope $-1/8$, consistent with the convergence rate in Corollary~\ref{cor:recommended-beta}. Comparing the two rows of Figure~\ref{fig:final-covariance-convergence}, we also observe larger absolute errors for the smaller eigengap. However, the magnitude of $\Sigma$ in \eqref{eq:experimental-oracle-covariance} also increases as the eigengap decreases, so the larger absolute errors do not necessarily indicate lower relative accuracy.

We next compare the different choices of $c$ and the two covariance estimators. At $n=3\times 10^5$, $c=1$ yields the smallest mean error among the three batch constants for both estimators across all settings considered. Based on these results, we use $c=1$ for both estimators in the confidence interval experiments below.

\subsection{Statistical inference}\label{sec:5.3}

We next examine the empirical coverage and mean length of the nominal $95\%$ confidence intervals in \eqref{eq:scalar-confidence-interval}. We consider $G_{10,3}$ and $G_{20,4}$, with $\Delta_\lambda=0.25$ and $c=1$. For each manifold, we draw $v\sim\mathcal N(0,H)$ independently of the RSGD streams, set $q=v/\|v\|$, and hold $q$ fixed across runs. Recall from Section~\ref{sec:4.2} that $h_q(x_\star)=\|Q_1^\top q\|^2$ in \eqref{eq:pca-target} is the squared length of the projection of $q$ onto the principal subspace $x_\star$. We estimate this quantity by $h_q(\wt x_n)$ and construct its confidence interval using either the OL or NOL covariance estimate.

For comparison, we also construct oracle intervals by using the true scalar asymptotic variance $\sigma_{h_q}^2=\langle\grad h_q(x_\star),\Sigma\grad h_q(x_\star)\rangle_{x_\star}$ in the confidence interval formula \eqref{eq:scalar-confidence-interval}, where $\Sigma$ is given by \eqref{eq:experimental-oracle-covariance}. We report the empirical coverage, defined as the fraction of the $200$ intervals that contain $h_q(x_\star)$, together with the mean interval length. Figure~\ref{fig:final-scalar-inference} displays both quantities as the sample size increases. Table~\ref{tab:scalar-inference} reports the coverage at four sample sizes, with Monte Carlo standard errors $\sqrt{\wh p(1-\wh p)/200}$~shown in parentheses, where $\wh p$ denotes the empirical coverage.

\begin{figure}[t!]
\centering
\includegraphics[width=\textwidth]{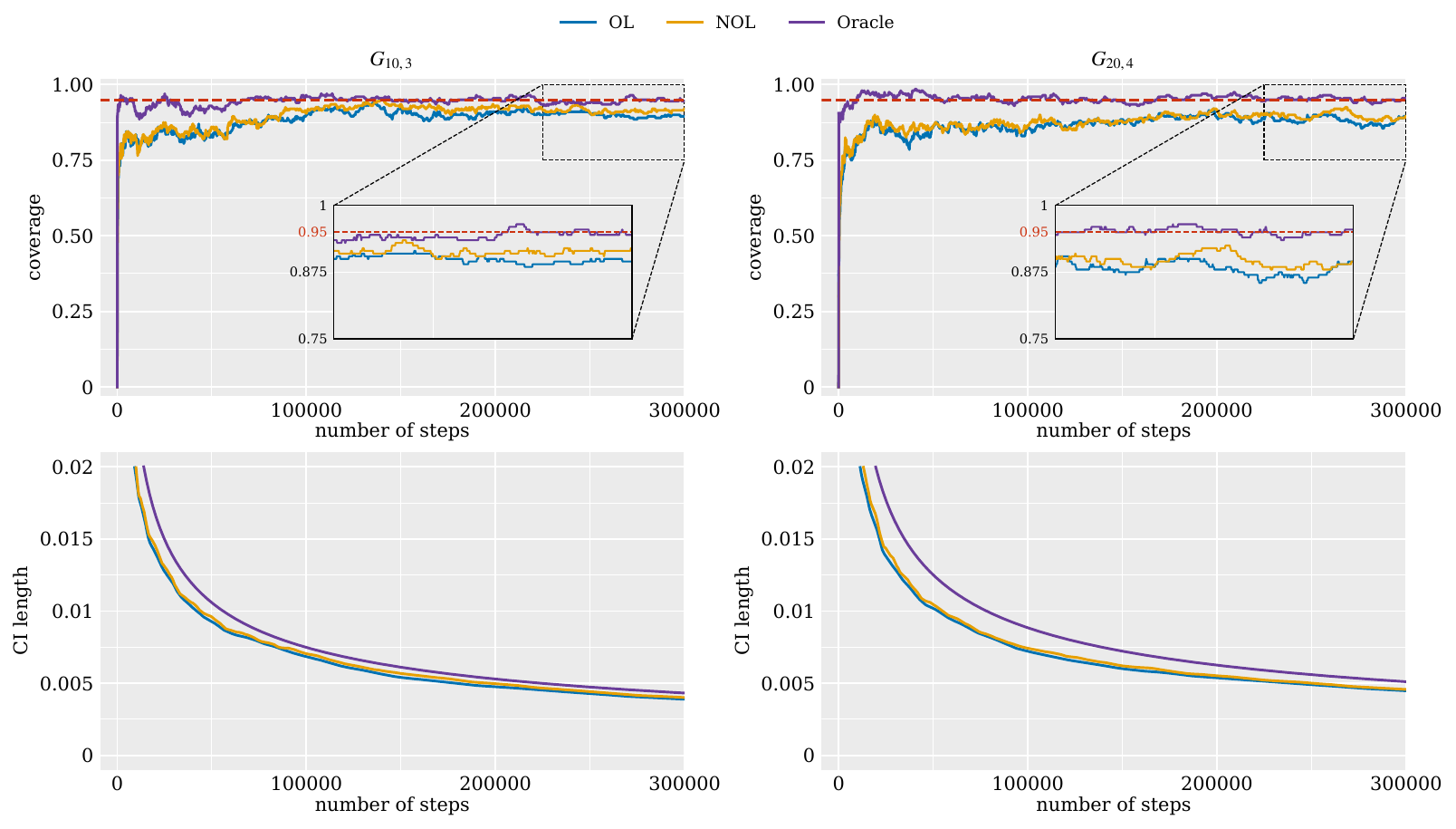}
\caption{Gaussian streaming PCA: empirical coverage (top) and mean length (bottom) of $95\%$ confidence intervals for $h_q(x_\star)$. Columns correspond to $G_{10,3}$ and $G_{20,4}$, with $\Delta_\lambda=0.25$ and $c=1$. The blue, orange, and purple curves show OL, NOL, and Oracle, respectively, with results based on $200$ independent runs. The small plots inside the top panels show the later coverage values, and the red dashed lines mark the nominal coverage level $0.95$.}
\label{fig:final-scalar-inference}
\end{figure}

\begin{table}[thp]
\centering
\small
\caption{Empirical coverage of nominal $95\%$ confidence intervals for $h_q(x_\star)$ in Gaussian streaming PCA, with $\Delta_\lambda=0.25$ and $c=1$. Results are based on 200 independent runs, with Monte Carlo standard errors in parentheses. Here $d=k(p-k)$ is the dimension of $G_{p,k}$.}
\label{tab:scalar-inference}
\setlength{\tabcolsep}{5pt}
\renewcommand{\arraystretch}{1.20}
\begin{tabular}{@{}l|rrrr@{}}
\hline
\multicolumn{5}{c}{Empirical coverage} \\
\hline
$G_{10,3}$ ($d=21$) & $n=50{,}000$ & $n=100{,}000$ & $n=200{,}000$ & $n=300{,}000$ \\
\hline
OL & 0.835 (0.02625) & 0.905 (0.02073) & 0.910 (0.02024) & 0.895 (0.02168) \\
NOL & 0.860 (0.02454) & 0.905 (0.02073) & 0.930 (0.01804) & 0.920 (0.01918) \\
Oracle & 0.930 (0.01804) & 0.965 (0.01300) & 0.955 (0.01466) & 0.945 (0.01612) \\
\hline
$G_{20,4}$ ($d=64$) & $n=50{,}000$ & $n=100{,}000$ & $n=200{,}000$ & $n=300{,}000$ \\
\hline
OL & 0.860 (0.02454) & 0.860 (0.02454) & 0.890 (0.02212) & 0.895 (0.02168) \\
NOL & 0.865 (0.02416) & 0.865 (0.02416) & 0.915 (0.01972) & 0.895 (0.02168) \\
Oracle & 0.955 (0.01466) & 0.960 (0.01386) & 0.960 (0.01386) & 0.955 (0.01466) \\
\hline
\end{tabular}
\end{table}

Figure~\ref{fig:final-scalar-inference} shows that OL and NOL achieve similar coverage on both manifolds. For example, at $n=3\times 10^5$, Table~\ref{tab:scalar-inference} shows that the OL and NOL coverage rates are $0.895$ and $0.920$ on $G_{10,3}$, respectively, while both methods attain coverage of $0.895$ on $G_{20,4}$. Consistent with the observations in \citet{Zhu2023Online}, although these coverage rates are below the nominal $95\%$ level, both methods achieve coverage of about $90\%$ at this sample size, indicating reasonable finite-sample performance in these higher-dimensional settings with the small eigengap $\Delta_\lambda=0.25$. The mean interval lengths decrease as more observations are received and are similar for OL and NOL. Additional experimental results for different choices of dimension and eigengap are reported in Appendix~\ref{app:F}.

\section{Conclusion and future directions}\label{sec:6}

In this paper, we studied asymptotic covariance estimation for averaged Riemannian stochastic gradient descent (RSGD). Specifically, we extended the Euclidean online batch-means estimator of \citet{Zhu2023Online} to Riemannian manifolds, developing both overlapping (OL) and non-overlapping (NOL) versions. To our knowledge, these are the first covariance estimators of this type for averaged RSGD. Both estimators use only the RSGD iterates and require neither Riemannian Hessian calculations nor a prespecified terminal sample size. Our analysis further shows that the geometric errors induced by the manifold structure are of smaller order than the leading batch-means error. Consequently, under standard assumptions, an appropriate choice of the batch-size control sequence yields the convergence rate $n^{-(1-\alpha)/4}$, matching that of the Euclidean estimator of \citet{Zhu2023Online}. Combining this consistency result with the asymptotic normality established by \citet{Tripuraneni2018Averaging}, we obtained asymptotically valid confidence intervals for smooth scalar functions of $x_\star$ and joint confidence regions for $x_\star$. We illustrated these online inference procedures through streaming PCA on Grassmann manifolds. In this setting, our Gaussian experiments support the consistency of the proposed covariance estimators~and demonstrate their use in constructing confidence intervals in higher-dimensional~settings.

One direction for future research is to develop online covariance estimators for averaged RSGD with faster convergence rates. For example, extending the bias-reduced estimator of \citet{Wei2026Refining} to averaged RSGD could yield a faster rate, as the Euclidean estimator achieves $n^{-(1-\alpha)/2}\sqrt{\log n}$ under stronger moment conditions. As $\alpha\downarrow1/2$, the polynomial factor in this rate approaches $n^{-1/4}$, compared with $n^{-1/8}$ in our bound. Faster covariance estimation may also improve the finite-sample performance of the resulting inference procedures. Another direction is to establish lower bounds for covariance estimation based on RSGD iterates and characterize the minimax optimal convergence~rate on Riemannian manifolds.

\section*{Disclosure of AI usage}

The methodology, results, and proofs were developed by the authors. All writing was done by the authors. OpenAI's GPT 5.6 Sol and some earlier models were used to assist with the proofs of some technical results (Lemmas~\ref{lem:block-calculus} and~\ref{lem:D-online-endpoint}), to help write code for the simulations, and to proofread the final draft of the paper.

\bibliographystyle{my-plainnat}
\bibliography{ref}

\appendix
\numberwithin{equation}{section}

\input{appendix}

\end{document}

%% file: appendix.tex
\newpage
\begin{center}
\textbf{\large Supplemental Materials: \\ Online Covariance Estimation for Averaged Stochastic Gradient Descent\\ on Riemannian Manifolds}
\end{center}

The supplementary material is organized as follows. In Appendix~\ref{app:A}, we present some useful technical lemmas. Appendices~\ref{app:B} and~\ref{app:C} provide estimates for geometric maps and the RSGD iterates, respectively. In Appendix~\ref{app:D}, we prove Theorem~\ref{thm:covariance-rate}, our main theorem for the OL and NOL covariance estimators, and derive Corollary~\ref{cor:recommended-beta}. Appendix~\ref{app:E} proves the statistical inference results in Proposition~\ref{prop:confidence-procedures}. Additional numerical experiments are presented in Appendix~\ref{app:F}. Throughout the appendices, $C$ denotes a finite positive constant whose value may change between displays.

\section{Technical lemmas}\label{app:A}

\begin{lemma}\label{lem:linear-recursion-products}
Let $A\in\mathbb R^{d\times d}$ be symmetric and positive definite. For $i\in\mathbb{Z}_{\ge0}$, define $Y_i^i:=I_d$ and, for $j>i$, set
\begin{equation*}
Y_i^j:=\prod_{k=i+1}^{j}(I_d-\gamma_kA),
\end{equation*}
where $\gamma_k:=\gamma k^{-\alpha}$, $\gamma>0$ and $\alpha\in(1/2,1)$. There exists a constant $C>0$ such that, for all $0\le i\le j$,
\begin{equation}\label{eq:A-product-bound}
\|Y_i^j\|\le C\exp\left(-\gamma\lambda_{\min}(A)\sum_{k=i+1}^{j}k^{-\alpha}\right)\le C\exp\left[-\frac{\gamma\lambda_{\min}(A)}{1-\alpha}\bigl((j+1)^{1-\alpha}-(i+1)^{1-\alpha}\bigr)\right].
\end{equation}
\end{lemma}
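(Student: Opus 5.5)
Since $A$ is symmetric, every factor $I_d-\gamma_kA$ is symmetric and they all commute, being polynomials in $A$. So I would diagonalize $A=U\diag(\lambda_1,\ldots,\lambda_d)U^\top$ with $U$ orthogonal and $\lambda_i\ge\lambda_{\min}(A)=:\lambda>0$. Then $Y_i^j=U\diag\bigl(\prod_{k=i+1}^j(1-\gamma_k\lambda_r)\bigr)_rU^\top$, and
\begin{equation*}
\|Y_i^j\|=\max_r\prod_{k=i+1}^j|1-\gamma_k\lambda_r|.
\end{equation*}
This reduces the claim to a scalar bound, uniform over the finitely many eigenvalues $\lambda_r\in[\lambda,\lambda_{\max}(A)]$.

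Split the indices at a threshold. Since $\gamma_k\to0$, there is a fixed $k_0$ (depending only on $\gamma,\alpha,\lambda_{\max}(A)$) such that $\gamma_k\lambda_{\max}(A)\le1$ for all $k>k_0$.

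For $k>k_0$ and every $r$, we have $0\le1-\gamma_k\lambda_r\le 1-\gamma_k\lambda\le e^{-\gamma_k\lambda}$.

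For $k\le k_0$, bound $|1-\gamma_k\lambda_r|\le 1+\gamma\lambda_{\max}(A)$. Multiply and divide by $e^{-\gamma_k\lambda}$ for these at most $k_0$ indices. This gives
\begin{equation*}
\prod_{k=i+1}^j|1-\gamma_k\lambda_r|\le \Bigl[(1+\gamma\lambda_{\max}(A))e^{\gamma\lambda}\Bigr]^{k_0}\exp\Bigl(-\gamma\lambda\sum_{k=i+1}^jk^{-\alpha}\Bigr).
\end{equation*}
The constant $C:=[(1+\gamma\lambda_{\max}(A))e^{\gamma\lambda}]^{k_0}\ge1$ is independent of $i,j$. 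This also covers $i=j$, where the product is empty and $\|Y_i^i\|=1\le C$. That proves the first inequality.

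For the second inequality, note that $x\mapsto x^{-\alpha}$ is decreasing, so $k^{-\alpha}\ge\int_k^{k+1}x^{-\alpha}\,dx$. Summing over $k=i+1,\ldots,j$ gives
\begin{equation*}
\sum_{k=i+1}^jk^{-\alpha}\ge\int_{i+1}^{j+1}x^{-\alpha}dx=\frac{(j+1)^{1-\alpha}-(i+1)^{1-\alpha}}{1-\alpha}.
\end{equation*}
Then monotonicity of $\exp(-\cdot)$ yields the claim.

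The only step needing care is the early-index regime, where $\gamma_k\lambda_r$ may exceed $1$ (or $2$) and the factor is not contractive. The threshold argument absorbs this into $C$, so no condition on $\gamma$ is required.
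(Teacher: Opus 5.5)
Your proof is correct and follows essentially the same route as the paper: a threshold index beyond which each factor satisfies $|1-\gamma_k\lambda_r|\le 1-\gamma\lambda_{\min}(A)k^{-\alpha}\le e^{-\gamma\lambda_{\min}(A)k^{-\alpha}}$, absorption of the finitely many early non-contractive factors into $C$, and the integral comparison $\sum_{k=i+1}^{j}k^{-\alpha}\ge\int_{i+1}^{j+1}x^{-\alpha}\,dx$. The only cosmetic differences are that you diagonalize $A$ to get an exact formula for $\|Y_i^j\|$ while the paper uses submultiplicativity, and that your constant is cruder than the paper's one, which is built from $\max\{1,\|I_d-\gamma_kA\|\}$ over the early indices.
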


\begin{proof}
Set $\eta:=\gamma\lambda_{\min}(A)$ and choose $k_A:=\left\lceil (\gamma\lambda_{\max}(A))^{1/\alpha}\right\rceil+1$. Let $\lambda_1,\ldots,\lambda_d$ denote the eigenvalues of $A$. Then, for every $k\ge k_A$,
\begin{equation*}
\|I_d-\gamma_kA\|=\max_{1\le r\le d}|1-\gamma k^{-\alpha}\lambda_r|\le1-\eta k^{-\alpha}\le e^{-\eta k^{-\alpha}}.
\end{equation*}
The first inequality uses $0\le\gamma_k\lambda_r\le1$ and $\lambda_r\ge\lambda_{\min}(A)$; the last uses $1-x\le e^{-x}$ for $x\in[0,1]$. Set $C:=\left(\prod_{k=1}^{k_A-1}\max\{1,\|I_d-\gamma_kA\|\}\right)\exp\left(\eta\sum_{k=1}^{k_A-1}k^{-\alpha}\right)$. By submultiplicativity, for $j>i$, we have
\begin{align*}
\|Y_i^j\|&\le \prod_{k=i+1}^{j}\|I_d-\gamma_kA\|=\exp\left(-\eta\sum_{k=i+1}^{j}k^{-\alpha}\right)\prod_{k=i+1}^{j}\left(e^{\eta k^{-\alpha}}\|I_d-\gamma_kA\|\right) \le C\exp\left(-\eta\sum_{k=i+1}^{j}k^{-\alpha}\right).
\end{align*}
For $j=i$, the bound follows from $C\ge1$. Finally, integral comparison yields
\begin{equation*}
\sum_{k=i+1}^{j}k^{-\alpha}\ge\int_{i+1}^{j+1}x^{-\alpha}\,dx=\frac{(j+1)^{1-\alpha}-(i+1)^{1-\alpha}}{1-\alpha}.
\end{equation*}
Substituting this lower bound into the exponential proves the result.\qedhere
\end{proof}

\begin{lemma}\label{lem:product-sum}
Under the assumptions of Lemma~\ref{lem:linear-recursion-products}, for each fixed $q>0$, there exists $C(q)<\infty$ such that the matrix products $Y_i^j$ defined there satisfy
\begin{equation}\label{eq:A-product-sum}
\sum_{i=1}^n\|Y_i^n\|i^{-q}\le C(q)n^{\alpha-q},\qquad n\ge1.
\end{equation}
\end{lemma}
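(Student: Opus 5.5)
Using Lemma~\ref{lem:linear-recursion-products} with $\eta:=\gamma\lambda_{\min}(A)$, it suffices to bound
\begin{equation*}
T_n:=\sum_{i=1}^n i^{-q}\exp\left[-\frac{\eta}{1-\alpha}\bigl((n+1)^{1-\alpha}-(i+1)^{1-\alpha}\bigr)\right].
\end{equation*}
The plan is to split the sum at $i=\lfloor n/2\rfloor$. For $i\le n/2$ the exponent is large: since $(n+1)^{1-\alpha}-(i+1)^{1-\alpha}\ge (n+1)^{1-\alpha}(1-(3/4)^{1-\alpha})$ once $n$ is moderately large, each term is at most $i^{-q}\exp(-c\,n^{1-\alpha})$ for some $c>0$. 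Summing $i^{-q}$ over $i\le n$ gives at most $C(1+n^{1-q}+\log n)$. Since $\exp(-c n^{1-\alpha})$ decays faster than any power, this part is $O(n^{\alpha-q})$. Finitely many small $n$ are absorbed into the constant $C(q)$.

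For $n/2<i\le n$ we have $i^{-q}\le 2^q n^{-q}$, so the task reduces to showing
\begin{equation*}
\sum_{i=\lceil n/2\rceil}^{n}\exp\left[-\frac{\eta}{1-\alpha}\bigl((n+1)^{1-\alpha}-(i+1)^{1-\alpha}\bigr)\right]\le Cn^{\alpha}.
\end{equation*}
By the mean value theorem, $(n+1)^{1-\alpha}-(i+1)^{1-\alpha}\ge(1-\alpha)(n+1)^{-\alpha}(n-i)$, because the derivative $(1-\alpha)x^{-\alpha}$ is minimized at the right endpoint. Hence each term is at most $\exp(-\eta (n+1)^{-\alpha}(n-i))$. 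Summing over $m=n-i\ge0$ gives a geometric series bounded by $(1-e^{-\eta(n+1)^{-\alpha}})^{-1}$.

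The main (though mild) obstacle is bounding this geometric sum by $Cn^\alpha$ uniformly in $n$. Using $1-e^{-x}\ge x/(1+x)$ gives $(1-e^{-x})^{-1}\le 1+1/x$ with $x=\eta(n+1)^{-\alpha}\le\eta$. The bound is therefore $1+\eta^{-1}(n+1)^{\alpha}\le Cn^\alpha$. Multiplying by $2^qn^{-q}$ gives $Cn^{\alpha-q}$, and adding the two parts yields \eqref{eq:A-product-sum}.
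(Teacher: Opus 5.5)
Your proof is correct and follows essentially the paper's argument: you split at $\lfloor n/2\rfloor$, use the uniform factor $e^{-c\,n^{1-\alpha}}$ on the lower half, and on the upper half bound the sum by a geometric series in $n-i$ with ratio of order $e^{-\eta n^{-\alpha}}$, which gives $O(n^{\alpha})$ and hence $O(n^{\alpha-q})$ after the factor $2^q n^{-q}$. The differences are cosmetic. You start from the integral form of Lemma~\ref{lem:linear-recursion-products} and apply the mean value theorem together with $(1-e^{-x})^{-1}\le 1+1/x$. The paper instead uses the sum form, the bound $\sum_{k=i+1}^n k^{-\alpha}\ge (n-i)n^{-\alpha}$, and a concavity bound on $1-e^{-\eta x}$.
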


\begin{proof}
Set $\eta:=\gamma\lambda_{\min}(A)>0$. The case $n=1$ is covered by choosing $C(q)\ge1$. For $n\ge2$, split the sum at $\lfloor n/2\rfloor$.
If $i\le\lfloor n/2\rfloor$, then $\sum_{k=i+1}^n k^{-\alpha}\ge(n-i)n^{-\alpha}\ge n^{1-\alpha}/2$ and $i^{-q}\le1$, so
\begin{equation*}
\sum_{i=1}^{\lfloor n/2\rfloor}\|Y_i^n\|i^{-q} \;\stackrel{\mathclap{\eqref{eq:A-product-bound}}}{\le}\; C n e^{-\eta n^{1-\alpha}/2}.
\end{equation*}
Dividing by $n^{\alpha-q}$ leaves the bounded factor $z^{(1-\alpha+q)/(1-\alpha)}e^{-\eta z/2}$, where $z=n^{1-\alpha}\ge1$, so
\begin{equation*}
\sum_{i=1}^{\lfloor n/2\rfloor}\|Y_i^n\|i^{-q}\le C n^{\alpha-q}.
\end{equation*}
If $i>\lfloor n/2\rfloor$, then $i^{-q}\le2^qn^{-q}$ and $\sum_{k=i+1}^n k^{-\alpha}\ge(n-i)n^{-\alpha}$. Extending the nonnegative sum to a geometric series, we obtain
\begin{equation*}
\sum_{i=\lfloor n/2\rfloor+1}^n\|Y_i^n\|i^{-q} \;\stackrel{\mathclap{\eqref{eq:A-product-bound}}}{\le}\; C2^qn^{-q}\sum_{r=0}^\infty e^{-\eta r n^{-\alpha}} =\frac{C2^qn^{-q}}{1-e^{-\eta n^{-\alpha}}} \le\frac{C2^q}{1-e^{-\eta}}n^{\alpha-q}.
\end{equation*}
The last inequality uses $1-e^{-\eta n^{-\alpha}}\ge n^{-\alpha}(1-e^{-\eta})$, by concavity of $x\mapsto1-e^{-\eta x}$ on $[0,1]$. Combining the two parts proves the claim.
\end{proof}

\begin{lemma}\label{lem:linear-recursion-moment}
Under the assumptions of Lemma~\ref{lem:linear-recursion-products}, let $(\mathcal F_n)_{n\ge0}$ be a filtration. Let $U_0,\zeta_1,\zeta_2,\ldots\in\mathbb R^d$ be random vectors such that $U_0$ is $\mathcal F_0$-measurable and $\zeta_n$ is $\mathcal F_n$-measurable for every $n\ge1$. Assume $\mE\|U_0\|^2<\infty$, $\mE[\zeta_n\mid\mathcal F_{n-1}]=0$, and $\mE\|\zeta_n\|^2\le K_\zeta<\infty$ for every $n\ge1$. Consider
\begin{equation*}
U_n=(I_d-\gamma_nA)U_{n-1}+\gamma_n\zeta_n, \qquad \gamma_n=\gamma n^{-\alpha}.
\end{equation*}
Then there exists a constant $C>0$ depending on $\mE\|U_0\|^2$ such that, for every $n\ge1$,
\begin{equation*}
\mE\|U_n\|^2\le C\gamma_n.
\end{equation*}
\end{lemma}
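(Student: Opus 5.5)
Unrolling the recursion gives the explicit representation
\begin{equation*}
U_n=Y_0^nU_0+\sum_{i=1}^n\gamma_iY_i^n\zeta_i,
\end{equation*}
with $Y_i^j$ as in Lemma~\ref{lem:linear-recursion-products}. This follows by induction on $n$, using $Y_{i}^{n}=(I_d-\gamma_nA)Y_i^{n-1}$ and $Y_n^n=I_d$. We then bound $\mE\|U_n\|^2\le 2\mE\|Y_0^nU_0\|^2+2\mE\|\sum_{i=1}^n\gamma_iY_i^n\zeta_i\|^2$ and treat the two terms separately.

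\textbf{Initial-condition term.} Lemma~\ref{lem:linear-recursion-products} with $i=0$ gives $\|Y_0^n\|\le C\exp[-\eta((n+1)^{1-\alpha}-1)/(1-\alpha)]$ with $\eta=\gamma\lambda_{\min}(A)$. Hence $\mE\|Y_0^nU_0\|^2\le C^2e^{2\eta/(1-\alpha)}\mE\|U_0\|^2\exp(-2\eta n^{1-\alpha}/(1-\alpha))$. This decays faster than any power of $n$. In particular, $\sup_n n^{\alpha}\exp(-2\eta n^{1-\alpha}/(1-\alpha))<\infty$, so the term is at most $C\mE\|U_0\|^2\gamma_n$.

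\textbf{Noise term.} The matrices $Y_i^n$ are deterministic. For $i<k$, $\zeta_i$ is $\mathcal F_{k-1}$-measurable and $\mE[\zeta_k\mid\mathcal F_{k-1}]=0$, so the cross terms vanish:
\begin{equation*}
\mE\langle Y_i^n\zeta_i,Y_k^n\zeta_k\rangle=\mE\bigl\langle Y_i^n\zeta_i,Y_k^n\mE[\zeta_k\mid\mathcal F_{k-1}]\bigr\rangle=0 .
\end{equation*}
Integrability holds because each $\zeta$ is square-integrable. Therefore
\begin{equation*}
\mE\Bigl\|\sum_{i=1}^n\gamma_iY_i^n\zeta_i\Bigr\|^2=\sum_{i=1}^n\gamma_i^2\mE\|Y_i^n\zeta_i\|^2\le K_\zeta\gamma^2\sum_{i=1}^n\|Y_i^n\|^2i^{-2\alpha}.
\end{equation*}
To apply Lemma~\ref{lem:product-sum}, we need a single power of $\|Y_i^n\|$. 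We use $\|Y_i^n\|\le C$, which follows from Lemma~\ref{lem:linear-recursion-products} since the exponential factor is at most one, to get $\|Y_i^n\|^2\le C\|Y_i^n\|$. Lemma~\ref{lem:product-sum} with $q=2\alpha$ then gives $\sum_{i=1}^n\|Y_i^n\|i^{-2\alpha}\le C(2\alpha)n^{-\alpha}$. So the noise term is at most $C K_\zeta\gamma_n$.

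Combining the two terms yields $\mE\|U_n\|^2\le C\gamma_n$, with $C$ depending on $\mE\|U_0\|^2$, $K_\zeta$, $\gamma$, $\alpha$ and $A$. The main step is the martingale orthogonality that removes the cross terms. Once that is in place, the rate comes directly from Lemma~\ref{lem:product-sum}; the only point requiring care is reducing $\|Y_i^n\|^2$ to a single power.
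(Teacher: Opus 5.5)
Your proposal is correct and matches the paper's proof step for step: the same unrolled representation, martingale orthogonality of the noise increments, the reduction $\|Y_i^n\|^2\le C\|Y_i^n\|$, Lemma~\ref{lem:product-sum} with $q=2\alpha$, and absorbing the exponentially small initial term into $C\gamma_n$. The only cosmetic difference is that you separate the initial term with $\|a+b\|^2\le 2\|a\|^2+2\|b\|^2$, whereas the paper notes that $Y_0^nU_0$ is also orthogonal to each increment.
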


\begin{proof}
With $Y_i^n$ from Lemma~\ref{lem:linear-recursion-products}, iterating the recursion yields
\begin{equation*}
U_n=Y_0^nU_0+\sum_{i=1}^nY_i^n\gamma_i\zeta_i.
\end{equation*}
For $i<j$, the $i$th summand is $\mathcal F_{j-1}$-measurable and $\mE(\zeta_j\mid\mathcal F_{j-1})=0$, so
\begin{equation*}
\mE\langle Y_i^n\gamma_i\zeta_i,Y_j^n\gamma_j\zeta_j\rangle =\mE\langle Y_i^n\gamma_i\zeta_i,Y_j^n\gamma_j\mE(\zeta_j\mid\mathcal F_{j-1})\rangle=0.
\end{equation*}
The initial term is also orthogonal to each increment. Thus, using $\mE\|\zeta_i\|^2\le K_\zeta$,
\begin{equation*}
\mE\|U_n\|^2 \le\|Y_0^n\|^2\mE\|U_0\|^2+ K_\zeta\sum_{i=1}^n\gamma_i^2\|Y_i^n\|^2.
\end{equation*}
Since $\sum_{k=1}^n k^{-\alpha}\ge n^{1-\alpha}$, \eqref{eq:A-product-bound} bounds the first term by $C^2e^{-2\gamma\lambda_{\min}(A) n^{1-\alpha}}\allowbreak\mE\|U_0\|^2$. For the second term, we have
\begin{equation*}
\sum_{i=1}^n\gamma_i^2\|Y_i^n\|^2\;\stackrel{\mathclap{\eqref{eq:A-product-bound}}}{\le}\;\gamma^2 C\sum_{i=1}^n\|Y_i^n\|i^{-2\alpha}.
\end{equation*}
Applying \eqref{eq:A-product-sum} with $q=2\alpha$ and absorbing fixed factors into $C$, we obtain
\begin{equation*}
\mE\|U_n\|^2 \le C e^{-2\gamma\lambda_{\min}(A) n^{1-\alpha}}\mE\|U_0\|^2+C K_\zeta n^{-\alpha}.
\end{equation*}
Since $\sup_{z\ge1}z^{\alpha/(1-\alpha)}e^{-2\gamma\lambda_{\min}(A)z}<\infty$, setting $z=n^{1-\alpha}$ bounds the first term by $Cn^{-\alpha}\mE\|U_0\|^2$. This proves the claim.
\end{proof}

\begin{lemma}\label{lem:linear-interval-remainder}
Under the assumptions of Lemma~\ref{lem:linear-recursion-moment}, let $(U_n)_{n\ge0}$ be the sequence defined there. For $1\le s\le t$, define
\begin{equation*}
r^U_{s,t}:= U_t-U_{s-1} +A^{-1}\left( \gamma_s^{-1}U_{s-1} -\gamma_t^{-1}U_t +\sum_{k=s}^{t-1} (\gamma_{k+1}^{-1}-\gamma_k^{-1})U_k \right).
\end{equation*}
Here and below, the sum from $s$ to $t-1$ is understood to be zero when $s=t$. Then there exists a constant $C>0$ depending on $\mE\|U_0\|^2$ such that, for every pair of integers $1\le s\le t$,
\begin{equation*}
\mE\|r^U_{s,t}\|^2\le Ct^\alpha.
\end{equation*}
\end{lemma}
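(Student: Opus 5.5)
The plan is to bound $r^U_{s,t}$ directly in $L^2$, term by term, using only the moment bound of Lemma~\ref{lem:linear-recursion-moment}. No cancellation inside $r^U_{s,t}$ is needed; we only need $\sqrt{\mE\|r^U_{s,t}\|^2}\lesssim t^{\alpha/2}$.

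\textbf{Setup.} Write $\|X\|_{L^2}:=(\mE\|X\|^2)^{1/2}$, which is a norm on square-integrable random vectors. Minkowski's inequality and $\|A^{-1}\|=\lambda_{\min}(A)^{-1}$ give
\begin{equation*}
\|r^U_{s,t}\|_{L^2}\le \|U_t\|_{L^2}+\|U_{s-1}\|_{L^2}+\lambda_{\min}(A)^{-1}\Bigl(\gamma_s^{-1}\|U_{s-1}\|_{L^2}+\gamma_t^{-1}\|U_t\|_{L^2}+\sum_{k=s}^{t-1}(\gamma_{k+1}^{-1}-\gamma_k^{-1})\|U_k\|_{L^2}\Bigr).
\end{equation*}
The increments $\gamma_{k+1}^{-1}-\gamma_k^{-1}$ are nonnegative, so the last sum has nonnegative terms. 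Lemma~\ref{lem:linear-recursion-moment} gives $\|U_k\|_{L^2}\le C k^{-\alpha/2}$ for $k\ge1$, and $\|U_0\|_{L^2}<\infty$ by assumption.

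\textbf{Term-by-term bounds.}
\begin{itemize}
\item The terms $\|U_t\|_{L^2}$ and $\|U_{s-1}\|_{L^2}$ are bounded by a constant.
\item For the $t$-endpoint, $\gamma_t^{-1}\|U_t\|_{L^2}\le C\gamma^{-1}t^{\alpha}t^{-\alpha/2}=Ct^{\alpha/2}$.
\item For the $s$-endpoint with $s\ge2$, $\gamma_s^{-1}\|U_{s-1}\|_{L^2}\le C s^{\alpha}(s-1)^{-\alpha/2}\le C' s^{\alpha/2}\le C't^{\alpha/2}$, using $s\le 2(s-1)$.
\item For $s=1$, the term equals $\gamma^{-1}\|U_0\|_{L^2}$, which is a constant.
\item For the sum, the mean value theorem gives $\gamma_{k+1}^{-1}-\gamma_k^{-1}=\gamma^{-1}\bigl((k+1)^\alpha-k^\alpha\bigr)\le\gamma^{-1}\alpha k^{\alpha-1}$. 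Hence the sum is at most $C\sum_{k=1}^{t-1}k^{\alpha/2-1}\le C\,\tfrac{2}{\alpha}\,t^{\alpha/2}$ by integral comparison, since $\alpha/2-1>-1$.
\end{itemize}

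\textbf{Conclusion.} Combining the bounds gives $\|r^U_{s,t}\|_{L^2}\le C t^{\alpha/2}$ uniformly over $1\le s\le t$, using $t\ge1$ to absorb the constant terms. Squaring yields $\mE\|r^U_{s,t}\|^2\le Ct^\alpha$. The constant depends on $\mE\|U_0\|^2$ through Lemma~\ref{lem:linear-recursion-moment} and the $s=1$ boundary term.

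The only mildly delicate step is the sum. It is too crude to bound each $\|U_k\|_{L^2}$ by a constant, since that would give $t^\alpha$ for the norm rather than for its square. One must use the decay $k^{-\alpha/2}$ together with the $k^{\alpha-1}$ size of the weight increments, so that the summand exponent $\alpha/2-1$ stays above $-1$ and the sum grows like $t^{\alpha/2}$.
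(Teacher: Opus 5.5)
Your proposal is correct and follows essentially the same route as the paper: Minkowski term by term, the decay $(\mE\|U_k\|^2)^{1/2}\lesssim k^{-\alpha/2}$ from Lemma~\ref{lem:linear-recursion-moment}, the bound $\gamma_{k+1}^{-1}-\gamma_k^{-1}\le\alpha\gamma^{-1}k^{\alpha-1}$, and integral comparison for $\sum_k k^{\alpha/2-1}$. The paper's proof additionally records the summation-by-parts identity $\sum_{k=s}^tU_k=A^{-1}\sum_{k=s}^t\zeta_k+r^U_{s,t}$. That identity is not needed for the stated bound, but it is what Lemma~\ref{lem:D-ol-linear-to-ideal} later relies on.
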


\begin{proof}
Note that the recursion implies $U_{k-1}=A^{-1}\zeta_k+A^{-1}\gamma_k^{-1}(U_{k-1}-U_k)$. Since $\sum_{k=s}^{t}U_k=\sum_{k=s}^{t}U_{k-1}+U_t-U_{s-1}$, discrete summation by parts yields
\begin{equation*}
\sum_{k=s}^{t}\gamma_k^{-1}(U_{k-1}-U_k) =\gamma_s^{-1}U_{s-1}-\gamma_t^{-1}U_t +\sum_{k=s}^{t-1}(\gamma_{k+1}^{-1}-\gamma_k^{-1})U_k.
\end{equation*}
This also holds for $s=t$, with an empty sum. Substitution yields
\begin{equation*}
\sum_{k=s}^t U_k =A^{-1}\sum_{k=s}^t\zeta_k+A^{-1}\sum_{k=s}^t\gamma_k^{-1}(U_{k-1}-U_k)+U_t-U_{s-1} =A^{-1}\sum_{k=s}^t\zeta_k+r^U_{s,t}.
\end{equation*}
By Lemma~\ref{lem:linear-recursion-moment}, including $k=0$ by enlarging $C$, we have
\begin{equation*}
(\mE\|U_k\|^2)^{1/2}\le C(k+1)^{-\alpha/2}, \qquad k\ge0.
\end{equation*}
Also, concavity of $x\mapsto x^\alpha$ implies
\begin{equation*}
0\le\gamma_{k+1}^{-1}-\gamma_k^{-1}\le\alpha\gamma^{-1}k^{\alpha-1}.
\end{equation*}
Minkowski's inequality and these two bounds yield
\begin{align*}
\bigl(\mE\|r^U_{s,t}\|^2\bigr)^{1/2}&\le(\mE\|U_t\|^2)^{1/2}+(\mE\|U_{s-1}\|^2)^{1/2} +\|A^{-1}\|\biggl(\gamma_s^{-1}(\mE\|U_{s-1}\|^2)^{1/2}+\gamma_t^{-1}(\mE\|U_t\|^2)^{1/2}\\
&\qquad+\sum_{k=s}^{t-1}(\gamma_{k+1}^{-1}-\gamma_k^{-1})(\mE\|U_k\|^2)^{1/2}\biggr)\\
&\le 2C+\|A^{-1}\|\gamma^{-1}C\left(s^{\alpha/2}+t^{\alpha/2}+\alpha\sum_{k=s}^{t-1}k^{\alpha/2-1}\right).
\end{align*}
Since $x\mapsto x^{\alpha/2-1}$ is decreasing,
\begin{equation*}
\sum_{k=s}^{t-1}k^{\alpha/2-1} \le 1+\int_s^t x^{\alpha/2-1}\,dx \le 1+\frac{2}{\alpha}t^{\alpha/2}.
\end{equation*}
Using $1\le s\le t$ in the preceding two displays, we obtain
\begin{equation*}
\bigl(\mE\|r^U_{s,t}\|^2\bigr)^{1/2} \le Ct^{\alpha/2}.
\end{equation*}
Squaring proves the claim.
\end{proof}

The following inequality compares two finite weighted sums of outer products.
\begin{lemma}\label{lem:quadratic-comparison}
Let $m\ge1$ be an integer, let $a_1,\ldots,a_m\ge0$ and $D>0$ be deterministic, and let $\sigma_i,e_i\in\mathbb R^d$, $1\le i\le m$, be square-integrable random vectors. Define
\begin{equation*}
Q:=D^{-1}\sum_{i=1}^m a_i(\sigma_i+e_i)^{\otimes2}, \qquad Q_0:=D^{-1}\sum_{i=1}^m a_i\sigma_i^{\otimes2}, \qquad \delta:=D^{-1}\sum_{i=1}^m a_i\mE\|e_i\|^2.
\end{equation*}
Then
\begin{equation}\label{eq:A-quadratic-comparison}
\mE\|Q-Q_0\|\le2(\mE\|Q_0\|)^{1/2}\delta^{1/2}+\delta.
\end{equation}
\end{lemma}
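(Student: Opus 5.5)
The plan is to expand the outer products directly and control the cross terms with Cauchy--Schwarz.

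For each $i$, write
\begin{equation*}
(\sigma_i+e_i)^{\otimes2}-\sigma_i^{\otimes2}=\sigma_i\otimes e_i+e_i\otimes\sigma_i+e_i^{\otimes2}.
\end{equation*}
Summing with weights $a_i/D$ and applying the triangle inequality for the operator norm gives
\begin{equation*}
\|Q-Q_0\|\le 2D^{-1}\sum_i a_i\|\sigma_i\|\|e_i\|+D^{-1}\sum_i a_i\|e_i\|^2.
\end{equation*}
This uses $\|u\otimes v\|=\|u\|\|v\|$. Taking expectations, the second sum is exactly $\delta$.

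For the cross term, I would use the weighted Cauchy--Schwarz inequality over $i$, with weights $a_i/D\ge0$:
\begin{equation*}
D^{-1}\sum_i a_i\|\sigma_i\|\|e_i\|\le\Bigl(D^{-1}\sum_i a_i\|\sigma_i\|^2\Bigr)^{1/2}\Bigl(D^{-1}\sum_i a_i\|e_i\|^2\Bigr)^{1/2}.
\end{equation*}
Then Cauchy--Schwarz in expectation gives
\begin{equation*}
\mE[XY]\le(\mE X^2)^{1/2}(\mE Y^2)^{1/2},
\end{equation*}
so $\mE$ of the cross term is at most $\bigl(\mE D^{-1}\sum a_i\|\sigma_i\|^2\bigr)^{1/2}\delta^{1/2}$.

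The only subtle point is relating $D^{-1}\sum_i a_i\|\sigma_i\|^2$ to $\|Q_0\|$. This quantity equals $\operatorname{tr}Q_0$, not the operator norm. Since $Q_0$ is positive semidefinite, $\operatorname{tr}Q_0\le d\|Q_0\|$, which only gives the bound up to a factor $d$. To get the constant $2$ exactly, I would use the norm $\|Q_0\|$ interpreted as the trace (nuclear) norm, which is valid if the paper's norm on operators permits it. Otherwise I would argue directly with the operator norm via the variational form: for unit $w$,
\begin{equation*}
\langle w,(Q-Q_0)w\rangle=2D^{-1}\sum_i a_i\langle w,\sigma_i\rangle\langle w,e_i\rangle+D^{-1}\sum_i a_i\langle w,e_i\rangle^2.
\end{equation*}
Because $Q-Q_0$ is symmetric, its operator norm is the supremum of $|\langle w,(Q-Q_0)w\rangle|$ over unit $w$. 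Cauchy--Schwarz over $i$ bounds the first sum by $2\bigl(\langle w,Q_0w\rangle\bigr)^{1/2}\bigl(D^{-1}\sum a_i\langle w,e_i\rangle^2\bigr)^{1/2}$. This is at most $2\|Q_0\|^{1/2}E^{1/2}$, where $E:=D^{-1}\sum a_i\|e_i\|^2$. The second sum is at most $E$.

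Taking the supremum over $w$ yields
\begin{equation*}
\|Q-Q_0\|\le2\|Q_0\|^{1/2}E^{1/2}+E.
\end{equation*}
Taking expectations and applying Cauchy--Schwarz, $\mE[\|Q_0\|^{1/2}E^{1/2}]\le(\mE\|Q_0\|)^{1/2}(\mE E)^{1/2}$, and $\mE E=\delta$. This gives \eqref{eq:A-quadratic-comparison} with the operator norm. The variational route therefore avoids the trace issue entirely, and it is the one I would write.
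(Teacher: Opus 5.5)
Your variational argument is correct and is essentially the paper's proof. The paper takes columns $D^{-1/2}a_i^{1/2}\sigma_i$ and $D^{-1/2}a_i^{1/2}e_i$, writes $Q_0=\mathcal S\mathcal S^\top$ and $Q-Q_0=\mathcal S\mathcal R^\top+\mathcal R\mathcal S^\top+\mathcal R\mathcal R^\top$, and bounds $\|\mathcal S\mathcal R^\top\|\le\|Q_0\|^{1/2}\|\mathcal R\mathcal R^\top\|^{1/2}$ by submultiplicativity; this is your Cauchy--Schwarz over $i$ in matrix form, since $\langle w,\mathcal S\mathcal R^\top w\rangle=\langle\mathcal S^\top w,\mathcal R^\top w\rangle$, with the passage to the trace made only inside the expectation via $\mE\|\mathcal R\mathcal R^\top\|\le\mE\tr(\mathcal R\mathcal R^\top)=\delta$. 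Drop the trace-norm aside, since the norm throughout is the operator norm; your first triangle-inequality route only gives the weaker bound with $\tr Q_0\ge\|Q_0\|$ in place of $\|Q_0\|$.
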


\begin{proof}
Let $\mathcal S,\mathcal R\in\mathbb R^{d\times m}$ have columns $D^{-1/2}a_i^{1/2}\sigma_i$ and $D^{-1/2}a_i^{1/2}e_i$, respectively. Then $Q_0=\mathcal S\mathcal S^\top$ and
\begin{equation*}
Q-Q_0=\mathcal S\mathcal R^\top+\mathcal R\mathcal S^\top+\mathcal R\mathcal R^\top.
\end{equation*}
The trace bound and submultiplicativity imply
\begin{align*}
\mE\|\mathcal R\mathcal R^\top\|&\le\mE\tr(\mathcal R\mathcal R^\top)=D^{-1}\sum_{i=1}^m a_i\mE\|e_i\|^2=\delta,\\
\|\mathcal S\mathcal R^\top\|&\le\|\mathcal S\|\|\mathcal R\|=\|Q_0\|^{1/2}\|\mathcal R\mathcal R^\top\|^{1/2}.
\end{align*}
Since $\|\mathcal R\mathcal S^\top\|=\|\mathcal S\mathcal R^\top\|$, Cauchy--Schwarz yields the second inequality below:
\begin{align*}
\mE\|Q-Q_0\|&\le2\mE[\|Q_0\|^{1/2}\|\mathcal R\mathcal R^\top\|^{1/2}]+\mE\|\mathcal R\mathcal R^\top\| \le2(\mE\|Q_0\|)^{1/2}(\mE\|\mathcal R\mathcal R^\top\|)^{1/2}+\mE\|\mathcal R\mathcal R^\top\|\\
&\le2(\mE\|Q_0\|)^{1/2}\delta^{1/2}+\delta.
\end{align*}
This proves the quadratic comparison bound.
\end{proof}

The following lemma collects the deterministic block bounds used in the OL and NOL proofs. Recall the batches $B_j=\{x_{t_j},x_{t_j+1},\ldots,x_j\}$ in \eqref{eq:batch-indices}, with lengths $\ell_j=|B_j|=j-t_j+1$. The set $I_j$ below contains their indices.

\begin{lemma}\label{lem:block-calculus}
Let $\alpha\in(1/2,1)$ and $\beta>1/(1-\alpha)$. Let $(a_m)_{m\ge1}$ be an increasing integer sequence with $a_1=1$. Suppose that there are constants $0<c_a\le C_a<\infty$ and $0<c_g\le C_g<\infty$ such that, for all $m\ge1$,
\begin{equation*}
c_am^\beta\le a_m\le C_am^\beta, \qquad c_gm^{\beta-1}\le a_{m+1}-a_m\le C_gm^{\beta-1}.
\end{equation*}
For each integer $j\ge1$, write
\begin{equation*}
\begin{aligned}
&m_j:=\max\{m:a_m\le j\}, \qquad t_j:=a_{m_j},\\
&I_j:=\{t_j,t_j+1,\ldots,j\}, \qquad
\ell_j:=|B_j|=j-t_j+1.
\end{aligned}
\end{equation*}
At time $n\ge1$, let $L_m:=\min\{a_{m+1}-1,n\}-a_m+1$ for $1\le m\le m_n$, and define
\begin{alignat*}{2}
&E_n^{\mathrm{OL}}:=\{1,\ldots,n\}, \qquad &&E_n^{\mathrm{NOL}}:=\{\min\{a_{m+1}-1,n\}:1\le m\le m_n\},\\
&v_{n,\mathrm{OL}}:=\sum_{j\in E_n^{\mathrm{OL}}}\ell_j, &&v_{n,\mathrm{NOL}}:=n.
\end{alignat*}
Then there exists $C\ge1$ such that, for every integer $n\ge1$ and $\diamond\in\{\mathrm{OL},\mathrm{NOL}\}$,
\begin{enumerate}[label=\textnormal{(\roman*)},itemsep=1ex]
\item $\displaystyle \frac1{v_{n,\diamond}}\sum_{j\in E_n^\diamond}\ell_j^2 \le C n^{1-1/\beta}$.
\item $\displaystyle \frac1{v_{n,\mathrm{OL}}^2}\sum_{m=1}^{m_n}L_m^4 \le Cn^{-1/\beta}$.
\item $\displaystyle \frac1{v_{n,\diamond}}\sum_{j\in E_n^\diamond}\ell_j^2t_j^{-2\alpha} \le C n^{1-1/\beta-2\alpha}$.
\item $\displaystyle \frac1{v_{n,\diamond}}\sum_{j\in E_n^\diamond}j^\alpha \le C n^{\alpha-1+1/\beta}$.
\end{enumerate}
These bounds apply in particular to $a_1=1$ and $a_m=\lfloor cm^\beta\rfloor$ for $m\ge2$ with $c\ge1$, as in Theorem~\ref{thm:covariance-rate}.
\end{lemma}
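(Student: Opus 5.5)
The plan is to reduce everything to sums over the blocks $[a_m,\min\{a_{m+1}-1,n\}]$, $1\le m\le m_n$, and then compare these sums with integrals in $m$. A first set of estimates is common to all four items. Since $a_{m_n}\le n<a_{m_n+1}$, the growth condition on $a_m$ gives $m_n\asymp n^{1/\beta}$. The block length satisfies $L_m\le a_{m+1}-a_m\le C_gm^{\beta-1}$, and $L_m=a_{m+1}-a_m\ge c_gm^{\beta-1}$ for $m<m_n$.

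Next come the normalizations. For NOL we have $v_n=n$ by definition. For OL, within block $m$ the lengths are $\ell_j=1,\dots,L_m$, so
\[
v_{n,\mathrm{OL}}=\sum_m L_m(L_m+1)/2 .
\]
The lower bound comes from the complete blocks $m<m_n$: summing $(c_g^2/2)m^{2\beta-2}$ over $m\le m_n-1$ gives $v_{n,\mathrm{OL}}\gtrsim m_n^{2\beta-1}\asymp n^{2-1/\beta}$. The upper bound uses $L_m\le C_gm^{\beta-1}$ and $m\le m_n$, and gives $v_{n,\mathrm{OL}}\lesssim n^{2-1/\beta}$. For small $n$ (say $m_n\le 2$) every quantity is bounded by a constant, which is absorbed into $C$; this is where $C\ge1$ is used.

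\textbf{(i).} For OL, $\sum_j\ell_j^2=\sum_m\sum_{\ell\le L_m}\ell^2\lesssim\sum_m L_m^3\lesssim m_n^{3\beta-2}\asymp n^{3-2/\beta}$. Dividing by $v_{n,\mathrm{OL}}\asymp n^{2-1/\beta}$ gives $n^{1-1/\beta}$. For NOL, $\sum_j\ell_j^2=\sum_mL_m^2\lesssim m_n^{2\beta-1}\asymp n^{2-1/\beta}$, and dividing by $n$ gives the same rate.

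\textbf{(ii).} We have $\sum_mL_m^4\lesssim m_n^{4\beta-3}\asymp n^{4-3/\beta}$. Dividing by $v_{n,\mathrm{OL}}^2\asymp n^{4-2/\beta}$ gives $n^{-1/\beta}$.

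\textbf{(iii).} For $j$ in block $m$ we have $t_j=a_m\ge c_am^\beta$, so $t_j^{-2\alpha}\lesssim m^{-2\alpha\beta}$. The OL sum is then bounded by $\sum_m m^{3\beta-3-2\alpha\beta}$ and the NOL sum by $\sum_m m^{2\beta-2-2\alpha\beta}$. This is the delicate item, because the exponents must exceed $-1$ for the sums to be dominated by the last term. We have $3\beta-3-2\alpha\beta>-1$ iff $\beta(3-2\alpha)>2$, and $2\beta-2-2\alpha\beta>-1$ iff $\beta(2-2\alpha)>1$. Both follow from $\beta>1/(1-\alpha)$, since then $2\beta(1-\alpha)>2$ and $3-2\alpha>2-2\alpha$. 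So the sums are $\lesssim m_n^{3\beta-2-2\alpha\beta}$ and $\lesssim m_n^{2\beta-1-2\alpha\beta}$, respectively. Converting with $m_n\asymp n^{1/\beta}$ and dividing by $v_n$ gives $n^{1-1/\beta-2\alpha}$ in both cases.

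\textbf{(iv).} For OL, $\sum_{j\le n}j^\alpha\lesssim n^{1+\alpha}$, and dividing by $n^{2-1/\beta}$ gives $n^{\alpha-1+1/\beta}$. For NOL, every selected index satisfies $j\le n$ and $|E_n|=m_n\asymp n^{1/\beta}$, so the sum is at most $n^{\alpha}m_n/n\lesssim n^{\alpha-1+1/\beta}$.

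Finally, for the concrete choice $a_m=\lfloor cm^\beta\rfloor$ with $m\ge2$, the hypotheses hold. We have $cm^\beta-1\ge (c/2)m^\beta$. The gap $a_{m+1}-a_m$ equals $c((m+1)^\beta-m^\beta)$ up to $\pm1$, and this quantity lies between $c\beta m^{\beta-1}$ and $c\beta2^{\beta-1}m^{\beta-1}$. Since $c\beta\ge\beta>2$, the $\pm1$ is absorbed into the constants. The step $m=1$ to $2$ needs separate care because $a_1=1$, but it only affects a bounded number of terms and is handled by adjusting constants.

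I expect (iii) to be the main obstacle: one has to make sure the exponent conditions hold, so that boundary blocks and the small-$m$ terms $t_j=1$ do not dominate.
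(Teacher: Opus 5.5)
Your proposal is correct and follows essentially the same route as the paper: you decompose into blocks, use $m_n\asymp n^{1/\beta}$, get $v_{n,\mathrm{OL}}\asymp m_n^{2\beta-1}$ from the complete blocks, and apply power-sum comparisons for (ii)--(iv), with the same exponent check in (iii). The only minor difference is in (i), where the paper does not compute $\sum_j\ell_j^2$; it notes that $\sum_{j\in E_n^\diamond}\ell_j=v_{n,\diamond}$, so the weighted average is at most $\max_{j\le n}\ell_j\lesssim n^{1-1/\beta}$.
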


\begin{proof}
For $a_m=\lfloor cm^\beta\rfloor$ and $m\ge2$, the floor error is less than one, so $a_m\asymp m^\beta$. The mean-value theorem also implies
\begin{equation*}
c\beta m^{\beta-1}-1 \le a_{m+1}-a_m \le c\beta(m+1)^{\beta-1}+1.
\end{equation*}
Since $\beta>2$ and $c\ge1$, these differences are positive and comparable to $m^{\beta-1}$. Adjusting the constants covers $m=1$ and verifies the hypotheses. For any sequence satisfying the hypotheses, we use
\begin{equation}\label{eq:A-power-sum}
\sum_{m=1}^N m^r\asymp N^{r+1},\qquad r\ge0,\quad N\ge1.
\end{equation}
Indeed, since $x^r$ is nondecreasing, we have
\begin{equation*}
\frac{N^{r+1}}{r+1}=\int_0^N x^r\,\mathrm dx\le\sum_{m=1}^N m^r\le N\cdot N^r=N^{r+1}.
\end{equation*}
First, $a_{m_j}\le j<a_{m_j+1}$ and $m_j+1\le2m_j$ imply
\begin{equation*}
c_a m_j^\beta\le j<C_a(2m_j)^\beta.
\end{equation*}
Thus $m_j\asymp j^{1/\beta}$. The NOL indices lie in distinct blocks, so $|E_n^{\mathrm{NOL}}|=m_n\asymp n^{1/\beta}$. Equivalently,
\begin{equation*}
E_n^{\mathrm{NOL}} = \{a_{m+1}-1:m\ge1,\ a_{m+1}\le n\}\cup\{n\},
\end{equation*}
as in \eqref{eq:nol-endpoints}. The first set excludes $n$, so $n$ is counted once even at a block endpoint. Since $t_j=a_{m_j}\ge c_a m_j^\beta$,
\begin{equation*}
\ell_j\le C_gm_j^{\beta-1}\le C_g c_a^{-(\beta-1)/\beta}t_j^{1-1/\beta}\le C_g c_a^{-(\beta-1)/\beta}j^{1-1/\beta}.
\end{equation*}

For the normalizing factors, use $L_m\le C_gm^{\beta-1}$ for all blocks and $L_m\ge c_gm^{\beta-1}$ only for completed blocks. The OL batches in block $m$ have lengths $1,\ldots,L_m$, so
\begin{equation*}
v_{n,\mathrm{OL}}=\sum_{m=1}^{m_n}\frac{L_m(L_m+1)}2 \lesssim\sum_{m=1}^{m_n}m^{2\beta-2}\;\stackrel{\mathclap{\eqref{eq:A-power-sum}}}{\asymp}\; m_n^{2\beta-1}.
\end{equation*}
The inequality uses $L_m(L_m+1)/2\le L_m^2$. For $m_n\ge2$, the completed blocks and $m_n-1\ge m_n/2$ yield
\begin{equation*}
v_{n,\mathrm{OL}}\ge(c_g^2/2)\sum_{m=1}^{m_n-1}m^{2\beta-2}\;\stackrel{\mathclap{\eqref{eq:A-power-sum}}}{\gtrsim}\;m_n^{2\beta-1}.
\end{equation*}
Adjusting the constants covers the finitely many times with $m_n=1$. Hence $v_{n,\mathrm{OL}}\asymp m_n^{2\beta-1}\asymp n^{2-1/\beta}$. For NOL, the index sets $\{I_j:j\in E_n^{\mathrm{NOL}}\}$ partition $\{1,\ldots,n\}$, so $\sum_{j\in E_n^{\mathrm{NOL}}}\ell_j=n$.

For (i), both batch rules satisfy $\sum_{j\in E_n^\diamond}\ell_j=v_{n,\diamond}$. The batch-length bound above therefore implies
\begin{equation*}
\frac1{v_{n,\diamond}}\sum_{j\in E_n^\diamond}\ell_j^2 \le\max_{j\le n}\ell_j \le C_g c_a^{-(\beta-1)/\beta}n^{1-1/\beta}.
\end{equation*}

For (ii), we have
\begin{equation*}
\frac{\sum_{m=1}^{m_n}L_m^4}{v_{n,\mathrm{OL}}^2} \lesssim\frac{m_n^{4\beta-3}}{m_n^{4\beta-2}} =m_n^{-1}\lesssim n^{-1/\beta}.
\end{equation*}
The first inequality uses $L_m\le C_gm^{\beta-1}$, \eqref{eq:A-power-sum} with exponent $4\beta-4$, and $v_{n,\mathrm{OL}}\asymp m_n^{2\beta-1}$. The last uses $m_n\asymp n^{1/\beta}$.

For (iii), $t_j=a_m\asymp m^\beta$ throughout block $m$. Therefore,
\begin{align*}
\sum_{j\in E_n^{\mathrm{OL}}}\ell_j^2t_j^{-2\alpha}
&\lesssim\sum_{m=1}^{m_n}m^{3\beta-3-2\alpha\beta}
\;\stackrel{\mathclap{\eqref{eq:A-power-sum}}}{\lesssim}\; m_n^{3\beta-2-2\alpha\beta},\\
\sum_{j\in E_n^{\mathrm{NOL}}}\ell_j^2t_j^{-2\alpha}
&\lesssim\sum_{m=1}^{m_n}m^{2\beta-2-2\alpha\beta}
\;\stackrel{\mathclap{\eqref{eq:A-power-sum}}}{\lesssim}\; m_n^{2\beta-1-2\alpha\beta}.
\end{align*}
The first inequalities use $L_m\le C_gm^{\beta-1}$ and $a_m\ge c_am^\beta$, with $\sum_{r=1}^{L_m}r^2\le L_m^3$ for OL and one batch of length $L_m$ for NOL. Both exponents are positive: $2\beta(1-\alpha)-2>0$, and the OL exponent is larger by $\beta-1$. Using $v_{n,\mathrm{OL}}\asymp m_n^{2\beta-1}$ and $v_{n,\mathrm{NOL}}=n\asymp m_n^\beta$ yields
\begin{align*}
\frac1{v_{n,\mathrm{OL}}}\sum_{j\in E_n^{\mathrm{OL}}}\ell_j^2t_j^{-2\alpha}
&\lesssim\frac{m_n^{3\beta-2-2\alpha\beta}}{m_n^{2\beta-1}}
=m_n^{\beta-1-2\alpha\beta}\asymp n^{1-1/\beta-2\alpha},\\
\frac1{v_{n,\mathrm{NOL}}}\sum_{j\in E_n^{\mathrm{NOL}}}\ell_j^2t_j^{-2\alpha}
&\lesssim\frac{m_n^{2\beta-1-2\alpha\beta}}{m_n^\beta}
=m_n^{\beta-1-2\alpha\beta}\asymp n^{1-1/\beta-2\alpha}.
\end{align*}

Finally, for (iv), $\sum_{j=1}^n j^\alpha\lesssim n^{\alpha+1}$ and $|E_n^{\mathrm{NOL}}|\lesssim n^{1/\beta}$ imply
\begin{equation*}
\frac{\sum_{j\in E_n^{\mathrm{OL}}}j^\alpha}{v_{n,\mathrm{OL}}} \lesssim n^{\alpha-1+1/\beta}, \qquad \frac{\sum_{j\in E_n^{\mathrm{NOL}}}j^\alpha}{n} \le n^{\alpha-1}|E_n^{\mathrm{NOL}}| \lesssim n^{\alpha-1+1/\beta}.
\end{equation*}
The bounds use $j^\alpha\le n^\alpha$, $v_{n,\mathrm{OL}}\asymp n^{2-1/\beta}$, and $|E_n^{\mathrm{NOL}}|\asymp n^{1/\beta}$. Choosing a common $C\ge1$ proves all four claims.
\end{proof}

\section{Manifold lemmas}\label{app:B}

This appendix collects the deterministic geometric estimates used in the proofs. Under Assumption~\ref{ass:geometry}(a), $R_x^{-1}(y)$ denotes the unique vector in $B_x(0_x,\rho_{\mathrm{ret}})$ such that $R_x\bigl(R_x^{-1}(y)\bigr)=y$. For the maps used below, $D$ denotes the total differential. If the domain is a product, $D_i$ denotes the partial differential with respect to the $i$th argument, and $D_{ij}^2:=D_jD_i$ denotes the corresponding second partial differential. For a map or a parametrized family of maps with one displayed variable, $D$ and $D^2$ denote the first and second differentials with respect to that variable. Covariant differentiation along a curve is denoted by $D/\mathrm dt$, as in Section~\ref{sec:2.1}. The base point of each vector norm is indicated explicitly, and $\|\cdot\|$ denotes the operator norm induced by the Riemannian metrics on the indicated tangent spaces. For a bilinear map $B:V_1\times V_2\to W$ between finite-dimensional inner-product spaces, $\|B\|:=\sup_{\|u\|_{V_1}\leq1,\,\|v\|_{V_2}\leq1}\|B[u,v]\|_W$.

\begin{lemma}\label{lem:inverse-retraction-C2}
Let $\mX\subseteq\mM$ be compact and suppose that Assumption~\ref{ass:geometry}(a) holds. Assume that $(x,v)\mapsto R_x(v)$ is $C^2$ on an open neighborhood of $\{(x,R_x^{-1}(y)):x,y\in\mX\}$ in $\T\mM$. Then the following conclusions hold.
\begin{enumerate}[label=\textnormal{(\roman*)}]
\item The inverse retraction $(x,y)\mapsto R_x^{-1}(y)$ has a jointly $C^2$ extension to an open neighborhood of $\mX\times\mX$. More precisely, there exist an open set $\mathcal U_R\subseteq\mM\times\mM$ containing $\mX\times\mX$ and a $C^2$ map $\mathcal I:\mathcal U_R\to\T\mM$ such that
\begin{equation*}
\mathcal I(x,y)\in\T_x\mM,\qquad R_x\bigl(\mathcal I(x,y)\bigr)=y,\qquad (x,y)\in\mathcal U_R,
\end{equation*}
and $\mathcal I(x,y)=R_x^{-1}(y)$ for every $x,y\in\mX$.
\item Using the extension in (i), the map $\Phi_{x,y}:=R_y^{-1}\circ R_x$ is $C^2$ on a neighborhood of $R_x^{-1}(\mX)$ for every $x,y\in\mX$. Moreover, there exists $C<\infty$ such that
\begin{equation*}
\|D^2\Phi_{x,y}(v)\|\le C, \qquad x,y\in\mX,\quad v\in R_x^{-1}(\mX).
\end{equation*}
Here the derivatives are taken with respect to $v\in\T_x\mM$.
\end{enumerate}
If $R$ also has the $C^3$ regularity in Assumption~\ref{ass:geometry}(e), both extensions can be chosen to be $C^3$.
\end{lemma}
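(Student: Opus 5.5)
The plan is to apply the inverse function theorem to the map $\mathcal R:\T\mM\to\mM\times\mM$, $\mathcal R(x,v):=(x,R_x(v))$, and then use a compactness argument to pass from local inverses to a single extension. Let $\mathcal O\subseteq\T\mM$ be the open neighborhood of $K:=\{(x,R_x^{-1}(y)):x,y\in\mX\}$ on which $R$ is $C^2$ ($C^3$ under Assumption~\ref{ass:geometry}(e)).

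In local coordinates the differential of $\mathcal R$ at $(x,v)$ is block lower-triangular, with blocks $I$ and $DR_x(v)$. By Assumption~\ref{ass:geometry}(a), $R_x$ restricted to $B_x(0_x,\rho_{\mathrm{ret}})$ is a diffeomorphism, so $DR_x(v)$ is invertible at every point of $K$. Hence $\mathcal R$ is a local $C^2$ diffeomorphism near each point of $K$. The set $K$ is compact: it is the image of the compact set $\mX\times\mX$ under $(x,y)\mapsto R_x^{-1}(y)$, which is continuous because locally it agrees with the inverse of $\mathcal R$. Moreover, $\mathcal R$ is injective on $K$. The standard lemma that a local diffeomorphism injective on a compact set is injective on some neighborhood (proved by contradiction with convergent subsequences) then gives an open $W\supseteq K$ on which $\mathcal R$ is a $C^2$ diffeomorphism onto the open set $\mathcal U_R:=\mathcal R(W)\supseteq\mX\times\mX$.

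Define $\mathcal I:=\mathrm{pr}_2\circ(\mathcal R|_W)^{-1}$. By construction $\mathcal I(x,y)\in\T_x\mM$ and $R_x(\mathcal I(x,y))=y$. For $x,y\in\mX$, the point $(x,R_x^{-1}(y))$ lies in $K\subseteq W$, so $\mathcal I(x,y)=R_x^{-1}(y)$. This proves (i), and the same argument with $C^3$ data gives a $C^3$ extension.

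For (ii), set $\Phi_{x,y}(v)=\mathcal I(y,R_x(v))$. This is a composition of $C^2$ maps, defined wherever $(x,v)\in\mathcal O$ and $(y,R_x(v))\in\mathcal U_R$. That condition describes an open neighborhood of the compact set $\{(x,y,v):x,y\in\mX,\ v\in R_x^{-1}(\mX)\}$, and for $v\in R_x^{-1}(\mX)$ we have $R_x(v)\in\mX$.

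For the uniform bound, I note that $D^2\Phi_{x,y}(v)$ is the second fiber derivative of a $C^2$ map on a vector bundle. Expressed through trivializations and the Riemannian metric (for instance via the Levi--Civita connection, or any fixed finite atlas with metric-comparable frames), it depends continuously on $(x,y,v)$. The compact set above can be covered by finitely many trivializing charts, so the operator norm of this continuous quantity has a finite maximum $C$.

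The main obstacle is the global injectivity step in (i), which is what makes the local inverses patch into one map. Continuity of $(x,y)\mapsto R_x^{-1}(y)$ on $\mX\times\mX$, needed for the compactness of $K$, also requires care. I would get it from uniqueness within the ball in Assumption~\ref{ass:geometry}(a) combined with a subsequence argument: limits of $R_{x_k}^{-1}(y_k)$ lie in the closed ball and map correctly, and one shrinks slightly to stay inside the open ball on which $R_x$ is a diffeomorphism.
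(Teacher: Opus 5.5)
Your proposal is correct and follows the paper's proof essentially step by step. Both arguments apply the inverse function theorem to $(x,v)\mapsto(x,R_x(v))$ using the block-triangular differential. Both obtain continuity of $(x,y)\mapsto R_x^{-1}(y)$ from uniqueness of the local inverses in the $\rho_{\mathrm{ret}}$-ball, upgrade injectivity on the compact set $K$ to injectivity on a neighborhood, and bound $D^2\Phi_{x,y}$ uniformly by writing it as $\mathcal I(y,R_x(v))$ and using compactness. One small point: your first justification of continuity, that the map locally agrees with the local inverse of $\mathcal R$, is already complete. The subsequence fallback is unnecessary, and its boundary case $\|v\|=\rho_{\mathrm{ret}}$ would itself need that local-inverse argument.
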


\begin{proof}
For (i), consider $(x,v)\mapsto(x,R_x(v))$. At $(x,R_x^{-1}(y))$, $x,y\in\mX$, its differential has the block form
\begin{equation*}
\begin{pmatrix}
I&0\\
*&DR_x\bigl(R_x^{-1}(y)\bigr)
\end{pmatrix}.
\end{equation*}
The lower-right block is invertible by Assumption~\ref{ass:geometry}(a), so the inverse function theorem supplies local $C^2$ inverses. Shrink their domains so that the tangent components have norm less than $\rho_{\mathrm{ret}}$. Uniqueness in Assumption~\ref{ass:geometry}(a) identifies these components with $R_x^{-1}(y)$ whenever $x,y\in\mX$. Thus $(x,y)\mapsto R_x^{-1}(y)$ is continuous on $\mX\times\mX$, and $\{(x,R_x^{-1}(y)):x,y\in\mX\}$ is compact.

By the inverse function theorem, $(x,v)\mapsto(x,R_x(v))$ is locally invertible near this compact set. Since it is one-to-one on the set, compactness implies that it is one-to-one on a sufficiently small open neighborhood. Its image is an open set $\mathcal U_R$ containing $\mX\times\mX$, and its $C^2$ inverse has the form $(x,y)\mapsto(x,\mathcal I(x,y))$. This proves (i).

For (ii), the set $\{(x,y,R_x^{-1}(z)):x,y,z\in\mX\}$ is compact by continuity. On this set, $(y,R_x(v))\in\mathcal U_R$, so $\Phi_{x,y}(v)=\mathcal I\bigl(y,R_x(v)\bigr)\in\T_y\mM$ is jointly $C^2$ on a neighborhood. Compactness therefore bounds $D^2\Phi_{x,y}$ uniformly on the set. Under the $C^3$ condition, the same argument yields $C^3$ extensions. This completes the proof.
\end{proof}

\begin{lemma}\label{lem:distance-equivalence}
Under the assumptions of Lemma~\ref{lem:inverse-retraction-C2}, there exist constants $0<c_M\le C_M<\infty$ such that, for every $x,y\in\mX$,
\begin{equation}\label{eq:B-distance-equivalence}
c_Md(x,y)\le \|R_x^{-1}(y)\|_x\le C_Md(x,y).
\end{equation}
Consequently, for fixed $x_\star\in\mX$ and $\forall x,y\in\mX$, $\|R_x^{-1}(y)\|_x \le C_Mc_M^{-1} \bigl( \|R_{x_\star}^{-1}(x)\|_{x_\star} +\|R_{x_\star}^{-1}(y)\|_{x_\star} \bigr).$
\end{lemma}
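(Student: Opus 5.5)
The argument compares $\|R_x^{-1}(y)\|_x$ with $d(x,y)$ through a compactness argument. Both quantities are continuous on $\mX\times\mX$ and vanish exactly on the diagonal. The only delicate region is near the diagonal, where a first-order expansion will do the work.

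For the upper bound, fix $x,y\in\mX$ and work in $\T_x\mM$ with the $C^2$ extension $\mathcal I$ from Lemma~\ref{lem:inverse-retraction-C2}(i). Take a minimizing (or near-minimizing) piecewise smooth curve $c$ from $x$ to $y$ and set $w(t):=\mathcal I(x,c(t))$. Then $\|w(1)\|_x\le\int_0^1\|D_2\mathcal I(x,c(t))\|\,\|\dot c(t)\|\,\mathrm dt$. Two problems arise here. The curve may leave $\mathcal U_R$, and for $x,y$ far apart the length of $c$ is not small. I would therefore split into two cases.

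In the first case, $d(x,y)<r$, where $r$ is chosen by compactness so that the closed $r$-neighborhood $K$ of $\mX$ satisfies $\{x\}\times K\subseteq\mathcal U_R$ for all $x\in\mX$. Such an $r$ exists because $\mathcal U_R$ is open and contains the compact set $\mX\times\mX$, and $\mM$ is locally compact. Near-minimizing curves then stay within distance $r$ of $x$, hence inside $K$. The integral bound gives $\|R_x^{-1}(y)\|_x\le C\,d(x,y)$, with $C:=\sup_{\mX\times K}\|D_2\mathcal I\|<\infty$.

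In the second case, $d(x,y)\ge r$. Assumption~\ref{ass:geometry}(a) gives $\|R_x^{-1}(y)\|_x<\rho_{\mathrm{ret}}\le(\rho_{\mathrm{ret}}/r)\,d(x,y)$.

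For the lower bound, I would reverse the roles. The map $v\mapsto R_x(v)$ on the segment $\{tR_x^{-1}(y):t\in[0,1]\}$ yields the curve $c^R_{x,y}$ joining $x$ to $y$. Its length is at most $\sup\|DR_x(v)\|\cdot\|R_x^{-1}(y)\|_x$, where the supremum runs over $x\in\mX$ and $\|v\|_x\le\|R_x^{-1}(y)\|_x$. By Assumption~\ref{ass:geometry}(b) these segments map into $\mX$, so $v$ ranges over the compact set $\{(x,sR_x^{-1}(z)):x,z\in\mX,\ s\in[0,1]\}$. This set lies in the region where $R$ is $C^2$, provided the neighborhood in the hypothesis is taken to contain these segments; Assumption~\ref{ass:geometry}(a) gives this, since $R_x$ is a diffeomorphism on the whole ball. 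Continuity of $DR$ then gives a uniform bound $C'$. Hence $d(x,y)\le L(c^R_{x,y})\le C'\|R_x^{-1}(y)\|_x$, and we can set $c_M:=1/C'$.

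The consequence follows directly. By the triangle inequality, $d(x,y)\le d(x_\star,x)+d(x_\star,y)$. Applying the upper bound to the left side and the lower bound at base point $x_\star$ to each term on the right gives the factor $C_Mc_M^{-1}$.

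The main obstacle is the upper bound near the diagonal: making sure near-minimizing curves stay inside the domain of $\mathcal I$ with uniformly bounded derivative. If the curve-based argument is awkward, I would instead argue by contradiction. Suppose there are sequences with $\|R_{x_k}^{-1}(y_k)\|/d(x_k,y_k)\to\infty$. Pass to a convergent subsequence; since the ratio is bounded away from the diagonal, the limit must be a diagonal point $(\bar x,\bar x)$. In a chart at $\bar x$, $\mathcal I$ is $C^1$ with $D_2\mathcal I(\bar x,\bar x)=I$ (because $DR_{\bar x}(0)=I$), and Riemannian distance is locally comparable to chart distance. This contradicts the assumed blow-up.
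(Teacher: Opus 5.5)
Your proof is correct, but it takes a different route from the paper's. The paper works with the ratio $h(x,y)=\|R_x^{-1}(y)\|_x/d(x,y)$, which is continuous off the diagonal. It then shows that $h(x,y)\to1$ as $y\to x$, uniformly in $x\in\mX$. This comes from a second-order Taylor bound $\|R_x^{-1}(\Exp_x(v))-v\|_x\le C\|v\|_x^2$ together with $d(x,\Exp_x(v))=\|v\|_x$ for $\|v\|_x\le\rho$. Hence $h$ extends to a positive continuous function on the compact set $\mX\times\mX$, and $c_M,C_M$ are its minimum and maximum.

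You instead obtain each inequality from a length estimate. For the upper bound, you integrate $D_2\mathcal I$ along near-minimizing curves that stay in a compact $r$-neighborhood of $\mX$, and use the trivial bound $\|R_x^{-1}(y)\|_x<\rho_{\mathrm{ret}}$ when $d(x,y)\ge r$. For the lower bound, you use the length of the retraction curve $t\mapsto R_x(tR_x^{-1}(y))$. Your route avoids the exponential map and normal neighborhoods, needs only first-order regularity, and gives explicit constants. The paper's route yields the sharper fact that the ratio tends to $1$ uniformly at the diagonal, matching the behavior $\|R_x^{-1}(y)\|_x=d(x,y)+o(d(x,y))$ noted in Section~\ref{sec:2.1}, at the cost of a $C^2$ expansion of $R_x^{-1}\circ\Exp_x$. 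Your fallback contradiction argument is also sound, but the direct argument already suffices.

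One small correction to your lower bound. Assumption~\ref{ass:geometry}(b) is not among the hypotheses of Lemma~\ref{lem:inverse-retraction-C2}, and you do not need it. Nor do you need $C^2$ regularity along the segments, or the diffeomorphism property in (a). A retraction is by definition at least $C^1$ on all of $\T\mM$, so $(x,v)\mapsto\|DR_x(v)\|$ is continuous. The segments lie in the compact image of $\mX\times\mX\times[0,1]$ under $(x,z,s)\mapsto(x,sR_x^{-1}(z))$, which is continuous by Lemma~\ref{lem:inverse-retraction-C2}(i). Hence the required supremum of $\|DR_x\|$ is finite directly.
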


\begin{proof}
For $x\ne y$, set $h(x,y):=\|R_x^{-1}(y)\|_x/d(x,y)$, which is continuous by Lemma~\ref{lem:inverse-retraction-C2}. We show that $h(x,y)\to1$ as $y\to x$, uniformly in $x\in\mX$. The local properties of the exponential map and compactness of $\mX$ supply a common $\rho>0$: for every $x\in\mX$, $d(x,y)\le\rho$ implies $y=\Exp_x(v)$ for a unique $v$ with $\|v\|_x\le\rho$, and
\begin{equation}\label{eq:B-normal-distance}
d\bigl(x,\Exp_x(v)\bigr)=\|v\|_x, \qquad \|v\|_x\le\rho.
\end{equation}
Here we use $R_x^{-1}$ also for its $C^2$ extension in Lemma~\ref{lem:inverse-retraction-C2}. Reducing $\rho$ if necessary, $(x,v)\mapsto R_x^{-1}\bigl(\Exp_x(v)\bigr)$ is $C^2$ on an open neighborhood of the compact set $\{(x,v)\in\T\mM:x\in\mX,\ \|v\|_x\le\rho\}$, and
\begin{equation*}
R_x^{-1}\bigl(\Exp_x(0_x)\bigr)=0_x, \qquad D(R_x^{-1}\circ\Exp_x)(0_x)=I_{\T_x\mM}.
\end{equation*}
Applying the fundamental theorem of calculus twice to $t\mapsto R_x^{-1}\bigl(\Exp_x(tv)\bigr)$ yields, for $\|v\|_x\le\rho$,
\begin{equation*}
R_x^{-1}\bigl(\Exp_x(v)\bigr)-v = \int_0^1(1-t)D^2(R_x^{-1}\circ\Exp_x)(tv)[v,v]\,\mathrm dt.
\end{equation*}
Consequently,  we obtain
\begin{equation}\label{eq:B-normal-coordinate-remainder}
\|R_x^{-1}\bigl(\Exp_x(v)\bigr)-v\|_x \le \frac12 \sup_{\substack{x\in\mX,\ \|u\|_x\le\rho}} \|D^2(R_x^{-1}\circ\Exp_x)(u)\| \|v\|_x^2.
\end{equation}
The supremum is finite by compactness. For $0<\|v\|_x\le\rho$,
\begin{equation*}
\left|\frac{\|R_x^{-1}\bigl(\Exp_x(v)\bigr)\|_x}{\|v\|_x}-1\right| \le\frac{\|R_x^{-1}\bigl(\Exp_x(v)\bigr)-v\|_x}{\|v\|_x}\;\stackrel{\mathclap{\eqref{eq:B-normal-coordinate-remainder}}}{\le}\; C\|v\|_x.
\end{equation*}
Together with \eqref{eq:B-normal-distance}, this uniform estimate shows that $h(x,x):=1$ extends $h$ continuously to $\mX\times\mX$. Since $R_x^{-1}(y)=0_x$ only when $x=y$, this extension is positive. Compactness then yields $0<c_M:=\min h\le C_M:=\max h<\infty$. Finally, we have
\begin{equation*}
\|R_x^{-1}(y)\|_x \;\stackrel{\mathclap{\eqref{eq:B-distance-equivalence}}}{\le}\; C_Md(x,y) \le C_M\bigl(d(x,x_\star)+d(x_\star,y)\bigr) \;\stackrel{\mathclap{\eqref{eq:B-distance-equivalence}}}{\le}\; C_Mc_M^{-1}\bigl(\|R_{x_\star}^{-1}(x)\|_{x_\star}+\|R_{x_\star}^{-1}(y)\|_{x_\star}\bigr),
\end{equation*}
which proves the second claim.
\end{proof}

Recall from Section~\ref{sec:2.1} that $\Lambda_x^y=DR_x\bigl(R_x^{-1}(y)\bigr)$ for $x,y\in\mX$.

\begin{lemma}\label{lem:lambda-metric-distortion}
Under the assumptions of Lemma~\ref{lem:inverse-retraction-C2}, suppose additionally that items (b), (d), and (e) of Assumption~\ref{ass:geometry} hold. Then there exists a constant $C_L<\infty$ such that, for every $x,y\in\mX$,
\begin{equation}\label{eq:lambda-uniform-bounds}
\|\Lambda_x^y\| +\|[\Lambda_x^y]^{-1}\| \le C_L.
\end{equation}
Moreover,
\begin{equation}\label{eq:B-lambda-distortion}
\|\Lambda_x^y\| \le 1+C_L\|R_x^{-1}(y)\|_x^2.
\end{equation}
\end{lemma}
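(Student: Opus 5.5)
The plan is to prove the two claims separately: \eqref{eq:lambda-uniform-bounds} by a compactness argument, and the sharper distortion bound \eqref{eq:B-lambda-distortion} by a second-order Taylor expansion of the norm along the retraction curve, using that $R$ is second order.

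\textbf{Uniform bounds.} By Lemma~\ref{lem:inverse-retraction-C2}(i), the map $(x,y)\mapsto R_x^{-1}(y)$ is continuous on $\mX\times\mX$. Hence the set $\mathcal K:=\{(x,R_x^{-1}(y)):x,y\in\mX\}$ is a compact subset of $\T\mM$.
\begin{itemize}
\item On a neighborhood of $\mathcal K$, the map $(x,v)\mapsto DR_x(v)$ is continuous, because $R$ is $C^3$ by Assumption~\ref{ass:geometry}(e).
\item By Assumption~\ref{ass:geometry}(a), $R_x$ restricted to $B_x(0_x,\rho_{\mathrm{ret}})$ is a diffeomorphism, so $DR_x(v)$ is invertible at each point of $\mathcal K$.
\item Work in local trivializations of $\T\mM$ with continuously varying metrics. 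There, both $(x,v)\mapsto\|DR_x(v)\|$ and $(x,v)\mapsto\|DR_x(v)^{-1}\|$ are continuous, with norms taken from $\T_x\mM$ to $\T_{R_x(v)}\mM$ and back.
\item A continuous function on a compact set is bounded, which gives \eqref{eq:lambda-uniform-bounds}.
\end{itemize}

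\textbf{Distortion bound.} Fix $x,y\in\mX$ and set $v:=R_x^{-1}(y)$. Fix a unit vector $u\in\T_x\mM$. Consider the retraction curve $c(t):=R_x(tv)$, which lies in $\mX$ by Assumption~\ref{ass:geometry}(b), together with the vector field along it
\begin{equation*}
W(t):=DR_x(tv)[u]\in\T_{c(t)}\mM,\qquad g(t):=\|W(t)\|_{c(t)}^2 .
\end{equation*}
Then $g(0)=1$ and $g(1)=\|\Lambda_x^yu\|_y^2$. Metric compatibility gives
\begin{equation*}
g'(t)=2\Bigl\langle \tfrac{D}{\mathrm dt}W(t),W(t)\Bigr\rangle_{c(t)} .
\end{equation*}

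The key step is to show $\frac{D}{\mathrm dt}W(0)=0_x$. I would argue as follows.
\begin{itemize}
\item Define the bilinear map $(v,u)\mapsto \frac{D}{\mathrm dt}\bigl|_{t=0}DR_x(tv)[u]$. This is the covariant second differential of $R_x$ at $0_x$.
\item It is symmetric in $(v,u)$. Write it via the two-parameter map $(s,t)\mapsto R_x(tv+su)$ and use that the Levi--Civita connection is torsion free, so $\frac{D}{\mathrm dt}\partial_s=\frac{D}{\mathrm ds}\partial_t$.
\item Its diagonal is $\frac{D}{\mathrm dt}\dot c_v(0)$, which vanishes because $R$ is second order (Assumption~\ref{ass:geometry}(d)).
\item By polarization, a symmetric bilinear map with vanishing diagonal is identically zero. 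Hence $\frac{D}{\mathrm dt}W(0)=0$ and therefore $g'(0)=0$.
\item An equivalent route is normal coordinates: write $R_x=\Exp_x\circ\psi$ with $\psi=\Exp_x^{-1}\circ R_x$, $D\psi(0)=I$ and $D^2\psi(0)=0$, and use that the Christoffel symbols vanish at $x$.
\end{itemize}

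Next I bound $g''$. Differentiating again,
\begin{equation*}
g''(t)=2\bigl\|\tfrac{D}{\mathrm dt}W\bigr\|^2+2\Bigl\langle\tfrac{D^2}{\mathrm dt^2}W,W\Bigr\rangle .
\end{equation*}
Each covariant $t$-derivative of $W$ brings out one factor of $v$. In local charts these derivatives involve at most third derivatives of $R$ together with the Christoffel symbols. The $C^3$ regularity in Assumption~\ref{ass:geometry}(e) and compactness of $\mathcal K$ (with $tv$ ranging over segments contained in the relevant neighborhood) then give $|g''(t)|\le C\|v\|_x^2$ uniformly.

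Taylor's theorem with $g(0)=1$ and $g'(0)=0$ now yields
\begin{equation*}
\|\Lambda_x^yu\|_y^2=g(1)\le1+C\|v\|_x^2 .
\end{equation*}
Taking square roots and using $\sqrt{1+a}\le1+a/2$ gives \eqref{eq:B-lambda-distortion} after enlarging $C_L$.

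If the chart-based bound on $g''$ is only available for $\|v\|_x\le r$ for some fixed $r>0$, the case $\|v\|_x>r$ is handled by the uniform bound:
\begin{equation*}
\|\Lambda_x^y\|\le C_L\le 1+C_Lr^{-2}\|v\|_x^2 .
\end{equation*}

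\textbf{Main obstacle.} I expect the hardest parts to be the rigorous identification $\frac{D}{\mathrm dt}W(0)=0$ from the second-order property, which needs the torsion-free symmetry argument, and the uniform control of the second covariant derivative of $W$. The latter is precisely where the $C^3$ regularity in Assumption~\ref{ass:geometry}(e) is needed.
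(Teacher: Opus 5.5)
Your proposal is correct and follows essentially the same route as the paper. For \eqref{eq:lambda-uniform-bounds} both arguments use compactness of $\{(x,R_x^{-1}(y)):x,y\in\mX\}$; the paper identifies $[\Lambda_x^y]^{-1}$ with $D_2\mathcal I(x,y)$ from Lemma~\ref{lem:inverse-retraction-C2}, while you use continuity of inverting $DR_x(v)$, which is equivalent. For \eqref{eq:B-lambda-distortion} both use a second-order Taylor expansion of $t\mapsto\|DR_x(tv)[u]\|^2$, with the first derivative at $t=0$ vanishing by torsion-free symmetry, the second-order property and polarization, and with the second derivative bounded via $C^3$ regularity along the segment that retraction convexity keeps in that compact set.
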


\begin{proof}
Let $\mathcal I$ be the $C^2$ extension in Lemma~\ref{lem:inverse-retraction-C2}. For $u=R_x^{-1}(y)$, the identity $\mathcal I\bigl(x,R_x(u)\bigr)=u$ and the chain rule imply
\begin{equation}\label{eq:B-lambda-inverse-identity}
D_2\mathcal I(x,y)\circ DR_x(u) = I_{\T_x\mM}.
\end{equation}
Since $DR_x(u)=\Lambda_x^y$ by definition, we have $[\Lambda_x^y]^{-1}=D_2\mathcal I(x,y)$. Both $\Lambda_x^y$ and its inverse are continuous on the compact set $\mX\times\mX$, proving \eqref{eq:lambda-uniform-bounds} for some $C_L$.

For the second bound, the second covariant derivative of $R_x$ at $0_x$ is symmetric because the Levi--Civita connection is torsion-free. Its values on equal arguments vanish by the second-order retraction property, so polarization implies
\begin{equation}\label{eq:B-retraction-zero-derivative}
\left.\frac{D}{\mathrm dt}\bigl(DR_x(th)[v]\bigr)\right|_{t=0}=0_x, \qquad h,v\in\T_x\mM,
\end{equation}
where the derivative is taken along $t\mapsto R_x(th)$. For $\|v\|_x=1$, the retraction property, metric compatibility, and \eqref{eq:B-retraction-zero-derivative} imply
\begin{equation*}
\|DR_x(0_x)[v]\|_x^2=1, \qquad \left.\frac{d}{ds}\|DR_x(su)[v]\|_{R_x(su)}^2\right|_{s=0}=0.
\end{equation*}
Retraction convexity and injectivity imply $R_x(su)\in\mX$ and $su=R_x^{-1}\bigl(R_x(su)\bigr)$ for $s\in[0,1]$. These points lie in the compact set $\{(x,R_x^{-1}(z)):x,z\in\mX\}$. By $C^3$ regularity, $\bigl|\frac{d^2}{ds^2}\|DR_x(su)[v]\|_{R_x(su)}^2\bigr|\le C\|u\|_x^2$ uniformly for $x,y\in\mX$, $s\in[0,1]$, and $\|v\|_x=1$. Taylor's formula therefore yields
\begin{equation}\label{eq:B-pullback-metric-bound}
\|DR_x(u)[v]\|_y^2 =1+\int_0^1(1-s)\frac{d^2}{ds^2}\|DR_x(su)[v]\|_{R_x(su)}^2\,\mathrm ds \le 1+\frac{C}{2}\|u\|_x^2.
\end{equation}
Taking the supremum over $\|v\|_x=1$, we obtain
\begin{equation*}
\|\Lambda_x^y\|\;\stackrel{\mathclap{\eqref{eq:B-pullback-metric-bound}}}{\le}\;\left(1+\frac{C}{2}\|u\|_x^2\right)^{1/2} \le1+\frac{C}{4}\|u\|_x^2.
\end{equation*}
Choosing $C_L$ large enough proves both bounds.
\end{proof}

\begin{lemma}\label{lem:taylor-coordinate-change}
Under the assumptions of Lemma~\ref{lem:inverse-retraction-C2}, suppose additionally that items (b) and (e) of Assumption~\ref{ass:geometry} hold. Then there exists a constant $C_\Psi<\infty$ such that, for every $x\in\mX$ and every $u,w\in\T_x\mM$ with $u,u+w\in R_x^{-1}(\mX)$, we have
\begin{equation}\label{eq:B-coordinate-change}
\left\| [\Lambda_x^{R_x(u)}]^{-1} R_{R_x(u)}^{-1}\bigl(R_x(u+w)\bigr)-w \right\|_x \le C_\Psi\|u\|_x\|w\|_x.
\end{equation}
\end{lemma}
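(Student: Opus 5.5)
We treat the left-hand side as a function of $w$ with $x$ and $u$ fixed. Set $y:=R_x(u)$ and
\begin{equation*}
G_{x,u}(w):=[\Lambda_x^{y}]^{-1}R_y^{-1}\bigl(R_x(u+w)\bigr)=[\Lambda_x^{y}]^{-1}\Phi_{x,y}(u+w)\in\T_x\mM,
\end{equation*}
where $\Phi_{x,y}=R_y^{-1}\circ R_x$ is the $C^2$ map of Lemma~\ref{lem:inverse-retraction-C2}(ii). The aim is to show three things. First, $G_{x,u}(0)=0$. Second, $DG_{x,u}(0)=I$. Third, $G_{x,u}$ has a second derivative that is bounded uniformly. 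These together give the bound $\|G_{x,u}(w)-w\|_x\le C\|w\|_x^2$. That is not quite the claim, which asks for $\|u\|_x\|w\|_x$. So the real content is that the remainder must also vanish when $u=0$, and we obtain this by treating the problem as a two-variable Taylor argument in $(u,w)$.

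\textbf{Steps.} First, $G_{x,u}(0)=[\Lambda_x^y]^{-1}R_y^{-1}(y)=0$. Second, differentiate in $w$ at $w=0$:
\begin{equation*}
DG_{x,u}(0)=[\Lambda_x^y]^{-1}\,D_2\mathcal I(y,y)\circ DR_x(u).
\end{equation*}
Since $DR_y(0_y)=I$, we have $D_2\mathcal I(y,y)=I_{\T_y\mM}$. Also $DR_x(u)=\Lambda_x^y$ by definition. Hence $DG_{x,u}(0)=I$ exactly.

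Third, define $H(x,u,w):=G_{x,u}(w)-w$. It is $C^1$ jointly on a neighborhood of the compact set
\begin{equation*}
K=\{(x,u,w):u,u+w\in R_x^{-1}(\mX)\}.
\end{equation*}
This uses the $C^3$ extensions from Assumption~\ref{ass:geometry}(e) and Lemma~\ref{lem:inverse-retraction-C2}: $\Lambda_x^y$ is then $C^2$ and $\mathcal I$ is $C^3$. Note that $H(x,0,w)=R_x^{-1}(R_x(w))-w=0$, because $\Lambda_x^x=I$. We also have $H(x,u,0)=0$ from the first step.

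Now write
\begin{equation*}
H(x,u,w)=\int_0^1 D_3H(x,u,sw)[w]\,\mathrm ds.
\end{equation*}
The integrand is a linear map in $w$. It satisfies $D_3H(x,0,\cdot)=0$, since $H(x,0,\cdot)\equiv0$. Applying the fundamental theorem of calculus in $u$ gives
\begin{equation*}
D_3H(x,u,sw)=\int_0^1 D_2D_3H(x,ru,sw)[u]\,\mathrm dr.
\end{equation*}
Consequently $\|H\|\le\sup\|D^2_{23}H\|\,\|u\|_x\|w\|_x$, and compactness makes this supremum finite.

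Fourth, justify applying these line integrals inside the domain. Retraction convexity (Assumption~\ref{ass:geometry}(b)) handles the segments $su$. The segment $u+sw$ requires a neighborhood argument: we work on an open neighborhood of $K$ on which all maps are defined and $C^2$ (Lemma~\ref{lem:inverse-retraction-C2}). If convexity of $R_x^{-1}(\mX)$ in $\T_x\mM$ fails, we instead bound things separately. For $\|u\|_x\|w\|_x$ small we use the local Taylor argument above. Otherwise the left side is bounded by a constant, since everything lives on a compact set, and that constant can be absorbed into $C_\Psi\|u\|_x\|w\|_x$ once $\|u\|_x$ and $\|w\|_x$ are bounded below. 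Even in the small regime the two norms can be individually small. So the split should be on whether the points $(x,ru,u+sw)$ stay within a uniform tubular neighborhood where the extensions are defined. Such a neighborhood has uniform radius by compactness.

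\textbf{Main obstacle.} The main obstacle is this domain issue: $R_x^{-1}(\mX)$ need not be convex, so the path $(ru,sw)$ may leave the region where the maps are defined. Our first attempt would be to choose the path $(u,w)\mapsto(ru,\,u+sw-ru)$ staying in $\T_x\mM$. We would then rely on the $C^3$ extension of $\mathcal I$ to an open set $\mathcal U_R\supseteq\mX\times\mX$, with a uniform margin $\epsilon$, to cover the short-segment case. In the remaining case, where some segment leaves the $\epsilon$-neighborhood, we would need $\|w\|_x$ and $\|u\|_x$ both bounded below by a constant depending on $\epsilon$, and then use the crude uniform bound.
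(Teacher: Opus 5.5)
You have a genuine gap, although it is small and fixable. It sits in exactly the step you flag as the main obstacle.

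\textbf{What works.} Your analytic core is sound. Since $H(x,0_x,\cdot)\equiv0$ and $H(x,\cdot,0_x)\equiv0$, you get $H(x,u,w)=\int_0^1\int_0^1 D_2D_3H(x,ru,sw)[u,w]\,\mathrm dr\,\mathrm ds$. A uniform bound on the mixed partial then yields the product $\|u\|_x\|w\|_x$. Note that this needs $H$ to be $C^2$, not merely $C^1$ as you write; Assumption~\ref{ass:geometry}(e) supplies that.

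\textbf{The gap.} The double integral evaluates $\Phi_{x,R_x(ru)}$ at every point of the parallelogram $\{ru+sw:r,s\in[0,1]\}$. You never show that this set stays where $H$ is defined and $C^2$ with uniformly bounded $D_2D_3H$. You also need it to stay where $\mathcal I(x,R_x(v))=v$, which you use to get $D_3H(x,0_x,sw)=0$. Your fourth step only states what \emph{would be needed}: that $\|u\|_x$ and $\|w\|_x$ are both bounded below whenever some segment leaves the $\epsilon$-neighborhood. You do not prove it. That leaves open the only non-routine part of the lemma, namely the possible non-convexity of $R_x^{-1}(\mX)$.

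\textbf{The missing observation.} Use retraction convexity at both $u$ and $u+w$. Since $u,u+w\in R_x^{-1}(\mX)$, both segments $\{ru\}$ and $\{s(u+w)\}$ lie in $R_x^{-1}(\mX)$. Moreover:
\begin{itemize}
\item $\|(ru+sw)-ru\|_x\le\|w\|_x$;
\item $\|(ru+sw)-s(u+w)\|_x=|r-s|\,\|u\|_x\le\|u\|_x$.
\end{itemize}
Split into two cases.
\begin{itemize}
\item If $\min(\|u\|_x,\|w\|_x)<\epsilon$, the whole parallelogram lies within $\epsilon$ of $R_x^{-1}(\mX)$, while the base-point slot $ru$ stays in $R_x^{-1}(\mX)$. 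For $\epsilon$ small, chosen uniformly by compactness, this lies inside the domain, so your double integral applies.
\item Otherwise $\|u\|_x\|w\|_x\ge\epsilon^2$, and the crude bound gives $\|H\|\le\epsilon^{-2}\sup_K\|H\|\,\|u\|_x\|w\|_x$.
\end{itemize}
Your original split on the product $\|u\|_x\|w\|_x$ also works with this observation, since a small product forces one factor to be small. With this added, your route is a valid alternative.

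\textbf{How the paper avoids the issue.} The paper never needs the parallelogram. It works with $\Psi_x(u,v)$ in the absolute variable $v=u+w$ and writes $\Psi_x(u,u+w)=\Psi_x(u,u+w)-\Psi_x(0_x,u+w)-\Psi_x(u,u)+\Psi_x(0_x,u)$. It then integrates only along $su$ in the first slot, which is admissible by retraction convexity of $u$ alone. Finally, it controls $D_1\Psi_x(su,u+w)-D_1\Psi_x(su,u)$ by a Lipschitz bound in the second slot on a compact set. That Lipschitz bound implicitly contains a near/far split in $\|w\|_x$ only.
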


\begin{proof}
For $x\in\mX$ and $u,v\in R_x^{-1}(\mX)$, define
\begin{equation*}
\Psi_x(u,v):=[\Lambda_x^{R_x(u)}]^{-1}R_{R_x(u)}^{-1}\bigl(R_x(v)\bigr)-(v-u)\in\T_x\mM.
\end{equation*}
These triples form a compact set by Lemma~\ref{lem:inverse-retraction-C2}. Since $\Lambda_x^{R_x(u)}=DR_x(u)$, the inverse extension in that lemma and Assumption~\ref{ass:geometry}(e) make $\Psi$ jointly $C^2$ on an open neighborhood of this set. Moreover,
\begin{equation*}
\Psi_x(0_x,v)=0_x, \qquad \Psi_x(u,u)=0_x.
\end{equation*}
Retraction convexity and injectivity imply $su\in R_x^{-1}(\mX)$ for $s\in[0,1]$. Thus $(x,su,u+w)$ and $(x,su,u)$ lie in the compact set above. Since $D_1\Psi$ is $C^1$ on an open neighborhood, it is Lipschitz in the second variable on this compact set, uniformly in $x$ and the first variable. Hence there exists $C<\infty$ such that
\begin{equation*}
\|D_1\Psi_x(su,u+w)-D_1\Psi_x(su,u)\|\le C\|w\|_x, \qquad s\in[0,1].
\end{equation*}
The fundamental theorem of calculus and the two identities above yield
\begin{align*}
\Psi_x(u,u+w)&=\Psi_x(u,u+w)-\Psi_x(0_x,u+w)-\Psi_x(u,u)+\Psi_x(0_x,u)\\
&=\int_0^1\bigl(D_1\Psi_x(su,u+w)-D_1\Psi_x(su,u)\bigr)[u]\,\mathrm ds.
\end{align*}
The derivative bound then implies
\begin{equation*}
\|\Psi_x(u,u+w)\|_x\le\int_0^1 C\|w\|_x\|u\|_x\,\mathrm ds=C\|u\|_x\|w\|_x.
\end{equation*}
Taking $C_\Psi=C$ completes the proof.
\end{proof}

We next compare $\Lambda_x^y$ with parallel transport $\Gamma_x^y$ along the curve $t\mapsto R_x\bigl(tR_x^{-1}(y)\bigr)$, $t\in[0,1]$. Its inverse $(\Gamma_x^y)^{-1}$ denotes parallel transport along the same curve in reverse.

\begin{lemma}\label{lem:transport-lambda}
Under Assumption~\ref{ass:geometry}, there exists a constant $C_{\Gamma,\Lambda}<\infty$ such that, for every $x,y\in\mX$,
\begin{equation}\label{eq:B-transport-comparison}
\|(\Gamma_x^y)^{-1}\Lambda_x^y-I_{\T_x\mM}\| \le C_{\Gamma,\Lambda}\|R_x^{-1}(y)\|_x^2.
\end{equation}
In particular, for every $y\in\mX$,
\begin{equation}\label{eq:B-inverse-transport-comparison}
\|(\Gamma_{x_\star}^y)^{-1}-[\Lambda_{x_\star}^y]^{-1}\| \le C_{\Gamma,\Lambda}\|R_{x_\star}^{-1}(y)\|_{x_\star}^2.
\end{equation}
\end{lemma}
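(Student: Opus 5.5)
Fix $x,y\in\mX$, set $u:=R_x^{-1}(y)$, and let $c(t):=R_x(tu)$ for $t\in[0,1]$, which stays in $\mX$ by Assumption~\ref{ass:geometry}(b). The idea is to compare $DR_x(tu)$ with parallel transport along the whole curve $c$, not only at $t=1$. For a fixed unit vector $v\in\T_x\mM$ we consider two vector fields along $c$: $W(t):=DR_x(tu)[v]\in\T_{c(t)}\mM$, and the parallel field $P(t):=\mathcal P^c_{0\to t}v$. Define
\begin{equation*}
e(t):=\mathcal P^c_{t\to0}W(t)-v\in\T_x\mM .
\end{equation*}
Then $e(0)=0$ because $DR_x(0_x)=I$. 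Since parallel transport commutes with covariant differentiation,
\begin{equation*}
e'(t)=\mathcal P^c_{t\to0}\tfrac{D}{\mathrm dt}W(t),
\end{equation*}
and $e'(0)=0$ by the second-order identity \eqref{eq:B-retraction-zero-derivative}. Because $\mathcal P^c_{1\to0}=(\Gamma_x^y)^{-1}$, the quantity to bound is $e(1)=(\Gamma_x^y)^{-1}\Lambda_x^y v-v$.

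Taylor's formula gives $e(1)=\int_0^1(1-t)e''(t)\,\mathrm dt$, and since parallel transport is an isometry,
\begin{equation*}
\|e''(t)\|_x=\Bigl\|\tfrac{D^2}{\mathrm dt^2}W(t)\Bigr\|_{c(t)} .
\end{equation*}
The key estimate is therefore
\begin{equation*}
\Bigl\|\tfrac{D^2}{\mathrm dt^2}\,DR_x(tu)[v]\Bigr\|\le C\|u\|_x^2\|v\|_x
\end{equation*}
uniformly over $x\in\mX$ and $t\in[0,1]$. In local coordinates, $\frac{D^2}{\mathrm dt^2}W$ is built from the time derivatives of $W$ and $\dot c$, together with Christoffel symbols and their first derivatives. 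Each time derivative of $t\mapsto DR_x(tu)[v]$ produces one factor of $u$, since it is a derivative of $DR_x$ in the direction $u$. Likewise $\dot c(t)=DR_x(tu)[u]$ is linear in $u$. Hence every term in $\frac{D^2}{\mathrm dt^2}W$ carries exactly two factors of $u$ and one factor of $v$, with coefficients involving at most the third derivatives of $R$.

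These coefficients are bounded uniformly on the compact set $\{(x,R_x^{-1}(z)):x,z\in\mX\}$ by the $C^3$ regularity in Assumption~\ref{ass:geometry}(e); this is where $C^3$ rather than $C^2$ is needed. To avoid patching coordinates by hand, I would cover this compact set by finitely many coordinate charts and take the maximum of the constants. I expect this uniformity to be the main obstacle, and the cleanest formulation is to regard $(x,u,v,t)\mapsto\frac{D^2}{\mathrm dt^2}W$ as a continuous function that is homogeneous of degree two in $u$ and linear in $v$. Granting the estimate, taking the supremum over $\|v\|_x=1$ gives \eqref{eq:B-transport-comparison}.

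For \eqref{eq:B-inverse-transport-comparison}, take $x=x_\star$ and write $M:=(\Gamma_{x_\star}^y)^{-1}\Lambda_{x_\star}^y$. Then
\begin{equation*}
(\Gamma_{x_\star}^y)^{-1}-[\Lambda_{x_\star}^y]^{-1}=(M-I)[\Lambda_{x_\star}^y]^{-1}.
\end{equation*}
By \eqref{eq:lambda-uniform-bounds} we have $\|[\Lambda_{x_\star}^y]^{-1}\|\le C_L$, and \eqref{eq:B-transport-comparison} bounds $\|M-I\|$ by $C_{\Gamma,\Lambda}\|R_{x_\star}^{-1}(y)\|_{x_\star}^2$. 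Enlarging the constant to absorb $C_L$ completes the argument.
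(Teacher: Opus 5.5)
Your proof is correct and follows the paper's argument. Both expand $s\mapsto(\Gamma_x^{R_x(su)})^{-1}DR_x(su)$ (your $e(s)$ plus the identity, applied to $v$) to second order. Both use $DR_x(0_x)=I$ and \eqref{eq:B-retraction-zero-derivative} to kill the zeroth- and first-order terms, and both bound the second derivative by $C\|u\|_x^2$ using $C^3$ regularity on the compact set $\{(x,R_x^{-1}(z)):x,z\in\mX\}$. You should state explicitly that $(x,su)$ lies in this set by retraction convexity and injectivity.

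The differences are minor. The paper gets the second-derivative bound from joint $C^2$ dependence of the parallel-transport ODE on $(x,v)$, whereas you write $e''=\mathcal P^c_{t\to0}\frac{D^2}{\mathrm dt^2}W$ and bound the second covariant derivative in coordinates. For the second claim, your factorization $(M-I)[\Lambda_{x_\star}^y]^{-1}$ is a slightly shorter variant of the paper's $[\Lambda_{x_\star}^y]^{-1}(\Gamma_{x_\star}^y-\Lambda_{x_\star}^y)(\Gamma_{x_\star}^y)^{-1}$.
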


\begin{proof}
Set $u=R_x^{-1}(y)$. Retraction convexity and injectivity imply $R_x(su)\in\mX$ and $su=R_x^{-1}\bigl(R_x(su)\bigr)$ for $s\in[0,1]$. Thus $(x,su)$ remains in the compact set $\{(x,R_x^{-1}(z)):x,z\in\mX\}$ from Lemma~\ref{lem:inverse-retraction-C2}. For nearby $(x,v)$, let $\Gamma_x^{R_x(v)}$ denote transport along $t\mapsto R_x(tv)$. It satisfies a linear differential equation with coefficients that are $C^2$ in $(x,v)$ by Assumption~\ref{ass:geometry}(e), so its solution and inverse are $C^2$. Consequently, $(\Gamma_x^{R_x(v)})^{-1}DR_x(v)$ is jointly $C^2$ near this compact set. Since $\Gamma_x^{R_x(su)}$ follows the curve $s\mapsto R_x(su)$ up to time $s$, \eqref{eq:B-retraction-zero-derivative} implies, for every $v\in\T_x\mM$,
\begin{equation*}
\left.\frac{d}{ds}\left((\Gamma_x^{R_x(su)})^{-1}DR_x(su)[v]\right)\right|_{s=0}
=\left.\frac{D}{\mathrm ds}\bigl(DR_x(su)[v]\bigr)\right|_{s=0}=0_x.
\end{equation*}
The operator at $s=0$ is $I_{\T_x\mM}$, and its second derivative in $s$ has norm at most $C\|u\|_x^2$ by the $C^2$ regularity and compactness above. Taylor's formula then yields
\begin{equation*}
\|(\Gamma_x^y)^{-1}\Lambda_x^y-I_{\T_x\mM}\|
=\left\|\int_0^1(1-s)\frac{d^2}{ds^2}\left((\Gamma_x^{R_x(su)})^{-1}DR_x(su)\right)\,\mathrm ds\right\|
\le\frac{C}{2}\|u\|_x^2.
\end{equation*}
This proves the first bound. For the inverse transports,
\begin{equation*}
[\Lambda_{x_\star}^y]^{-1}-(\Gamma_{x_\star}^y)^{-1}
=[\Lambda_{x_\star}^y]^{-1}\bigl(\Gamma_{x_\star}^y-\Lambda_{x_\star}^y\bigr)(\Gamma_{x_\star}^y)^{-1}.
\end{equation*}
Submultiplicativity and the isometry of parallel transport imply
\begin{equation*}
\|[\Lambda_{x_\star}^y]^{-1}-(\Gamma_{x_\star}^y)^{-1}\|
\;\stackrel{\mathclap{\eqref{eq:lambda-uniform-bounds}}}{\le}\;
C_L\|\Lambda_{x_\star}^y-\Gamma_{x_\star}^y\|
=C_L\|(\Gamma_{x_\star}^y)^{-1}\Lambda_{x_\star}^y-I_{\T_{x_\star}\mM}\|.
\end{equation*}
Combining this with the first bound and choosing $C_{\Gamma,\Lambda}$ large enough proves both claims.
\end{proof}

\section{Algorithmic estimates}\label{app:C}

Recall $\Delta_n=R_{x_\star}^{-1}(x_n)$ and $\tDelta_n=R_{x_\star}^{-1}(\wt x_n)$, both in $\T_{x_\star}\mM$. In Appendices~\ref{app:C}--\ref{app:D}, we use a fixed orthonormal basis of $\T_{x_\star}\mM$, with $\|\cdot\|$ denoting the vector or operator norm and $\|\cdot\|_{\mathrm F}$ the Frobenius norm. Write $g_n:=\grad f(x_{n-1},\xi_n)\in\T_{x_{n-1}}\mM$ and $D_\mX:=\sup_{x\in\mX}\|R_{x_\star}^{-1}(x)\|$.

\begin{lemma}\label{lem:C-retraction-coordinate-taylor}
Let $\gamma_n=\gamma n^{-\alpha}$, where $\gamma>0$ and $\alpha\in(1/2,1)$. Then, under Assumptions~\ref{ass:geometry} and~\ref{ass:noise}, there exist a constant $C<\infty$ and $\mF_n$-measurable vectors $r_{\Delta,n}\in\T_{x_\star}\mM$, $n\ge1$, such that, for every $n\ge1$,
\begin{equation*}
\Delta_n=\Delta_{n-1}-\gamma_n[\Lambda_{x_\star}^{x_{n-1}}]^{-1}g_n+\gamma_n r_{\Delta,n},
\end{equation*}
and, almost surely,
\begin{equation*}
\mE_{n-1}\|r_{\Delta,n}\|\le C\gamma_n, \qquad \mE_{n-1}\|r_{\Delta,n}\|^2\le C\gamma_n^2.
\end{equation*}
\end{lemma}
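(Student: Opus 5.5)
We take the remainder to be the defining quantity,
\begin{equation*}
r_{\Delta,n}:=\gamma_n^{-1}\bigl(\Delta_n-\Delta_{n-1}\bigr)+[\Lambda_{x_\star}^{x_{n-1}}]^{-1}g_n ,
\end{equation*}
so that the displayed identity holds by construction and $r_{\Delta,n}$ is $\mF_n$-measurable. What remains is the two conditional moment bounds, which we derive from Lemma~\ref{lem:taylor-coordinate-change} with base point $x_\star$.

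Set $u:=\Delta_{n-1}=R_{x_\star}^{-1}(x_{n-1})$, so $R_{x_\star}(u)=x_{n-1}$. Set $w:=\Delta_n-\Delta_{n-1}$, so $u+w=\Delta_n$. Both $u$ and $u+w$ lie in $R_{x_\star}^{-1}(\mX)$ because $x_{n-1},x_n\in\mX$. Since $R_{x_\star}(u+w)=x_n$, the quantity $R_{R_{x_\star}(u)}^{-1}(R_{x_\star}(u+w))$ equals $R_{x_{n-1}}^{-1}(x_n)$.

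By Assumption~\ref{ass:geometry}(a), injectivity of $R_{x_{n-1}}$ on the ball $B_{x_{n-1}}(0,\rho_{\mathrm{ret}})$ gives $R_{x_{n-1}}^{-1}(x_n)=-\gamma_n g_n$. This requires $\|\gamma_n g_n\|<\rho_{\mathrm{ret}}$, which is the main obstacle.

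\textbf{Main obstacle.} Assumption~\ref{ass:noise}(b) controls $g_n$ only in conditional moment, so the step may be large. I would first note that $R_{x_{n-1}}^{-1}(x_n)$, defined via the ball, always has norm at most $\rho_{\mathrm{ret}}$. Hence on the event $\{\gamma_n\|g_n\|\ge\rho_{\mathrm{ret}}\}$ every term is bounded by a constant: $\|\Delta_n-\Delta_{n-1}\|\le 2D_\mX$, and $\|[\Lambda]^{-1}g_n\|\le C_L\|g_n\|$.

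The remainder on this event is therefore at most $C\gamma_n^{-1}(1+\gamma_n\|g_n\|)\le C\|g_n\|\cdot\|g_n\|\gamma_n/\rho_{\mathrm{ret}}$, up to constants. Applying Markov's inequality with the fourth moment, its conditional first and second moments are $O(\gamma_n)$ and $O(\gamma_n^2)$. For example, $\mE_{n-1}[\|g_n\|^2\mathbf 1]\le\tau^4\gamma_n^2/\rho_{\mathrm{ret}}^2$ after multiplying by $\gamma_n^{-2}$ weights.

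Alternatively, one may read the RSGD update, as \citet{Tripuraneni2018Averaging} do, as already producing $R_{x_{n-1}}^{-1}(x_n)=-\gamma_ng_n$. I would handle both cases this way.

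On the good event, Lemma~\ref{lem:taylor-coordinate-change} yields
\begin{equation*}
\bigl\|-\gamma_n[\Lambda_{x_\star}^{x_{n-1}}]^{-1}g_n-w\bigr\|\le C_\Psi\|u\|\,\|w\|\le C_\Psi D_\mX\|w\|.
\end{equation*}
Substituting $w=\gamma_n r_{\Delta,n}-\gamma_n[\Lambda]^{-1}g_n$ gives $\|r_{\Delta,n}\|\le C_\Psi D_\mX\|w\|/\gamma_n$.

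Using the triangle inequality and \eqref{eq:lambda-uniform-bounds}, $\|w\|\le\gamma_n\bigl(C_L\|g_n\|+\|r_{\Delta,n}\|\bigr)$. I would avoid circularity by bounding $\|w\|$ directly instead. Lemma~\ref{lem:distance-equivalence} gives $\|w\|\le C\,d(x_{n-1},x_n)\le C'\gamma_n\|g_n\|$, where $\Delta$ is Lipschitz on $\mX$ via the $C^2$ extension of $R_{x_\star}^{-1}$. Hence $\|r_{\Delta,n}\|\le C\|g_n\|$.

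Finally, Assumption~\ref{ass:noise}(b) with Jensen's inequality gives $\mE_{n-1}\|g_n\|^2\le\tau^2$. This yields only $\mE_{n-1}\|r_{\Delta,n}\|\le C$, not $C\gamma_n$. Matching the stated rate therefore needs the finer form of the bound.

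Keeping the product form $\|r_{\Delta,n}\|\le C\|u\|\,\|g_n\|$ gives the bound $C D_\mX\|g_n\|$. This is still not $O(\gamma_n)$, so I would check whether the intended statement should have $\gamma_n$ multiplying the error differently. A rate of $O(\gamma_n)$ would instead follow from a second-order expansion about $u$ that produces a $\|w\|^2/\gamma_n=O(\gamma_n\|g_n\|^2)$ remainder. This is the step I expect to be hardest.

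The approach I would pursue: write
\begin{equation*}
\Delta_n-\Delta_{n-1}=\Phi(u+w)-\Phi(u)\quad\text{with }\Phi=R_{x_\star}^{-1}\circ R_{x_{n-1}}\text{ composed appropriately},
\end{equation*}
then expand to second order via Lemma~\ref{lem:inverse-retraction-C2}(ii). The linear term is exactly $D(R_{x_\star}^{-1})(x_{n-1})[-\gamma_ng_n]=-\gamma_n[\Lambda_{x_\star}^{x_{n-1}}]^{-1}g_n$, as in \eqref{eq:B-lambda-inverse-identity}. The remainder is $\le C\gamma_n^2\|g_n\|^2$.

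Then $\|r_{\Delta,n}\|\le C\gamma_n\|g_n\|^2$, and the fourth-moment bound of Assumption~\ref{ass:noise}(b) gives both claims.
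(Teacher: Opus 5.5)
Your final argument is the paper's proof. You define $r_{\Delta,n}$ by the recursion and Taylor-expand $\Phi_{x_{n-1},x_\star}=R_{x_\star}^{-1}\circ R_{x_{n-1}}$ at $0_{x_{n-1}}$, with linear term $[\Lambda_{x_\star}^{x_{n-1}}]^{-1}$ and second derivative bounded by Lemma~\ref{lem:inverse-retraction-C2}(ii) on the segment $\{-t\gamma_ng_n:t\in[0,1]\}$, which lies in $R_{x_{n-1}}^{-1}(\mX)$ by retraction convexity and injectivity. The indicator bound on $\{\gamma_n\|g_n\|\ge\rho_{\mathrm{ret}}\}$ then makes $\|r_{\Delta,n}\|\le C\gamma_n\|g_n\|^2$ hold everywhere, and the conditional fourth moment finishes the proof. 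Three remarks: the detour through Lemma~\ref{lem:taylor-coordinate-change} can be dropped, since, as you noticed, it only gives $\|r_{\Delta,n}\|\lesssim\|\Delta_{n-1}\|\,\|g_n\|$; the stated rate needs no revision; and the increment to expand is $\Phi_{x_{n-1},x_\star}(-\gamma_ng_n)-\Phi_{x_{n-1},x_\star}(0_{x_{n-1}})$ in $\T_{x_{n-1}}\mM$, not $\Phi(u+w)-\Phi(u)$ with $u,w\in\T_{x_\star}\mM$.
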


\begin{proof}
Following the linearization argument of \citet[Appendix~C.1]{Tripuraneni2018Averaging}, define $r_{\Delta,n}$ by
\begin{equation*}
r_{\Delta,n}:=\gamma_n^{-1}(\Delta_n-\Delta_{n-1})+[\Lambda_{x_\star}^{x_{n-1}}]^{-1}g_n.
\end{equation*}
This remainder is $\mF_n$-measurable and satisfies the recursion. Recall $\Phi_{x,x_\star}=R_{x_\star}^{-1}\circ R_x$ from Lemma~\ref{lem:inverse-retraction-C2}. Retraction convexity and injectivity imply $tR_x^{-1}(z)=R_x^{-1}\bigl(R_x(tR_x^{-1}(z))\bigr)$ with $R_x(tR_x^{-1}(z))\in\mX$. The lemma therefore bounds $\|D^2\Phi_{x,x_\star}(tR_x^{-1}(z))\|$ uniformly over $x,z\in\mX$ and $t\in[0,1]$ by some $C<\infty$.
The chain rule and the inverse function theorem yield
\begin{equation*}
D\Phi_{x,x_\star}(0_x)=D(R_{x_\star}^{-1})(x)\circ DR_x(0_x)=[DR_{x_\star}\bigl(R_{x_\star}^{-1}(x)\bigr)]^{-1}=[\Lambda_{x_\star}^x]^{-1}.
\end{equation*}
If $\gamma_n\|g_n\|_{x_{n-1}}<\rho_{\mathrm{ret}}$, injectivity implies $R_{x_{n-1}}^{-1}(x_n)=-\gamma_ng_n$. Taylor's formula at $0_{x_{n-1}}$ then yields
\begin{equation*}
\Delta_n =\Delta_{n-1}-\gamma_n[\Lambda_{x_\star}^{x_{n-1}}]^{-1}g_n +\gamma_n^2\int_0^1(1-t)D^2\Phi_{x_{n-1},x_\star}(-t\gamma_ng_n)[g_n,g_n]\,\mathrm dt.
\end{equation*}
Dividing the remainder by $\gamma_n$, we obtain
\begin{equation*}
\|r_{\Delta,n}\| \le\frac{C}{2}\gamma_n\|g_n\|_{x_{n-1}}^2.
\end{equation*}
If $\gamma_n\|g_n\|_{x_{n-1}}\ge\rho_{\mathrm{ret}}$, compactness and Lemma~\ref{lem:inverse-retraction-C2} give $D_\mX<\infty$, and the definition of $r_{\Delta,n}$ yields
\begin{equation*}
\|r_{\Delta,n}\|\;\stackrel{\mathclap{\eqref{eq:lambda-uniform-bounds}}}{\le}\;2D_\mX\gamma_n^{-1}+C_L\|g_n\|_{x_{n-1}} \le\left(\frac{2D_\mX}{\rho_{\mathrm{ret}}^2}+\frac{C_L}{\rho_{\mathrm{ret}}}\right)\gamma_n\|g_n\|_{x_{n-1}}^2.
\end{equation*}
Combining the two cases, we obtain
\begin{equation}\label{eq:C-pointwise-retraction-remainder}
\|r_{\Delta,n}\|\le C\gamma_n\|g_n\|_{x_{n-1}}^2.
\end{equation}
The loss is jointly measurable by continuity in $x$ and measurability in the data. Its partial derivatives are limits of measurable difference quotients, so the smooth metric makes the stochastic gradient jointly measurable. Since $x_{n-1}$ is $\mF_{n-1}$-measurable and $\xi_n$ is independent of $\mF_{n-1}$, Assumption~\ref{ass:noise}(b) implies
\begin{equation}\label{eq:C-gradient-fourth-moment}
\mE_{n-1}\|g_n\|_{x_{n-1}}^4=\mE_{\xi\sim\mathcal D}\bigl[\|\grad f(x_{n-1},\xi)\|_{x_{n-1}}^4\bigr]\le\tau^4
\end{equation}
almost surely, where the expectation over $\xi$ holds $x_{n-1}$ fixed. Taking conditional expectations in the pointwise bound and its square yields, respectively,
\begin{align*}
\mE_{n-1}\|r_{\Delta,n}\|\;&\stackrel{\mathclap{\eqref{eq:C-pointwise-retraction-remainder}}}{\le}\;C\gamma_n\mE_{n-1}\|g_n\|_{x_{n-1}}^2,\\
\mE_{n-1}\|r_{\Delta,n}\|^2\;&\stackrel{\mathclap{\eqref{eq:C-pointwise-retraction-remainder}}}{\le}\;C^2\gamma_n^2\mE_{n-1}\|g_n\|_{x_{n-1}}^4.
\end{align*}
The first bound uses only the conditional second moment of $g_n$, whereas the second uses its conditional fourth moment. Conditional Jensen's inequality and \eqref{eq:C-gradient-fourth-moment} bound these moments by $\tau^2$ and~$\tau^4$, respectively. Choosing a common $C$ proves both claims.
\end{proof}

\begin{lemma}\label{lem:C-stochastic-gradient-expansion}
Recall that $A=\Hess F(x_\star)$. Under Assumptions~\ref{ass:geometry}--\ref{ass:noise}, there exist a constant $C_{r,F}<\infty$, $\mF_n$-measurable vectors $\widetilde\epsilon_n\in\T_{x_\star}\mM$, and $\mF_{n-1}$-measurable vectors $r_{F,n}\in\T_{x_\star}\mM$, $n\ge1$, such that, for every $n\ge1$,
\begin{equation*}
(\Gamma_{x_\star}^{x_{n-1}})^{-1}g_n=A\Delta_{n-1}+\grad f(x_\star,\xi_n)+\widetilde\epsilon_n+r_{F,n},
\end{equation*}
and, almost surely,
\begin{equation*}
\mE_{n-1}\widetilde\epsilon_n=0, \qquad \mE_{n-1}\|\widetilde\epsilon_n\|^2\le L_g^2\|\Delta_{n-1}\|^2, \qquad \|r_{F,n}\|\le C_{r,F}\|\Delta_{n-1}\|^2.
\end{equation*}
\end{lemma}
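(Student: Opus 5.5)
We define the two pieces directly and then check the three claimed properties. Set
\begin{equation*}
\widetilde\epsilon_n:=(\Gamma_{x_\star}^{x_{n-1}})^{-1}g_n-\grad f(x_\star,\xi_n)-(\Gamma_{x_\star}^{x_{n-1}})^{-1}\grad F(x_{n-1}),
\end{equation*}
and
\begin{equation*}
r_{F,n}:=(\Gamma_{x_\star}^{x_{n-1}})^{-1}\grad F(x_{n-1})-A\Delta_{n-1}.
\end{equation*}
Adding these two definitions gives the stated decomposition identically.

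The measurability claims are immediate. $r_{F,n}$ depends only on $x_{n-1}$, which is $\mF_{n-1}$-measurable. $\widetilde\epsilon_n$ also involves $\xi_n$, so it is $\mF_n$-measurable; joint measurability of $\grad f$ is handled as in Lemma~\ref{lem:C-retraction-coordinate-taylor}.

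For the martingale-difference property, $(\Gamma_{x_\star}^{x_{n-1}})^{-1}$ is an $\mF_{n-1}$-measurable linear map, and $x_{n-1}\in\mX$. Assumption~\ref{ass:noise}(a), applied with $x=x_{n-1}$ (this uses independence of $\xi_n$ from $\mF_{n-1}$, as in \eqref{eq:C-gradient-fourth-moment}), gives
\begin{equation*}
\mE_{n-1}\bigl[(\Gamma_{x_\star}^{x_{n-1}})^{-1}g_n\bigr]=(\Gamma_{x_\star}^{x_{n-1}})^{-1}\grad F(x_{n-1}).
\end{equation*}
The same assumption gives $\mE_{n-1}\grad f(x_\star,\xi_n)=\grad F(x_\star)=0$. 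Hence $\mE_{n-1}\widetilde\epsilon_n=0$.

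For the second moment, $\widetilde\epsilon_n$ is the centered version of $Z_n:=(\Gamma_{x_\star}^{x_{n-1}})^{-1}g_n-\grad f(x_\star,\xi_n)$. Since $\mE_{n-1}\|Z-\mE_{n-1}Z\|^2\le\mE_{n-1}\|Z\|^2$, Assumption~\ref{ass:noise}(c) yields
\begin{equation*}
\mE_{n-1}\|\widetilde\epsilon_n\|^2\le\mE_{n-1}\|Z_n\|^2\le L_g^2\|\Delta_{n-1}\|^2.
\end{equation*}

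The main step is the deterministic bound $\|r_{F,n}\|\le C\|\Delta_{n-1}\|^2$. Fix $x\in\mX$, set $u=R_{x_\star}^{-1}(x)$, and let $c(s)=R_{x_\star}(su)$ for $s\in[0,1]$. This curve stays in $\mX$ by retraction convexity. Define
\begin{equation*}
G(s):=(\mathcal P^c_{0\to s})^{-1}\grad F(c(s))\in\T_{x_\star}\mM.
\end{equation*}
Since parallel transport along $c$ restricted to $[0,1]$ is exactly $\Gamma_{x_\star}^x$, we have $G(1)=(\Gamma_{x_\star}^x)^{-1}\grad F(x)$. Also $G(0)=0$ because $\grad F(x_\star)=0$. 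Using the standard identity for differentiating parallel-transported fields, the definition of the Hessian, and $\mathcal P^c_{0\to s}=\Gamma_{x_\star}^{c(s)}$,
\begin{equation*}
G'(s)=(\mathcal P^c_{0\to s})^{-1}\Hess F(c(s))[\dot c(s)].
\end{equation*}
Now insert $\Gamma_{x_\star}^{c(s)}$ and subtract $A$:
\begin{equation*}
G(1)-Au=\int_0^1\Bigl\{(\Gamma_{x_\star}^{c(s)})^{-1}\Hess F(c(s))\Gamma_{x_\star}^{c(s)}-A\Bigr\}(\Gamma_{x_\star}^{c(s)})^{-1}\dot c(s)\,ds+A\int_0^1\bigl\{(\Gamma_{x_\star}^{c(s)})^{-1}\dot c(s)-u\bigr\}\,ds.
\end{equation*}

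For the first integral, Assumption~\ref{ass:objective}(c) bounds the bracket by $L_H\|R_{x_\star}^{-1}(c(s))\|=L_Hs\|u\|$. The factor $\|\dot c(s)\|=\|\Lambda_{x_\star}^{c(s)}u\|\le C_L\|u\|$ by Lemma~\ref{lem:lambda-metric-distortion}, since $\dot c(s)=DR_{x_\star}(su)[u]=\Lambda_{x_\star}^{c(s)}u$. This integral is therefore $O(\|u\|^2)$.

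For the second integral, write $(\Gamma_{x_\star}^{c(s)})^{-1}\dot c(s)-u=\bigl\{(\Gamma_{x_\star}^{c(s)})^{-1}\Lambda_{x_\star}^{c(s)}-I\bigr\}u$. Lemma~\ref{lem:transport-lambda} bounds the bracket by $C_{\Gamma,\Lambda}s^2\|u\|^2$, so this integral is $O(\|u\|^3)\le D_\mX\,O(\|u\|^2)$.

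Taking $x=x_{n-1}$ gives $\|r_{F,n}\|\le C_{r,F}\|\Delta_{n-1}\|^2$. The delicate point is justifying the derivative formula for $G$ along a retraction curve and identifying $\Gamma_{x_\star}^{c(s)}$ with the transport along the restricted curve. Both rest on the $C^2$ regularity of $F$ and the compactness and regularity of Assumption~\ref{ass:geometry}. If only a first-order rather than second-order comparison of $\Gamma$ and $\Lambda$ were available, that term would still be $O(\|u\|^2)$, so the bound is robust.
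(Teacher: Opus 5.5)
Your proposal is correct and follows essentially the same route as the paper. It uses the same definitions of $\widetilde\epsilon_n$ and $r_{F,n}$, the same conditional-variance bound via Assumption~\ref{ass:noise}(c), and the same integral representation of $(\Gamma_{x_\star}^{x_{n-1}})^{-1}\grad F(x_{n-1})$ along $s\mapsto R_{x_\star}(s\Delta_{n-1})$, controlled by Assumption~\ref{ass:objective}(c) and Lemma~\ref{lem:transport-lambda}.

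The only difference is where you add and subtract. You insert $A(\Gamma_{x_\star}^{c(s)})^{-1}\dot c(s)$, so you need $\|\Lambda_{x_\star}^{c(s)}\|\le C_L$ and $\|A\|$. The paper instead inserts the transported Hessian applied to $\Delta_{n-1}$, and bounds that transported Hessian by $\|A\|+L_HD_\mX$.
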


\begin{proof}
Define the centered perturbation of the stochastic gradient by
\begin{equation*}
\widetilde\epsilon_n :=(\Gamma_{x_\star}^{x_{n-1}})^{-1}g_n-\grad f(x_\star,\xi_n) -(\Gamma_{x_\star}^{x_{n-1}})^{-1}\grad F(x_{n-1}).
\end{equation*}
Conditioning on $\mF_{n-1}$ as in the preceding proof, Assumption~\ref{ass:noise}(a) yields
\begin{equation}\label{eq:random-point-conditioning}
\mE_{n-1}g_n=\mE_{\xi\sim\mathcal D}\bigl[\grad f(x_{n-1},\xi)\bigr]=\grad F(x_{n-1}).
\end{equation}
The fourth-moment bound ensures integrability, and the inverse transport is $\mF_{n-1}$-measurable and isometric. Thus \eqref{eq:random-point-conditioning} and $\grad F(x_\star)=0$ imply
\begin{align*}
\mE_{n-1}\widetilde\epsilon_n&= (\Gamma_{x_\star}^{x_{n-1}})^{-1}\mE_{n-1}g_n -\mE_{n-1}\grad f(x_\star,\xi_n) -(\Gamma_{x_\star}^{x_{n-1}})^{-1}\grad F(x_{n-1})\\
&= (\Gamma_{x_\star}^{x_{n-1}})^{-1}\grad F(x_{n-1}) -\grad F(x_\star) -(\Gamma_{x_\star}^{x_{n-1}})^{-1}\grad F(x_{n-1}) =0.
\end{align*}
For the conditional second moment, we have
\begin{align*}
\mE_{n-1}\|\widetilde\epsilon_n\|^2&=\mE_{n-1}\|(\Gamma_{x_\star}^{x_{n-1}})^{-1}g_n-\grad f(x_\star,\xi_n)\|^2-\|(\Gamma_{x_\star}^{x_{n-1}})^{-1}\grad F(x_{n-1})\|^2\\
&\le\mE_{n-1}\|(\Gamma_{x_\star}^{x_{n-1}})^{-1}g_n-\grad f(x_\star,\xi_n)\|^2 \le L_g^2\|\Delta_{n-1}\|^2.
\end{align*}
The equality is the conditional variance identity; the last inequality uses Assumption~\ref{ass:noise}(c) at $x_{n-1}$. To expand $\grad F$, set $y(t)=y_t:=R_{x_\star}(t\Delta_{n-1})$ for $t\in[0,1]$. Retraction convexity and injectivity imply $y_t\in\mX$ and $R_{x_\star}^{-1}(y_t)=t\Delta_{n-1}$. Since $\Gamma_{x_\star}^{y_t}$ follows the subcurve $s\mapsto y(ts)$, differentiation yields
\begin{equation*}
\frac{\mathrm d}{\mathrm dt}\left((\Gamma_{x_\star}^{y_t})^{-1}\grad F(y_t)\right)=(\Gamma_{x_\star}^{y_t})^{-1}\circ\Hess F(y_t)\circ\Lambda_{x_\star}^{y_t}\Delta_{n-1}.
\end{equation*}
Since $\grad F(x_\star)=0$ and $y_1=x_{n-1}$, the fundamental theorem of calculus yields
\begin{equation*}
(\Gamma_{x_\star}^{x_{n-1}})^{-1}\grad F(x_{n-1}) = \int_0^1(\Gamma_{x_\star}^{y_t})^{-1}\circ\Hess F(y_t)\circ\Lambda_{x_\star}^{y_t}\Delta_{n-1}\,dt.
\end{equation*}
Define the $\mF_{n-1}$-measurable remainder $r_{F,n}:=(\Gamma_{x_\star}^{x_{n-1}})^{-1}\grad F(x_{n-1})-A\Delta_{n-1}$. Adding and subtracting $\Gamma_{x_\star}^{y_t}$ yields
\begin{align*}
r_{F,n}&=\int_0^1\bigl((\Gamma_{x_\star}^{y_t})^{-1}\circ\Hess F(y_t)\circ\Gamma_{x_\star}^{y_t}-A\bigr)\Delta_{n-1}\,dt\\
&\quad+\int_0^1(\Gamma_{x_\star}^{y_t})^{-1}\circ\Hess F(y_t)\circ(\Lambda_{x_\star}^{y_t}-\Gamma_{x_\star}^{y_t})\Delta_{n-1}\,dt.
\end{align*}
Assumption~\ref{ass:objective}(c) bounds the transported Hessian by $\|A\|+L_HD_\mX$ and the first integrand by $L_Ht\|\Delta_{n-1}\|^2$. For the second integrand,
\begin{equation*}
\|(\Gamma_{x_\star}^{y_t})^{-1}\Lambda_{x_\star}^{y_t}-I\| \;\stackrel{\mathclap{\eqref{eq:B-transport-comparison}}}{\le}\; C_{\Gamma,\Lambda}t^2\|\Delta_{n-1}\|^2,
\end{equation*}
Thus, using $\|\Delta_{n-1}\|\le D_\mX$,
\begin{align*}
\|r_{F,n}\|
&\le\int_0^1 L_Ht\|\Delta_{n-1}\|^2\,\mathrm dt
+\int_0^1 (\|A\|+L_HD_\mX) C_{\Gamma,\Lambda}t^2\|\Delta_{n-1}\|^3\,\mathrm dt\\
&\le\left(\frac{L_H}{2}+\frac{(\|A\|+L_HD_\mX) C_{\Gamma,\Lambda}D_\mX}{3}\right)\|\Delta_{n-1}\|^2
=:C_{r,F}\|\Delta_{n-1}\|^2.
\end{align*}
Combining the definitions of $\widetilde\epsilon_n$ and $r_{F,n}$ proves the expansion.
\end{proof}

\begin{theorem}\label{thm:C-perturbed-tangent-recursion}
Under the assumptions of Lemma~\ref{lem:C-stochastic-gradient-expansion}, let $\gamma_n=\gamma n^{-\alpha}$, where $\gamma>0$ and $\alpha\in(1/2,1)$. Then there exist $\mF_n$-measurable sequences $\{\epsilon_n\}_{n\ge1}$ and $\{e_n\}_{n\ge1}$ in $\T_{x_\star}\mM$ and a constant $C<\infty$ such that, for every $n\ge1$,
\begin{equation}\label{eq:C-perturbed-tangent-recursion}
\Delta_n=(I-\gamma_n A)\Delta_{n-1}+\gamma_n(\zeta_n+\epsilon_n+e_n),
\end{equation}
where $\zeta_n:=-\grad f(x_\star,\xi_n)$.
For every $n\ge1$, the three terms satisfy
\begin{equation}\label{eq:C-zeta-properties}
\mE_{n-1}\zeta_n=0, \qquad \mE_{n-1}(\zeta_n^{\otimes2})=S, \qquad \mE_{n-1}\|\zeta_n\|^4\le \tau^4,
\end{equation}
\begin{equation}\label{eq:C-epsilon-properties}
\mE_{n-1}\epsilon_n=0, \qquad \mE_{n-1}\|\epsilon_n\|^2\le C\|\Delta_{n-1}\|^2,
\end{equation}
and
\begin{equation}\label{eq:C-e-remainder-properties}
\mE_{n-1}\|e_n\|\le C\bigl(\|\Delta_{n-1}\|^2+\gamma_n\bigr), \qquad \mE_{n-1}\|e_n\|^2\le C\bigl(\|\Delta_{n-1}\|^4+\gamma_n^2\bigr).
\end{equation}
\end{theorem}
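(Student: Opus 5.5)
We combine Lemma~\ref{lem:C-retraction-coordinate-taylor} with Lemma~\ref{lem:C-stochastic-gradient-expansion}, after replacing the inverse vector transport $[\Lambda_{x_\star}^{x_{n-1}}]^{-1}$ by the inverse parallel transport $(\Gamma_{x_\star}^{x_{n-1}})^{-1}$. Starting from
\begin{equation*}
\Delta_n=\Delta_{n-1}-\gamma_n[\Lambda_{x_\star}^{x_{n-1}}]^{-1}g_n+\gamma_n r_{\Delta,n},
\end{equation*}
we write $[\Lambda_{x_\star}^{x_{n-1}}]^{-1}g_n=(\Gamma_{x_\star}^{x_{n-1}})^{-1}g_n+M_{n-1}g_n$. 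Here $M_{n-1}:=[\Lambda_{x_\star}^{x_{n-1}}]^{-1}-(\Gamma_{x_\star}^{x_{n-1}})^{-1}$ is $\mF_{n-1}$-measurable and satisfies $\|M_{n-1}\|\le C_{\Gamma,\Lambda}\|\Delta_{n-1}\|^2$ by \eqref{eq:B-inverse-transport-comparison}. Substituting the expansion of Lemma~\ref{lem:C-stochastic-gradient-expansion} gives
\begin{equation*}
\Delta_n=(I-\gamma_nA)\Delta_{n-1}+\gamma_n\bigl(\zeta_n-\widetilde\epsilon_n-r_{F,n}-M_{n-1}g_n+r_{\Delta,n}\bigr).
\end{equation*}

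The natural choice is $\epsilon_n:=-\widetilde\epsilon_n$ and $e_n:=-r_{F,n}-M_{n-1}g_n+r_{\Delta,n}$. If $M_{n-1}g_n$ should be centered, we split it as $M_{n-1}(g_n-\grad F(x_{n-1}))+M_{n-1}\grad F(x_{n-1})$ and place the martingale part into $\epsilon_n$. This is not needed, however, because $e_n$ need not have conditional mean zero.

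The properties of $\zeta_n$ follow directly from Assumption~\ref{ass:noise}(a),(b),(d) with $x=x_\star$; note that $\grad F(x_\star)=0$. The properties of $\epsilon_n$ are exactly those of $\widetilde\epsilon_n$ in Lemma~\ref{lem:C-stochastic-gradient-expansion}, with $C=L_g^2$.

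For $e_n$, we use the triangle inequality and conditional moments. First, $\|r_{F,n}\|\le C\|\Delta_{n-1}\|^2$ deterministically given $\mF_{n-1}$. Second,
\begin{equation*}
\mE_{n-1}\|M_{n-1}g_n\|\le C\|\Delta_{n-1}\|^2(\mE_{n-1}\|g_n\|^2)^{1/2}\le C\tau\|\Delta_{n-1}\|^2,
\end{equation*}
and similarly $\mE_{n-1}\|M_{n-1}g_n\|^2\le C\tau^2\|\Delta_{n-1}\|^4$, using \eqref{eq:C-gradient-fourth-moment}. Third, Lemma~\ref{lem:C-retraction-coordinate-taylor} gives $\mE_{n-1}\|r_{\Delta,n}\|\le C\gamma_n$ and $\mE_{n-1}\|r_{\Delta,n}\|^2\le C\gamma_n^2$. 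Applying $(a+b+c)^2\le3(a^2+b^2+c^2)$ yields \eqref{eq:C-e-remainder-properties}. Measurability is immediate, since $\Delta_n$ is $\mF_n$-measurable and each piece is built from $\mF_{n-1}$-measurable operators and $g_n$.

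The only delicate point is that \eqref{eq:B-inverse-transport-comparison} requires $x_{n-1}\in\mX$, which holds almost surely by Assumption~\ref{ass:geometry}. The parallel transport along the retraction curve from $x_\star$ used in Lemma~\ref{lem:C-stochastic-gradient-expansion} must also match the one in Lemma~\ref{lem:transport-lambda}; both are $\Gamma_{x_\star}^{x_{n-1}}$ as defined in \eqref{eq:transport-maps}, so they coincide. No step seems genuinely hard; the main care is bookkeeping the quadratic remainders so that each is bounded by $\|\Delta_{n-1}\|^2$ or $\gamma_n$.
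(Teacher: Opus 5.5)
Your proof is correct and follows essentially the paper's route: combine Lemma~\ref{lem:C-retraction-coordinate-taylor}, Lemma~\ref{lem:C-stochastic-gradient-expansion}, and \eqref{eq:B-inverse-transport-comparison}. The only difference is bookkeeping. The paper factors $[\Lambda_{x_\star}^{x_{n-1}}]^{-1}=([\Lambda_{x_\star}^{x_{n-1}}]^{-1}\Gamma_{x_\star}^{x_{n-1}})(\Gamma_{x_\star}^{x_{n-1}})^{-1}$ and sends the noise part of the resulting correction into $\epsilon_n$, whereas you put the whole correction $M_{n-1}g_n$ into $e_n$; this is legitimate because $e_n$ need not be centered, $\|M_{n-1}\|\le C_{\Gamma,\Lambda}\|\Delta_{n-1}\|^2$, and $g_n$ has bounded conditional moments.
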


\begin{proof}
Since $\Gamma_{x_\star}^{x_{n-1}}$ is an isometry,
\begin{equation*}
\left\|[\Lambda_{x_\star}^{x_{n-1}}]^{-1}\Gamma_{x_\star}^{x_{n-1}}-I\right\| \;\stackrel{\mathclap{\eqref{eq:B-inverse-transport-comparison}}}{\le}\; C_{\Gamma,\Lambda}\|\Delta_{n-1}\|^2.
\end{equation*}
Define
\begin{equation*}
\epsilon_n := -\widetilde\epsilon_n -\left( [\Lambda_{x_\star}^{x_{n-1}}]^{-1} \Gamma_{x_\star}^{x_{n-1}}-I \right) \bigl(\grad f(x_\star,\xi_n)+\widetilde\epsilon_n\bigr),
\end{equation*}
and
\begin{equation*}
e_n := r_{\Delta,n} -r_{F,n} -\left( [\Lambda_{x_\star}^{x_{n-1}}]^{-1} \Gamma_{x_\star}^{x_{n-1}}-I \right) (A\Delta_{n-1}+r_{F,n}).
\end{equation*}
Both sequences are $\mF_n$-measurable. Substituting
\begin{equation*}
[\Lambda_{x_\star}^{x_{n-1}}]^{-1} =[\Lambda_{x_\star}^{x_{n-1}}]^{-1}\Gamma_{x_\star}^{x_{n-1}}(\Gamma_{x_\star}^{x_{n-1}})^{-1},
\end{equation*}
into the recursion for $\Delta_n$ and using the gradient expansion in Lemma~\ref{lem:C-stochastic-gradient-expansion}, we obtain
\begin{align*}
\Delta_n&=\Delta_{n-1}+\gamma_n r_{\Delta,n} -\gamma_n[\Lambda_{x_\star}^{x_{n-1}}]^{-1}\Gamma_{x_\star}^{x_{n-1}} \bigl(A\Delta_{n-1}+\grad f(x_\star,\xi_n)+\widetilde\epsilon_n+r_{F,n}\bigr)\\
&=(I-\gamma_n A)\Delta_{n-1}-\gamma_n\grad f(x_\star,\xi_n) +\gamma_n(r_{\Delta,n}-\widetilde\epsilon_n-r_{F,n})\\
&\quad-\gamma_n\bigl([\Lambda_{x_\star}^{x_{n-1}}]^{-1}\Gamma_{x_\star}^{x_{n-1}}-I\bigr) \bigl(A\Delta_{n-1}+\grad f(x_\star,\xi_n)+\widetilde\epsilon_n+r_{F,n}\bigr)\\
&=(I-\gamma_n A)\Delta_{n-1}+\gamma_n(\zeta_n+\epsilon_n+e_n),
\end{align*}
which proves \eqref{eq:C-perturbed-tangent-recursion}. Assumption~\ref{ass:noise}(a), (b), and (d) at $x_\star$, together with $\grad F(x_\star)=0$ from Assumption~\ref{ass:objective}(a), imply \eqref{eq:C-zeta-properties}. For $\epsilon_n$, the operator
\begin{equation*}
[\Lambda_{x_\star}^{x_{n-1}}]^{-1} \Gamma_{x_\star}^{x_{n-1}}-I
\end{equation*}
is $\mF_{n-1}$-measurable. Assumption~\ref{ass:noise}(a) and Lemma~\ref{lem:C-stochastic-gradient-expansion} therefore imply
\begin{align*}
\mE_{n-1}\epsilon_n&= -\mE_{n-1}\widetilde\epsilon_n -\left( [\Lambda_{x_\star}^{x_{n-1}}]^{-1} \Gamma_{x_\star}^{x_{n-1}}-I \right) \mE_{n-1}[\grad f(x_\star,\xi_n)+\widetilde\epsilon_n]\\
&= -0 -\left( [\Lambda_{x_\star}^{x_{n-1}}]^{-1} \Gamma_{x_\star}^{x_{n-1}}-I \right) \bigl(\grad F(x_\star)+0\bigr) =0.
\end{align*}
Conditional Cauchy--Schwarz bounds the noise second moment by $\tau^2$. Using Lemma~\ref{lem:C-stochastic-gradient-expansion} and $\|\Delta_{n-1}\|\le D_\mX$, we obtain
\begin{equation*}
\mE_{n-1}\|\grad f(x_\star,\xi_n)+\widetilde\epsilon_n\|^2\le 2\tau^2+2L_g^2D_\mX^2.
\end{equation*}
The transport bound then yields
\begin{equation*}
\mE_{n-1}\|\epsilon_n\|^2 \le 2L_g^2\|\Delta_{n-1}\|^2+C\|\Delta_{n-1}\|^4 \le (2L_g^2+CD_\mX^2)\|\Delta_{n-1}\|^2.
\end{equation*}
For $e_n$, the remainder and transport bounds, together with $\|\Delta_{n-1}\|\le D_\mX$, imply
\begin{equation*}
\|e_n-r_{\Delta,n}\| \le C_{r,F}\|\Delta_{n-1}\|^2+C_{\Gamma,\Lambda}\|\Delta_{n-1}\|^2(\|A\|\|\Delta_{n-1}\|+C_{r,F}\|\Delta_{n-1}\|^2) \le C\|\Delta_{n-1}\|^2.
\end{equation*}
Lemma~\ref{lem:C-retraction-coordinate-taylor} now implies
\begin{align*}
\mE_{n-1}\|e_n\|&\le C\bigl(\|\Delta_{n-1}\|^2+\gamma_n\bigr),\\
\mE_{n-1}\|e_n\|^2&\le C\bigl(\|\Delta_{n-1}\|^4+\gamma_n^2\bigr).
\end{align*}
Choosing a common $C$ completes the proof.
\end{proof}

We now compare the perturbed recursion with its linear part. Under Assumption~\ref{ass:raw-moments}(a) and (b), respectively, choose $C_{\Delta,2},C_{\Delta,4}<\infty$, enlarged to cover the finitely many initial indices, such that, for every $n\ge1$,
\begin{equation}\label{eq:C-raw-moment-bounds}
\mE\|\Delta_n\|^2\le C_{\Delta,2}\gamma_n, \qquad \mE\|\Delta_n\|^4\le C_{\Delta,4}\gamma_n^2.
\end{equation}

\begin{lemma}\label{lem:C-linear-coupling}
Under the assumptions of Theorem~\ref{thm:C-perturbed-tangent-recursion} and Assumption~\ref{ass:raw-moments}, define the linear recursion in $\T_{x_\star}\mM$ by $U_0:=\Delta_0$ and
\begin{equation}\label{eq:C-linear-recursion}
U_n=(I-\gamma_n A)U_{n-1}+\gamma_n\zeta_n, \qquad n\ge1.
\end{equation}
Set $d_n:=\Delta_n-U_n$ for $n\ge0$. Then there exists a constant $C<\infty$ such that, for every $n\ge1$,
\begin{equation*}
(\mE\|d_n\|^2)^{1/2}\le C n^{-\alpha}.
\end{equation*}
\end{lemma}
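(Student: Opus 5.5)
Subtracting the linear recursion \eqref{eq:C-linear-recursion} from the perturbed recursion \eqref{eq:C-perturbed-tangent-recursion} eliminates the common noise $\zeta_n$. Since $d_0=\Delta_0-U_0=0$, we obtain
\begin{equation*}
d_n=(I-\gamma_nA)d_{n-1}+\gamma_n(\epsilon_n+e_n),
\qquad\text{so}\qquad
d_n=\sum_{i=1}^nY_i^n\gamma_i\epsilon_i+\sum_{i=1}^nY_i^n\gamma_ie_i,
\end{equation*}
with $Y_i^n$ as in Lemma~\ref{lem:linear-recursion-products}. Since $A$ is self-adjoint and positive definite by Assumption~\ref{ass:objective}(b), it is represented by a symmetric positive definite matrix in the fixed orthonormal basis, so Lemmas~\ref{lem:linear-recursion-products}--\ref{lem:product-sum} apply. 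By Minkowski's inequality, I will bound the $L^2$ norms of the two sums separately and aim for $Cn^{-\alpha}$ for each.

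\emph{Martingale part.} By \eqref{eq:C-epsilon-properties}, $\mE_{i-1}\epsilon_i=0$, so the cross terms vanish exactly as in the proof of Lemma~\ref{lem:linear-recursion-moment}. Together with \eqref{eq:C-raw-moment-bounds} this gives
\begin{equation*}
\mE\Bigl\|\sum_iY_i^n\gamma_i\epsilon_i\Bigr\|^2\le\sum_i\|Y_i^n\|^2\gamma_i^2\,C\,\mE\|\Delta_{i-1}\|^2\le C\sum_i\|Y_i^n\|^2 i^{-3\alpha}.
\end{equation*}
Here $\mE\|\Delta_0\|^2$ is finite because $\mX$ is compact, and $\gamma_{i-1}\lesssim\gamma_i$. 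Using $\|Y_i^n\|\le C$ to drop one factor and then applying Lemma~\ref{lem:product-sum} with $q=3\alpha$, the bound becomes $Cn^{\alpha-3\alpha}=Cn^{-2\alpha}$. Taking square roots gives $Cn^{-\alpha}$.

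\emph{Remainder part.} The $e_i$ are not centered, so I will use Minkowski directly:
\begin{equation*}
\Bigl(\mE\Bigl\|\sum_iY_i^n\gamma_ie_i\Bigr\|^2\Bigr)^{1/2}\le\sum_i\|Y_i^n\|\gamma_i(\mE\|e_i\|^2)^{1/2}.
\end{equation*}
By \eqref{eq:C-e-remainder-properties} and the fourth-moment bound in \eqref{eq:C-raw-moment-bounds},
\begin{equation*}
\mE\|e_i\|^2\le C\bigl(\mE\|\Delta_{i-1}\|^4+\gamma_i^2\bigr)\le C\gamma_i^2,
\end{equation*}
so $(\mE\|e_i\|^2)^{1/2}\le Ci^{-\alpha}$. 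The sum is therefore at most $C\sum_i\|Y_i^n\|i^{-2\alpha}$, and Lemma~\ref{lem:product-sum} with $q=2\alpha$ bounds it by $Cn^{\alpha-2\alpha}=Cn^{-\alpha}$.

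The main point to watch is the remainder term. A crude triangle-inequality bound works only because the fourth-moment condition, Assumption~\ref{ass:raw-moments}(b), makes $e_i$ of order $\gamma_i$ in $L^2$. Without it, we would only know $\|\Delta\|^2$ in $L^1$, and we would have to bound $\mE\|d_n\|$ rather than its $L^2$ norm. The other small details are the index shift $\gamma_{i-1}\asymp\gamma_i$ (with $i=1$ absorbed into constants) and the fact that the constants in Lemma~\ref{lem:product-sum} depend only on $q$. Combining the two parts proves the lemma.
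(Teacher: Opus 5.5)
Your proposal is correct and matches the paper's proof essentially step for step. You subtract the recursions to get $d_0=0$, split $d_n$ into a martingale sum (orthogonality, $\|Y_i^n\|^2\le C\|Y_i^n\|$, Lemma~\ref{lem:product-sum} with $q=3\alpha$) and a remainder sum (Minkowski, the fourth-moment bound giving $(\mE\|e_i\|^2)^{1/2}\le Ci^{-\alpha}$, Lemma~\ref{lem:product-sum} with $q=2\alpha$), and handle the index shift and the case $i=1$ the same way the paper does.
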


\begin{proof}
Subtracting the linear recursion \eqref{eq:C-linear-recursion} from the perturbed recursion \eqref{eq:C-perturbed-tangent-recursion}, we have $d_0=0$ and
\begin{equation*}
d_n=(I-\gamma_n A)d_{n-1}+\gamma_n(\epsilon_n+e_n).
\end{equation*}
Taking expectations of the conditional bounds yields, for $n\ge2$,
\begin{equation*}
\begin{aligned}
\mE\|\epsilon_n\|^2&=\mE[\mE_{n-1}\|\epsilon_n\|^2]\;\stackrel{\mathclap{\eqref{eq:C-epsilon-properties}}}{\le}\; C\mE\|\Delta_{n-1}\|^2,\\
\mE\|e_n\|^2&=\mE[\mE_{n-1}\|e_n\|^2]\;\stackrel{\mathclap{\eqref{eq:C-e-remainder-properties}}}{\le}\; C\bigl(\mE\|\Delta_{n-1}\|^4+\gamma_n^2\bigr).
\end{aligned}
\end{equation*}
Combining these bounds with \eqref{eq:C-raw-moment-bounds} and $\gamma_{n-1}\le2^\alpha\gamma_n$ yields
\begin{equation}\label{eq:C-linear-remainder-bounds}
\mE\|\epsilon_n\|^2\le Cn^{-\alpha},\qquad \mE\|e_n\|^2\le Cn^{-2\alpha}.
\end{equation}
The case $n=1$ is covered by $\|\Delta_0\|\le D_\mX$. Let us recall $Y_i^n=\prod_{k=i+1}^n(I-\gamma_kA)$ and $Y_n^n=I$; then the iteration yields
\begin{equation*}
d_n=\sum_{i=1}^nY_i^n\gamma_i\epsilon_i+ \sum_{i=1}^nY_i^n\gamma_i e_i.
\end{equation*}
For $i<j$, past measurability and conditional centering imply
\begin{equation*}
\mE\langle Y_i^n\gamma_i\epsilon_i,Y_j^n\gamma_j\epsilon_j\rangle =\mE\langle Y_i^n\gamma_i\epsilon_i,Y_j^n\gamma_j\mE_{j-1}\epsilon_j\rangle=0.
\end{equation*}
Using this orthogonality, we obtain
\begin{equation*}
\mE\left\|\sum_{i=1}^nY_i^n\gamma_i\epsilon_i\right\|^2 =\sum_{i=1}^n\gamma_i^2\mE\|Y_i^n\epsilon_i\|^2 \le C\sum_{i=1}^n\|Y_i^n\|i^{-3\alpha}.
\end{equation*}
The inequality combines \eqref{eq:C-linear-remainder-bounds} with \eqref{eq:A-product-bound}, which implies $\|Y_i^n\|^2\le C\|Y_i^n\|$. For the remainder sum, Minkowski's inequality yields
\begin{equation*}
\left(\mE\left\|\sum_{i=1}^nY_i^n\gamma_i e_i\right\|^2\right)^{1/2} \le\sum_{i=1}^n\|Y_i^n\|\gamma_i(\mE\|e_i\|^2)^{1/2} \;\stackrel{\mathclap{\eqref{eq:C-linear-remainder-bounds}}}{\le}\; C\sum_{i=1}^n\|Y_i^n\|i^{-2\alpha}.
\end{equation*}
Absorbing the fixed factors for $q=3\alpha$ and $q=2\alpha$ into $C$, we obtain
\begin{align*}
\mE\left\|\sum_{i=1}^nY_i^n\gamma_i\epsilon_i\right\|^2\; \stackrel{\mathclap{\eqref{eq:A-product-sum}}}{\le}\; Cn^{-2\alpha},\qquad \qquad\left(\mE\left\|\sum_{i=1}^nY_i^n\gamma_i e_i\right\|^2\right)^{1/2}\; \stackrel{\mathclap{\eqref{eq:A-product-sum}}}{\le}\; Cn^{-\alpha}.
\end{align*}
Taking the square root of the first bound and adding the second proves the claim.
\end{proof}

We next compare arithmetic and recursive averages over an interval. For each $s\ge1$, set $c_{s:s}:=x_s$ and, for $j>s$, define
\begin{equation*}
c_{s:j}:=R_{c_{s:j-1}}\left(\frac1{j-s+1}R_{c_{s:j-1}}^{-1}(x_j)\right).
\end{equation*}
For $1\le s\le j$, write
\begin{equation*}
\barDelta_{s:j}:=\frac1{j-s+1}\sum_{k=s}^j\Delta_k, \qquad \tDelta_{s:j}:=R_{x_\star}^{-1}(c_{s:j}).
\end{equation*}
Thus $\barDelta_{1:n}=\barDelta_n$ and $\tDelta_{1:n}=\tDelta_n$ for the global averages, while $\barDelta_{t_j:j}=\barDelta_j^B$ and $\tDelta_{t_j:j}=\tDelta_j^B$ for the batch averages in \eqref{eq:tangent-averages}.

\begin{lemma}\label{lem:C-rough-average-moments}
Let $\gamma_n=\gamma n^{-\alpha}$, where $\gamma>0$ and $\alpha\in(0,1)$. Then, under Assumptions~\ref{ass:geometry} and~\ref{ass:raw-moments}(a), for every $n\ge1$ and $1\le s\le j$,
\begin{align*}
\mE\|\barDelta_n\|^2&\le\frac{C_{\Delta,2}\gamma_n}{1-\alpha}, \qquad \mE\|\tDelta_n\|^2\le\frac{C_{\Delta,2}\gamma_n}{1-\alpha},\\
\mE\|\barDelta_{s:j}\|^2&\le C_{\Delta,2}\gamma_s, \qquad \mE\|\tDelta_{s:j}\|^2\le C_{\Delta,2}\gamma_s.
\end{align*}
If Assumption~\ref{ass:raw-moments}(b) also holds, then
\begin{align*}
\mE\|\barDelta_n\|^4&\le\frac{C_{\Delta,4}\gamma_n^2}{(1-\alpha)^2}, \qquad \mE\|\tDelta_n\|^4\le\frac{C_{\Delta,4}\gamma_n^2}{(1-\alpha)^2},\\
\mE\|\barDelta_{s:j}\|^4&\le C_{\Delta,4}\gamma_s^2, \qquad \mE\|\tDelta_{s:j}\|^4\le C_{\Delta,4}\gamma_s^2.
\end{align*}
The constants $C_{\Delta,2}$ and $C_{\Delta,4}$ are those in \eqref{eq:C-raw-moment-bounds}.
\end{lemma}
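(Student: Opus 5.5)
The arithmetic averages are handled by convexity of the norm; the recursive averages $\tDelta_{s:j}$ need a separate step, which I expect to be the main obstacle.

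\emph{Arithmetic averages.} By convexity of $v\mapsto\|v\|^p$ for $p\in\{2,4\}$ (Jensen for the uniform average),
\begin{equation*}
\|\barDelta_{s:j}\|^p\le\frac1{j-s+1}\sum_{k=s}^j\|\Delta_k\|^p .
\end{equation*}
Taking expectations and applying \eqref{eq:C-raw-moment-bounds} gives a bound of $C_{\Delta,2}$ (resp.\ $C_{\Delta,4}$) times the average of $\gamma_k^{p/2}$ over $k\in\{s,\ldots,j\}$. Since $\gamma_k$ is decreasing, this average is at most $\gamma_s^{p/2}$, which proves the interval bounds. For the global average $s=1$, I instead use the integral comparison
\begin{equation*}
\frac1n\sum_{k=1}^n k^{-\alpha}\le\frac1n\int_0^n x^{-\alpha}\,\mathrm dx=\frac{n^{-\alpha}}{1-\alpha}.
\end{equation*}
This gives $\mE\|\barDelta_n\|^2\le C_{\Delta,2}\gamma_n/(1-\alpha)$. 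For the fourth moment, the average of $\gamma_k^2$ equals $\gamma^2 n^{-1}\sum_k k^{-2\alpha}$; I bound it by the square of the average of $\gamma_k$, which is at most $\gamma_n^2/(1-\alpha)^2$. This step needs care: $k^{-2\alpha}$ is not integrable near $0$ when $2\alpha\ge1$, so I will bound the sum directly by $(n^{-1}\sum_k k^{-\alpha})^2$ via a Chebyshev sum inequality for monotone sequences if that holds. Otherwise I will simply enlarge the constant, since any constant suffices downstream.

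\emph{Recursive averages.} These are the harder part. I will show, deterministically, that
\begin{equation*}
\|\tDelta_{s:j}\|\le\max_{s\le k\le j}\|\Delta_k\|\quad\text{or, better,}\quad \|\tDelta_{s:j}\|^2\le\frac1{j-s+1}\sum_{k=s}^j\|\Delta_k\|^2 .
\end{equation*}
Given the latter, the same Jensen and monotonicity arguments as above go through, squaring again for the fourth moment.

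The natural tool is Assumption~\ref{ass:geometry}(c): the function $\phi(z)=\tfrac12\|R_{x_\star}^{-1}(z)\|^2$ is retraction convex on $\mX$. Along the retraction curve $t\mapsto R_{c}(tR_c^{-1}(x_j))$ from $c=c_{s:j-1}$ to $x_j$, convexity gives
\begin{equation*}
\phi(c_{s:j})\le\Bigl(1-\frac1{\ell}\Bigr)\phi(c_{s:j-1})+\frac1\ell\,\phi(x_j),\qquad \ell=j-s+1 .
\end{equation*}
Here $c_{s:j}$ is the point at $t=1/\ell$, and the curve stays in $\mX$ by Assumption~\ref{ass:geometry}(b). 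Convexity on $[0,1]$ applies because the unit-speed reparametrization only rescales $t$. Induction on $j$ then yields
\begin{equation*}
\phi(c_{s:j})\le\frac1\ell\sum_{k=s}^j\phi(x_k),
\end{equation*}
which is exactly the squared-norm averaging bound. For the fourth moment I square this inequality and apply Jensen again, obtaining $\|\tDelta_{s:j}\|^4\le\ell^{-1}\sum_k\|\Delta_k\|^4$.

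The main obstacle is confirming that the weights in the convexity inequality match the recursion's $1/\ell$ step exactly. This holds because the recursion moves along the same retraction curve that appears in Assumption~\ref{ass:geometry}(c), evaluated at parameter $1/\ell$. The remaining arguments are routine.
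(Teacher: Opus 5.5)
Your second-moment bounds, your interval fourth-moment bounds, and your retraction-convexity induction giving $\|\tDelta_{s:j}\|^2\le(j-s+1)^{-1}\sum_{k=s}^j\|\Delta_k\|^2$ are correct and coincide with the paper's argument. The gap is in the \emph{global} fourth-moment bounds for $\barDelta_n$ and $\tDelta_n$. Squaring the pointwise bound and applying Jensen again yields only $\mE\|\tDelta_n\|^4\le C_{\Delta,4}\,n^{-1}\sum_{k=1}^n\gamma_k^2$, and the same holds for $\barDelta_n$. This bound cannot be $O(\gamma_n^2)$ when $\alpha>1/2$, which is the regime used everywhere downstream. The $k=1$ term alone gives $n^{-1}\sum_k\gamma_k^2\ge\gamma^2/n$, whereas $\gamma_n^2=\gamma^2n^{-2\alpha}=o(1/n)$. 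So the problem is the rate, not the constant, and enlarging the constant cannot fix it. Your Chebyshev fallback also fails: for the similarly ordered pair $(\gamma_k,\gamma_k)$, Chebyshev's sum inequality (here just Cauchy--Schwarz) gives $n^{-1}\sum_k\gamma_k^2\ge\bigl(n^{-1}\sum_k\gamma_k\bigr)^2$, the reverse of what you need. The weaker bound would also break later uses of $(\mE\|\tDelta_n\|^4)^{1/2}\le C\gamma_n$, for instance in the change-of-base-point estimate, where it is needed to obtain $Ct_j^{-2\alpha}$.

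The missing step is to avoid the second pointwise Jensen and apply Minkowski's inequality in $L^2(\mP)$ to the nonnegative random variables $\|\Delta_k\|^2$, starting from the pointwise bound you already have:
\begin{equation*}
\bigl(\mE\|\tDelta_{s:j}\|^4\bigr)^{1/2}\le\frac1{j-s+1}\sum_{k=s}^j\bigl(\mE\|\Delta_k\|^4\bigr)^{1/2}\le\frac{C_{\Delta,4}^{1/2}}{j-s+1}\sum_{k=s}^j\gamma_k .
\end{equation*}
The same bound holds for $\barDelta_{s:j}$. This averages $\gamma_k$ rather than $\gamma_k^2$. Squaring and using $\gamma_k\le\gamma_s$, or $n^{-1}\sum_{k=1}^n\gamma_k\le\gamma_n/(1-\alpha)$, gives exactly the stated bounds $C_{\Delta,4}\gamma_s^2$ and $C_{\Delta,4}\gamma_n^2/(1-\alpha)^2$. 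This is the paper's route.
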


\begin{proof}
By Assumption~\ref{ass:geometry}(b), $c_{s:j}\in\mX$. Convexity in Assumption~\ref{ass:geometry}(c) gives, for $j>s$,
\begin{equation*}
\|\tDelta_{s:j}\|^2\le\frac{j-s}{j-s+1}\|\tDelta_{s:j-1}\|^2+\frac1{j-s+1}\|\Delta_j\|^2.
\end{equation*}
Starting from $\tDelta_{s:s}=\Delta_s$, iteration yields
\begin{equation}\label{eq:C-interval-average-domination}
\|\tDelta_{s:j}\|^2\le\frac1{j-s+1}\sum_{k=s}^j\|\Delta_k\|^2.
\end{equation}
Jensen's inequality gives the same bound for $\|\barDelta_{s:j}\|^2$. Taking expectations and using \eqref{eq:C-raw-moment-bounds}, we obtain
\begin{equation*}
\mE\|\barDelta_{s:j}\|^2\le\frac{C_{\Delta,2}}{j-s+1}\sum_{k=s}^j\gamma_k, \qquad \mE\|\tDelta_{s:j}\|^2\le\frac{C_{\Delta,2}}{j-s+1}\sum_{k=s}^j\gamma_k.
\end{equation*}
The interval bounds follow from $\gamma_k\le\gamma_s$ for $k\ge s$. Taking $s=1$ and $j=n$ gives the global bounds because
\begin{equation}\label{eq:C-step-size-sum}
\frac1n\sum_{k=1}^n\gamma_k\le\frac\gamma n\int_0^n t^{-\alpha}\,\mathrm dt=\frac{\gamma_n}{1-\alpha}.
\end{equation}
If Assumption~\ref{ass:raw-moments}(b) also holds, Minkowski's inequality yields
\begin{equation*}
\bigl(\mE\|\tDelta_{s:j}\|^4\bigr)^{1/2}\;\,\stackrel{\mathclap{\eqref{eq:C-interval-average-domination}}}{\le}\;\,\frac1{j-s+1}\sum_{k=s}^j\bigl(\mE\|\Delta_k\|^4\bigr)^{1/2}\;\,\stackrel{\mathclap{\eqref{eq:C-raw-moment-bounds}}}{\le}\;\,\frac{C_{\Delta,4}^{1/2}}{j-s+1}\sum_{k=s}^j\gamma_k.
\end{equation*}
The same argument applies to $\barDelta_{s:j}$. Squaring and using $\gamma_k\le\gamma_s$ for the interval averages and \eqref{eq:C-step-size-sum} for the global averages proves the fourth-moment bounds. This completes the proof.
\end{proof}

\begin{lemma}\label{lem:C-retraction-arithmetic-discrepancy}
Under the assumptions of Lemma~\ref{lem:C-rough-average-moments}, there exists a constant $C<\infty$ such that, for every $n\ge1$ and $1\le s\le j$,
\begin{equation*}
\mE\|\tDelta_n-\barDelta_n\|\le C\gamma_n, \qquad \mE\|\tDelta_{s:j}-\barDelta_{s:j}\|\le C\gamma_s.
\end{equation*}
If Assumption~\ref{ass:raw-moments}(b) also holds, then, after enlarging $C$ if necessary,
\begin{equation*}
\mE\|\tDelta_n-\barDelta_n\|^2\le C\gamma_n^2, \qquad \mE\|\tDelta_{s:j}-\barDelta_{s:j}\|^2\le C\gamma_s^2.
\end{equation*}
\end{lemma}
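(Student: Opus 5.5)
The plan is to track the recursive average in the fixed tangent space $\T_{x_\star}\mM$ and compare each step with the corresponding arithmetic update. Fix $s$ and write $k_j:=j-s+1$. The arithmetic average satisfies $\barDelta_{s:j}=\barDelta_{s:j-1}+k_j^{-1}(\Delta_j-\barDelta_{s:j-1})$. For the recursive average, set $x=c_{s:j-1}$, $u=R_x^{-1}(x_j)$, and $c_{s:j}=R_x(u/k_j)$. Then
\begin{equation*}
\tDelta_{s:j}=\Phi_{x,x_\star}(u/k_j),\qquad \tDelta_{s:j-1}=\Phi_{x,x_\star}(0_x),\qquad \Delta_j=\Phi_{x,x_\star}(u),
\end{equation*}
where $\Phi_{x,x_\star}=R_{x_\star}^{-1}\circ R_x$ is the map of Lemma~\ref{lem:inverse-retraction-C2}(ii).

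Taylor-expand $t\mapsto\Phi_{x,x_\star}(tu)$ to second order at $t=0$. Retraction convexity (Assumption~\ref{ass:geometry}(b)) keeps $tu$ in $R_x^{-1}(\mX)$, so Lemma~\ref{lem:inverse-retraction-C2}(ii) bounds the second derivative uniformly. Evaluating at $t=1/k_j$ and at $t=1$ and eliminating the first-order term $D\Phi(0)[u]$ gives the key one-step estimate
\begin{equation*}
\Bigl\|\tDelta_{s:j}-\tDelta_{s:j-1}-\tfrac1{k_j}\bigl(\Delta_j-\tDelta_{s:j-1}\bigr)\Bigr\|\le \frac{C}{k_j}\|u\|^2 .
\end{equation*}
The curvature terms at $t=1/k_j$ and $t=1$ contribute $k_j^{-2}\|u\|^2$ and $k_j^{-1}\|u\|^2$ respectively, and both are at most $Ck_j^{-1}\|u\|^2$. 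By Lemma~\ref{lem:distance-equivalence},
\begin{equation*}
\|u\|\le C\bigl(\|\tDelta_{s:j-1}\|+\|\Delta_j\|\bigr).
\end{equation*}

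Now set $D_j:=\tDelta_{s:j}-\barDelta_{s:j}$, with $D_s=0$. Subtracting the two recursions gives
\begin{equation*}
D_j=\Bigl(1-\tfrac1{k_j}\Bigr)D_{j-1}+\rho_j,\qquad \|\rho_j\|\le \frac{C}{k_j}\bigl(\|\tDelta_{s:j-1}\|^2+\|\Delta_j\|^2\bigr).
\end{equation*}
Multiply by $k_j$ and telescope:
\begin{equation*}
k_jD_j=\sum_{i=s+1}^{j}k_i\rho_i,\qquad \|D_j\|\le\frac{C}{k_j}\sum_{i=s+1}^{j}\bigl(\|\tDelta_{s:i-1}\|^2+\|\Delta_i\|^2\bigr).
\end{equation*}

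For the first-moment bound, take expectations and apply Lemma~\ref{lem:C-rough-average-moments} and \eqref{eq:C-raw-moment-bounds}. These give $\mE\|\tDelta_{s:i-1}\|^2\le C\gamma_s$ and $\mE\|\Delta_i\|^2\le C\gamma_i\le C\gamma_s$, so $\mE\|D_j\|\le C\gamma_s$. For the second-moment bound, apply Minkowski's inequality in $L^2$ to the same sum and use the fourth moments $\mE\|\tDelta_{s:i-1}\|^4\le C\gamma_s^2$ and $\mE\|\Delta_i\|^4\le C\gamma_s^2$. This gives $(\mE\|D_j\|^2)^{1/2}\le C\gamma_s$.

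The global bounds are the case $s=1$, $j=n$, but they need a sharper rate $\gamma_n$ rather than $\gamma_1$. For these, keep the index-dependent moments instead of bounding them crudely. From \eqref{eq:C-raw-moment-bounds}, $\mE\|\Delta_i\|^2\le C\gamma_i$. Lemma~\ref{lem:C-rough-average-moments} (global case) gives $\mE\|\tDelta_{i-1}\|^2\le C\gamma_{i-1}$. Then \eqref{eq:C-step-size-sum} yields
\begin{equation*}
\frac1n\sum_{i\le n}\gamma_i\le\frac{\gamma_n}{1-\alpha}.
\end{equation*}
The fourth-moment version works the same way via Minkowski's inequality.

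The main obstacle is the one-step Taylor estimate. The error constant must be uniform over the random base point $c_{s:j-1}$, and the remainder must be quadratic in $\|u\|$ while carrying the factor $1/k_j$. Without that factor, the telescoped sum would grow linearly instead of averaging. I would handle this by Taylor-expanding the single curve $t\mapsto\Phi_{x,x_\star}(tu)$ and comparing its values at $t=1/k_j$ and $t=1$, rather than linearizing separately, since this automatically produces the factor $1/k_j$.
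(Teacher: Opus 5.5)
Your proposal is correct and follows essentially the paper's argument. It uses the same Taylor expansion of $t\mapsto\Phi_{x,x_\star}(tv)$, evaluated at $t=\theta$ and $t=1$ to cancel the linear term, and the same telescoping of $(j-s+1)(\tDelta_{s:j}-\barDelta_{s:j})$. It then applies Lemma~\ref{lem:C-rough-average-moments}, \eqref{eq:C-step-size-sum}, and Minkowski's inequality exactly as the paper does. The only difference is that you bound $\|R_x^{-1}(x_j)\|$ with the triangle-inequality form of Lemma~\ref{lem:distance-equivalence}, rather than with the paper's sharper estimate \eqref{eq:C-inverse-coordinate-distance} in terms of $\|\Delta_j-\tDelta_{s:j-1}\|$. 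This is harmless here, since only the moment bounds are needed.
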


\begin{proof}
Following \citet[Appendix~C.3]{Tripuraneni2018Averaging}, we first bound the error in one averaging step. Fix $x,z\in\mX$ and set $v=R_x^{-1}(z)$. Recall $\Phi_{x,x_\star}=R_{x_\star}^{-1}\circ R_x$ from Lemma~\ref{lem:inverse-retraction-C2}. Retraction convexity gives $tv\in R_x^{-1}(\mX)$ for $t\in[0,1]$, so that lemma bounds $D^2\Phi_{x,x_\star}(tv)$ uniformly. Taylor's formula therefore gives
\begin{equation*}
\|\Phi_{x,x_\star}(tv)-\Phi_{x,x_\star}(0_x)-tD\Phi_{x,x_\star}(0_x)[v]\|\le\frac C2 t^2\|v\|_x^2, \qquad 0\le t\le1.
\end{equation*}
For $\theta\in[0,1]$, subtracting $\theta$ times the remainder at $t=1$ from the remainder at $t=\theta$ cancels the linear terms and yields
\begin{equation*}
\|\Phi_{x,x_\star}(\theta v)-(1-\theta)\Phi_{x,x_\star}(0_x)-\theta\Phi_{x,x_\star}(v)\|\le\frac C2(\theta^2+\theta)\|v\|_x^2\le C\theta\|v\|_x^2.
\end{equation*}
Assumption~\ref{ass:geometry}(a), Lemma~\ref{lem:inverse-retraction-C2}, and compactness give $D_\mX<\rho_{\mathrm{ret}}$, so $DR_{x_\star}$ is bounded on $\|u\|\le D_\mX$. The curve $t\mapsto R_{x_\star}\bigl((1-t)R_{x_\star}^{-1}(x)+tR_{x_\star}^{-1}(z)\bigr)$, $t\in[0,1]$, then has length at most $C\|R_{x_\star}^{-1}(z)-R_{x_\star}^{-1}(x)\|$. Combining this with \eqref{eq:B-distance-equivalence} gives
\begin{equation}\label{eq:C-inverse-coordinate-distance}
\|R_x^{-1}(z)\|_x\le C\|R_{x_\star}^{-1}(z)-R_{x_\star}^{-1}(x)\|.
\end{equation}
Since $\Phi_{x,x_\star}(0_x)=R_{x_\star}^{-1}(x)$ and $\Phi_{x,x_\star}(v)=R_{x_\star}^{-1}(z)$, the Taylor estimate and \eqref{eq:C-inverse-coordinate-distance} imply
\begin{equation}
\left\|R_{x_\star}^{-1}\!\left(R_x\bigl(\theta R_x^{-1}(z)\bigr)\right)-(1-\theta)R_{x_\star}^{-1}(x)-\theta R_{x_\star}^{-1}(z)\right\| \le C\theta\|R_{x_\star}^{-1}(z)-R_{x_\star}^{-1}(x)\|^2.\label{eq:C-one-step-average-comparison}
\end{equation}
We now accumulate these errors over an interval. For $k\ge s$, take $x=c_{s:k}$, $z=x_{k+1}$, and $\theta=(k-s+2)^{-1}$. Subtracting the arithmetic update gives
\begin{equation*}
(k-s+2)\|\tDelta_{s:k+1}-\barDelta_{s:k+1}\| \;\stackrel{\mathclap{\eqref{eq:C-one-step-average-comparison}}}{\le}\;(k-s+1)\|\tDelta_{s:k}-\barDelta_{s:k}\|+C\|\Delta_{k+1}-\tDelta_{s:k}\|^2.
\end{equation*}
Since $\tDelta_{s:s}=\barDelta_{s:s}=\Delta_s$, iteration yields
\begin{equation}\label{eq:C-batch-discrepancy}
\|\tDelta_{s:j}-\barDelta_{s:j}\|\le\frac C{j-s+1}\sum_{k=s}^{j-1}\|\Delta_{k+1}-\tDelta_{s:k}\|^2.
\end{equation}
For $s=j$, the sum is empty and both sides are zero. The second-moment bounds in \eqref{eq:C-raw-moment-bounds} and Lemma~\ref{lem:C-rough-average-moments} give, for $1\le s\le k$,
\begin{equation*}
\mE\|\Delta_{k+1}-\tDelta_k\|^2\le C\gamma_k, \qquad \mE\|\Delta_{k+1}-\tDelta_{s:k}\|^2\le C\gamma_s.
\end{equation*}
Taking expectations in \eqref{eq:C-batch-discrepancy}, first with $s=1$, $j=n$ and then with arbitrary $s,j$, yields
\begin{align*}
\mE\|\tDelta_n-\barDelta_n\|&\le\frac Cn\sum_{k=1}^{n-1}\gamma_k\;\stackrel{\mathclap{\eqref{eq:C-step-size-sum}}}{\le}\;\frac C{1-\alpha}\gamma_n,\\
\mE\|\tDelta_{s:j}-\barDelta_{s:j}\|&\le C\frac{j-s}{j-s+1}\gamma_s\le C\gamma_s.
\end{align*}
This proves the first-moment bounds. If Assumption~\ref{ass:raw-moments}(b) also holds, the fourth-moment bounds in \eqref{eq:C-raw-moment-bounds} and Lemma~\ref{lem:C-rough-average-moments} give
\begin{equation*}
\bigl(\mE\|\Delta_{k+1}-\tDelta_k\|^4\bigr)^{1/2}\le C\gamma_k, \qquad \bigl(\mE\|\Delta_{k+1}-\tDelta_{s:k}\|^4\bigr)^{1/2}\le C\gamma_s.
\end{equation*}
Minkowski's inequality yields
\begin{equation*}
\bigl(\mE\|\tDelta_{s:j}-\barDelta_{s:j}\|^2\bigr)^{1/2}\;\,\stackrel{\mathclap{\eqref{eq:C-batch-discrepancy}}}{\le}\;\,\frac C{j-s+1}\sum_{k=s}^{j-1}\bigl(\mE\|\Delta_{k+1}-\tDelta_{s:k}\|^4\bigr)^{1/2}.
\end{equation*}
Substituting the preceding fourth-moment bounds and using \eqref{eq:C-step-size-sum} for the global average gives
\begin{equation*}
\bigl(\mE\|\tDelta_n-\barDelta_n\|^2\bigr)^{1/2}\le C\gamma_n, \qquad \bigl(\mE\|\tDelta_{s:j}-\barDelta_{s:j}\|^2\bigr)^{1/2}\le C\gamma_s.
\end{equation*}
Squaring and enlarging $C$ proves the second-moment bounds and completes the proof.
\end{proof}

\section{Proof of Theorem~\ref{thm:covariance-rate}}\label{app:D}

We prove the result for both OL and NOL using the four-term decomposition in Section~\ref{sec:3.4}. Fix $\diamond\in\{\mathrm{OL},\mathrm{NOL}\}$. Recall the selected indices $E_n^\diamond$ from \eqref{eq:ol-estimator} and \eqref{eq:nol-endpoints}, and write
\begin{equation*}
v_{n,\diamond}:=\sum_{j\in E_n^\diamond}\ell_j,\qquad v_{n,\mathrm{OL}}=\sum_{j=1}^n\ell_j,\qquad v_{n,\mathrm{NOL}}=n.
\end{equation*}
Here $B_j=\{x_{t_j},\ldots,x_j\}$ is a batch of iterates, $\ell_j=|B_j|$, and $I_j=\{t_j,\ldots,j\}$ is its index set, as in Lemma~\ref{lem:block-calculus}. Recall its mean $c_j$ from \eqref{eq:batch-average}, the exact vector $b_{j,n}=R_{\wt x_n}^{-1}(c_j)$, and its approximation $\tb_{j,n}$ from \eqref{eq:batch-vector-update}. Using the averages in \eqref{eq:tangent-averages}, extend the four operators in \eqref{eq:ol-comparison-operators} to either batch selection:
\begin{align*}
\hSigma_{n,\diamond}^{(0)} &:=(\Gamma_{x_\star}^{\wt x_n})^{-1}\circ\wt\Sigma_{n,\diamond}\circ\Gamma_{x_\star}^{\wt x_n} =\frac1{v_{n,\diamond}}\sum_{j\in E_n^\diamond}\ell_j^2 \bigl((\Gamma_{x_\star}^{\wt x_n})^{-1}\tb_{j,n}\bigr)^{\otimes2},\\
\hSigma_{n,\diamond}^{(1)} &:=\frac1{v_{n,\diamond}}\sum_{j\in E_n^\diamond}\ell_j^2 \bigl((\Gamma_{x_\star}^{\wt x_n})^{-1}b_{j,n}\bigr)^{\otimes2},\\
\hSigma_{n,\diamond}^{(2)} &:=\frac1{v_{n,\diamond}}\sum_{j\in E_n^\diamond}\ell_j^2 (\tDelta_j^B-\tDelta_n)^{\otimes2},\\
\hSigma_{n,\diamond}^{(3)} &:=\frac1{v_{n,\diamond}}\sum_{j\in E_n^\diamond} \left(\sum_{k=t_j}^j\Delta_k-\ell_j\barDelta_n\right)^{\otimes2} =\frac1{v_{n,\diamond}}\sum_{j\in E_n^\diamond}\ell_j^2 (\barDelta_j^B-\barDelta_n)^{\otimes2}.
\end{align*}
All four operators act on $\T_{x_\star}\mM$. Isometry of parallel transport identifies its inverse with its adjoint in the representation of $\hSigma_{n,\diamond}^{(0)}$. Hence
\begin{equation}\label{eq:D-ol-four-term-decomposition}
\mE\|\hSigma_{n,\diamond}^{(0)}-\Sigma\| \le\mE\|\hSigma_{n,\diamond}^{(0)}-\hSigma_{n,\diamond}^{(1)}\| +\mE\|\hSigma_{n,\diamond}^{(1)}-\hSigma_{n,\diamond}^{(2)}\| +\mE\|\hSigma_{n,\diamond}^{(2)}-\hSigma_{n,\diamond}^{(3)}\| +\mE\|\hSigma_{n,\diamond}^{(3)}-\Sigma\|.
\end{equation}
The first three terms correspond to online vector updates, changes of base point, and averaging on the manifold. The last is the batch-means error in the fixed tangent space. We first bound that last term in Section~\ref{app:D-batch-means}, obtaining $\mE\|\hSigma_{n,\diamond}^{(3)}-\Sigma\| \le C\bigl(n^{-1/(2\beta)}+n^{(\alpha-1)/2+1/(2\beta)}\bigr).$ Section~\ref{app:D-geometry} then bounds each of the first three terms by $Cn^{1/2-1/(2\beta)-\alpha}$. We combine these bounds at the end of the appendix.

\subsection{Batch-means error in the fixed tangent space}\label{app:D-batch-means}
We first bound the noise-based estimator, then compare it with the linear and RSGD estimators. These bounds also control the expected operator norms needed below. Set $\zeta_k:=-\grad f(x_\star,\xi_k)$, $k\ge1$; these vectors are independent and identically distributed by Section~\ref{sec:2.2}. Recall that
\begin{equation*}
A=\Hess F(x_\star), \qquad S=\mE_{\xi\sim\mathcal D} [\grad f(x_\star,\xi)^{\otimes2}], \qquad \Sigma=A^{-1}SA^{-1}=\mE[(A^{-1}\zeta_1)^{\otimes2}].
\end{equation*}
Let $(U_k)$ be the coupled linear recursion in \eqref{eq:C-linear-recursion}, and set $\bar U_n:=n^{-1}\sum_{k=1}^nU_k$. For every $n\ge1$, define
\begin{align}\label{eq:D-linear-estimators}
\hSigma_{n,\diamond}^{(3,1)} &:= \frac1{v_{n,\diamond}}\sum_{j\in E_n^{\diamond}} \left(\sum_{k=t_j}^{j}U_k-\ell_j\bar U_n\right)^{\otimes2},\notag\\
\hSigma_{n,\diamond}^{(3,2)} &:= \frac1{v_{n,\diamond}}\sum_{j\in E_n^{\diamond}} \left(\sum_{k=t_j}^{j}A^{-1}\zeta_k -\frac{\ell_j}{n}\sum_{k=1}^{n}A^{-1}\zeta_k\right)^{\otimes2}.
\end{align}
Thus, $\hSigma_{n,\diamond}^{(3,1)}$ replaces $\Delta_k$ in $\hSigma_{n,\diamond}^{(3)}$ by $U_k$, whereas $\hSigma_{n,\diamond}^{(3,2)}$ replaces $U_k$ by $A^{-1}\zeta_k$.

\begin{lemma}
\label{lem:D-ol-ideal}
Let $\alpha\in(1/2,1)$, and let $a_1=1$ and $a_m=\lfloor cm^\beta\rfloor$ for $m\ge2$, where $c\ge1$ and $\beta>1/(1-\alpha)$. Then, under Assumptions~\ref{ass:objective} and~\ref{ass:noise}, there exists $C<\infty$ such that the noise-based covariance estimator defined in \eqref{eq:D-linear-estimators} satisfies, for every $n\ge1$ and $\diamond\in\{\mathrm{OL},\mathrm{NOL}\}$,
\begin{equation*}
\mE\|\hSigma_{n,\diamond}^{(3,2)}-\Sigma\| \le Cn^{-1/(2\beta)}.
\end{equation*}
\end{lemma}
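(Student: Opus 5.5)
We prove the bound by reducing it to an i.i.d.\ batch-means computation. Set $w_k:=A^{-1}\zeta_k$. These are i.i.d.\ with mean zero, $\mE w_k^{\otimes2}=\Sigma$ and $\mE\|w_k\|^4\le\|A^{-1}\|^4\tau^4$. Write $W_j:=\sum_{k=t_j}^jw_k$ and $\bar W_n:=n^{-1}\sum_{k=1}^nw_k$. Then
\begin{equation*}
\hSigma_{n,\diamond}^{(3,2)}=\frac1{v_{n,\diamond}}\sum_{j\in E_n^\diamond}(W_j-\ell_j\bar W_n)^{\otimes2}.
\end{equation*}
We apply Lemma~\ref{lem:quadratic-comparison} with $\sigma_j=W_j$, $e_j=-\ell_j\bar W_n$, $a_j=1$ and $D=v_{n,\diamond}$. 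The mean-correction term gives
\begin{equation*}
\delta=v_{n,\diamond}^{-1}\sum_{j\in E_n^\diamond}\ell_j^2\,\mE\|\bar W_n\|^2\lesssim n^{-1}\cdot n^{1-1/\beta}=n^{-1/\beta}
\end{equation*}
by Lemma~\ref{lem:block-calculus}(i). So $\delta^{1/2}\lesssim n^{-1/(2\beta)}$. Once $\mE\|Q_0\|$ is bounded, the whole reduction costs $O(n^{-1/(2\beta)})$. It therefore suffices to bound $\mE\|Q_0-\Sigma\|$, where $Q_0:=v_{n,\diamond}^{-1}\sum_{j\in E_n^\diamond}W_j^{\otimes2}$. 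This also gives $\mE\|Q_0\|\le\|\Sigma\|+o(1)$.

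We split $Q_0-\Sigma$ into a bias part and a fluctuation part. For the bias, $\mE W_j^{\otimes2}=\ell_j\Sigma$, so $\mE Q_0=\Sigma$ exactly, because $\sum_{j\in E_n^\diamond}\ell_j=v_{n,\diamond}$. Hence $\mE\|Q_0-\Sigma\|\le(\mE\|Q_0-\Sigma\|_{\mathrm F}^2)^{1/2}$, and it remains to bound the variance of each entry, $\operatorname{Var}\bigl(\langle u,Q_0v\rangle\bigr)$ for basis vectors $u,v$.

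\emph{NOL case.} The batches are disjoint, so the summands $W_j^{\otimes2}$ are independent. Each has variance at most $C\ell_j^2$, using the fourth moment of a sum of i.i.d.\ terms, $\mE\|W_j\|^4\lesssim\ell_j^2$. The variance is then at most $n^{-2}\sum_j\ell_j^2\lesssim n^{-1}\max_j\ell_j\lesssim n^{-1/\beta}$.

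\emph{OL case.} Group the batches by block $m$. The batches in block $m$ have partial sums $S_{m,r}=\sum_{i=1}^rw_{a_m+i-1}$ for $r=1,\dots,L_m$. Blocks use disjoint noise, so they are independent and their centered contributions $\sum_{r\le L_m}(S_{m,r}^{\otimes2}-r\Sigma)$ are uncorrelated across blocks. Within a block, Minkowski's inequality gives a second moment of order $(\sum_{r\le L_m}r)^2\lesssim L_m^4$. Summing over blocks,
\begin{equation*}
\mE\|Q_0-\Sigma\|_{\mathrm F}^2\lesssim v_{n,\mathrm{OL}}^{-2}\sum_{m=1}^{m_n}L_m^4\lesssim n^{-1/\beta}
\end{equation*}
by Lemma~\ref{lem:block-calculus}(ii). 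This is exactly the rate required.

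Taking square roots and combining with the comparison step gives $Cn^{-1/(2\beta)}$ for both $\diamond\in\{\mathrm{OL},\mathrm{NOL}\}$. The main obstacle is the OL fluctuation bound. The overlapping batches within a block are strongly dependent, and the proof must not lose a factor beyond $L_m^4$. The crude Minkowski bound is enough for this, so no finer martingale decomposition should be needed. The remaining care is to verify that the incomplete final block is handled by the upper bound $L_m\le C_gm^{\beta-1}$.
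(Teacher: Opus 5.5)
Your proposal is correct and follows essentially the same route as the paper. Both arguments bound the uncentered estimator through its Frobenius second moment: for NOL via independence of the disjoint batches, and for OL via independence across blocks, an $O(L_m^4)$ within-block bound and Lemma~\ref{lem:block-calculus}(ii). Both then restore the centering with Lemma~\ref{lem:quadratic-comparison} and $\delta\lesssim n^{-1/\beta}$. The one difference is inside an OL block: the paper computes the cross terms exactly through a conditional-expectation identity, which gives $\sum_{r,s}\min(r,s)^2$, whereas your Minkowski bound $(\sum_r r)^2$ is cruder only by a constant factor and therefore suffices.
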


\begin{proof}
By Assumptions~\ref{ass:objective} and~\ref{ass:noise}, $\mE(A^{-1}\zeta_k)=0$, $\mE(A^{-1}\zeta_k)^{\otimes2}=\Sigma$, and $\mE\|A^{-1}\zeta_k\|^4\le\mu^{-4}\tau^4$, using $\|A^{-1}\|\le\mu^{-1}$. First omit the common centering term:
\begin{equation*}
\hSigma_{n,\diamond}^{(3,2,0)}:=v_{n,\diamond}^{-1}\sum_{j\in E_n^\diamond}\left(\sum_{k=t_j}^jA^{-1}\zeta_k\right)^{\otimes2}.
\end{equation*}
Independence and centering imply $\mE (A^{-1}\zeta_k)(A^{-1}\zeta_r)^\top=0$ for $k\ne r$, so
\begin{equation*}
\mE \left(\sum_{k=t_j}^jA^{-1}\zeta_k\right)^{\otimes2}=\sum_{k,r=t_j}^j\mE((A^{-1}\zeta_k)(A^{-1}\zeta_r)^\top)=\ell_j\Sigma.
\end{equation*}
Since $\sum_{j\in E_n^\diamond}\ell_j=v_{n,\diamond}$, the uncentered estimator has expectation $\Sigma$. In the fourth-moment expansion, independence and centering remove every term with an index appearing only once. For $k\ne r$, independence yields
\begin{align*}
\mE(\|A^{-1}\zeta_k\|^2\|A^{-1}\zeta_r\|^2)&=(\mE\|A^{-1}\zeta_k\|^2)(\mE\|A^{-1}\zeta_r\|^2)=(\tr\Sigma)^2,\\
\mE\langle A^{-1}\zeta_k,A^{-1}\zeta_r\rangle^2&=\tr\bigl(\mE[(A^{-1}\zeta_k)(A^{-1}\zeta_k)^\top(A^{-1}\zeta_r)(A^{-1}\zeta_r)^\top]\bigr) =\tr(\Sigma^2).
\end{align*}
The remaining terms in the expansion therefore yield
\begin{align}
\mE\left\|\sum_{k=t_j}^jA^{-1}\zeta_k\right\|^4&=\sum_{k=t_j}^j\mE\|A^{-1}\zeta_k\|^4+2\sum_{t_j\le k<r\le j}\mE(\|A^{-1}\zeta_k\|^2\|A^{-1}\zeta_r\|^2)\notag\\
&\quad+4\sum_{t_j\le k<r\le j}\mE\langle A^{-1}\zeta_k,A^{-1}\zeta_r\rangle^2\notag\\
&=\ell_j\mE\|A^{-1}\zeta_1\|^4+\ell_j(\ell_j-1)\bigl((\tr\Sigma)^2+2\tr(\Sigma^2)\bigr) \le C\ell_j^2.\label{eq:D-noise-sum-fourth-moment}
\end{align}
The inequality uses the fourth-moment bound and $\ell_j\ge1$. Since $\|M\|_{\mathrm F}^2=\tr(M^\top M)$ and $\|vv^\top\|_{\mathrm F}^2=\|v\|^4$, the second-moment identity implies
\begin{align}
\mE\left\|\left(\left(\sum_{k=t_j}^jA^{-1}\zeta_k\right)^{\otimes2}-\ell_j\Sigma\right)\right\|_{\mathrm F}^2&=\mE\left\|\sum_{k=t_j}^jA^{-1}\zeta_k\right\|^4-2\ell_j\tr\left[\mE\left(\sum_{k=t_j}^jA^{-1}\zeta_k\right)^{\otimes2}\Sigma\right]+\ell_j^2\|\Sigma\|_{\mathrm F}^2\notag\\
&=\mE\left\|\left(\sum_{k=t_j}^jA^{-1}\zeta_k\right)\right\|^4-\ell_j^2\|\Sigma\|_{\mathrm F}^2\;\stackrel{\mathclap{\eqref{eq:D-noise-sum-fourth-moment}}}{\le}\; C\ell_j^2.\label{eq:D-fourth-moment-bound}
\end{align}
We now bound the cross terms separately for OL and NOL.

\emph{OL.} If $I_j$ and $I_q$ lie in different blocks, independence and centering imply
\begin{align*}
&\mE\tr\left[\left(\left(\sum_{k=t_j}^jA^{-1}\zeta_k\right)^{\otimes2}-\ell_j\Sigma\right)\left(\left(\sum_{k=t_q}^qA^{-1}\zeta_k\right)^{\otimes2}-\ell_q\Sigma\right)\right]\\
&\quad=\tr\left[\mE\left(\left(\sum_{k=t_j}^jA^{-1}\zeta_k\right)^{\otimes2}-\ell_j\Sigma\right)\,\mE\left(\left(\sum_{k=t_q}^qA^{-1}\zeta_k\right)^{\otimes2}-\ell_q\Sigma\right)\right]=0.
\end{align*}
Within one block, let $j\le q$. Since $t_j=t_q$, we have
\begin{equation*}
\left(\sum_{k=t_q}^qA^{-1}\zeta_k\right)=\left(\sum_{k=t_j}^jA^{-1}\zeta_k\right)+\sum_{k=j+1}^q(A^{-1}\zeta_k),\qquad \ell_q-\ell_j=q-j.
\end{equation*}
The first term is $\mF_j$-measurable, while the remaining observations are independent of $\mF_j$. Thus
\begin{equation*}
\mE\left[\sum_{k=j+1}^q(A^{-1}\zeta_k)\mid\mF_j\right]=0,\qquad \mE\left[\left(\sum_{k=j+1}^q(A^{-1}\zeta_k)\right)^{\otimes2}\mid\mF_j\right]=(\ell_q-\ell_j)\Sigma.
\end{equation*}
Expanding the outer product and conditioning on $\mF_j$, we obtain
\begin{align*}
\mE\left[\left(\left(\sum_{k=t_q}^qA^{-1}\zeta_k\right)^{\otimes2}-\ell_q\Sigma\right)\mid\mF_j\right]&=\left(\sum_{k=t_j}^jA^{-1}\zeta_k\right)^{\otimes2}+\left(\sum_{k=t_j}^jA^{-1}\zeta_k\right)\mE\left[\sum_{k=j+1}^qA^{-1}\zeta_k\mid\mF_j\right]^\top\\
&\quad+\mE\left[\sum_{k=j+1}^qA^{-1}\zeta_k\mid\mF_j\right]\left(\sum_{k=t_j}^jA^{-1}\zeta_k\right)^\top\\
&\quad+\mE\left[\left(\sum_{k=j+1}^qA^{-1}\zeta_k\right)^{\otimes2}\mid\mF_j\right]-\ell_q\Sigma\\
&=\left(\sum_{k=t_j}^jA^{-1}\zeta_k\right)^{\otimes2}+(\ell_q-\ell_j)\Sigma-\ell_q\Sigma\\
&=\left(\left(\sum_{k=t_j}^jA^{-1}\zeta_k\right)^{\otimes2}-\ell_j\Sigma\right).
\end{align*}
Since the centered outer product for batch $j$ is $\mF_j$-measurable, the tower property yields
\begin{align}
&\mE\tr\left[\left(\left(\sum_{k=t_j}^jA^{-1}\zeta_k\right)^{\otimes2}-\ell_j\Sigma\right)\left(\left(\sum_{k=t_q}^qA^{-1}\zeta_k\right)^{\otimes2}-\ell_q\Sigma\right)\right]\notag\\
&\quad=\mE\left(\mE\left[\tr\left[\left(\left(\sum_{k=t_j}^jA^{-1}\zeta_k\right)^{\otimes2}-\ell_j\Sigma\right)\left(\left(\sum_{k=t_q}^qA^{-1}\zeta_k\right)^{\otimes2}-\ell_q\Sigma\right)\right]\mid\mF_j\right]\right)\notag\\
&\quad=\mE\tr\left[\left(\left(\sum_{k=t_j}^jA^{-1}\zeta_k\right)^{\otimes2}-\ell_j\Sigma\right)\mE\left[\left(\left(\sum_{k=t_q}^qA^{-1}\zeta_k\right)^{\otimes2}-\ell_q\Sigma\right)\mid\mF_j\right]\right]\notag\\
&\quad=\mE\tr\left[\left(\left(\sum_{k=t_j}^jA^{-1}\zeta_k\right)^{\otimes2}-\ell_j\Sigma\right)^2\right] =\mE\left\|\left(\left(\sum_{k=t_j}^jA^{-1}\zeta_k\right)^{\otimes2}-\ell_j\Sigma\right)\right\|_{\mathrm F}^2.\label{eq:D-ol-overlap-identity}
\end{align}
Cauchy--Schwarz and \eqref{eq:D-fourth-moment-bound} ensure integrability. The identities also hold for $j=q$, with empty sums. Recall the block count $m_n$ and lengths $L_m=\min\{a_{m+1}-1,n\}-a_m+1$ from Lemma~\ref{lem:block-calculus}. Summing over batch lengths $r,s$ within each block yields
\begin{align*}
\mE\|\hSigma_{n,\mathrm{OL}}^{(3,2,0)}-\Sigma\|_{\mathrm F}^2&=\frac1{v_{n,\mathrm{OL}}^2}\sum_{j,q\in E_n^{\mathrm{OL}}}\mE\tr\left[\left(\left(\sum_{k=t_j}^jA^{-1}\zeta_k\right)^{\otimes2}-\ell_j\Sigma\right)\left(\left(\sum_{k=t_q}^qA^{-1}\zeta_k\right)^{\otimes2}-\ell_q\Sigma\right)\right]\\
&\le\frac{C}{v_{n,\mathrm{OL}}^2}\sum_{m=1}^{m_n}\sum_{r,s=1}^{L_m}\min(r,s)^2 \le\frac{C}{v_{n,\mathrm{OL}}^2}\sum_{m=1}^{m_n}L_m^4.
\end{align*}
The first inequality combines \eqref{eq:D-ol-overlap-identity} and \eqref{eq:D-fourth-moment-bound} within blocks, exchanging $j,q$ if needed; cross terms between blocks are zero. The second uses $\min(r,s)^2\le L_m^2$.

\emph{NOL.} In contrast, the sets $I_j$, $j\in E_n^{\mathrm{NOL}}$, are disjoint, so their noise sums are independent. For distinct selected indices $j,q$,
\begin{align*}
&\mE\tr\left[\left(\left(\sum_{k=t_j}^jA^{-1}\zeta_k\right)^{\otimes2}-\ell_j\Sigma\right)\left(\left(\sum_{k=t_q}^qA^{-1}\zeta_k\right)^{\otimes2}-\ell_q\Sigma\right)\right]\\
&\quad=\tr\left[(\mE\left(\left(\sum_{k=t_j}^jA^{-1}\zeta_k\right)^{\otimes2}-\ell_j\Sigma\right))(\mE\left(\left(\sum_{k=t_q}^qA^{-1}\zeta_k\right)^{\otimes2}-\ell_q\Sigma\right))\right]=0.
\end{align*}
Consequently, we have
\begin{align*}
\mE\|\hSigma_{n,\mathrm{NOL}}^{(3,2,0)}-\Sigma\|_{\mathrm F}^2&=\frac1{n^2}\sum_{j,q\in E_n^{\mathrm{NOL}}}\mE\tr\left[\left(\left(\sum_{k=t_j}^jA^{-1}\zeta_k\right)^{\otimes2}-\ell_j\Sigma\right)\left(\left(\sum_{k=t_q}^qA^{-1}\zeta_k\right)^{\otimes2}-\ell_q\Sigma\right)\right]\\
&=\frac1{n^2}\sum_{j\in E_n^{\mathrm{NOL}}}\mE\left\|\left(\left(\sum_{k=t_j}^jA^{-1}\zeta_k\right)^{\otimes2}-\ell_j\Sigma\right)\right\|_{\mathrm F}^2\\
\;&\stackrel{\mathclap{\eqref{eq:D-fourth-moment-bound}}}{\le}\;\frac{C}{n^2}\sum_{j\in E_n^{\mathrm{NOL}}}\ell_j^2.
\end{align*}
Lemma~\ref{lem:block-calculus}(ii) for OL and (i) for NOL now imply
\begin{equation*}
\mE\|\hSigma_{n,\diamond}^{(3,2,0)}-\Sigma\|_{\mathrm F}^2\le Cn^{-1/\beta},\qquad \diamond\in\{\mathrm{OL},\mathrm{NOL}\}.
\end{equation*}

For either choice of $\diamond$, Cauchy--Schwarz and $\|\cdot\|\le\|\cdot\|_{\mathrm F}$ now yield
\begin{equation}\label{eq:D-ol-uncentered-error}
\mE\|\hSigma_{n,\diamond}^{(3,2,0)}-\Sigma\| \le\bigl(\mE\|\hSigma_{n,\diamond}^{(3,2,0)}-\Sigma\|_{\mathrm F}^2\bigr)^{1/2} \le Cn^{-1/(2\beta)}.
\end{equation}
In particular, $\sup_{n,\diamond}\mE\|\hSigma_{n,\diamond}^{(3,2,0)}\|<\infty$. To restore the centering term, independence and centering imply
\begin{equation*}
\mE\left\|\frac1n\sum_{k=1}^nA^{-1}\zeta_k\right\|^2 =\frac1{n^2}\sum_{k,r=1}^n\mE\langle A^{-1}\zeta_k,A^{-1}\zeta_r\rangle =\frac1{n^2}\sum_{k=1}^n\mE\|A^{-1}\zeta_k\|^2=\frac{\tr(\Sigma)}n.
\end{equation*}
Lemma~\ref{lem:block-calculus}(i) then implies
\begin{equation}\label{eq:D-ol-centering-error}
\frac1{v_{n,\diamond}}\sum_{j\in E_n^\diamond}\ell_j^2\mE\left\|\left(\frac1n\sum_{r=1}^nA^{-1}\zeta_r\right)\right\|^2 =\frac{\tr(\Sigma)}{nv_{n,\diamond}}\sum_{j\in E_n^\diamond}\ell_j^2 \le Cn^{-1/\beta}.
\end{equation}
The centering changes the estimator by
\begin{align*}
\hSigma_{n,\diamond}^{(3,2)}-\hSigma_{n,\diamond}^{(3,2,0)}&=\frac1{v_{n,\diamond}}\sum_{j\in E_n^\diamond}\ell_j^2\left(\frac1n\sum_{r=1}^nA^{-1}\zeta_r\right)^{\otimes2} -\frac1{v_{n,\diamond}}\sum_{j\in E_n^\diamond}\ell_j\left(\sum_{k=t_j}^jA^{-1}\zeta_k\right)\left(\frac1n\sum_{r=1}^nA^{-1}\zeta_r\right)^\top\\
&\quad-\frac1{v_{n,\diamond}}\sum_{j\in E_n^\diamond}\ell_j\left(\frac1n\sum_{r=1}^nA^{-1}\zeta_r\right)\left(\sum_{k=t_j}^jA^{-1}\zeta_k\right)^\top.
\end{align*}
Independence and centering imply
\begin{equation*}
\mE\left[\left(\sum_{k=t_j}^jA^{-1}\zeta_k\right)\left(\frac1n\sum_{r=1}^nA^{-1}\zeta_r\right)^\top\right] =\frac1n\sum_{k=t_j}^j\sum_{r=1}^n\mE[(A^{-1}\zeta_k)(A^{-1}\zeta_r)^\top] =\frac1n\sum_{k=t_j}^j\Sigma=\frac{\ell_j}{n}\Sigma.
\end{equation*}
These mixed terms need not vanish. Apply Lemma~\ref{lem:quadratic-comparison} over $E_n^\diamond$, with $D=v_{n,\diamond}$, $a_j=1$, $\sigma_j=\sum_{k=t_j}^jA^{-1}\zeta_k$, and $e_j=-\ell_j n^{-1}\sum_{r=1}^nA^{-1}\zeta_r$. By \eqref{eq:D-ol-centering-error} and the bounded expected norm following~\eqref{eq:D-ol-uncentered-error},
\begin{equation*}
\mE\|\hSigma_{n,\diamond}^{(3,2)}-\hSigma_{n,\diamond}^{(3,2,0)}\| \le C\bigl(n^{-1/(2\beta)}+n^{-1/\beta}\bigr) \le 2Cn^{-1/(2\beta)}.
\end{equation*}
Since $n^{-1/\beta}\le n^{-1/(2\beta)}$, combining this estimate with \eqref{eq:D-ol-uncentered-error} proves the result.
\end{proof}

\begin{lemma}
\label{lem:D-ol-linear-to-ideal}
Under the assumptions of Lemma~\ref{lem:D-ol-ideal}, suppose that Assumptions~\ref{ass:geometry} and~\ref{ass:raw-moments} also hold, and let $\gamma_k=\gamma k^{-\alpha}$ with $\gamma>0$. Then there exists $C<\infty$ such that the covariance estimators based on $U_k$ and $A^{-1}\zeta_k$ in \eqref{eq:D-linear-estimators} satisfy, for every $n\ge1$ and $\diamond\in\{\mathrm{OL},\mathrm{NOL}\}$,
\begin{equation*}
\mE\|\hSigma_{n,\diamond}^{(3,1)} -\hSigma_{n,\diamond}^{(3,2)}\| \le Cn^{(\alpha-1)/2+1/(2\beta)}.
\end{equation*}
\end{lemma}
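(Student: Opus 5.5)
The plan is to apply the quadratic comparison Lemma~\ref{lem:quadratic-comparison} with $D=v_{n,\diamond}$, $a_j=1$, and base vectors
\[
\sigma_j:=\sum_{k=t_j}^jA^{-1}\zeta_k-\frac{\ell_j}{n}\sum_{k=1}^nA^{-1}\zeta_k .
\]
The perturbations are $e_j$ equal to the difference between the $U$-based centered batch sum and $\sigma_j$. Lemma~\ref{lem:D-ol-ideal} gives $\mE\|\hSigma^{(3,2)}_{n,\diamond}\|\le \|\Sigma\|+Cn^{-1/(2\beta)}\le C$. It then suffices to show
\[
\delta:=v_{n,\diamond}^{-1}\sum_{j\in E_n^\diamond}\mE\|e_j\|^2\lesssim n^{\alpha-1+1/\beta}.
\]
Indeed, $2(\mE\|Q_0\|)^{1/2}\delta^{1/2}+\delta\lesssim n^{(\alpha-1)/2+1/(2\beta)}$, because $\beta>1/(1-\alpha)$ makes the exponent $\alpha-1+1/\beta$ negative, so $\delta\le\delta^{1/2}$ for large $n$.

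To bound $e_j$, use the summation-by-parts identity from Lemma~\ref{lem:linear-interval-remainder}. For any $1\le s\le t$ it gives $\sum_{k=s}^tU_k=A^{-1}\sum_{k=s}^t\zeta_k+r^U_{s,t}$, with $\mE\|r^U_{s,t}\|^2\le Ct^\alpha$. Apply it to the batch interval $[t_j,j]$ and to the whole interval $[1,n]$:
\[
e_j=r^U_{t_j,j}-\frac{\ell_j}{n}r^U_{1,n}.
\]
This yields
\[
\mE\|e_j\|^2\le 2Cj^\alpha+2C\frac{\ell_j^2}{n^2}n^\alpha .
\]
The linear recursion \eqref{eq:C-linear-recursion} satisfies the hypotheses of Lemma~\ref{lem:linear-recursion-moment}, with $\mE\|U_0\|^2\le D_\mX^2$ and $\mE\|\zeta_k\|^2\le\tau^2$ by \eqref{eq:C-zeta-properties}, so Lemma~\ref{lem:linear-interval-remainder} applies.

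Summing over $j\in E_n^\diamond$ and dividing by $v_{n,\diamond}$:
\begin{itemize}
\item For the first term, Lemma~\ref{lem:block-calculus}(iv) gives $v_{n,\diamond}^{-1}\sum_j j^\alpha\lesssim n^{\alpha-1+1/\beta}$.
\item For the second term, Lemma~\ref{lem:block-calculus}(i) gives $n^{\alpha-2}\,v_{n,\diamond}^{-1}\sum_j\ell_j^2\lesssim n^{\alpha-1-1/\beta}$, which is of smaller order.
\end{itemize}
Hence $\delta\lesssim n^{\alpha-1+1/\beta}$, and the claim follows.

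The main point to check is bookkeeping. The $U$-based estimator in \eqref{eq:D-linear-estimators} uses $\bar U_n=n^{-1}\sum_{k=1}^nU_k$, so the global remainder is $r^U_{1,n}$, which involves $U_0$. This is harmless because Lemma~\ref{lem:linear-recursion-moment} covers $k=0$, and the proof of Lemma~\ref{lem:linear-interval-remainder} already accommodates it. No cancellation across batches is needed: a crude per-batch second-moment bound suffices, since the $j^\alpha$ remainder averaged over selected batches is exactly what Lemma~\ref{lem:block-calculus}(iv) controls.
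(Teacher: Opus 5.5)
Your proposal is correct and follows the paper's proof step for step. You identify $e_j=r^U_{t_j,j}-\frac{\ell_j}{n}r^U_{1,n}$ via the summation-by-parts identity from Lemma~\ref{lem:linear-interval-remainder}, bound $\mE\|e_j\|^2\lesssim j^\alpha+\ell_j^2n^{\alpha-2}$, use Lemma~\ref{lem:block-calculus}(iv) and (i) to get $\delta\lesssim n^{\alpha-1+1/\beta}$, and close with Lemma~\ref{lem:quadratic-comparison} using the bound on $\mE\|\hSigma^{(3,2)}_{n,\diamond}\|$ from Lemma~\ref{lem:D-ol-ideal} (and $\delta\le\delta^{1/2}$ in fact holds for every $n\ge1$, since the exponent $\alpha-1+1/\beta$ is negative).
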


\begin{proof}
Lemma~\ref{lem:linear-interval-remainder} supplies $\mE\|r^U_{s,t}\|^2\le Ct^\alpha$, and its proof shows $r^U_{s,t}=\sum_{k=s}^t(U_k-A^{-1}\zeta_k)$. The centered batch sums differ by
\begin{multline*}
\left(\sum_{k=t_j}^j U_k-\ell_j\bar U_n\right)
-\left(\sum_{k=t_j}^j A^{-1}\zeta_k-\frac{\ell_j}{n}\sum_{k=1}^n A^{-1}\zeta_k\right)\\
=\sum_{k=t_j}^j(U_k-A^{-1}\zeta_k)-\frac{\ell_j}{n}\sum_{k=1}^n(U_k-A^{-1}\zeta_k) =r^U_{t_j,j}-\frac{\ell_j}{n}r^U_{1,n}.
\end{multline*}
For either batch rule, the remainder bound yields
\begin{equation*}
\frac1{v_{n,\diamond}}\sum_{j\in E_n^{\diamond}}\mE\left\|r^U_{t_j,j}-\frac{\ell_j}{n}r^U_{1,n}\right\|^2\le\frac{C}{v_{n,\diamond}}\sum_{j\in E_n^{\diamond}}j^\alpha +\frac{Cn^\alpha}{n^2v_{n,\diamond}}\sum_{j\in E_n^{\diamond}}\ell_j^2.
\end{equation*}
Applying Lemma~\ref{lem:block-calculus}(iv) to the first sum and (i) to the second yields
\begin{equation}\label{eq:D-ol-linear-remainder}
\frac1{v_{n,\diamond}}\sum_{j\in E_n^{\diamond}}\mE\left\|r^U_{t_j,j}-\frac{\ell_j}{n}r^U_{1,n}\right\|^2\le C\bigl(n^{\alpha-1+1/\beta}+n^{\alpha-1-1/\beta}\bigr)\le 2Cn^{\alpha-1+1/\beta}.
\end{equation}
The last step uses $\beta>0$. In Lemma~\ref{lem:quadratic-comparison}, take indices $E_n^\diamond$, $D=v_{n,\diamond}$, $a_j=1$, and
\begin{equation*}
\sigma_j:=\sum_{k=t_j}^jA^{-1}\zeta_k-\frac{\ell_j}{n}\sum_{k=1}^nA^{-1}\zeta_k, \qquad e_j:=r^U_{t_j,j}-\frac{\ell_j}{n}r^U_{1,n}.
\end{equation*}
Then $Q=\hSigma_{n,\diamond}^{(3,1)}$ and $Q_0=\hSigma_{n,\diamond}^{(3,2)}$, with $\sup_n\mE\|Q_0\|<\infty$ by Lemma~\ref{lem:D-ol-ideal}. Combining \eqref{eq:A-quadratic-comparison} and \eqref{eq:D-ol-linear-remainder} yields
\begin{equation*}
\mE\|\hSigma_{n,\diamond}^{(3,1)}-\hSigma_{n,\diamond}^{(3,2)}\| \le C\bigl(n^{(\alpha-1)/2+1/(2\beta)}+n^{\alpha-1+1/\beta}\bigr) \le 2C n^{(\alpha-1)/2+1/(2\beta)}.
\end{equation*}
The last step uses $\alpha-1+1/\beta<0$, proving the claim.
\end{proof}

\begin{lemma}
\label{lem:D-ol-perturbed-linear}
Under the assumptions of Lemma~\ref{lem:D-ol-linear-to-ideal}, there exists $C<\infty$ such that the covariance estimators based on $\Delta_k$ and its linear approximation $U_k$ satisfy, for every $n\ge1$ and $\diamond\in\{\mathrm{OL},\mathrm{NOL}\}$,
\begin{equation*}
\mE\|\hSigma_{n,\diamond}^{(3)} -\hSigma_{n,\diamond}^{(3,1)}\| \le Cn^{1/2-1/(2\beta)-\alpha}.
\end{equation*}
\end{lemma}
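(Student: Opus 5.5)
We apply Lemma~\ref{lem:quadratic-comparison} in exactly the same way as in the proof of Lemma~\ref{lem:D-ol-linear-to-ideal}. The index set is $E_n^\diamond$, with $D=v_{n,\diamond}$ and $a_j=1$. The unperturbed vectors are
\begin{equation*}
\sigma_j:=\sum_{k=t_j}^jU_k-\ell_j\bar U_n,
\end{equation*}
and the perturbations are
\begin{equation*}
e_j:=\sum_{k=t_j}^j d_k-\frac{\ell_j}{n}\sum_{k=1}^n d_k,
\end{equation*}
where $d_k=\Delta_k-U_k$ is the coupling error from Lemma~\ref{lem:C-linear-coupling}. With these choices $Q=\hSigma_{n,\diamond}^{(3)}$ and $Q_0=\hSigma_{n,\diamond}^{(3,1)}$. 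The bound then follows from two ingredients: a uniform bound on $\mE\|Q_0\|$, and a bound on $\delta=v_{n,\diamond}^{-1}\sum_{j}\mE\|e_j\|^2$.

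For the first ingredient, Lemmas~\ref{lem:D-ol-ideal} and~\ref{lem:D-ol-linear-to-ideal} give $\mE\|\hSigma_{n,\diamond}^{(3,1)}-\Sigma\|\to0$. Hence $\sup_n\mE\|Q_0\|<\infty$.

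For the second ingredient, Minkowski's inequality and Lemma~\ref{lem:C-linear-coupling} give
\begin{equation*}
\Bigl(\mE\Bigl\|\sum_{k=t_j}^jd_k\Bigr\|^2\Bigr)^{1/2}\le C\sum_{k=t_j}^jk^{-\alpha}\le C\ell_j t_j^{-\alpha}.
\end{equation*}
The last step uses $k\ge t_j$. In the same way,
\begin{equation*}
\Bigl(\mE\Bigl\|\frac{\ell_j}{n}\sum_{k=1}^nd_k\Bigr\|^2\Bigr)^{1/2}\le C\frac{\ell_j}{n}\,n^{1-\alpha}=C\ell_jn^{-\alpha}\le C\ell_jt_j^{-\alpha},
\end{equation*}
since $t_j\le n$. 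Therefore
\begin{equation*}
\delta\le\frac{C}{v_{n,\diamond}}\sum_{j\in E_n^\diamond}\ell_j^2t_j^{-2\alpha}\le Cn^{1-1/\beta-2\alpha}
\end{equation*}
by Lemma~\ref{lem:block-calculus}(iii).

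Substituting into \eqref{eq:A-quadratic-comparison} gives
\begin{equation*}
\mE\|\hSigma_{n,\diamond}^{(3)}-\hSigma_{n,\diamond}^{(3,1)}\|\le C\bigl(n^{1/2-1/(2\beta)-\alpha}+n^{1-1/\beta-2\alpha}\bigr).
\end{equation*}
The second term is the square of the first. The first exponent $1/2-1/(2\beta)-\alpha$ is negative because $\alpha>1/2$, so the second term is dominated by the first. This yields the claimed rate.

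The main point to check is the crude Minkowski bound on the batch sums of $d_k$. It forgoes any martingale cancellation, but no cancellation is needed: Lemma~\ref{lem:block-calculus}(iii) was designed precisely for the weight $\ell_j^2t_j^{-2\alpha}$. The one subtlety is near the start of each block, where $t_j$ is small. Since $t_j=a_{m_j}\asymp j$ up to constants, $k\ge t_j$ already gives the factor $t_j^{-\alpha}$ uniformly over the block, so no issue arises. The same argument applies verbatim to OL and NOL, since only Lemma~\ref{lem:block-calculus}(iii) depends on the batch rule.
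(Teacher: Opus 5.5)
Your proposal is correct and is essentially the paper's proof. It uses the same application of Lemma~\ref{lem:quadratic-comparison} with $D=v_{n,\diamond}$, $a_j=1$, $\sigma_j=\sum_{k=t_j}^jU_k-\ell_j\bar U_n$, and $e_j=\sum_{k=t_j}^jd_k-\frac{\ell_j}{n}\sum_{i=1}^nd_i$, the same Minkowski bound via Lemma~\ref{lem:C-linear-coupling} giving $\mE\|e_j\|^2\lesssim\ell_j^2t_j^{-2\alpha}$, then Lemma~\ref{lem:block-calculus}(iii), and the same absorption of $n^{1-1/\beta-2\alpha}$ into $n^{1/2-1/(2\beta)-\alpha}$. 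Your closing remark about $t_j$ being small near the start of each block is slightly off, since $t_j$ is constant within a block and it is $\ell_j$ that is small there, but the argument never uses it.
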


\begin{proof}
Recall $d_k=\Delta_k-U_k$. Since $\barDelta_n-\bar U_n=n^{-1}\sum_{i=1}^nd_i$, Minkowski's inequality and Lemma~\ref{lem:C-linear-coupling} yield
\begin{equation*}
\left(\mE\left\|\sum_{k=t_j}^j d_k-\frac{\ell_j}{n}\sum_{i=1}^n d_i\right\|^2\right)^{1/2} \le C\sum_{k=t_j}^j k^{-\alpha} +\frac{C\ell_j}{n}\sum_{i=1}^n i^{-\alpha} \le C\left(1+\frac1{1-\alpha}\right)\ell_jt_j^{-\alpha}.
\end{equation*}
The last step uses $k\ge t_j$, $\sum_{i=1}^n i^{-\alpha}\le n^{1-\alpha}/(1-\alpha)$, and $t_j\le n$. Squaring and applying Lemma~\ref{lem:block-calculus}(iii) yields
\begin{equation}\label{eq:D-ol-coupling-remainder}
\frac1{v_{n,\diamond}}\sum_{j\in E_n^{\diamond}}\mE\left\|\sum_{k=t_j}^j d_k-\frac{\ell_j}{n}\sum_{i=1}^n d_i\right\|^2 \le C n^{1-1/\beta-2\alpha}.
\end{equation}
In Lemma~\ref{lem:quadratic-comparison}, take indices $E_n^\diamond$, $D=v_{n,\diamond}$, $a_j=1$, $\sigma_j=\sum_{k=t_j}^jU_k-\ell_j\bar U_n$, and $e_j=\sum_{k=t_j}^jd_k-\frac{\ell_j}{n}\sum_{i=1}^nd_i$. Then $Q=\hSigma_{n,\diamond}^{(3)}$ and $Q_0=\hSigma_{n,\diamond}^{(3,1)}$, with $\sup_n\mE\|Q_0\|<\infty$ by Lemmas~\ref{lem:D-ol-ideal} and~\ref{lem:D-ol-linear-to-ideal}. Combining \eqref{eq:A-quadratic-comparison} and \eqref{eq:D-ol-coupling-remainder} yields
\begin{equation*}
\mE\|\hSigma_{n,\diamond}^{(3)}-\hSigma_{n,\diamond}^{(3,1)}\| \le C\bigl(n^{1/2-1/(2\beta)-\alpha}+n^{1-1/\beta-2\alpha}\bigr) \le 2Cn^{1/2-1/(2\beta)-\alpha}.
\end{equation*}
The last step uses $1-1/\beta-2\alpha<0$, proving the claim.
\end{proof}

\begin{proposition}
\label{prop:ol-euclidean-bm}
Under the assumptions of Lemma~\ref{lem:D-ol-linear-to-ideal}, there exists $C<\infty$ such that, for every $n\ge1$ and $\diamond\in\{\mathrm{OL},\mathrm{NOL}\}$,
\begin{equation*}
\mE\|\hSigma_{n,\diamond}^{(3)}-\Sigma\| \le C \left(n^{-1/(2\beta)} +n^{(\alpha-1)/2+1/(2\beta)}\right).
\end{equation*}
\end{proposition}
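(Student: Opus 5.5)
The bound follows by chaining the three preceding lemmas through the triangle inequality. Write
\begin{equation*}
\hSigma_{n,\diamond}^{(3)}-\Sigma
=\bigl(\hSigma_{n,\diamond}^{(3)}-\hSigma_{n,\diamond}^{(3,1)}\bigr)
+\bigl(\hSigma_{n,\diamond}^{(3,1)}-\hSigma_{n,\diamond}^{(3,2)}\bigr)
+\bigl(\hSigma_{n,\diamond}^{(3,2)}-\Sigma\bigr).
\end{equation*}
Taking operator norms and expectations, we bound each term by an earlier result:
\begin{itemize}
\item the first term by Lemma~\ref{lem:D-ol-perturbed-linear}, giving $Cn^{1/2-1/(2\beta)-\alpha}$;
\item the second by Lemma~\ref{lem:D-ol-linear-to-ideal}, giving $Cn^{(\alpha-1)/2+1/(2\beta)}$;
\item the third by Lemma~\ref{lem:D-ol-ideal}, giving $Cn^{-1/(2\beta)}$.
\end{itemize}
All three lemmas hold under the hypotheses of Lemma~\ref{lem:D-ol-linear-to-ideal}, which are the ones assumed here. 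They also apply uniformly in $n\ge1$ and for both $\diamond\in\{\mathrm{OL},\mathrm{NOL}\}$, so no separate case analysis is needed.

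It remains to show that the first term is absorbed into the others. Its exponent satisfies
\begin{equation*}
\Bigl(\tfrac12-\tfrac1{2\beta}-\alpha\Bigr)-\Bigl(-\tfrac1{2\beta}\Bigr)=\tfrac12-\alpha<0,
\end{equation*}
since $\alpha>1/2$. Hence $n^{1/2-1/(2\beta)-\alpha}\le n^{-1/(2\beta)}$ for every $n\ge1$, and the first bound is dominated by the $n^{-1/(2\beta)}$ term. Enlarging $C$ then gives the stated estimate.

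There is no real obstacle in this step, since the substantive work is already contained in the three lemmas. The only thing to check is that the constants are uniform in $n$ and in $\diamond$, so that they can be combined into a single $C$. This holds because each lemma states its bound for every $n\ge1$ with a constant independent of $n$.
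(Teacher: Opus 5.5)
Your proposal is correct and matches the paper's proof: you apply the triangle inequality through $\hSigma_{n,\diamond}^{(3,1)}$ and $\hSigma_{n,\diamond}^{(3,2)}$, use the three preceding lemmas, and absorb $n^{1/2-1/(2\beta)-\alpha}=n^{-1/(2\beta)}n^{1/2-\alpha}$ into $n^{-1/(2\beta)}$ because $\alpha>1/2$. The paper also notes that both remaining exponents are negative, which gives $\sup_n\mE\|\hSigma_{n,\diamond}^{(3)}\|<\infty$ for later use; the statement itself does not need this.
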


\begin{proof}
Lemmas~\ref{lem:D-ol-ideal}, \ref{lem:D-ol-linear-to-ideal}, and~\ref{lem:D-ol-perturbed-linear} yield
\begin{equation*}
\mE\|\hSigma_{n,\diamond}^{(3)}-\Sigma\| \le C\bigl(n^{-1/(2\beta)}+n^{(\alpha-1)/2+1/(2\beta)}+n^{1/2-1/(2\beta)-\alpha}\bigr).
\end{equation*}
Since $\alpha>1/2$, the last term equals $n^{-1/(2\beta)}n^{1/2-\alpha}$ and can be absorbed into the first. Both remaining exponents are negative because $\beta>1/(1-\alpha)$, so $\sup_n\mE\|\hSigma_{n,\diamond}^{(3)}\|<\infty$. This proves the claim.
\end{proof}

\subsection{Geometric errors}\label{app:D-geometry}

Starting from Proposition~\ref{prop:ol-euclidean-bm}, we bound the first three terms of \eqref{eq:D-ol-four-term-decomposition} in reverse order: averaging, change of base point, and online vector updates. This establishes $\sup_n\mE\|Q_0\|<\infty$ before each use of Lemma~\ref{lem:quadratic-comparison}. All three applications take $D=v_{n,\diamond}$ and $a_j=\ell_j^2$ for $j\in E_n^\diamond$, with $Q_0$ formed from $\sigma_j$ and $Q$ from $\sigma_j+e_j$. If $\mE\|e_j\|^2\le Ct_j^{-2\alpha}$, then
\begin{equation*}
\delta=\frac1{v_{n,\diamond}}\sum_{j\in E_n^\diamond}\ell_j^2\mE\|e_j\|^2 \le\frac{C}{v_{n,\diamond}}\sum_{j\in E_n^\diamond}\ell_j^2t_j^{-2\alpha}.
\end{equation*}
Lemma~\ref{lem:block-calculus}(iii) therefore yields
\begin{equation}\label{eq:D-weighted-geometric-error}
\delta\le Cn^{1-1/\beta-2\alpha}.
\end{equation}
If $\sup_n\mE\|Q_0\|<\infty$, we obtain
\begin{equation*}
\mE\|Q-Q_0\|\;\stackrel{\mathclap{\eqref{eq:A-quadratic-comparison}}}{\le}\; C(\delta^{1/2}+\delta).
\end{equation*}
Since $1-1/\beta-2\alpha<0$, $n^{1-1/\beta-2\alpha}\le n^{1/2-1/(2\beta)-\alpha}$. Substituting the bound on $\delta$ and absorbing the smaller term yields
\begin{equation}\label{eq:D-geometric-comparison}
\mE\|Q-Q_0\|\;\,\stackrel{\mathclap{\eqref{eq:D-weighted-geometric-error}}}{\le}\;\, Cn^{1/2-1/(2\beta)-\alpha}.
\end{equation}
Each proof below identifies the reference and error vectors and verifies the required bound on $Q_0$.

\paragraph{Error from averaging on the manifold.}
We first bound the difference between recursive retraction averaging and arithmetic averaging in $\T_{x_\star}\mM$.

\begin{proposition}
\label{prop:ol-manifold-tangent-error}
Under the assumptions of Lemma~\ref{lem:D-ol-linear-to-ideal}, there exists $C<\infty$ such that, for every $n\ge1$ and $\diamond\in\{\mathrm{OL},\mathrm{NOL}\}$,
\begin{equation*}
\mE\|\hSigma_{n,\diamond}^{(2)} -\hSigma_{n,\diamond}^{(3)}\| \le Cn^{1/2-1/(2\beta)-\alpha}.
\end{equation*}
\end{proposition}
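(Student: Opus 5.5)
We apply the generic comparison \eqref{eq:D-geometric-comparison}, which follows from Lemma~\ref{lem:quadratic-comparison} and the block bound \eqref{eq:D-weighted-geometric-error}. The indices are $E_n^\diamond$, with $D=v_{n,\diamond}$ and weights $a_j=\ell_j^2$. The reference vectors are $\sigma_j:=\barDelta_j^B-\barDelta_n$, so that $Q_0=\hSigma_{n,\diamond}^{(3)}$. The perturbations are
\begin{equation*}
e_j:=(\tDelta_j^B-\barDelta_j^B)-(\tDelta_n-\barDelta_n),
\end{equation*}
so that $\sigma_j+e_j=\tDelta_j^B-\tDelta_n$ and $Q=\hSigma_{n,\diamond}^{(2)}$. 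The requirement $\sup_n\mE\|Q_0\|<\infty$ is already recorded at the end of the proof of Proposition~\ref{prop:ol-euclidean-bm}: both exponents there are negative, so $\mE\|\hSigma_{n,\diamond}^{(3)}\|\le\|\Sigma\|+C$. It therefore only remains to verify $\mE\|e_j\|^2\le Ct_j^{-2\alpha}$ uniformly over $j\in E_n^\diamond$ and $n\ge1$.

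This verification is where the earlier work enters, and the key input is the second-moment part of Lemma~\ref{lem:C-retraction-arithmetic-discrepancy}. That part needs Assumption~\ref{ass:raw-moments}(b), which is available because it is among the hypotheses of Lemma~\ref{lem:D-ol-linear-to-ideal}. Recall that $\tDelta_j^B=\tDelta_{t_j:j}$ and $\barDelta_j^B=\barDelta_{t_j:j}$, so the batch case of the lemma with $s=t_j$ gives
\begin{equation*}
\mE\|\tDelta_j^B-\barDelta_j^B\|^2\le C\gamma_{t_j}^2=C\gamma^2t_j^{-2\alpha}.
\end{equation*}
The global case gives $\mE\|\tDelta_n-\barDelta_n\|^2\le C\gamma_n^2=C\gamma^2n^{-2\alpha}$. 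Since $t_j\le j\le n$, we have $n^{-2\alpha}\le t_j^{-2\alpha}$. Then $(a+b)^2\le2a^2+2b^2$ yields $\mE\|e_j\|^2\le Ct_j^{-2\alpha}$ with a constant independent of $j$ and $n$.

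With this bound, \eqref{eq:D-weighted-geometric-error}, which rests on Lemma~\ref{lem:block-calculus}(iii), gives $\delta\le Cn^{1-1/\beta-2\alpha}$. Then \eqref{eq:D-geometric-comparison} gives
\begin{equation*}
\mE\|\hSigma_{n,\diamond}^{(2)}-\hSigma_{n,\diamond}^{(3)}\|\le Cn^{1/2-1/(2\beta)-\alpha}
\end{equation*}
for both $\diamond\in\{\mathrm{OL},\mathrm{NOL}\}$.

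The main obstacle is making sure the right rate is available. A per-batch second-moment bound of order $\gamma_{t_j}^2$, rather than $\gamma_{t_j}$, is essential here: a cruder bound of order $t_j^{-\alpha}$ would lose the $\alpha>1/2$ advantage over the Euclidean error. Lemma~\ref{lem:C-retraction-arithmetic-discrepancy} supplies exactly this quadratic rate, and it does so uniformly in $s\le j$. The step therefore reduces to careful bookkeeping: checking that the lemma is invoked with $s=t_j$ and with its constant independent of the batch.
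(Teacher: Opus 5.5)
Your proposal is correct and matches the paper's proof. The paper likewise takes $\sigma_j=\barDelta_j^B-\barDelta_n$ and $e_j=(\tDelta_j^B-\tDelta_n)-(\barDelta_j^B-\barDelta_n)$ in \eqref{eq:D-geometric-comparison}, bounds $\mE\|e_j\|^2\le C(\gamma_{t_j}^2+\gamma_n^2)\le Ct_j^{-2\alpha}$ using Lemma~\ref{lem:C-retraction-arithmetic-discrepancy} and $\gamma_n\le\gamma_{t_j}$, and obtains $\sup_n\mE\|Q_0\|<\infty$ from Proposition~\ref{prop:ol-euclidean-bm}.
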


\begin{proof}
For $j\in E_n^\diamond$, Lemma~\ref{lem:C-retraction-arithmetic-discrepancy} yields
\begin{equation*}
\mE\|(\tDelta_j^B-\tDelta_n)-(\barDelta_j^B-\barDelta_n)\|^2\le C(\gamma_{t_j}^2+\gamma_n^2)\le 2C\gamma^2t_j^{-2\alpha}.
\end{equation*}
The second inequality uses $\gamma_n\le\gamma_{t_j}=\gamma t_j^{-\alpha}$. In \eqref{eq:D-geometric-comparison}, take $\sigma_j=\barDelta_j^B-\barDelta_n$ and $e_j=(\tDelta_j^B-\tDelta_n)-(\barDelta_j^B-\barDelta_n)$, so $Q_0=\hSigma_{n,\diamond}^{(3)}$ and $Q=\hSigma_{n,\diamond}^{(2)}$. Proposition~\ref{prop:ol-euclidean-bm} implies $\sup_n\mE\|Q_0\|<\infty$, verifying the remaining condition and proving the claim.
\end{proof}

\paragraph{Error from changing the base point.}
We next compare $(\Gamma_{x_\star}^{\wt x_n})^{-1}b_{j,n}$ with $\tDelta_j^B-\tDelta_n$ for each batch and then sum the resulting bounds with weights $\ell_j^2/v_{n,\diamond}$.

\begin{lemma}
\label{lem:D-coordinate-endpoint}
Under the assumptions of Lemma~\ref{lem:C-rough-average-moments}, suppose additionally that Assumption~\ref{ass:raw-moments}(b) holds. For each batch $B_j=\{x_{t_j},\ldots,x_j\}$, let $c_j$ be its recursive mean from \eqref{eq:batch-average}. Recall that $b_{j,n}=R_{\wt x_n}^{-1}(c_j)$ and $\tDelta_j^B=R_{x_\star}^{-1}(c_j)$ represent this mean at $\wt x_n$ and $x_\star$, respectively. Then there exists $C>0$ such that, for every $n\ge1$ and $1\le j\le n$,
\begin{equation*}
\mE\left\|(\Gamma_{x_\star}^{\wt x_n})^{-1}b_{j,n} -(\tDelta_j^B-\tDelta_n)\right\|^2 \le Ct_j^{-2\alpha}.
\end{equation*}
\end{lemma}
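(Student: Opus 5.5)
Write $u:=\tDelta_n=R_{x_\star}^{-1}(\wt x_n)$ and $w:=\tDelta_j^B-\tDelta_n$, so that $u+w=R_{x_\star}^{-1}(c_j)$. Both $u$ and $u+w$ lie in $R_{x_\star}^{-1}(\mX)$, since $\wt x_n,c_j\in\mX$. By definition,
\begin{equation*}
b_{j,n}=R_{\wt x_n}^{-1}(c_j)=R_{R_{x_\star}(u)}^{-1}\bigl(R_{x_\star}(u+w)\bigr).
\end{equation*}
The plan is to split the target difference into a coordinate-change error and a transport-comparison error:
\begin{equation*}
(\Gamma_{x_\star}^{\wt x_n})^{-1}b_{j,n}-w=\Bigl([\Lambda_{x_\star}^{\wt x_n}]^{-1}b_{j,n}-w\Bigr)+\Bigl((\Gamma_{x_\star}^{\wt x_n})^{-1}-[\Lambda_{x_\star}^{\wt x_n}]^{-1}\Bigr)b_{j,n}.
\end{equation*}

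For the first term, apply Lemma~\ref{lem:taylor-coordinate-change} with $x=x_\star$. This gives the pointwise bound $C_\Psi\|u\|\|w\|$. For the second term, use \eqref{eq:B-inverse-transport-comparison}, which gives the bound $C_{\Gamma,\Lambda}\|u\|^2\|b_{j,n}\|_{\wt x_n}$. Since $\|b_{j,n}\|$ is uniformly bounded on the compact set, this is at most $C\|u\|^2$. Lemma~\ref{lem:distance-equivalence} also gives $\|b_{j,n}\|\le C(\|u\|+\|u+w\|)\le C(\|u\|+\|w\|)$, but the uniform bound is enough here. Combining the two bounds, the pointwise estimate is
\begin{equation*}
\bigl\|(\Gamma_{x_\star}^{\wt x_n})^{-1}b_{j,n}-(\tDelta_j^B-\tDelta_n)\bigr\|\le C\bigl(\|\tDelta_n\|\,\|\tDelta_j^B-\tDelta_n\|+\|\tDelta_n\|^2\bigr)\le C\bigl(\|\tDelta_n\|^2+\|\tDelta_n\|\,\|\tDelta_j^B\|\bigr).
\end{equation*}

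Square this and take expectations. The term $\mE\|\tDelta_n\|^4$ is at most $C\gamma_n^2$ by the fourth-moment part of Lemma~\ref{lem:C-rough-average-moments}. The cross term $\mE[\|\tDelta_n\|^2\|\tDelta_j^B\|^2]$ is handled by Cauchy--Schwarz as $(\mE\|\tDelta_n\|^4)^{1/2}(\mE\|\tDelta_{t_j:j}\|^4)^{1/2}\le C\gamma_n\gamma_{t_j}$, using the interval fourth-moment bound in the same lemma. Since $t_j\le j\le n$, we have $\gamma_n\le\gamma_{t_j}$, so both contributions are at most $C\gamma_{t_j}^2=C\gamma^2t_j^{-2\alpha}$, which is the claimed bound.

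The main point to check is that Lemma~\ref{lem:taylor-coordinate-change} really applies with base point $x_\star$ and produces the bilinear bound $\|u\|\|w\|$ rather than a cruder $O(\|w\|)$. A cruder bound would only give order $\gamma_{t_j}$ in second moment, not $\gamma_{t_j}^2$. The hypotheses of that lemma are exactly that $u,u+w\in R_{x_\star}^{-1}(\mX)$, which holds. The other thing to watch is the fourth-moment requirement: it is what makes the Cauchy--Schwarz step give a product of two $\gamma$'s, and Assumption~\ref{ass:raw-moments}(b) is assumed, so this goes through.
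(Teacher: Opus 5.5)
Your proof is correct and follows essentially the same route as the paper: the same split into a coordinate-change term (Lemma~\ref{lem:taylor-coordinate-change} at base point $x_\star$) and a transport-comparison term \eqref{eq:B-inverse-transport-comparison}, followed by Cauchy--Schwarz with the fourth-moment bounds of Lemma~\ref{lem:C-rough-average-moments}. The one difference is that the paper bounds $\|b_{j,n}\|_{\wt x_n}\le C\|\tDelta_j^B-\tDelta_n\|$ via \eqref{eq:C-inverse-coordinate-distance}, so its pointwise error is $C\|\tDelta_n\|\,\|\tDelta_j^B-\tDelta_n\|$ and it arrives at $C\gamma_n\gamma_{t_j}$, whereas your uniform bound on $\|b_{j,n}\|_{\wt x_n}$ adds an extra term of order $\gamma_n^2\le\gamma_{t_j}^2$, which is harmless.
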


\begin{proof}
Fix $1\le j\le n$. We have
\begin{equation*}
(\Gamma_{x_\star}^{\wt x_n})^{-1}b_{j,n}-(\tDelta_j^B-\tDelta_n) =\bigl([\Lambda_{x_\star}^{\wt x_n}]^{-1}b_{j,n}-(\tDelta_j^B-\tDelta_n)\bigr) +\bigl((\Gamma_{x_\star}^{\wt x_n})^{-1}-[\Lambda_{x_\star}^{\wt x_n}]^{-1}\bigr)b_{j,n}.
\end{equation*}
Apply \eqref{eq:B-coordinate-change} with $x=x_\star$, $u=\tDelta_n$, and $w=\tDelta_j^B-\tDelta_n$ to the first term, and \eqref{eq:B-inverse-transport-comparison} to the second. The domain conditions hold since $u$ and $u+w$ represent points in $\mX$. Thus
\begin{equation}\label{eq:D-coordinate-pointwise-error}
\|(\Gamma_{x_\star}^{\wt x_n})^{-1}b_{j,n}-(\tDelta_j^B-\tDelta_n)\|\le C_\Psi\|\tDelta_n\|\|\tDelta_j^B-\tDelta_n\| +C_{\Gamma,\Lambda}\|\tDelta_n\|^2\|b_{j,n}\|_{\wt x_n}.
\end{equation}
Taking $x=\wt x_n$ and $z=c_j$, we obtain
\begin{equation*}
\|b_{j,n}\|_{\wt x_n}\;\stackrel{\mathclap{\eqref{eq:C-inverse-coordinate-distance}}}{\le}\;C\|\tDelta_j^B-\tDelta_n\|.
\end{equation*}
Combining this with \eqref{eq:D-coordinate-pointwise-error} and $\|\tDelta_n\|\le D_\mX$ bounds the pointwise error by $C\|\tDelta_n\|\|\tDelta_j^B-\tDelta_n\|$. Cauchy--Schwarz then yields
\begin{equation*}
\mE\|(\Gamma_{x_\star}^{\wt x_n})^{-1}b_{j,n}-(\tDelta_j^B-\tDelta_n)\|^2 \le C(\mE\|\tDelta_n\|^4)^{1/2}(\mE\|\tDelta_j^B-\tDelta_n\|^4)^{1/2}.
\end{equation*}
Lemma~\ref{lem:C-rough-average-moments} and $\gamma_n\le\gamma_{t_j}$ bound the two fourth moments by $C\gamma_n^2$ and $C\gamma_{t_j}^2$, respectively. Therefore,
\begin{equation*}
\mE\|(\Gamma_{x_\star}^{\wt x_n})^{-1}b_{j,n}-(\tDelta_j^B-\tDelta_n)\|^2\le C\gamma_n\gamma_{t_j}\le C\gamma^2t_j^{-2\alpha}.
\end{equation*}
The last inequality uses $\gamma_n\le\gamma_{t_j}=\gamma t_j^{-\alpha}$. This proves the bound for the change of base point.
\end{proof}

\begin{proposition}
\label{prop:ol-coordinate-error}
Under the assumptions of Lemma~\ref{lem:D-ol-linear-to-ideal}, there exists $C<\infty$ such that, for every $n\ge1$ and $\diamond\in\{\mathrm{OL},\mathrm{NOL}\}$,
\begin{equation*}
\mE\|\hSigma_{n,\diamond}^{(1)} -\hSigma_{n,\diamond}^{(2)}\| \le Cn^{1/2-1/(2\beta)-\alpha}.
\end{equation*}
\end{proposition}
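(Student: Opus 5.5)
The plan is to follow the template already set up in \eqref{eq:D-geometric-comparison}, exactly as in the proof of Proposition~\ref{prop:ol-manifold-tangent-error}. We apply Lemma~\ref{lem:quadratic-comparison} over the index set $E_n^\diamond$, with $D=v_{n,\diamond}$ and weights $a_j=\ell_j^2$. For the reference vectors we take $\sigma_j:=\tDelta_j^B-\tDelta_n$, so that $Q_0=\hSigma_{n,\diamond}^{(2)}$. For the error vectors we take
\begin{equation*}
e_j:=(\Gamma_{x_\star}^{\wt x_n})^{-1}b_{j,n}-(\tDelta_j^B-\tDelta_n),
\end{equation*}
so that $\sigma_j+e_j=(\Gamma_{x_\star}^{\wt x_n})^{-1}b_{j,n}$ and $Q=\hSigma_{n,\diamond}^{(1)}$. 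All vectors live in the fixed space $\T_{x_\star}\mM$, expressed in the orthonormal basis fixed in Appendix~\ref{app:C}. Hence Lemma~\ref{lem:quadratic-comparison} applies verbatim, and $\|Q-Q_0\|$ is exactly the quantity we need to bound.

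The argument then has two inputs.

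\textbf{Per-batch bound.} Lemma~\ref{lem:D-coordinate-endpoint} gives $\mE\|e_j\|^2\le Ct_j^{-2\alpha}$ for every $1\le j\le n$, and in particular for every $j\in E_n^\diamond$. By Lemma~\ref{lem:block-calculus}(iii), this yields $\delta\le Cn^{1-1/\beta-2\alpha}$, which is \eqref{eq:D-weighted-geometric-error}.

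\textbf{Uniform control of $Q_0$.} We need $\sup_n\mE\|Q_0\|=\sup_n\mE\|\hSigma_{n,\diamond}^{(2)}\|<\infty$. First, Proposition~\ref{prop:ol-euclidean-bm} shows that the exponents there are negative when $\beta>1/(1-\alpha)$, so $\sup_n\mE\|\hSigma_{n,\diamond}^{(3)}\|<\infty$. Proposition~\ref{prop:ol-manifold-tangent-error} then bounds $\mE\|\hSigma_{n,\diamond}^{(2)}-\hSigma_{n,\diamond}^{(3)}\|$ by $Cn^{1/2-1/(2\beta)-\alpha}$, which is bounded in $n$. The triangle inequality therefore gives the required uniform bound on $\mE\|\hSigma_{n,\diamond}^{(2)}\|$.

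Plugging both inputs into \eqref{eq:A-quadratic-comparison} gives $\mE\|Q-Q_0\|\le C(\delta^{1/2}+\delta)$. Since $1-1/\beta-2\alpha<0$, the $\delta$ term is dominated by $\delta^{1/2}\lesssim n^{1/2-1/(2\beta)-\alpha}$, which is the claimed rate. This holds for both $\diamond\in\{\mathrm{OL},\mathrm{NOL}\}$, because Lemma~\ref{lem:block-calculus}(iii) covers both batch rules.

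The only real work is the per-batch estimate, and Lemma~\ref{lem:D-coordinate-endpoint} already supplies it. So the main point to verify is bookkeeping. Parallel transport is an isometry, so it commutes with forming outer products in the way used to define $\hSigma^{(1)}$: $(\Gamma^{-1}b)^{\otimes2}$ is formed directly in $\T_{x_\star}\mM$, and the weights $\ell_j^2$ are deterministic. With that in place, the proof is a direct assembly of results already established.
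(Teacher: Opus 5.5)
Your proposal is correct and follows essentially the same route as the paper: the paper also takes $\sigma_j=\tDelta_j^B-\tDelta_n$ and $e_j=(\Gamma_{x_\star}^{\wt x_n})^{-1}b_{j,n}-(\tDelta_j^B-\tDelta_n)$ in \eqref{eq:D-geometric-comparison}. It bounds $\mE\|e_j\|^2$ via Lemma~\ref{lem:D-coordinate-endpoint} and obtains $\sup_n\mE\|\hSigma_{n,\diamond}^{(2)}\|<\infty$ from Propositions~\ref{prop:ol-euclidean-bm} and~\ref{prop:ol-manifold-tangent-error}, exactly as you do.
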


\begin{proof}
In \eqref{eq:D-geometric-comparison}, take $\sigma_j=\tDelta_j^B-\tDelta_n$ and $e_j=(\Gamma_{x_\star}^{\wt x_n})^{-1}b_{j,n}-(\tDelta_j^B-\tDelta_n)$. Then $Q_0=\hSigma_{n,\diamond}^{(2)}$ and $Q=\hSigma_{n,\diamond}^{(1)}$. Lemma~\ref{lem:D-coordinate-endpoint} implies $\mE\|e_j\|^2\le Ct_j^{-2\alpha}$, and Propositions~\ref{prop:ol-euclidean-bm} and~\ref{prop:ol-manifold-tangent-error} ensure $\sup_n\mE\|Q_0\|<\infty$. Thus \eqref{eq:D-geometric-comparison} proves the claim.
\end{proof}

\paragraph{Error from the online vector updates.}
Finally, for each batch $j$, we bound $\tb_{j,n}-b_{j,n}$ and then sum these bounds with weights $\ell_j^2/v_{n,\diamond}$.

\begin{lemma}
\label{lem:D-online-endpoint}
Under the assumptions of Lemma~\ref{lem:D-coordinate-endpoint}, let $\tb_{j,n}$ be the recursive approximation of $b_{j,n}$ from \eqref{eq:batch-vector-update}, initialized by $\tb_{j,j}=b_{j,j}$. Then there exists $C>0$ such that, for every $n\ge1$ and $1\le j\le n$,
\begin{equation*}
\mE\|(\Gamma_{x_\star}^{\wt x_n})^{-1} (\tb_{j,n}-b_{j,n})\|^2 \le Ct_j^{-2\alpha}.
\end{equation*}
\end{lemma}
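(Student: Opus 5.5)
Since $(\Gamma_{x_\star}^{\wt x_n})^{-1}$ is an isometry, it suffices to bound $\mE\|\tb_{j,n}-b_{j,n}\|_{\wt x_n}^2$. I would write a one-step recursion for the error $\epsilon_k:=\tb_{j,k}-b_{j,k}\in\T_{\wt x_k}\mM$, $k\ge j$, with $\epsilon_j=0$. By definition $\tb_{j,k}=\Lambda_{\wt x_{k-1}}^{\wt x_k}(\tb_{j,k-1}-u_k)$. To compare, I would use Lemma~\ref{lem:taylor-coordinate-change} with base point $x=\wt x_{k-1}$, $u=u_k=R_{\wt x_{k-1}}^{-1}(\wt x_k)$ and $u+w=b_{j,k-1}=R_{\wt x_{k-1}}^{-1}(c_j)$. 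This gives
\begin{equation*}
b_{j,k}=\Lambda_{\wt x_{k-1}}^{\wt x_k}\bigl(b_{j,k-1}-u_k+\rho_k\bigr),\qquad \|\rho_k\|_{\wt x_{k-1}}\le C_\Psi\|u_k\|\,\|b_{j,k-1}-u_k\|.
\end{equation*}
The domain conditions hold because $u_k$ and $b_{j,k-1}$ represent points of $\mX$. Subtracting yields $\epsilon_k=\Lambda_{\wt x_{k-1}}^{\wt x_k}(\epsilon_{k-1}-\rho_k)$.

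The key steps are then as follows. First, bound the amplification. By \eqref{eq:B-lambda-distortion}, $\|\Lambda_{\wt x_{k-1}}^{\wt x_k}\|\le 1+C_L\|u_k\|^2$. Unrolling the recursion gives
\begin{equation*}
\|\epsilon_n\|\le \prod_{k=j+1}^n(1+C_L\|u_k\|^2)\sum_{k=j+1}^n C\|\rho_k\|.
\end{equation*}
Since $\|u_k\|=k^{-1}\|R_{\wt x_{k-1}}^{-1}(x_k)\|\le C D_\mX/k$ by Lemma~\ref{lem:distance-equivalence}, the product is bounded deterministically by $\exp(C\sum k^{-2})<\infty$. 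Second, bound $\rho_k$. Using Lemma~\ref{lem:distance-equivalence} and \eqref{eq:C-inverse-coordinate-distance}, we have $\|u_k\|\le Ck^{-1}\|\Delta_k-\tDelta_{k-1}\|$ and $\|b_{j,k-1}-u_k\|\le C\|b_{j,k-1}\|+C\|u_k\|\le C\|\tDelta_j^B-\tDelta_{k-1}\|+Ck^{-1}\|\Delta_k-\tDelta_{k-1}\|$. Hence
\begin{equation*}
\|\epsilon_n\|\le C\sum_{k=j+1}^n k^{-1}\|\Delta_k-\tDelta_{k-1}\|\bigl(\|\tDelta_j^B-\tDelta_{k-1}\|+\|\Delta_k-\tDelta_{k-1}\|\bigr).
\end{equation*}
Third, take $L^2$ norms via Minkowski and Cauchy--Schwarz with the fourth moments from Assumption~\ref{ass:raw-moments}(b) and Lemma~\ref{lem:C-rough-average-moments}. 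These give $(\mE\|\Delta_k-\tDelta_{k-1}\|^4)^{1/4}\lesssim k^{-\alpha/2}$ and $(\mE\|\tDelta_j^B-\tDelta_{k-1}\|^4)^{1/4}\lesssim t_j^{-\alpha/2}$. The result is
\begin{equation*}
(\mE\|\epsilon_n\|^2)^{1/2}\lesssim\sum_{k>j}k^{-1-\alpha/2}\,t_j^{-\alpha/2}\lesssim j^{-\alpha/2}t_j^{-\alpha/2}\le t_j^{-\alpha},
\end{equation*}
using $j\ge t_j$. Squaring proves the claim.

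The main obstacle is whether this crude summation is sharp enough. If the $k^{-1}$ weight had been lost, the sum would diverge. The plan relies on $u_k$ carrying the $1/k$ factor, so the Taylor remainder in Lemma~\ref{lem:taylor-coordinate-change} must be used in its bilinear form $\|u\|\|w\|$ rather than as a quadratic in $\|w\|$. A secondary issue is that the product of $\Lambda$ norms must be bounded almost surely, not only in expectation, so that it can be pulled out before taking moments. This is exactly why the quadratic distortion bound \eqref{eq:B-lambda-distortion}, rather than \eqref{eq:lambda-uniform-bounds}, is needed: a merely bounded $\|\Lambda\|\le C_L$ would compound to $C_L^{n-j}$.
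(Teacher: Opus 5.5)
Your proposal is correct and follows essentially the same route as the paper's proof: you take the one-step remainder from Lemma~\ref{lem:taylor-coordinate-change} with $(x,u,w)=(\wt x_{k-1},u_k,b_{j,k-1}-u_k)$, bound the composed transports deterministically via \eqref{eq:B-lambda-distortion} and $\|u_k\|\lesssim k^{-1}$, and apply Minkowski and Cauchy--Schwarz with fourth moments to obtain $\sum_{k>j}k^{-1-\alpha/2}t_j^{-\alpha/2}\lesssim t_j^{-\alpha}$. The differences from the paper are cosmetic: you place the remainder before applying $\Lambda_{\wt x_{k-1}}^{\wt x_k}$ rather than after it, and you use the difference form \eqref{eq:C-inverse-coordinate-distance} instead of the triangle form of Lemma~\ref{lem:distance-equivalence} for the moment bounds.
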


\begin{proof}
For $k\ge j$, set $d_{j,k}:=\tb_{j,k}-b_{j,k}\in\T_{\wt x_k}\mM$, with $d_{j,j}=0$. First bound the products of later transports. Recall $u_k=k^{-1}R_{\wt x_{k-1}}^{-1}(x_k)$. Assumption~\ref{ass:geometry}(a) implies $\|u_k\|_{\wt x_{k-1}}<\rho_{\mathrm{ret}}/k$ and $u_k=R_{\wt x_{k-1}}^{-1}(\wt x_k)$. By \eqref{eq:B-lambda-distortion}, $\|\Lambda_{\wt x_{k-1}}^{\wt x_k}\|\le1+C_L\rho_{\mathrm{ret}}^2k^{-2}$. Submultiplicativity and $1+t\le e^t$ yield, for $k<n$,
\begin{equation}\label{eq:D-propagation-bound}
\left\|\Lambda_{\wt x_{n-1}}^{\wt x_n}\circ\cdots\circ\Lambda_{\wt x_k}^{\wt x_{k+1}}\right\| \le\prod_{r=k+1}^n(1+C_L\rho_{\mathrm{ret}}^2r^{-2}) \le\exp\left(C_L\rho_{\mathrm{ret}}^2\sum_{r=1}^\infty r^{-2}\right)\le C<\infty.
\end{equation}
This deterministic bound is uniform in $k,n$ and also covers $k=n$, when the composition is the identity. Next, for $k>j$, define the error in one linear update by
\begin{equation*}
\rho_{j,k}:=b_{j,k} -\Lambda_{\wt x_{k-1}}^{\wt x_k}(b_{j,k-1}-u_k).
\end{equation*}
Apply \eqref{eq:B-coordinate-change} with $x=\wt x_{k-1}$, $u=u_k$, and $w=b_{j,k-1}-u_k$; the domain conditions hold since $u$ and $u+w$ represent $\wt x_k,c_j\in\mX$. Multiplying by $\Lambda_{\wt x_{k-1}}^{\wt x_k}$ and using \eqref{eq:lambda-uniform-bounds} yields
\begin{equation}\label{eq:D-update-pointwise-error}
\|\rho_{j,k}\|_{\wt x_k} \le C\|u_k\|_{\wt x_{k-1}} \bigl(\|b_{j,k-1}\|_{\wt x_{k-1}}+\|u_k\|_{\wt x_{k-1}}\bigr).
\end{equation}
The distance comparison in Lemma~\ref{lem:distance-equivalence} implies
\begin{align*}
\mE\|u_k\|_{\wt x_{k-1}}^4&\le Ck^{-4}\bigl(\mE\|\tDelta_{k-1}\|^4+\mE\|\Delta_k\|^4\bigr),\\
\mE\|b_{j,k-1}\|_{\wt x_{k-1}}^4&\le C\bigl(\mE\|\tDelta_{k-1}\|^4+\mE\|\tDelta_j^B\|^4\bigr).
\end{align*}
Lemma~\ref{lem:C-rough-average-moments}, \eqref{eq:C-raw-moment-bounds}, $k>j\ge t_j$, and $\gamma_{k-1}\le2^\alpha\gamma_k$ then imply
\begin{equation*}
\mE\|u_k\|_{\wt x_{k-1}}^4\le Ck^{-4}\gamma_k^2,\qquad \mE\|b_{j,k-1}\|_{\wt x_{k-1}}^4\le C\gamma_{t_j}^2.
\end{equation*}
Minkowski yields the first inequality below, and Cauchy--Schwarz yields the second:
\begin{align*}
(\mE\|\rho_{j,k}\|_{\wt x_k}^2)^{1/2}\;\,
&\stackrel{\mathclap{\eqref{eq:D-update-pointwise-error}}}{\le}\;\, C\bigl((\mE[\|u_k\|_{\wt x_{k-1}}^2\|b_{j,k-1}\|_{\wt x_{k-1}}^2])^{1/2}+(\mE\|u_k\|_{\wt x_{k-1}}^4)^{1/2}\bigr)\\
&\le C\bigl((\mE\|u_k\|_{\wt x_{k-1}}^4)^{1/4}(\mE\|b_{j,k-1}\|_{\wt x_{k-1}}^4)^{1/4}+(\mE\|u_k\|_{\wt x_{k-1}}^4)^{1/2}\bigr).
\end{align*}
Substitute the two fourth-moment bounds above and use $k^{-2}\gamma_k\le k^{-1}\gamma_k^{1/2}\gamma_{t_j}^{1/2}$, since $k>j\ge t_j$. We obtain
\begin{equation}\label{eq:D-one-step-error}
(\mE\|\rho_{j,k}\|_{\wt x_k}^2)^{1/2}\le Ck^{-1}\gamma_k^{1/2}\gamma_{t_j}^{1/2}.
\end{equation}
Subtracting the exact update from \eqref{eq:batch-vector-update} and iterating from $d_{j,j}=0$ yields
\begin{align*}
d_{j,k}&=\Lambda_{\wt x_{k-1}}^{\wt x_k}d_{j,k-1}-\rho_{j,k},\\
d_{j,n}&=-\sum_{k=j+1}^n\bigl(\Lambda_{\wt x_{n-1}}^{\wt x_n}\circ\cdots\circ\Lambda_{\wt x_k}^{\wt x_{k+1}}\bigr)\rho_{j,k}.
\end{align*}
Taking norms and applying Minkowski's inequality yields
\begin{equation*}
(\mE\|d_{j,n}\|_{\wt x_n}^2)^{1/2}\;\,\stackrel{\mathclap{\eqref{eq:D-propagation-bound}}}{\le}\;\, C\sum_{k=j+1}^n(\mE\|\rho_{j,k}\|_{\wt x_k}^2)^{1/2}.
\end{equation*}
Using $\gamma_k=\gamma k^{-\alpha}$ in the one-step error bound, we obtain
\begin{align*}
(\mE\|d_{j,n}\|_{\wt x_n}^2)^{1/2}\;\,&\stackrel{\mathclap{\eqref{eq:D-one-step-error}}}{\le}\;\, Ct_j^{-\alpha/2}\sum_{k=j+1}^n k^{-1-\alpha/2}\\
&\le Ct_j^{-\alpha/2}\int_j^\infty x^{-1-\alpha/2}\,\mathrm dx =\frac{2C}{\alpha}j^{-\alpha/2}t_j^{-\alpha/2}\le\frac{2C}{\alpha}t_j^{-\alpha}.
\end{align*}
The integral comparison uses that $x^{-1-\alpha/2}$ is decreasing; the last step uses $j\ge t_j\ge1$. Squaring and using the isometry of parallel transport proves the claim, including $n=j$, when the error is zero.
\end{proof}

\begin{proposition}
\label{prop:ol-online-vector-update}
Under the assumptions of Lemma~\ref{lem:D-ol-linear-to-ideal}, there exists $C<\infty$ such that, for every $n\ge1$ and $\diamond\in\{\mathrm{OL},\mathrm{NOL}\}$,
\begin{equation*}
\mE\|\hSigma_{n,\diamond}^{(0)} -\hSigma_{n,\diamond}^{(1)}\| \le C n^{1/2-1/(2\beta)-\alpha}.
\end{equation*}
\end{proposition}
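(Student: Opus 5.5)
The plan is to run the same template as for Propositions~\ref{prop:ol-manifold-tangent-error} and~\ref{prop:ol-coordinate-error}, namely the generic comparison bound \eqref{eq:D-geometric-comparison} obtained from Lemma~\ref{lem:quadratic-comparison}. All vectors are first moved to the fixed space $\T_{x_\star}\mM$ by the isometry $(\Gamma_{x_\star}^{\wt x_n})^{-1}$. In Lemma~\ref{lem:quadratic-comparison} we take the index set $E_n^\diamond$, $D=v_{n,\diamond}$ and $a_j=\ell_j^2$. The reference and error vectors are
\begin{equation*}
\sigma_j:=(\Gamma_{x_\star}^{\wt x_n})^{-1}b_{j,n},\qquad e_j:=(\Gamma_{x_\star}^{\wt x_n})^{-1}(\tb_{j,n}-b_{j,n}).
\end{equation*}
With these choices, $Q_0=\hSigma_{n,\diamond}^{(1)}$ and, by linearity of $(\Gamma_{x_\star}^{\wt x_n})^{-1}$, $Q=\hSigma_{n,\diamond}^{(0)}$.

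Two hypotheses of \eqref{eq:D-geometric-comparison} must be checked. The first is the per-batch error bound $\mE\|e_j\|^2\le Ct_j^{-2\alpha}$, which is exactly Lemma~\ref{lem:D-online-endpoint}. Together with Lemma~\ref{lem:block-calculus}(iii), it gives $\delta\le Cn^{1-1/\beta-2\alpha}$ as in \eqref{eq:D-weighted-geometric-error}.

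The second is the uniform bound $\sup_n\mE\|Q_0\|=\sup_n\mE\|\hSigma_{n,\diamond}^{(1)}\|<\infty$. We obtain it by chaining the earlier results:
\begin{equation*}
\mE\|\hSigma_{n,\diamond}^{(1)}\|\le\|\Sigma\|+\mE\|\hSigma_{n,\diamond}^{(3)}-\Sigma\|+\mE\|\hSigma_{n,\diamond}^{(2)}-\hSigma_{n,\diamond}^{(3)}\|+\mE\|\hSigma_{n,\diamond}^{(1)}-\hSigma_{n,\diamond}^{(2)}\|.
\end{equation*}
The right-hand side is bounded uniformly in $n$ by Propositions~\ref{prop:ol-euclidean-bm}, \ref{prop:ol-manifold-tangent-error} and~\ref{prop:ol-coordinate-error}. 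All exponents appearing there are negative because $\beta>1/(1-\alpha)$ and $\alpha>1/2$.

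Substituting both facts into \eqref{eq:A-quadratic-comparison} gives $\mE\|\hSigma^{(0)}_{n,\diamond}-\hSigma^{(1)}_{n,\diamond}\|\le C(\delta^{1/2}+\delta)$. Since $1-1/\beta-2\alpha<0$, the $\delta$ term is absorbed into $\delta^{1/2}\le Cn^{1/2-1/(2\beta)-\alpha}$, which is the claimed rate. The argument is identical for $\diamond=\mathrm{OL}$ and $\diamond=\mathrm{NOL}$, because Lemma~\ref{lem:block-calculus}(iii) covers both batch rules.

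The substantive work is the per-batch bound on the accumulated online-update error, and that is already done in Lemma~\ref{lem:D-online-endpoint}. It rests on two ingredients: the transport products stay bounded because $\prod(1+Cr^{-2})<\infty$, and the one-step errors are summable, of order $k^{-1-\alpha/2}$. What remains for this proof is ordering, namely establishing the bound on $Q_0$ before applying Lemma~\ref{lem:quadratic-comparison}. The chain of earlier propositions provides it, so no further obstacle is expected.
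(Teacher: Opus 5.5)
Your proposal is correct and is essentially the paper's own proof. The paper likewise applies \eqref{eq:D-geometric-comparison} with $\sigma_j=(\Gamma_{x_\star}^{\wt x_n})^{-1}b_{j,n}$ and $e_j=(\Gamma_{x_\star}^{\wt x_n})^{-1}(\tb_{j,n}-b_{j,n})$, takes $\mE\|e_j\|^2\le Ct_j^{-2\alpha}$ from Lemma~\ref{lem:D-online-endpoint}, and obtains $\sup_n\mE\|\hSigma_{n,\diamond}^{(1)}\|<\infty$ from Propositions~\ref{prop:ol-euclidean-bm}, \ref{prop:ol-manifold-tangent-error}, and~\ref{prop:ol-coordinate-error}, exactly as you do.
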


\begin{proof}
In \eqref{eq:D-geometric-comparison}, take $\sigma_j=(\Gamma_{x_\star}^{\wt x_n})^{-1}b_{j,n}$ and $e_j=(\Gamma_{x_\star}^{\wt x_n})^{-1}(\tb_{j,n}-b_{j,n})$, so $Q_0=\hSigma_{n,\diamond}^{(1)}$ and $Q=\hSigma_{n,\diamond}^{(0)}$. Lemma~\ref{lem:D-online-endpoint} implies $\mE\|e_j\|^2\le Ct_j^{-2\alpha}$, including $j=n$. Propositions~\ref{prop:ol-euclidean-bm}, \ref{prop:ol-manifold-tangent-error}, and~\ref{prop:ol-coordinate-error} imply $\sup_n\mE\|Q_0\|<\infty$. Thus \eqref{eq:D-geometric-comparison} proves the claim.
\end{proof}

\begin{proof}[Proof of Theorem~\ref{thm:covariance-rate}]
For either $\diamond\in\{\mathrm{OL},\mathrm{NOL}\}$, apply Propositions~\ref{prop:ol-online-vector-update}, \ref{prop:ol-coordinate-error}, \ref{prop:ol-manifold-tangent-error}, and~\ref{prop:ol-euclidean-bm} to the four terms in \eqref{eq:D-ol-four-term-decomposition}, respectively. We obtain
\begin{equation*}
\mE\|\hSigma_{n,\diamond}^{(0)}-\Sigma\| \le C\bigl(n^{-1/(2\beta)}+n^{(\alpha-1)/2+1/(2\beta)}+n^{1/2-1/(2\beta)-\alpha}\bigr).
\end{equation*}
The ratio of the last term to $n^{-1/(2\beta)}$ is $n^{1/2-\alpha}\to0$. Absorbing it into the first term proves the theorem.
\end{proof}

\begin{proof}[Proof of Corollary~\ref{cor:recommended-beta}]
The two decay exponents in Theorem~\ref{thm:covariance-rate} are $1/(2\beta)$ and $(1-\alpha)/2-1/(2\beta)$. Equating them yields $\beta=2/(1-\alpha)$, which satisfies the required strict lower bound. Substituting this value into either exponent yields $(1-\alpha)/4$, as claimed.
\end{proof}

\section{Proof of Proposition~\ref{prop:confidence-procedures}}\label{app:E}

\begin{proof}[Proof of Proposition~\ref{prop:confidence-procedures}]
Fix $\diamond\in\{\mathrm{OL},\mathrm{NOL}\}$. For the joint region, compare $R_{\wt x_n}^{-1}(x_\star)$ with $\tDelta_n=R_{x_\star}^{-1}(\wt x_n)$ after transport to $\T_{x_\star}\mM$:
\begin{align*}
\bigl\|(\Gamma_{x_\star}^{\wt x_n})^{-1}R_{\wt x_n}^{-1}(x_\star)+\tDelta_n\bigr\|_{x_\star}&\le\bigl\|[\Lambda_{x_\star}^{\wt x_n}]^{-1}R_{\wt x_n}^{-1}(x_\star)+\tDelta_n\bigr\|_{x_\star} +\bigl\|\bigl((\Gamma_{x_\star}^{\wt x_n})^{-1}-[\Lambda_{x_\star}^{\wt x_n}]^{-1}\bigr)R_{\wt x_n}^{-1}(x_\star)\bigr\|_{x_\star}\\
&\leq C_\Psi\|\tDelta_n\|_{x_\star}^2+C_{\Gamma,\Lambda}\|\tDelta_n\|_{x_\star}^2\|R_{\wt x_n}^{-1}(x_\star)\|_{\wt x_n} \lesssim\|\tDelta_n\|_{x_\star}^2.
\end{align*}
The second inequality applies \eqref{eq:B-coordinate-change} at $(x,u,w)=(x_\star,\tDelta_n,-\tDelta_n)$ and \eqref{eq:B-inverse-transport-comparison} to the two terms, respectively. The last uses $\|R_{\wt x_n}^{-1}(x_\star)\|_{\wt x_n}\le C_Mc_M^{-1}\|\tDelta_n\|_{x_\star}$ from \eqref{eq:B-distance-equivalence} and boundedness of $R_{x_\star}^{-1}(\mX)$. By Theorem~\ref{thm:rasgd-asymptotic-normality}, $\tDelta_n=O_{\mP}(n^{-1/2})$ and $\wt x_n\xrightarrow{\mP}x_\star$, so
\begin{equation}\label{eq:F-reverse-error-expansion}
\sqrt n\,(\Gamma_{x_\star}^{\wt x_n})^{-1}R_{\wt x_n}^{-1}(x_\star) =-\sqrt n\,\tDelta_n+o_{\mP}(1) \xrightarrow{\mathrm d}\mathcal N(0,\Sigma).
\end{equation}
Set $\hSigma_{n,\diamond}:=(\Gamma_{x_\star}^{\wt x_n})^{-1}\circ\wt\Sigma_{n,\diamond}\circ\Gamma_{x_\star}^{\wt x_n}$. Theorem~\ref{thm:covariance-rate} implies
\begin{equation}\label{eq:F-covariance-consistency}
\hSigma_{n,\diamond}\xrightarrow{\mP}\Sigma.
\end{equation}
Positive definiteness of $\Sigma$, the isometry of parallel transport, and Weyl's inequality imply $\mP\bigl(\lambda_{\min}(\wt\Sigma_{n,\diamond})\geq\lambda_{\min}(\Sigma)/2\bigr)\to1$. Continuity of inversion implies $\hSigma_{n,\diamond}^{-1}\xrightarrow{\mP}\Sigma^{-1}$. Combining this with \eqref{eq:F-reverse-error-expansion}, Slutsky's theorem and continuity of the quadratic form yield
\begin{equation}\label{eq:F-quadratic-limit}
n\left\langle R_{\wt x_n}^{-1}(x_\star),\wt\Sigma_{n,\diamond}^{-1}R_{\wt x_n}^{-1}(x_\star)\right\rangle_{\wt x_n} \xrightarrow{\mathrm d}\chi_d^2.
\end{equation}
Recall $\mathcal C_{n,\diamond}(1-\delta)=R_{\wt x_n}\bigl(\mathcal E_{n,\diamond}(1-\delta)\bigr)$. By \eqref{eq:F-covariance-consistency} and Weyl's inequality, $\lambda_{\max}(\wt\Sigma_{n,\diamond})=O_{\mP}(1)$. On the positive-definite event, the spectral theorem implies $\|v\|_{\wt x_n}^2\le\lambda_{\max}(\wt\Sigma_{n,\diamond})\langle v,\wt\Sigma_{n,\diamond}^{-1}v\rangle_{\wt x_n}$ for every $v\in\T_{\wt x_n}\mM$. The ellipsoid definition in \eqref{eq:confidence-sets} therefore yields
\begin{equation*}
\sup_{v\in\mathcal E_{n,\diamond}(1-\delta)}\|v\|_{\wt x_n}^2 \le\frac{\chi^2_{d,1-\delta}\lambda_{\max}(\wt\Sigma_{n,\diamond})}{n} =O_{\mP}(n^{-1}).
\end{equation*}
Thus the ellipsoid lies in $B_{\wt x_n}(0_{\wt x_n},\rho_{\mathrm{ret}})$ with probability tending to one. Assumption~\ref{ass:geometry}(a) makes $R_{\wt x_n}$ one-to-one on this ball, which also contains $R_{\wt x_n}^{-1}(x_\star)$. On this event, $x_\star\in\mathcal C_{n,\diamond}(1-\delta)$ if and only if $R_{\wt x_n}^{-1}(x_\star)\in\mathcal E_{n,\diamond}(1-\delta)$. Hence
\begin{equation*}
\mP\bigl(x_\star\in\mathcal C_{n,\diamond}(1-\delta)\bigr) =\mP\left(n\left\langle R_{\wt x_n}^{-1}(x_\star),\wt\Sigma_{n,\diamond}^{-1}R_{\wt x_n}^{-1}(x_\star)\right\rangle_{\wt x_n}\le\chi^2_{d,1-\delta}\right)+o(1) \stackrel{\mathclap{\eqref{eq:F-quadratic-limit}}}{\longrightarrow}1-\delta.
\end{equation*}
For the scalar interval, a first-order expansion of $h\circ R_{x_\star}$ and Theorem~\ref{thm:rasgd-asymptotic-normality} yield
\begin{equation*}
\sqrt n\bigl(h(\wt x_n)-h(x_\star)\bigr) =Dh(x_\star)\left[\sqrt n\,R_{x_\star}^{-1}(\wt x_n)\right]+o_{\mP}(1) \xrightarrow{\mathrm d}\mathcal N(0,\sigma_h^2),
\end{equation*}
where $\sigma_h^2:=\langle\grad h(x_\star),\Sigma\grad h(x_\star)\rangle_{x_\star}>0$. Here $DR_{x_\star}(0_{x_\star})=I$, and the remainder satisfies $\sqrt n\,o(\|\tDelta_n\|_{x_\star})=o_{\mP}(1)$. By the isometry of parallel transport, \eqref{eq:scalar-variance} becomes
\begin{equation*}
\widehat\sigma_{n,h,\diamond}^2 =\left\langle(\Gamma_{x_\star}^{\wt x_n})^{-1}\grad h(\wt x_n),\hSigma_{n,\diamond}(\Gamma_{x_\star}^{\wt x_n})^{-1}\grad h(\wt x_n)\right\rangle_{x_\star}.
\end{equation*}
Continuity of the gradient and parallel transport implies $(\Gamma_{x_\star}^{\wt x_n})^{-1}\grad h(\wt x_n)\xrightarrow{\mP}\grad h(x_\star)$, and \eqref{eq:F-covariance-consistency} therefore implies $\widehat\sigma_{n,h,\diamond}^2\xrightarrow{\mP}\sigma_h^2$. Thus, $\widehat\sigma_{n,h,\diamond}>0$ with probability tending to one. Set the following ratio to zero if its denominator is zero. Slutsky's theorem then yields
\begin{equation*}
\frac{\sqrt n\bigl(h(\wt x_n)-h(x_\star)\bigr)}{\widehat\sigma_{n,h,\diamond}} \xrightarrow{\mathrm d}\mathcal N(0,1).
\end{equation*}
Since $\mP\bigl(\widehat\sigma_{n,h,\diamond}>0\bigr)\to1$ and the standard normal distribution is continuous, we have
\begin{equation*}
\mP\bigl(h(x_\star)\in\mathcal I_{n,h,\diamond}(1-\delta)\bigr) =\mP\left(\left|\frac{\sqrt n\bigl(h(\wt x_n)-h(x_\star)\bigr)}{\widehat\sigma_{n,h,\diamond}}\right|\le z_{1-\delta/2}\right)+o(1) \longrightarrow1-\delta.
\end{equation*}
This proves the assertion for the scalar confidence interval.
\end{proof}

\section{Additional numerical experiments}\label{app:F}

This appendix supplements Section~\ref{sec:5} with further experiments on covariance estimation and statistical inference. We first compare covariance errors in higher dimensions, then examine interval coverage and length across different dimensions and eigengaps. All experiments use the Gaussian model and algorithm settings described in Section~\ref{sec:5.1}, with $200$ independent runs for each setting.

\subsection{Covariance estimation in higher dimensions}\label{sec:G.1}

We extend the covariance estimation experiments in Section~\ref{sec:5.2} to $G_{15,3}$, $G_{20,4}$, and $G_{25,5}$, with eigengaps $\Delta_\lambda\in\{1,0.25\}$ and batch constants $c\in\{1,2,5\}$. Recall from Section~\ref{sec:4.2} that $G_{p,k}$ is the space of $k$-dimensional subspaces of $\mR^p$, with dimension $d=k(p-k)$. Thus, the dimensions of these three manifolds are $36$, $64$, and $100$, respectively.

Figure~\ref{fig:appendix-covariance-highdim} shows an overall decrease in the mean errors of both estimators as the sample size increases. At larger sample sizes, the slopes of the log--log curves are close to the reference slope $-1/8$. At $n=3\times10^5$, $c=1$ gives the lowest mean error among the three batch constants for both estimators in all six settings.

\begin{figure}[H]
\centering
\includegraphics[width=\textwidth]{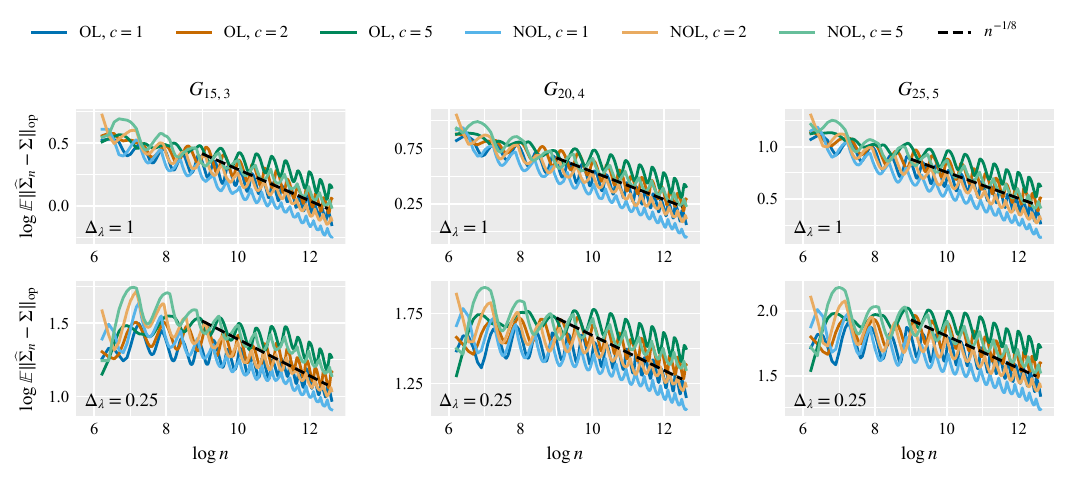}
\caption{Gaussian streaming PCA: log mean covariance error (operator norm) against log sample size. Columns correspond to $G_{15,3}$, $G_{20,4}$, and $G_{25,5}$, and the top and bottom rows use eigengaps $1$ and $0.25$, respectively. As in Figure~\ref{fig:final-covariance-convergence}, darker and lighter curves represent OL and NOL, with blue, orange, and green indicating $c=1,2,5$, respectively. Each mean is based on $200$ independent runs. The black dashed lines indicate the reference slope $-1/8$.}
\label{fig:appendix-covariance-highdim}
\end{figure}

\subsection{Additional scalar confidence intervals}\label{sec:G.2}

We next extend the confidence-interval comparison in Section~\ref{sec:5.3} to a wider range of dimensions and eigengaps. Recall from Section~\ref{sec:4.2} that $h_q([X])=\|X^\top q\|^2$ is the squared length of the projection of a fixed unit vector $q$ onto the subspace $[X]$. We select $q$ as in Section~\ref{sec:5.3} and construct nominal $95\%$ confidence intervals for $h_q(x_\star)$ using the OL and NOL variance estimates, with $c=1$. For comparison, we also include oracle intervals based on the true scalar asymptotic variance.

Table~\ref{tab:appendix-scalar-inference} reports empirical coverage and mean interval length at $n=3\times10^5$ for all twelve settings, including the two reported in Table~\ref{tab:scalar-inference}. OL and NOL have similar mean interval lengths within each setting, and their empirical coverage ranges from $0.880$ to $0.925$. In particular, on $G_{25,5}$ with the smaller eigengap $\Delta_\lambda=0.25$, the coverage rates are $0.885$ and $0.880$ for OL and NOL, respectively. Although these coverage rates are below the nominal $95\%$ level, coverage for both methods is close to $90\%$ at this sample size, even in this higher-dimensional problem with a small eigengap.

\begin{table}[!h]
\centering
\small
\caption{Empirical coverage and mean length of nominal $95\%$ confidence intervals for $h_q(x_\star)$ in Gaussian streaming PCA, with $n=3\times10^5$ and $c=1$. Results are based on $200$ independent runs, with Monte Carlo standard errors in parentheses. Mean lengths are multiplied by $10^3$.}
\label{tab:appendix-scalar-inference}
\setlength{\tabcolsep}{4pt}
\renewcommand{\arraystretch}{1.15}
\begin{tabular}{@{}lcrrrrrr@{}}
\hline
& & \multicolumn{3}{c}{Coverage} & \multicolumn{3}{c}{Mean length $\times10^3$}\\
$G_{p,k}$ & $\Delta_\lambda$ & OL & NOL & Oracle & OL & NOL & Oracle\\
\hline
$G_{3,1}$ & 1 & 0.920 (0.019) & 0.920 (0.019) & 0.980 (0.010) & 4.425 & 4.447 & 4.953 \\
$G_{3,1}$ & 0.25 & 0.920 (0.019) & 0.925 (0.019) & 0.980 (0.010) & 11.305 & 11.722 & 13.424 \\
$G_{7,2}$ & 1 & 0.915 (0.020) & 0.910 (0.020) & 0.960 (0.014) & 3.267 & 3.252 & 3.488 \\
$G_{7,2}$ & 0.25 & 0.915 (0.020) & 0.895 (0.022) & 0.955 (0.015) & 4.581 & 4.628 & 5.012 \\
$G_{10,3}$ & 1 & 0.915 (0.020) & 0.925 (0.019) & 0.955 (0.015) & 2.224 & 2.273 & 2.415 \\
$G_{10,3}$ & 0.25 & 0.895 (0.022) & 0.920 (0.019) & 0.945 (0.016) & 3.891 & 4.009 & 4.320 \\
$G_{15,3}$ & 1 & 0.915 (0.020) & 0.910 (0.020) & 0.955 (0.015) & 2.032 & 2.075 & 2.265 \\
$G_{15,3}$ & 0.25 & 0.900 (0.021) & 0.905 (0.021) & 0.955 (0.015) & 3.166 & 3.276 & 3.654 \\
$G_{20,4}$ & 1 & 0.915 (0.020) & 0.915 (0.020) & 0.955 (0.015) & 2.653 & 2.653 & 2.888 \\
$G_{20,4}$ & 0.25 & 0.895 (0.022) & 0.895 (0.022) & 0.955 (0.015) & 4.478 & 4.571 & 5.107 \\
$G_{25,5}$ & 1 & 0.900 (0.021) & 0.895 (0.022) & 0.915 (0.020) & 1.799 & 1.803 & 1.932 \\
$G_{25,5}$ & 0.25 & 0.885 (0.023) & 0.880 (0.023) & 0.905 (0.021) & 3.512 & 3.596 & 3.948 \\
\hline
\end{tabular}
\end{table}

Figure~\ref{fig:appendix-intervals} complements Table~\ref{tab:appendix-scalar-inference} by showing how coverage and mean interval length change with the sample size on $G_{3,1}$ and $G_{25,5}$. For both eigengaps, the intervals become shorter on average as the sample size increases. OL and NOL have similar coverage and mean interval lengths, with empirical coverage around $90\%$ at larger sample sizes.

\begin{figure}[h!]
\centering
\includegraphics[width=\textwidth]{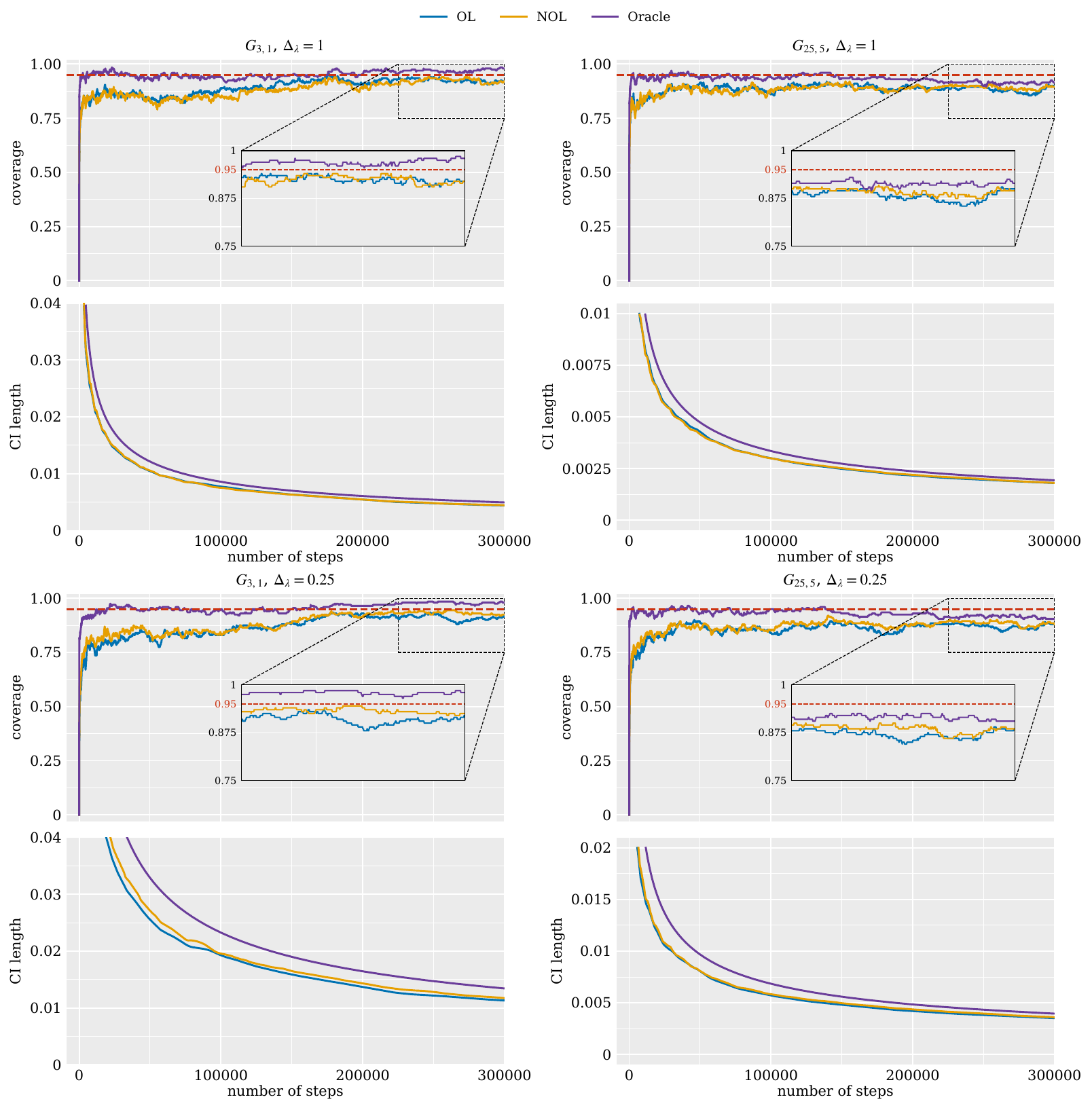}
\caption{Gaussian streaming PCA: empirical coverage and mean length of $95\%$ confidence intervals for $h_q(x_\star)$, with $c=1$. Columns correspond to $G_{3,1}$ and $G_{25,5}$; the upper and lower pairs of rows use eigengaps $1$ and $0.25$, respectively, with coverage above mean length in each pair. The blue, orange, and purple curves show OL, NOL, and Oracle, respectively, with results based on $200$ independent runs. The small plots inside the coverage panels show the later coverage values, and the red dashed lines mark the nominal coverage level $0.95$.}
\label{fig:appendix-intervals}
\end{figure}

Together, these Gaussian experiments complement the local theory by showing decreasing covariance errors and similar interval performance for OL and NOL across the settings considered.